\theoremstyle{plain}
\newtheorem{theorem}{Theorem}[section]
\newtheorem{lemma}[theorem]{Lemma}
\theoremstyle{definition}
\newtheorem{definition}[theorem]{Definition}
\newtheorem{assumption}[theorem]{Assumption}
\theoremstyle{remark}
\icmltitlerunning{Patch-level Routing in MoE is Provably Sample-efficient for CNN}
\begin{document}

\twocolumn[
\icmltitle{Patch-level Routing in Mixture-of-Experts is Provably Sample-efficient for Convolutional Neural Networks}



\icmlsetsymbol{equal}{*}

\begin{icmlauthorlist}
\icmlauthor{Mohammed Nowaz Rabbani Chowdhury}{xxx}
\icmlauthor{Shuai Zhang}{xxx}
\icmlauthor{Meng Wang}{xxx}
\icmlauthor{Sijia Liu}{yyy,zzz}
\icmlauthor{Pin-Yu Chen}{qqq}
\end{icmlauthorlist}

\icmlaffiliation{xxx}{Department of Electrical, Computer, and Systems Engineering, Rensselaer Polytechnic Institute, NY, USA}
\icmlaffiliation{yyy}{Department of
Computer Science and Engineering, Michigan State University,
MI, USA}
\icmlaffiliation{zzz}{MIT-IBM Watson AI Lab, IBM Research, MA, USA}
\icmlaffiliation{qqq}{IBM Research, Yorktown Heights, NY,
USA}

\icmlcorrespondingauthor{Mohammed Nowaz Rabbani Chowdhury}{chowdm2@rpi.edu}
\icmlcorrespondingauthor{Meng Wang}{wangm7@rpi.edu}

\icmlkeywords{Machine Learning, ICML}

\vskip 0.3in
]



\printAffiliationsAndNotice{} 

\begin{abstract}
In deep learning, mixture-of-experts (MoE) activates one or   few experts (sub-networks) on a per-sample or per-token basis, resulting in  significant computation reduction. The recently proposed \underline{p}atch-level routing in \underline{MoE} (pMoE) divides each input into $n$ patches (or tokens)  and  sends $l$ patches ($l\ll n$) to each  expert  through prioritized routing. pMoE has demonstrated great empirical success in reducing training and inference costs while maintaining test accuracy. However, the theoretical explanation of pMoE and the general MoE remains   elusive. Focusing on a supervised classification task using a mixture of  two-layer convolutional neural networks (CNNs), we show  for the first time that pMoE provably reduces the required number of training samples to achieve desirable generalization (referred to as the sample complexity) by a factor in the polynomial order of $n/l$, and outperforms its single-expert counterpart of the same or even larger capacity. The advantage results from the discriminative routing property, which is justified in both theory and practice that pMoE routers can filter label-irrelevant patches and route similar class-discriminative patches to the same expert. Our experimental results on MNIST, CIFAR-10, and CelebA support our theoretical findings on pMoE's generalization and show that pMoE can avoid learning  spurious correlations.
\end{abstract}

\section{Introduction}
Deep learning has demonstrated exceptional empirical success in many applications at the cost of high computational and data requirements. To address this issue, mixture-of-experts (MoE) only activates partial regions of a  neural network  for each data point and significantly reduces the computational complexity of deep learning \textit{without hurting the performance} in applications such as machine translation and natural image classification \citep{shazeer2017outrageously,yang2019condconv}.
\begin{figure}[t]
\vskip 0.1in
    \centering
    \includegraphics[width=0.7\linewidth]{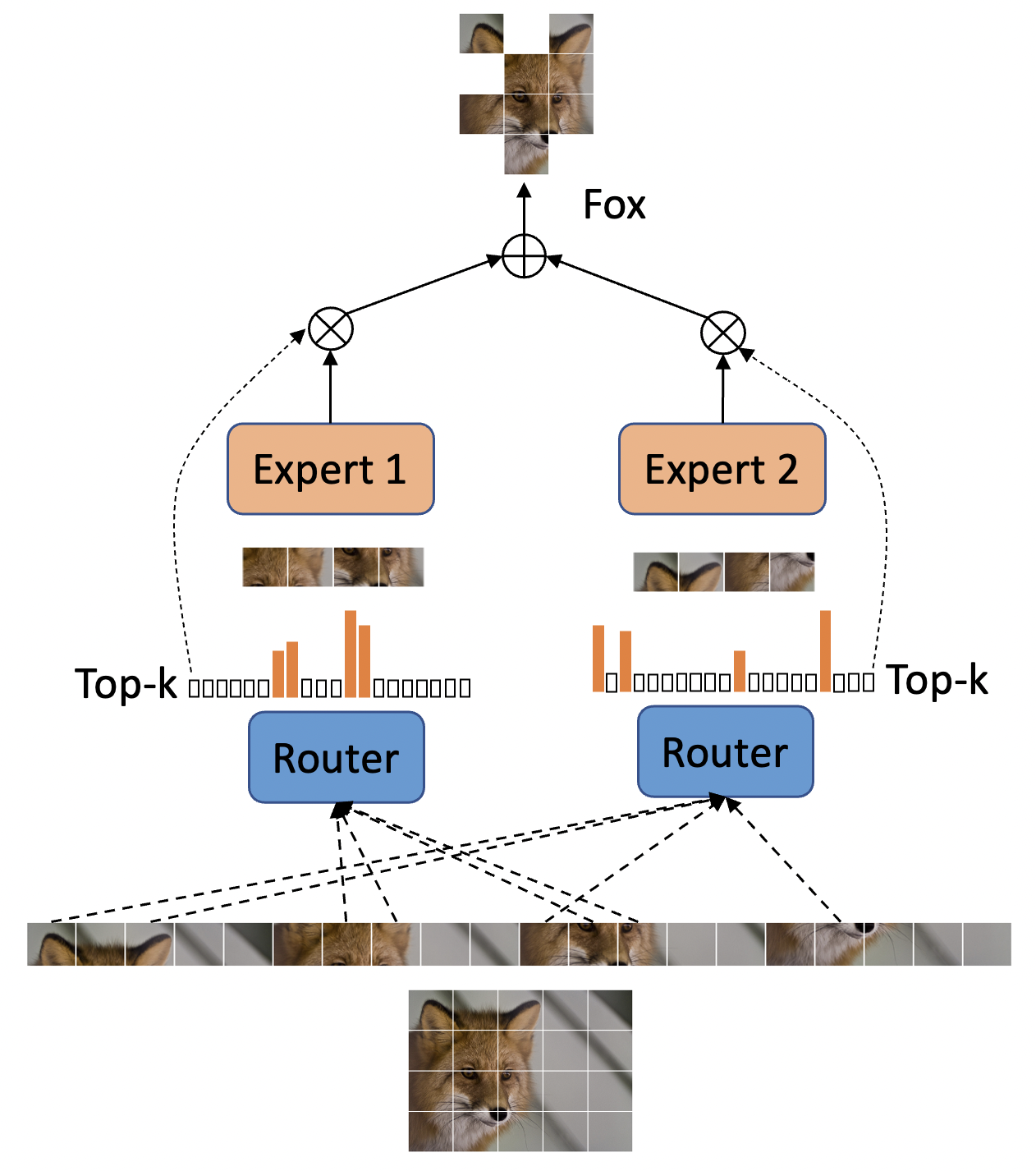}
    \caption{An illustration of pMoE. The image is divided into $20$ patches while the router selects $4$ of them for each expert.}
    \label{pMoE_arch}
\vskip -0.1in
\end{figure}
A conventional 
MoE model contains multiple experts (subnetworks of the backbone architecture) and one learnable router
that routes each input sample to a few but not all the experts \citep{ramachandran2018diversity}.  
 Position-wise MoE has been introduced in language models \citep{shazeer2017outrageously,lepikhin2020gshard,fedus2022switch}, where the routing decisions are made on embeddings of different positions of the input separately rather than routing the entire text-input. \citet{riquelme2021scaling} extended  it to vision models where the routing decisions are made on image patches. \citet{zhou2022mixtureofexperts} further extended where the MoE layer has one router for each expert such that the router selects partial patches for the corresponding expert and discards the remaining patches. 
  We termed this routing mode as \textit{patch-level routing} and the MoE layer as \textit{patch-level MoE} (pMoE) layer (see Figure \ref{pMoE_arch} for  an illustration of a pMoE).  
 Notably, pMoE  achieves the \textit{same} test accuracy in vision tasks with 20\% less training compute, and 50\% less inference compute compared to its single-expert (i.e., one expert which is receiving all the patches of an input) counterpart of the same capacity \citep{riquelme2021scaling}.

Despite the empirical success of MoE, it remains elusive in theory, why can MoE maintain test accuracy while significantly reducing the amount of computation? To the best of our knowledge, only one recent work by \citet{chen2022towards} shows theoretically that a conventional sample-wise MoE achieves higher test accuracy than convolutional neural networks (CNN) in a special setup of a binary classification task on data from linearly separable clusters. However, the sample-wise analyses by \citet{chen2022towards} do not extend to patch-level MoE, which employ different routing strategies than conventional MoE, and their data model might not characterize some practical datasets. 
This paper addresses the following  question theoretically:

\begin{center}
    \textit{How much computational resource does pMoE  save from the single-expert counterpart while maintaining the same generalization guarantee?}
\end{center}

In this paper, we consider a supervised binary classification task where each input sample consists of $n$ equal-sized patches  including  \textit{class-discriminative} patterns that determine the labels and  \textit{class-irrelevant} patterns that do not affect the labels. The neural network contains a pMoE layer\footnote{In practice, pMoEs are usually placed in the last layers of deep models. Our analysis can be extended to this case   as long as the input to the pMoE layer satisfies our data model (see Section \ref{data_model}).} and multiple experts, each of which is a two-layer CNN\footnote{We consider CNN as expert due to its wide applications, especially in vision tasks. Moreover, the pMoE in \cite{riquelme2021scaling,zhou2022mixtureofexperts} uses two-layer Multi-Layer Perceptrons (MLPs) as experts in vision transformer (ViT), which operates on image patches. Hence, the MLPs in \cite{riquelme2021scaling,zhou2022mixtureofexperts} are effectively non-overlapping CNNs.} of the same architecture. The router sends $l$ ($l\ll n$) patches to each expert. Although we consider a simplified neural network model to facilitate the formal analysis of pMoE, the insights are applicable to more general setups. Our major results include:

\textbf{1.}   To the best of our knowledge, this paper provides \textbf{the first theoretical generalization analysis of pMoE}. Our analysis reveals that pMoE with two-layer CNNs as experts can achieve the same generalization performance  as conventional CNN while reducing  the sample complexity (the required number of training samples to learn a proper model) and model complexity. Specifically,   we prove that as long as $l$ is larger than a certain threshold, pMoE reduces the sample complexity and model complexity by a factor   polynomial in $n/l$, indicating an improved generalization with a smaller $l$.

\textbf{2. Characterization of the desired property of the pMoE router.} We show that a desired pMoE router can dispatch the same class-discriminative patterns to the same expert and discard some class-irrelevant patterns. This discriminative property   allows the experts to learn the class-discriminative patterns with reduced interference from irrelevant patterns, which in turn reduces the sample complexity and model complexity. We also prove theoretically that a separately trained pMoE router has the desired property and empirically verify this property on practical pMoE routers.

\textbf{3. Experimental demonstration of reduced sample complexity by pMoE in deep CNN models.} In addition to verifying our theoretical findings on synthetic data prepared from the MNIST dataset \citep{lecun2010mnist}, we demonstrate the sample efficiency of pMoE in learning some benchmark vision datasets (e.g., CIFAR-10 \citep{krizhevsky2009learning} and CelebA \citep{liu2015deep}) by replacing the last convolutional layer of a ten-layer wide residual network (WRN) \citep{zagoruyko2016wide} with a pMoE layer. These experiments not only verify our theoretical findings but also demonstrate the applicability of pMoE in reducing sample complexity in deep-CNN-based vision models, complementing the existing empirical success of   pMoE with vision transformers. 

\section{Related Works}
\textbf{Mixture-of-Experts.} MoE was first introduced in the 1990s with dense sample-wise routing, i.e. each input sample is routed to all the experts \citep{jacobs1991adaptive,jordan1994hierarchical,chen1999improved,NIPS2000_9fdb62f9,rasmussen2001infinite}. 
Sparse sample-wise routing was later introduced \citep{bengio2013estimating,eigen2013learning}, where each input sample activates few of the experts in an MoE layer both for joint training \citep{ramachandran2018diversity,yang2019condconv} and separate training of the router and experts \citep{collobert2001parallel,collobert2003scaling,ahmed2016network,gross2017hard}. Position/patch-wise MoE (i.e., pMoE) recently demonstrated success 
in large language and vision models \citep{shazeer2017outrageously,lepikhin2020gshard,riquelme2021scaling,fedus2022switch}. To solve the issue of load imbalance \citep{lewis2021base}, 
\citet{zhou2022mixtureofexperts} introduces the \textit{expert-choice routing} in pMoE, where 
each expert uses 
one router to select a   fixed number of patches from the input. This paper analyzes the sparse patch-level MoE with expert-choice routing under both joint-training and separate-training setups. 

\textbf{Optimization and generalization analyses of neural networks (NN).} 
Due to the significant nonconvexity of   deep learning problem, 
the existing generalization analyses are   limited to linearized or shallow neural networks. 
The   Neural-Tangent-Kernel (NTK) approach 
\citep{jacot2018neural,lee2019wide,du2019gradient,allen2019convergence,zou2020gradient,chizat2019lazy,ghorbani2021linearized}  considers strong over-parameterization 
and approximates the neural network by the first-order Taylor expansion. The NTK results are independent of  the input data, and performance gaps in 
the representation power and generalization ability exist between the practical NN and the NTK results \citep{yehudai2019power,ghorbani2019limitations,ghorbani2020neural,li2020learning,malach2021quantifying}. 
Nonlinear neural networks are analyzed recently through   higher-order Taylor expansions \citep{allen2019learning,bai2019beyond,arora2019fine,ji2019polylogarithmic} or employing a model estimation approach 
 from Gaussian input data \citep{zhong2017recovery,zhong2017learning,zhang2020improved,zhang2020fast,fu2020guaranteed,li2022learning},  but these results are limited to two-layer networks with   few papers on  three-layer networks \citep{allen2019learning,allen2019can,allen2020backward,li2022generalization}.

The above works   consider   arbitrary input data or Gaussian input. 
To better characterize the practical generalization performance, some recent works analyze   structured data models using approaches such as feature mapping \citep{li2018learning}, where some of the initial model weights are close to data features, and   feature learning \citep{daniely2020learning,shalev2020computational,shi2021theoretical,allen2022feature,li2023a}, where some weights gradually learn features during training. 
Among them, 
\citet{allen2020towards,brutzkus2021optimization,karp2021local} analyze CNN on learning structured data  composed of class-discriminative patterns that determine the labels and other label-irrelevant patterns. This paper extends the data models in \citet{allen2020towards,brutzkus2021optimization,karp2021local} to a more general setup, and our analytical approach  
 is a combination of feature learning  in routers  and feature mapping in experts for pMoE.

\section{Problem Formulation}

This paper considers the supervised binary classification\footnote{Our results can be extended to multiclass classification problems. See Section \ref{multi_class_extension} in the Appendix for details.} problem where given $N$ i.i.d. training samples   $\{(x_i,y_i)\}_{i=1}^N$ generated by an unknown distribution $\mathcal{D}$, the objective is to learn a neural network model that maps $x$ to  $y$ for any $(x,y)$   sampled from  $\mathcal{D}$. Here, the input  $x\in\mathbb{R}^{nd}$ has $n$ disjoint patches, i.e.,  $x^{\intercal}=[x^{(1)\intercal},x^{(2)\intercal},...,x^{(n)\intercal}]$, where $x^{(j)}\in\mathbb{R}^d$ denotes the $j$-th patch of $x$. $y \in \{+1, -1\}$ denotes the corresponding label. 

\subsection{Neural Network Models}
\label{models}

We consider a pMoE architecture that includes 
$k$ experts and the corresponding $k$ routers. Each router selects $l$ out of $n$ ($l <n$) patches for each expert separately. Specifically, the router for each expert $s$ ($s\in[k]$) contains a  trainable gating kernel $w_s\in\mathbb{R}^d$. Given a sample $x$, 
the router computes a routing value $g_{j,s}(x)=\langle w_s, x^{(j)}\rangle$   for each patch $j$. Let  $J_s(x)$ denote  the index set of top-$l$ values of $g_{j,s}$ among all the patches $j\in[n]$. Only patches with indices in $J_s(x)$ are routed to the expert $s$, multiplied by a gating value $G_{j,s}(x)$, which are selected differently in different pMoE models.

Each expert is a two-layer CNN with the same architecture. Let $m$ denote the total number of neurons in all the experts. Then each expert contains $(m/k)$ neurons. Let $w_{r,s}\in\mathbb{R}^d$ and $a_{r,s} \in\mathbb{R}$ denote the hidden layer and output layer weights for neuron $r$ ($r\in [m/k])$ in expert $s$ ($s\in[k]$), respectively.  The activation function is the rectified linear unit (ReLU), where $\textbf{ReLU}(z)=\text{max}(0,z)$.

Let $\theta=\{a_{r,s}, w_{r,s}, w_s, \forall s \in [k], \forall r \in [m/k]\}$ include all the trainable weights. The pMoE model denoted as $f_{M}$, is defined as follows: 
\begin{equation}\label{mcnn_bin}
   \resizebox{.99\hsize}{!}{$f_{M}(\theta, x)=\overset{k}{\underset{s=1}{\sum}}\overset{\frac{m}{k}}{\underset{r=1}{\sum}}\cfrac{a_{r,s}}{l}\underset{j \in J_s(w_s,x)}{\sum}\textbf{ReLU}(\langle w_{r,s},x^{(j)}\rangle)G_{j,s}(w_s,x)$}
\end{equation}

An illustration of (\ref{mcnn_bin}) is given in Figure \ref{arch_eqn_1}.
\begin{figure}[h]
\vskip 0.1in
    \centering
    \includegraphics[width=0.7\linewidth]{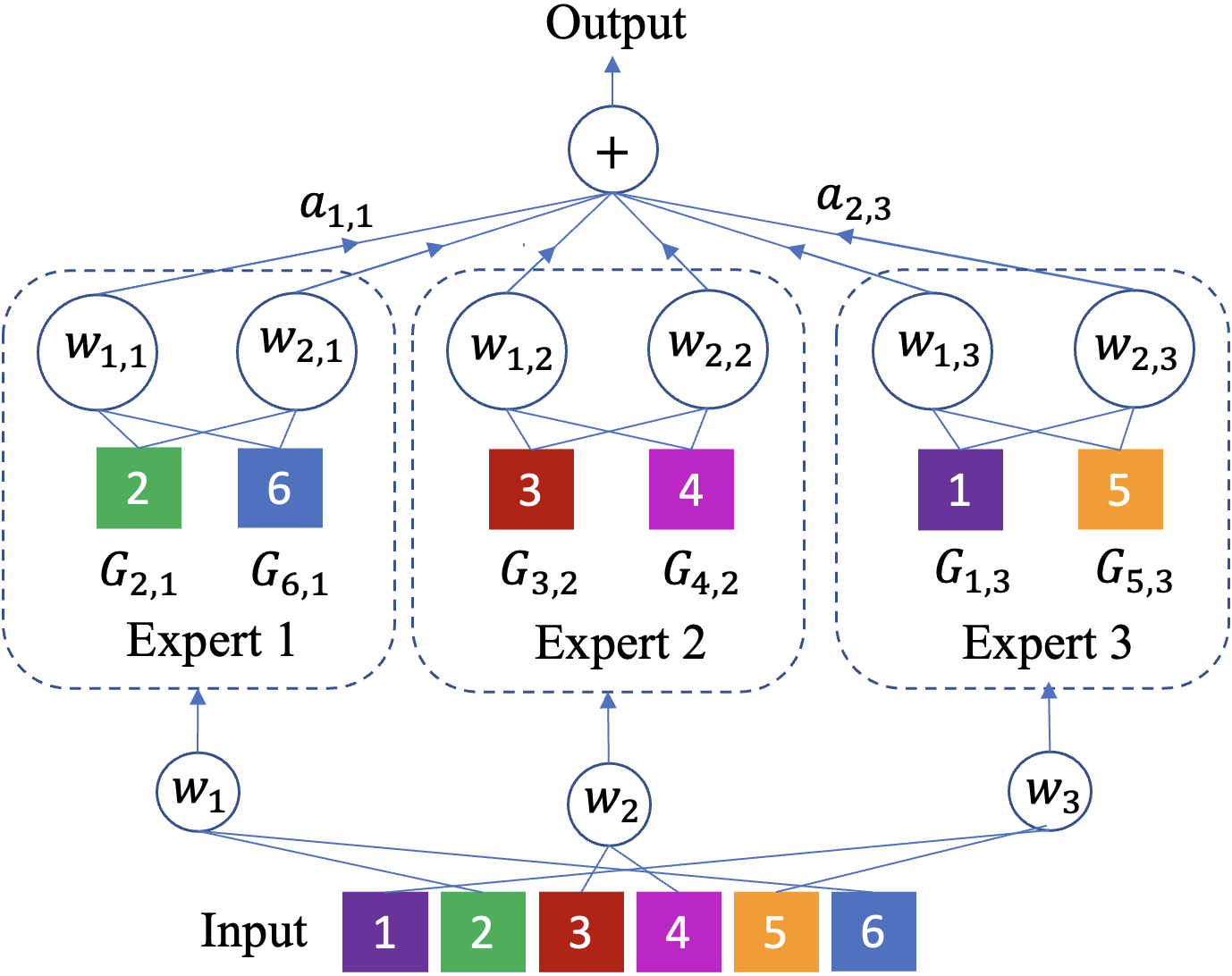}
    \caption{An illustration of the pMoE model in (\ref{mcnn_bin}) with $k=3, m=6, n=6$, and $l=2$. \iffalse Here, each input is divided into fixed number ($n$) of patches. For each expert, the router contains a gating-kernel (denoted as $\{w_1, w_2, ... w_k\}$) which selects a fixed number of patches ($l$) to be routed to the expert ($l<n$). While routing a patch to an expert, the router provides a gating-value (i.e., $G_{j,s}$) which is multiplied to the outputs of the neurons in the expert for that patch.\fi}
    \label{arch_eqn_1}
\vskip -0.1in
\end{figure}

The learning problem solves the following empirical risk minimization problem with the logistic loss function, 
\begin{equation}\label{erm}
    \displaystyle\underset{\theta}{\text{min}}:\hspace{0.4cm}L(\theta)=\cfrac{1}{N}\overset{N}{\underset{i=1}{\sum}}\log{(1+e^{-y_if_M(\theta, x_i)})}
\end{equation}
We consider two different training modes of pMoE, 
\textit{Separate-training} and \textit{Joint-training} of the routers and the experts. We also consider 
 the conventional CNN architecture for comparison. 

\noindent (I) \textbf{Separate-training pMoE}: Under the setup of the so-called \textit{hard mixtures of experts} \citep{collobert2003scaling,ahmed2016network,gross2017hard}, the router weights $w_s$ are trained  first  and then fixed when training the weights of the experts. 
In this case, the gating values 
 are set as \begin{equation}
    G_{j,s}(w_s,x) \equiv 1, \ \forall j, s, x
\end{equation}
We select $k=2$ in this case to simplify the analysis.

\noindent (II) \textbf{Joint-training pMoE}:  The routers and the experts are learned jointly, see, e.g., \citep{lepikhin2020gshard,riquelme2021scaling,fedus2022switch}. Here, the gating values are  softmax functions with  
\begin{equation}
    G_{j,s}(w_s,x)=e^{g_{j,s}(x)}/(\sum_{i\in J_s(x)} g_{i,s}(x))
\end{equation}

\noindent (III) \textbf{CNN single-expert counterpart}: The conventional two-layer CNN with $m$ neurons, denoted as $f_{C}$, satisfies,
\begin{equation}\label{cnn_bin}
    \displaystyle f_{C}(\theta, x)=\overset{m}{\underset{r=1}{\sum}}a_{r}\left(\cfrac{1}{n}\overset{n}{\underset{j=1}{\sum}}\textbf{ReLU}(\langle w_r,x^{(j)}\rangle)\right)
\end{equation}
Eq. (\ref{cnn_bin}) can be viewed as a special case of (\ref{mcnn_bin}) 
when there is only one expert ($k=1$), and all the patches are sent to the expert ($l=n$) with gating values $G_{j,s}\equiv1$. 

Let $\tilde{\theta}$ denote the parameters of the  learned model by solving (\ref{mcnn_bin}). The predicted label for a test sample $x$ by the learned model is $\textrm{sign}(f_{M}(\tilde{\theta},x))$. 
The generalization accuracy, 
i.e., the fraction of correct predictions of all test samples equals  $\underset{(x,y)\sim\mathcal{D}}{\mathbb{P}}\left[yf_{M}(\theta,x)>0\right]$.
This paper studies both separate and joint training of pMoE and compares their performance with CNN, from the perspective of sample complexity to achieve a desirable generalization accuracy.

\subsection{Training Algorithms}\label{alg}

In the following algorithms, we fix the output layer weights $a_{r,s}$ and $a_{r}$ at their initial values randomly sampled from the standard Gaussian distribution  $\mathcal{N}(0,1)$ and do not update them during the training. This is a typical simplification when analyzing NN, as used in  \citep{li2018learning,brutzkus2018sgd,allen2019learning,arora2019fine}.


(I) Separate-training pMoE: The routers are separately trained using $N_r$ training samples ($N_r<N$), denoted  by $\{(x_i, y_i)\}_{i=1}^{N_r}$ without loss of generality. The gating kernels $w_1$ and $w_2$ are obtained by solving the following minimization problem:
\begin{equation}\label{router_erm}
    \displaystyle \underset{w_1,w_2}{\text{min}}:\hspace{0.1cm}l_r(w_1,w_2)=-\cfrac{1}{N_r}\hspace{0.1cm}\overset{N_r}{\underset{i=1}{\sum}} y_i  \langle w_1-w_2, \sum_{j=1}^n x_i^{(j)}\rangle
\end{equation}
To solve (\ref{router_erm}), 
we implement the mini-batch SGD with batch size $B_r$ for $T_r=N_r/B_r$ iterations, starting from the  
random initialization as follows:
\begin{equation}\label{eqn:router_ini}
    w_s^{(0)}\sim\mathcal{N}(0,\sigma_r^2\mathbb{I}_{d\times d}), \forall s\in [2]
\end{equation}
where, $\sigma_r=\Theta\big(1\big/(n^2 \log{(\textrm{poly}(n))}\sqrt{d})\big)$.

After learning the routers, we train the hidden-layer weights $w_{r,s}$ by solving (\ref{erm}) while fixing  $w_1$ and $w_2$.  We implement mini-batch SGD of batch size $B$ for $T=N/B$ iterations 
starting from the initialization 
\begin{equation}\label{eqn:expert_ini}
    w_{r,s}^{(0)}\sim \mathcal{N}(0,\frac{1}{m}\mathbb{I}_{d\times d}), 
    \forall s \in [2], \forall r\in[m/2] 
\end{equation}

(II) Joint-training pMoE:   $w_s$ and $w_{r,s}$ in (\ref{mcnn_bin}) are updated simultaneously by mini-batch SGD of batch size $B$ for $T=N/B$ iterations starting from the initialization in (\ref{eqn:router_ini}) and (\ref{eqn:expert_ini}).

(III) CNN:      $w_{r}$  in (\ref{cnn_bin}) are updated by mini-batch SGD of batch size $B$ for $T=N/B$ iterations starting from the initialization in (\ref{eqn:expert_ini}).

\section{Theoretical Results}

\subsection{Key Findings At-a-glance}

Before defining the data model assumptions and rationale in Section \ref{data_model} and presenting the formal results in \ref{sec: theory}, we first summarize our key findings.
 We assume that the data patches are sampled from either \textit{class-discriminative} patterns that determine the labels or a possibly infinite number of \textit{class-irrelevant} patterns that have no impact on the label. The parameter $\delta$ (defined in \eqref{eqn:delta}) is inversely related to the separation among patterns, i.e.,  $\delta$ decreases when (i) the separation among class-discriminative patterns increases, and/or (ii) the separation between class-discriminative and class-irrelevant patterns increases. The key findings are as follows. 

\textbf{(I). A properly trained patch-level router  sends  class-discriminative patches of one class to the same expert while dropping some class-irrelevant patches}.  We prove that 
separate-training pMoE routes class-discriminative patches 
of the class with label $y=+1$ (or the class with label $y=-1$) to the expert 1 (or the expert 2) respectively, and the class-irrelevant patterns that are sufficiently away from class-discriminative patterns 
are not routed to any expert (Lemma \ref{router_lemma}). This discriminative routing property is also verified empirically for joint-training pMoE (see section \ref{experiment_mnist}).  Therefore, pMoE effectively 
reduces the interference by irrelevant patches when each expert       learns the  class-discriminative patterns. 
Moreover, we   show empirically that pMoE can remove class-irrelevant patches that are spuriously correlated with class labels and thus can avoid learning from spuriously correlated features of the data.

\textbf{(II). Both the sample complexity and the required number of hidden nodes of pMoE reduce by a polynomial factor of $n/l$ over CNN.} 
We  prove that as long as $l$, the number of patches per expert, is greater than a threshold (that decreases as the separation between class-discriminative and class-irrelevant patterns increases), 
the sample complexity and the required number of neurons of learning pMoE are  $\Omega(l^8)$ and $\Omega(l^{10})$ respectively. In contrast, the sample and model complexities 
of the CNN are $\Omega(n^8)$ and $\Omega(n^{10})$ respectively, indicating improved generalization by pMoE.

\textbf{(III). Larger separation among class-discriminative and class-irrelevant patterns reduces the sample complexity and model complexity of pMoE.} Both the sample complexity and the required number of neurons of pMoE is polynomial in $\delta$, which decreases when the separation among patterns increases.

\subsection{Data Model Assumptions and Rationale}\label{data_model}
 
The input $x$ is comprised of one class-discriminative pattern and $n-1$ class-irrelevant patterns, and the label $y$ is determined by the class-discriminative pattern only.

\textbf{Distributions of class-discriminative patterns}: The unit vectors $o_1$ and $o_2\in\mathbb{R}^d$ denote the \textit{class-discriminative} patterns that determine the labels. The separation between $o_1$ and $o_2$ is measured as $\delta_d:=\langle o_1, o_2\rangle \in (-1,1)$.  $o_1$ and $o_2$ are equally distributed in the samples, 
and each sample has exactly one of them. 
If   $x$ contains $o_1$ (or $o_2$), then $y$ is $+1$ (or $-1$). 

\textbf{Distributions of class-irrelevant patterns.} \textit{Class-irrelevant} patterns are unit vectors in $\mathbb{R}^d$ belonging to $p$ disjoint pattern sets 
$S_1, S_2, ...., S_p$, and these patterns distribute equally for both classes.   
 $\delta_r$ measures the separation between class-discriminative patterns and class-irrelevant patterns, where $|\langle o_i,  q\rangle|\le\delta_r$, $\forall i\in[2]$, $\forall q \in S_j$, $j=1,...,p$. 
 Each $S_j$ belongs to a ball with a diameter of $\Theta(\sqrt{(1-\delta_r^2)/dp^2)}$.  Note that NO separation among class-irrelevant patterns themselves is required. 

\textbf{The rationale of  our data model.} 
The data distribution $\mathcal{D}$ 
captures the locality of the label-defining features in image data. It is motivated by and extended from 
the data distributions in recent  theoretical frameworks \citep{yu2019learning,brutzkus2021optimization,karp2021local,chen2022towards}. 
Specifically, \citet{yu2019learning} and \citet{brutzkus2021optimization} require orthogonal patterns, 
i.e., $\delta_r$ and $\delta_d$ are 
both $0$, and there are only a fixed number of non-discriminative patterns. \citet{karp2021local} and \citet{chen2022towards} assume that 
$\delta_d=-1$ and 
a possibly infinite number of patterns 
drawn from zero-mean Gaussian distribution. In our model, 
$\delta_d$ takes any value in $(-1,1)$, and the class-irrelevant patterns can be drawn from $p$ pattern sets that contain an infinite number of patterns that are not necessarily 
  Gaussian   or   orthogonal.  

Define  
\begin{equation}\label{eqn:delta}
\delta =1/(1-\max (\delta_d^2, \delta_r^2))
\end{equation}
$\delta$ decreases if (1) $o_1$ and $o_2$
are more separated from each other, and (2) Both  $o_1$ and $o_2$ are more separated from any set $S_i$, $i \in [p]$.
We also define an integer $l^*$ ($l^*\leq n$) that measures \textit{the maximum number of class-irrelevant patterns per sample that are sufficiently closer to $o_1$ than $o_2$, and vice versa}. Specifically, a  class-irrelevant pattern $q$ is 
 called $\delta^\prime$-closer ($\delta^\prime>0$) to $o_1$ than $o_2$, if $\langle o_1-o_2, q\rangle>\delta^\prime$ holds. 
 Similarly,  $q$ is $\delta^\prime$-closer to $o_2$ than $o_1$ if $\langle o_2-o_1,q\rangle>\delta^\prime$. Then, let $l^*-1$ be the maximum number of  class-irrelevant patches  that are either $\delta^\prime$-closer to $o_1$ than $o_2$ or vice versa with $\delta^\prime=\Theta(1-\delta_d)$ in any $x$ sampled from $\mathcal{D}$.  $l^*$ depends on $\mathcal{D}$ and $\delta_d$. When $\mathcal{D}$ is fixed, a smaller $\delta_d$ corresponds to a larger separation between $o_1$ and $o_2$ and leads to a small $l^*$. In contrast to linearly separable data in \citep{yu2019learning,brutzkus2018sgd,chen2022towards}, our data model is \textbf{NOT} linearly separable as long as $l^*=\Omega(1)$ (see section \ref{non_linearly_separable} in Appendix for the proof).
 
\subsection{Main Theoretical Results}\label{sec: theory}

\subsubsection{Generalization Guarantee of Separate-training pMoE}\label{separate_pMoE_th}
Lemma \ref{router_lemma} shows that as long as the number of patches per expert, $l$, is greater than  $l^*$, 
then the separately learned routers by solving (\ref{router_erm}) always send $o_1$  
to expert 1 and  $o_2$ 
to expert 2.  Based on this discriminative property of the learned routers,  
Theorem \ref{Thm_mcnn_bin} then quantifies the sample complexity and network size of separate-training pMoE to achieve a desired generalization error $\epsilon$. 
Theorem \ref{Thm_single_cnn} quantifies the sample and model complexities of CNN 
for comparison.

\begin{lemma}[Discriminative Property of Separately Trained Routers]\label{router_lemma}
For every $l\geq l^*$, w.h.p. over the random initialization defined in (\ref{eqn:router_ini}), after doing mini-batch SGD with batch-size $B_r= \Omega\left(n^2/(1-\delta_d)^2\right)$ and learning rate $\eta_r=\Theta(1/n)$, for $T_r=\Omega\left(1/(1-\delta_d)\right)$ iterations, the returned $w_1$ and $w_2$ satisfy 
\begin{equation*}
    \underset{j\in[n]}{\text{arg}}(x^{(j)}=o_1) \in J_1(w_1, x), \quad \forall  (x,y=+1) \sim \mathcal{D}
\end{equation*}
\begin{equation*}
    \underset{j\in[n]}{\text{arg}}(x^{(j)}=o_2) \in J_2(w_2, x), \quad \forall  (x,y=-1) \sim \mathcal{D}
\end{equation*}
i.e., the learned routers always send  $o_1$ 
to expert 1 and $o_2$ 
to expert 2. 

\end{lemma}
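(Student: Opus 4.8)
The plan rests on one structural fact: the router objective $l_r(w_1,w_2)$ in \eqref{router_erm} is \emph{affine} in $(w_1,w_2)$, so each mini-batch gradient is a constant vector, independent of the current iterate. Consequently mini-batch SGD has a closed form: with $\widehat\mu:=\frac{1}{N_r}\sum_{i=1}^{N_r}y_i\sum_{j=1}^{n}x_i^{(j)}$, running the $T_r=N_r/B_r$ batches (whose gradient sum only depends on the training set, not on the batching) yields $w_1^{(T_r)}=w_1^{(0)}+\eta_r T_r\,\widehat\mu$ and $w_2^{(T_r)}=w_2^{(0)}-\eta_r T_r\,\widehat\mu$. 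So the whole lemma reduces to showing that $\widehat\mu$ is, up to a controllably small perturbation, a positive multiple of $o_1-o_2$, and that this alignment suffices to pin down the rank of $o_1$ (resp.\ $o_2$) among the $n$ routing values of any sample.

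First I would identify the population direction: $\mu:=\mathbb{E}_{(x,y)\sim\mathcal D}[\,y\sum_j x^{(j)}\,]=\tfrac12(o_1-o_2)$. Indeed, each sample has exactly one class-discriminative patch, equal to $o_1$ if $y=+1$ and $o_2$ if $y=-1$, contributing $\tfrac12 o_1-\tfrac12 o_2$ in expectation; the class-irrelevant patches are distributed identically across the two classes, so their $y$-weighted expectation vanishes. Then I would bound the deviation $e:=\widehat\mu-\mu$: each summand $y_i\sum_j x_i^{(j)}-\mu$ is a sum of at most $n+1$ unit vectors, so a vector Hoeffding / bounded-difference inequality gives $\|e\|=O\!\big(n\sqrt{\log(1/p_{\mathrm{fail}})/N_r}\big)$ w.h.p., a bound that holds simultaneously along every direction $o_1-q$. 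Separately, the tiny initialization scale $\sigma_r=\Theta\big(1/(n^2\log(\mathrm{poly}(n))\sqrt d)\big)$ makes $\max_s\|w_s^{(0)}\|$ negligibly small w.h.p.

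The final step is the ranking argument. Fix $(x,y=+1)\sim\mathcal D$ and decompose, for each unit patch $v$, $\langle w_1^{(T_r)},v\rangle=\langle w_1^{(0)},v\rangle+\tfrac{\eta_r T_r}{2}\langle o_1-o_2,v\rangle+\eta_r T_r\langle e,v\rangle$. Choosing $\delta'=\Theta(1-\delta_d)$ to match the threshold in the definition of $l^*$, the ``signal'' $\tfrac{\eta_r T_r}{2}\langle o_1-o_2,v\rangle$ equals $\tfrac{\eta_r T_r}{2}(1-\delta_d)$ for $v=o_1$ but is at most $\tfrac{\eta_r T_r}{2}\delta'$ for any class-irrelevant $q$ with $\langle o_1-o_2,q\rangle\le\delta'$, giving a gap of order $\eta_r T_r(1-\delta_d)$. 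Substituting $\eta_r=\Theta(1/n)$, $B_r=\Omega(n^2/(1-\delta_d)^2)$ and $T_r=\Omega(1/(1-\delta_d))$ (so $N_r=B_rT_r$), and using the bounds on $\|e\|$ and $\|w_1^{(0)}\|$ from the previous step, I would check that this gap dominates $2\|w_1^{(0)}\|+2\eta_r T_r\|e\|$ (the $\Omega(\cdot)$'s absorbing the needed polylog factors), hence $\langle w_1^{(T_r)},o_1\rangle>\langle w_1^{(T_r)},q\rangle$ for every such $q$. Since by definition of $l^*$ at most $l^*-1$ class-irrelevant patches in $x$ are $\delta'$-closer to $o_1$ than to $o_2$, the patch carrying $o_1$ has rank at most $l^*$, so it lies in $J_1(w_1,x)$ whenever $l\ge l^*$ (a larger $l$ can only help). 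The statement for $w_2$ and $(x,y=-1)$ follows from the identical computation with $o_1\leftrightarrow o_2$, because $w_2^{(T_r)}=w_2^{(0)}-\eta_r T_r\widehat\mu$ is aligned with $o_2-o_1$.

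The hard part will be the signal-to-noise bookkeeping in the last step: making the vector concentration of $\widehat\mu$ hold uniformly over the (possibly infinite) family of class-irrelevant patterns, and verifying that the constant hidden in $\delta'=\Theta(1-\delta_d)$ is consistent with the one baked into $l^*$, all while respecting the stated values of $\eta_r$, $B_r$, and $T_r$. The remaining ingredients — the affine closed form, the mean computation, and the concluding rank count — are routine once the SNR estimate is in place.
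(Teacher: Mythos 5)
Your proposal is correct and follows essentially the same route as the paper's proof: identify the population gradient direction $\tfrac12(o_1-o_2)$ (using that class-irrelevant patterns are identically distributed across the two classes), control the stochastic deviation by Hoeffding with $B_r=\Omega\big(n^2/(1-\delta_d)^2\big)$, show the initialization $\|w_s^{(0)}\|\le 1/n^2$ is negligible, and conclude with the ranking argument tied to the $\delta'=\Theta(1-\delta_d)$ threshold baked into the definition of $l^*$. Your one refinement --- exploiting that the loss is affine so the $T_r$ updates telescope into $\eta_r T_r\,\widehat\mu$ and a single concentration bound on the full empirical mean suffices (with Cauchy--Schwarz making it automatically uniform over all unit-norm patches, so the ``hard part'' you flag is actually a non-issue) --- is marginally cleaner and tighter than the paper's per-iteration bound $\tilde O(n/\sqrt{B_r})$ summed over $T_r$ iterations, but it does not change the structure of the argument.
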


The main idea in proving Lemma \ref{router_lemma} is to show that the gradient in each iteration has a large component along the directions of $o_1$ and $o_2$. Then after enough iterations, the inner product of $w_1$ and $o_1$ (similarly, $w_2$ and $o_2$) is sufficiently large.  The intuition of requiring $l\ge l^*$ is that 
because there are at most $l^*-1$ class-irrelevant patches sufficiently closer to $o_1$ than $o_2$ (or vice versa), then sending $l\ge l^*$ patches to one expert will ensure that one of them is $o_1$ (or $o_2$). Note that the batch size $B_r$ and the number of iterations $T_r$ depend on $\delta_d$, the separation between $o_1$ and $o_2$, but are independent of the separation between class-discriminative and class-irrelevant patterns.

We then show that the separate-training pMoE reduces both the sample complexity and the required model size (Theorem \ref{Thm_mcnn_bin}) compared to the CNN (Theorem \ref{Thm_single_cnn}).
\begin{theorem}[Generalization guarantee of separate-training pMoE] \label{Thm_mcnn_bin}
For every $\epsilon>0$ and $l\ge l^*$, for every $m\ge M_S=\Omega\left(l^{10}p^{12}\delta^6\big/\epsilon^{16}\right)$ with at least   $N_S=\Omega(l^8 p^{12}\delta^6/\epsilon^{16})$ training samples, after performing minibatch SGD with the batch size $B=\Omega\left(l^4p^6\delta^3\big/\epsilon^{8}\right)$ and the learning rate $\eta=O\big(1\big/(m \textrm{poly}(l,p,\delta,1/\epsilon, \log m))\big)$ for $T=O\left(l^4p^6\delta^3\big/\epsilon^{8}\right)$ iterations, it holds w.h.p. that
\begin{center}
    $\underset{(x,y)\sim\mathcal{D}}{\mathbb{P}}\left[yf_{M}(\theta^{(T)},x)>0\right]\ge1-\epsilon$
\end{center}
\end{theorem}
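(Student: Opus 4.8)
The plan is to use Lemma \ref{router_lemma} to turn the two-expert pMoE into two essentially independent two-layer CNN learning problems, each operating on only $l$ patches, and then to run a feature-mapping plus online-SGD argument on each. Since separate-training pMoE uses $G_{j,s}\equiv 1$, the model decomposes as $f_M(\theta,x)=f_1(\theta_1,x)+f_2(\theta_2,x)$, where $f_s$ is a two-layer ReLU CNN that averages $\mathrm{ReLU}(\langle w_{r,s},\cdot\rangle)$ over the $l$ patches in the fixed set $J_s(w_s,x)$. From Lemma \ref{router_lemma}, together with the alignment of $w_1$ with $o_1-o_2$ (resp.\ $w_2$ with $o_2-o_1$) established in its proof, we extract the key structural fact: for $l\ge l^*$, expert $1$ receives $o_1$ exactly when $y=+1$ and otherwise receives only class-irrelevant patches, and symmetrically expert $2$ receives $o_2$ exactly when $y=-1$; in all cases the other $l-1$ patches lie in $S_1\cup\cdots\cup S_p$. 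Thus each expert faces a ``one discriminative patch (present for one class only) plus $l-1$ class-irrelevant patches'' problem, i.e.\ the structured-data CNN setting but with $l$ in place of $n$.

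Second, I would exhibit a low-norm reference (``pseudo'') network achieving logistic loss $O(\epsilon)$ on $\mathcal D$. Because the output weights $a_{r,s}$ are frozen at random $\pm 1$ Gaussian values, I would perturb the $\approx m/4$ first-layer weights of expert $1$ with $a_{r,1}>0$ by a radius-$\rho$ update so that $\sum_r a_{r,1}\,\mathrm{ReLU}(\langle w_{r,1},\cdot\rangle)$ approximates a bump that is $\Theta(1)$ at $o_1$ and $O(\epsilon/l)$ on every class-irrelevant pattern, and symmetrically tune the $a_{r,2}<0$ weights of expert $2$ to $o_2$. Feasibility of such a bump with ReLU ridge functions, and the required $\rho$, are governed by $\delta$ from \eqref{eqn:delta} and by $p$: one must be uniformly tiny over a union of $p$ balls of radius $\Theta(\sqrt{(1-\delta_r^2)/dp^2})$ while being $\Theta(1)$ at a point separated from them, which is where the $p^{12}\delta^6$ factors originate. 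The total contribution of the $l-1$ noise patches to each expert's output is then $O((l-1)\cdot\epsilon/l)=O(\epsilon)$, so the pseudo-network output is $\ge 1$ for $y=+1$ and $\le -1$ for $y=-1$ up to $O(\epsilon)$ slack; crucially this slack scales with $l$, not $n$, which is the source of the improvement over Theorem \ref{Thm_single_cnn}.

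Third, I would run the standard online-SGD coupling argument used in the feature-learning analyses cited above. With the stated tiny learning rate $\eta=O(1/(m\,\textrm{poly}(l,p,\delta,1/\epsilon,\log m)))$ and $T=O(l^4p^6\delta^3/\epsilon^8)$ steps, every iterate $w_{r,s}^{(t)}$ stays within $\rho$ of initialization, so the ReLU activation patterns are essentially frozen, each $f_s$ is approximately linear in the weight perturbation, and convexity of the logistic loss in the network output yields an approximate-convexity / semi-smoothness inequality. A regret-based online-to-batch conversion — using that the $T$ minibatches of size $B=\Omega(l^4p^6\delta^3/\epsilon^8)$ are fresh i.i.d.\ draws, so $N_S=BT=\Omega(l^8p^{12}\delta^6/\epsilon^{16})$ — bounds the population logistic loss of a suitable iterate (and, with an extra last-iterate argument, of $\theta^{(T)}$) by the pseudo-network's loss plus optimization error, i.e.\ by $O(\epsilon)$; since logistic loss $O(\epsilon)$ implies misclassification probability at most $\epsilon$, the conclusion $\mathbb{P}_{(x,y)\sim\mathcal D}[yf_M(\theta^{(T)},x)>0]\ge 1-\epsilon$ follows. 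The width requirement $m\ge M_S=\Omega(l^{10}p^{12}\delta^6/\epsilon^{16})$ comes from needing enough random neurons for a radius-$\rho$ perturbation to realize the pseudo-network (a random-feature / $\epsilon$-net counting bound), which costs an extra polynomial factor over $N_S$ matching the Rademacher term in the online-to-batch step.

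I expect the main obstacle to be the pseudo-network construction under the weak data assumptions: the class-irrelevant patterns may be infinitely many, non-Gaussian and non-orthogonal, lying only in $p$ small balls separated from $o_1,o_2$ by $\delta_r$, so one must build a bounded-norm combination of ReLU ridge functions that is $\Theta(1)$ at $o_1$ yet uniformly $O(\epsilon/l)$ over all those balls, and track precisely how its norm — hence $\rho$, $M_S$, $N_S$ — degrades in $p$, $\delta$ and $1/\epsilon$; pinning down the exact exponents is the delicate part. A secondary point needing care is that the data-dependent selection maps $x\mapsto J_s(w_s,x)$ do not disturb the i.i.d.\ structure required by online-to-batch, which holds because the routers are frozen using the separate $N_r$-sample set before the expert-training samples are drawn.
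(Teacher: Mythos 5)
Your proposal follows a genuinely different route from the paper's. The paper never constructs a low-norm reference network achieving small loss, and never runs an online-to-batch/regret argument. Instead, after using Lemma \ref{router_lemma} exactly as you do, it proves a ``large error implies large gradient'' statement: writing $w_{r,1}^{(0)}=\alpha o_1+\beta$ with $\beta\perp o_1$, the quantity $\langle P_{r,1}^{(t)},w_{r,1}^{(0)}\rangle$ decomposes as a difference $\phi(\alpha)-l(\alpha)$ of convex functions in which $\phi$ has a subgradient jump of size $v_1^{(t)}/l$ at $\alpha=0$ (coming from the ReLU kink at the discriminative patch, which is present only in the $y=+1$ class) while $l(\alpha)$ and the irrelevant-patch terms are locally linear on $|\alpha|\le\tau$ with constant probability over $\beta$. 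An anti-concentration lemma from \citet{li2018learning} then forces $\|P_{r,1}^{(t)}\|=\tilde\Omega(v_1^{(t)}/(lp\sqrt\delta))$ for an $\Omega(1/(p\sqrt\delta))$ fraction of neurons, the activation-pattern coupling makes the true minibatch gradient inherit this lower bound, semi-smoothness converts it into a per-step expected loss decrease of $\tilde\Omega(\eta m (v^{(t)})^2/(l^2p^3\delta^{3/2}))$, and since the initial loss is $\tilde O(1)$ the error $v^{(t)}$ must fall below $\epsilon^2$ within $T=\tilde O(l^4p^6\delta^3/\epsilon^8)$ iterations; Markov's inequality finishes. The width bound $M_S$ arises from requiring the coupling window $\tilde O(\tau/\eta)$ to cover all $T$ iterations, not from a representation/counting bound.

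The genuine gap in your plan is its second step, which is also the step that is supposed to produce every exponent in the theorem. You need a bounded-norm combination of ReLU ridge functions (realizable as a radius-$\rho$ perturbation of the random initialization) that is $\Theta(\log(1/\epsilon))$ at $o_1$ --- note that margin $1$ up to $O(\epsilon)$ slack only gives constant logistic loss, whereas your online-to-batch conversion needs the reference loss itself to be $O(\epsilon)$ --- while being uniformly $O(\epsilon/l)$ over $p$ balls that are separated from $o_1$ only by $|\langle o_1,q\rangle|\le\delta_r$ and may contain infinitely many non-orthogonal, non-Gaussian patterns. You correctly flag this as ``the delicate part,'' but no construction or norm bound is given, and it is not clear that such a function exists in the relevant random-feature class with norm matching $p^{12}\delta^6/\epsilon^{16}$; the uniform-smallness requirement over a union of constant-angular-separation caps is exactly the kind of representation statement whose RKHS norm can blow up much faster than polynomially in the separation. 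The paper's argument is designed to avoid ever needing such a function: it only requires the irrelevant-pattern contributions to be locally linear near the kink, which follows from the ball-diameter bound and Gaussian anti-concentration. A secondary inaccuracy: Lemma \ref{router_lemma} does not guarantee that expert $1$ ``otherwise receives only class-irrelevant patches'' on $y=-1$ samples --- it may also receive $o_2$ --- so your reference function for expert $1$ must additionally be controlled at $o_2$, which your sketch does not state. Until the reference-network existence (with explicit norm as a function of $p$, $\delta$, $l$, $1/\epsilon$) is established, the sample, width, and iteration bounds you assert remain unsupported.
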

Theorem \ref{Thm_mcnn_bin} implies that 
to achieve generalization error $\epsilon$ 
by a separate-training pMoE, we need $N_S=\Omega(l^8 p^{12}\delta^6/\epsilon^{16})$ training samples and $M_S=\Omega\left(l^{10}p^{12}\delta^6\big/\epsilon^{16}\right)$ hidden nodes. Therefore, 
both $N_S$ and $M_S$ increase polynomially with the number of patches $l$ sent to each expert. Moreover, both  $N_S$ and $M_S$ are polynomial  in  $\delta$  defined in (\ref{eqn:delta}), indicating an improved generalization performance with stronger separation among patterns.

The proof of Theorem \ref{Thm_mcnn_bin} is inspired by \citet{li2018learning}, which analyzes the generalization performance of fully-connected neural networks (FCN) on structured data, but we have new technical contributions in analyzing pMoE models.  In addition to analyzing the pMoE routers (Lemma \ref{router_lemma}), which do not appear in the FCN analysis, our analyses also significantly relax the separation requirement on the data, compared with that by \citet{li2018learning}. 
For example, \citet{li2018learning} requires the separation between the two classes, measured by the smallest $\ell_2$-norm distance of two points in different classes,  being $\Omega(n)$ to  obtain a sample complexity bound of poly($n$) for the binary classification task.  In contrast, the separation between the two classes in our data model  is $\min\{\sqrt{2(1-\delta_d)}, 2\sqrt{1-\delta_r}\}$, much less than  $\Omega(n)$ required by \citet{li2018learning}. 

\begin{theorem}[Generalization guarantee of CNN]\label{Thm_single_cnn}
For every $\epsilon>0$, for every $m\ge  M_C=\Omega\left(n^{10}p^{12}\delta^6\big/\epsilon^{16}\right)$ with at least  $N_C=\Omega(n^8p^{12}\delta^6/\epsilon^{16})$ training samples, 
after performing minibatch SGD with  the batch size $B=\Omega\left(n^4p^6\delta^3\big/\epsilon^{8}\right)$ and the learning rate $\eta=O\big(1\big/(m \textrm{poly}(n,p,\delta,1/\epsilon, \log m))\big)$ for $T=O\left(n^4p^6\delta^3\big/\epsilon^{8}\right)$ iterations, it holds w.h.p. that 
\begin{center}
    $\underset{(x,y)\sim\mathcal{D}}{\mathbb{P}}\left[yf_{C}(\theta^{(T)},x)>0\right]\ge1-\epsilon$
\end{center} 
\end{theorem}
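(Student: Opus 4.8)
The plan is to treat the CNN $f_{C}$ in (\ref{cnn_bin}) as the degenerate pMoE obtained from (\ref{mcnn_bin}) by taking a single expert ($k=1$), routing all patches ($l=n$), and $G_{j,s}\equiv 1$, and then to run the same three-stage argument used for Theorem \ref{Thm_mcnn_bin}, but with no router step (Lemma \ref{router_lemma} is vacuous here, since every patch, including the class-discriminative one, is already fed to the single CNN) and with every occurrence of $l$ replaced by $n$. Concretely: (i) a \emph{feature-mapping} step at the Gaussian initialization (\ref{eqn:expert_ini}), showing that a constant fraction of the $m$ hidden units are already well aligned with $o_1$ (resp.\ $o_2$) while being only mildly correlated with the class-irrelevant patterns; (ii) construction of a reference (``pseudo'') network whose weights lie in a small ball around the initialization and whose logistic risk is $O(\epsilon)$, followed by a coupling argument showing that SGD on (\ref{erm}) stays in that ball and, because the logistic loss composed with the near-linearized $f_{C}$ is effectively convex there, drives the \emph{empirical} risk below $O(\epsilon)$ within $T=O(n^4p^6\delta^3/\epsilon^8)$ steps of batch size $B=\Omega(n^4p^6\delta^3/\epsilon^8)$; (iii) a Rademacher-complexity bound over the SGD ball converting the empirical guarantee into the stated population guarantee once $N_C=\Omega(n^8p^{12}\delta^6/\epsilon^{16})$ samples are available and $m\ge M_C=\Omega(n^{10}p^{12}\delta^6/\epsilon^{16})$.

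For step (i), since the $w_r^{(0)}$ are i.i.d.\ $\mathcal{N}(0,\tfrac1m I_d)$, Gaussian anti-concentration together with the separation assumptions (encoded by $\delta$ in (\ref{eqn:delta}) and by $\delta_d,\delta_r$) guarantees that for a $\Theta(1)$ fraction of indices $r$ the inner product $\langle w_r^{(0)},o_1\rangle$ (or $\langle w_r^{(0)},o_2\rangle$) is of order $1/\sqrt m$ and exceeds, by a definite margin scaling with $\sqrt{1-\max(\delta_d^2,\delta_r^2)}$, its inner products with the irrelevant patterns of all $p$ sets. Using these ``good'' units as random features and the ability of ReLU to approximate threshold/bump functions (as in \citet{li2018learning}), I would build weights $W^{*}$ with $\|W^{*}-W^{(0)}\|$ small (polynomially small in $\epsilon,p,\delta$ and polynomial in $n$) for which the averaged response $\tfrac1n\sum_{j}\textbf{ReLU}(\langle\,\cdot\,,x^{(j)}\rangle)$ already classifies $\mathcal{D}$ with margin, hence has $O(\epsilon)$ logistic risk. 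The $1/n$ averaging over patches is exactly where $n$ enters the bounds: the class-discriminative patch is diluted by a factor $n$ relative to pMoE's $l$, so the needed perturbation radius, and therefore both $M_C$ and $N_C$, inherit the worse polynomial dependence on $n$ rather than $l$; moreover all $n-1$ (rather than $l-1$) irrelevant patches contribute to the interference terms that must be dominated.

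Steps (ii) and (iii) then mirror the pMoE argument with $l\mapsto n$. For (ii) I would show that within the radius-$\tau$ ball around $W^{(0)}$, with $\tau$ chosen as in Theorem \ref{Thm_mcnn_bin} (scaled by $n$), the quantity $|f_{C}(W,x)-f_{C}(W^{(0)},x)-\langle\nabla_W f_{C}(W^{(0)},x),\,W-W^{(0)}\rangle|$ is uniformly $O(\epsilon)$; this linearization makes the per-step objective convex in $W-W^{(0)}$, so a standard online-to-batch/regret bound for SGD yields an iterate with empirical logistic risk at most $L(W^{*})+O(\epsilon)=O(\epsilon)$ after $T$ steps, provided $\eta=O(1/(m\,\textrm{poly}(n,p,\delta,1/\epsilon,\log m)))$ and provided one verifies, by a martingale/Azuma argument controlled by $B$, that SGD never leaves the ball. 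For (iii) I would bound the empirical Rademacher complexity of $\{f_{C}(W,\cdot):\|W-W^{(0)}\|\le\tau\}$ by $\widetilde{O}(\tau\sqrt{m}\cdot\mathrm{poly}(n)/\sqrt{N})$; requiring this to be $\le\epsilon$ forces $N_C=\Omega(n^8p^{12}\delta^6/\epsilon^{16})$, and a margin-based uniform-convergence bound then upgrades training risk $O(\epsilon)$ to $\mathbb{P}_{(x,y)\sim\mathcal{D}}[yf_{C}(\theta^{(T)},x)>0]\ge 1-\epsilon$ after rescaling constants.

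The main obstacle is bookkeeping rather than conceptual: one must track the $n$-, $p$-, $\delta$-, and $1/\epsilon$-dependence consistently through the feature-mapping construction and the Rademacher bound so that the exponents $8$ and $10$ emerge exactly, and one must re-verify—as in Theorem \ref{Thm_mcnn_bin}—that the \citet{li2018learning}-type construction survives under our much weaker inter-class separation $\min\{\sqrt{2(1-\delta_d)},\,2\sqrt{1-\delta_r}\}$ (versus the $\Omega(n)$ separation assumed there), which is precisely what injects the $\delta$-factors and the $1/(1-\max(\delta_d^2,\delta_r^2))$ scaling into $M_C$ and $N_C$. No new idea beyond the pMoE analysis is required; this theorem is essentially the ``$l=n$, single-expert, no-router'' instantiation of that analysis, serving as the baseline against which the $n/l$ savings of Theorem \ref{Thm_mcnn_bin} are measured.
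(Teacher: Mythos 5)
Your plan shares the paper's two structural ingredients---the activation-pattern coupling to a pseudo network around the Gaussian initialization, and the observation that the CNN is the $k=1$, $l=n$, no-router degenerate pMoE so that every bound is the pMoE bound with $l\mapsto n$---but your steps (ii) and (iii) take a genuinely different route from the paper's proof. The paper never constructs a reference network $W^{*}$, never invokes an online-to-batch regret bound, and never uses a Rademacher-complexity or uniform-convergence step. Instead it defines class-conditional expected value functions $v_1^{(t)},v_2^{(t)}$ (Definition \ref{value_function}), proves that whenever $v^{(t)}=\max\{v_1^{(t)},v_2^{(t)}\}$ is large, an $\Omega\left(1/(p\sqrt{\delta})\right)$ fraction of neurons have expected pseudo-gradient of norm $\Tilde{\Omega}\left(v^{(t)}/(np\sqrt{\delta})\right)$ (Lemmas \ref{a_lemma_13}--\ref{a_lemma_14}, via the decomposition $w_r^{(0)}=\alpha o_1+\beta$ and an $\bar{\varepsilon}$-net over the value functions), and then shows that the \emph{population} logistic loss drops by $\Tilde{\Omega}\left(\eta m\gamma^3 (v^{(t)})^2/n^2\right)$ per iteration (Lemma \ref{a_lemma_15}); since the initial loss is $\Tilde{O}(1)$, after $T$ iterations one forces $v^{(T)}\le\epsilon^2$, which converts to $1-\epsilon$ accuracy by Markov's inequality. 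Generalization thus comes ``for free'' from one-pass SGD with fresh batches ($N_C=BT$), the batch size being needed only to concentrate the stochastic gradient around the population gradient; this is what lets the paper avoid a $\Tilde{O}(\tau\sqrt{m}\,\mathrm{poly}(n)/\sqrt{N})$ capacity bound whose $m$-dependence would otherwise have to be cancelled by hand. Your route (reference point in a ball, convexity of the linearization, regret bound, uniform convergence over the SGD ball) is a standard and workable alternative that is more modular and would also cover multi-pass SGD, but note one quantitative looseness in your step (i): under $w_r^{(0)}\sim\mathcal{N}(0,\frac{1}{m}\mathbb{I}_{d\times d})$ the inner products with $o_1$ and with the irrelevant patterns all have the same $1/\sqrt{m}$ scale, so the ``good'' neurons are not a $\Theta(1)$ fraction independent of the data parameters---the good event occurs with probability $\Omega\left(1/(p\sqrt{\delta})\right)$ per neuron, and it is precisely this factor (cubed through the descent lemma and squared again through the batch size) that produces the $p^{12}\delta^{6}$ dependence in $M_C$ and $N_C$; your bookkeeping must track it to recover the stated exponents.
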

Theorem \ref{Thm_single_cnn} implies that to achieve a generalization error $\epsilon$ 
using CNN in (\ref{cnn_bin}),  
we need $N_C=\Omega(n^8p^{12}\delta^6/\epsilon^{16})$ training samples and $M_C=\Omega\left(n^{10}p^{12}\delta^6\big/\epsilon^{16}\right)$ neurons.


\textbf{Sample-complexity gap between single CNN and mixture of CNNs.} From Theorem \ref{Thm_mcnn_bin} and Theorem \ref{Thm_single_cnn}, 
the sample-complexity ratio of the CNN to the separate-training pMoE is $N_C/N_S=\Theta\big((n/l)^8\big)$. 
Similarly, the required number of neurons is reduced by a factor of  $M_C/M_S=\Theta\big((n/l)^{10}\big)$ in separate-training pMoE\footnote{The bounds for the sample complexity and model size in Theorem \ref{Thm_mcnn_bin} and Theorem \ref{Thm_single_cnn} are sufficient but not necessary. Thus, rigorously speaking, one can not compare 
sufficient conditions only. In our analysis, however, the bounds for MoE and CNN are derived with   exactly the same technique with the only difference to handle the routers. 
Therefore, it is fair to compare these two bounds to show the advantage of pMoE.}.  

\subsubsection{Generalization Guarantee of Joint-training pMoE with Proper Routers}

Theorem \ref{Thm_mcnn_j} characterizes the generalization performance of joint-training pMoE 
assuming   the routers are properly trained in the sense  that after some SGD iterations, for each class 
at least one of the $k$ experts
receives all class-discriminative patches 
of that class with the largest gating-value (see Assumption \ref{router_asmptn}). 
\begin{assumption}\label{router_asmptn}
There exists an integer $T^\prime<T$ such that for all  $t\ge T^\prime$, it holds that:
\begin{align*}
    &\text{There exists an expert } s\in[k] \text{ s.t. } \forall (x,y=+1)\sim \mathcal{D},\\ 
    &\hspace{1cm}j_{o_1}\in J_s(w_s^{(t)},x), \text{ and } G_{j_{o_1},s}^{(t)}(x)\ge G_{j,s}^{(t)} (x)\\
    &\text{and an expert } s\in[k] \text{ s.t. } \forall (x,y=-1)\sim \mathcal{D},\\ 
    &\hspace{1cm}j_{o_2}\in J_s(w_s^{(t)},x), \text{ and } G_{j_{o_2},s}^{(t)}(x)\ge G_{j,s}^{(t)} (x)
\end{align*}
where $j_{o_1}$ ($j_{o_2}$) denotes the index of the class-discriminative pattern $o_1$ ($o_2$), $G_{j,s}^{(t)}(x)$ is the  gating output of patch $j\in J_s(w_s^{(t)},x)$ of sample $x$ for expert $s$ at the iteration $t$, and $w_s^{(t)}$ is  the gating kernel for expert $s$ at iteration $t$. 
\end{assumption}
Assumption \ref{router_asmptn} is required in proving Theorem \ref{Thm_mcnn_j} because of the difficulty of
  tracking the dynamics of the routers in joint-training pMoE. Assumption \ref{router_asmptn} is verified on empirical experiments in Section \ref{experiment_mnist}, while its theoretical proof is left for future work.

\begin{table*}[t]
\caption{Computational complexity of pMoE and CNN.}
\label{table1}
\vskip 0.1in
\begin{center}
\begin{small}
\begin{sc}
\renewcommand{\arraystretch}{1.4}
\begin{tabular}{m{1.2in}|>{\centering}m{1.2in}|>{\centering}m{1.7in}|>{\centering}m{0.8in}|>{\centering\arraybackslash}m{0.8in}}
\toprule
\multirow{3}{1.2in}{\textbf{Complexity to achieve $\epsilon$ error} (Complx/Iter $\times$ T)} & \multicolumn{3}{c|}{\textbf{pMoE}} & \multirow{2}{*}{\textbf{CNN}} \\\cline{2-4}
& \textbf{Separate-training} & \multicolumn{2}{c|}{\textbf{Joint-training}} & \\\cline{2-5}
& $O(Bml^5d/\epsilon^8)$ & \multicolumn{2}{c|}{$O(Bmk^2l^3d/\epsilon^8)$} & $O(Bmn^5d/\epsilon^8)$\\\cline{1-5}
\multirow{3}{1.2in}{Complexity per Iteration (\textbf{Complx/Iter})} & \multirow{3}{*}{$O(Bmld)$} & \textbf{Router} & \textbf{Expert} &  \multirow{3}{*}{$O(Bmnd)$} \\\cline{3-4}
&& $O(Bknd)$ (Forward pass) & \multirow{2}{*}{$O(Bmld)$} &\\
&& $O(Bkl^2d)$ (Backward pass) &&\\\cline{1-5}
Iteration required to converge with $\epsilon$ error (\textbf{T}) & $O(l^4/\epsilon^8)$ & \multicolumn{2}{c|}{$O(k^2l^2/\epsilon^8)$} & $O(n^4/\epsilon^8)$\\
\bottomrule
\end{tabular}
\renewcommand{\arraystretch}{1}
\end{sc}
\end{small}
\end{center}
\vskip -0.1in
\end{table*}

\begin{theorem}[Generalization guarantee of joint-training pMoE]\label{Thm_mcnn_j}
Suppose Assumption \ref{router_asmptn} hold. Then for every $\epsilon>0$, for every  $m \ge M_J=\Omega\left(k^3n^2l^{6}p^{12}\delta^6\big/\epsilon^{16}\right)$ with at least $N_J=\Omega(k^4l^{6} p^{12}\delta^6/\epsilon^{16})$ training samples,  after performing minibatch SGD with the batch size $B=\Omega\left(k^2l^4p^6\delta^3\big/\epsilon^{8}\right)$ and the learning rate $\eta=O\big(1\big/(m\text{poly}(l,p,\delta,1/\epsilon, \log m))\big)$ for $T=O\left(k^2l^2p^6\delta^3\big/\epsilon^{8}\right)$ iterations, it holds w.h.p. that
\begin{center}
   $\underset{(x,y)\sim\mathcal{D}}{\mathbb{P}}\left[yf_{M}(\theta^{(T)}, x)>0\right]\ge1-\epsilon$
\end{center}
\end{theorem}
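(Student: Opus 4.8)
\textbf{Proof proposal for Theorem \ref{Thm_mcnn_j}.}

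The plan is to reduce the analysis of joint-training pMoE to a perturbed version of the separate-training argument used for Theorem \ref{Thm_mcnn_bin}, exploiting Assumption \ref{router_asmptn} to guarantee that, for $t \ge T'$, each class-discriminative pattern $o_1$ (resp.\ $o_2$) is always routed to a dedicated expert with the dominant gating value. First I would split the training horizon into a warm-up phase $t < T'$ and a main phase $T' \le t \le T$; since $T' < T$ and the total iteration budget already exceeds $T'$ by the required polynomial margin, the warm-up phase only contributes lower-order terms to the weight movement, so I would bound $\|w_{r,s}^{(t)} - w_{r,s}^{(0)}\|$ during warm-up by $O(\eta T' \cdot \mathrm{poly})$ and absorb it into the initialization radius. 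From $t = T'$ onward, Assumption \ref{router_asmptn} fixes which expert ``owns'' each class, so I can restrict attention to those two experts and treat the patches they receive as a bounded-cardinality ($\le l$) multiset that always contains the relevant $o_i$ with gating weight $G_{j_{o_i},s} \ge G_{j,s}$ for all $j$.

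The core of the argument mirrors the feature-mapping/pseudo-network construction behind Theorem \ref{Thm_mcnn_bin}. I would (i) build a target ``pseudo-network'' that uses only the class-discriminative direction inside the owning expert and show it achieves small logistic loss with a bounded-norm weight configuration — here the separation parameters $\delta_d, \delta_r$ and the pattern-set count $p$ enter exactly as in the CNN/separate-pMoE case, contributing the $p^{12}\delta^6$ factors; (ii) invoke the standard coupling argument (as in \citet{li2018learning} and the proof of Theorem \ref{Thm_mcnn_bin}) that SGD on the true ReLU network stays close to a linearization around initialization for $O(\eta^{-1})$-controlled trajectories, with the key modification that the gating values $G_{j,s}(w_s,x)\in(0,1]$ are now nonconstant and co-evolving. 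The gating values act as bounded multiplicative coefficients on each neuron's contribution; since $G_{j,s}\le 1$ and they are $1/\mathrm{poly}$-Lipschitz in $w_s$ in a neighborhood (softmax over $\le l$ bounded inputs), their evolution perturbs each gradient step by at most a $\mathrm{poly}(l,p,\delta)$ factor, which is why the bounds in Theorem \ref{Thm_mcnn_j} match Theorem \ref{Thm_mcnn_bin} up to the extra $k$-dependence (from having $k$ experts, hence $m/k$ neurons each, and the router forward/backward cost) and a milder $l$-power (the $n^2$ factor in $M_J$ and the $l^6$ rather than $l^8$ come from bookkeeping the owning-expert's reduced patch count against the worst-case gating normalization). (iii) Combining online-to-batch conversion over $N_J$ samples with a Rademacher/uniform-convergence bound on the low-norm function class then yields the stated generalization guarantee w.h.p.

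The main obstacle will be step (ii): handling the \emph{coupled, nonconstant gating values} in the SGD trajectory. Unlike the separate-training case where $G_{j,s}\equiv 1$ decouples the router from the expert dynamics entirely, here the gradient with respect to $w_{r,s}$ picks up terms routed through $G_{j,s}(w_s,x)$, and the gradient with respect to $w_s$ feeds back into which patches are selected. Assumption \ref{router_asmptn} is precisely what tames the discrete routing set $J_s$ (it stops changing in the relevant way for $t \ge T'$ as far as the class-discriminative patch is concerned), but the \emph{continuous} softmax weights still move. I would control this by showing the relevant gating value $G_{j_{o_i},s}^{(t)}$ stays bounded below by a $1/\mathrm{poly}(l)$ quantity (it is the top value among $\le l$ entries, so at least $1/l$) throughout the main phase, so that the class-discriminative signal is never attenuated below a usable threshold; the label-irrelevant patches' gating values only help (they shrink the interference), so a one-sided bound suffices. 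The secondary obstacle is making the constants in the warm-up bound compatible with the initialization scale $\frac{1}{m}\mathbb{I}$ in \eqref{eqn:expert_ini} — this is routine but requires choosing the learning rate $\eta = O(1/(m\,\mathrm{poly}))$ small enough that $T'$ steps of warm-up do not exhaust the movement budget, which is already baked into the theorem's hypotheses.
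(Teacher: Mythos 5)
Your proposal diverges from the paper's actual argument in its core engine, and one of your key steps has a genuine gap. The paper (Theorem \ref{a_theorem_2} with Lemmas \ref{a_lemma_8}--\ref{a_lemma_11}) does \emph{not} prove an expressivity statement for a target network, does not use online-to-batch conversion, and does not invoke any Rademacher/uniform-convergence bound. Instead it runs a ``large error implies large population gradient'' descent argument in the style of \citet{li2018learning}: each minibatch is a fresh draw from $\mathcal{D}$ (so $N_J=BT$ and generalization is automatic), and whenever the class-conditional error proxy $v^{(t)}=\sqrt{\sum_s (v_s^{(t)})^2}$ exceeds $\epsilon^2/l$, the expected pseudo-gradient of the owning experts has norm $\tilde\Omega\bigl(v_s^{(t)}/(lp\sqrt{\delta})\bigr)$, forcing the \emph{population} loss to drop by $\tilde\Omega(\eta m (v^{(t)})^2/l^2)$ per step; since the initial loss is $\tilde O(1)$ this can happen at most $T$ times. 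The role of Assumption \ref{router_asmptn} is exactly what you identified --- it gives $p_{1,s_1}^{(t)}=1$ and $G_{j_{o_1},s_1}^{(t)}\ge 1/l$ so that $v_{1,s_1}^{(t)}\ge v_1^{(t)}/l$, which is why the stopping threshold becomes $\epsilon^2/l$ and the exponents of $l$ shift relative to Theorem \ref{Thm_mcnn_bin}. The $n^2$ in $M_J$ comes from the coupling lemma having to control activation patterns on all $n$ patches (the router may select any of them), not from ``gating normalization bookkeeping.''

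The concrete gap is your treatment of the co-evolving softmax gates in step (ii). You propose to control them by a Lipschitz-perturbation argument (``their evolution perturbs each gradient step by at most a $\mathrm{poly}(l,p,\delta)$ factor''), but Assumption \ref{router_asmptn} constrains only the discrete routing set and the \emph{ordering} of gating values; it says nothing about how fast the continuous values $G_{j,s}(w_s^{(t)},x)$ drift as $w_s^{(t)}$ is updated, and the router gradient (see \eqref{grad_router_joint}) is not obviously small, so you cannot bound the cumulative perturbation without tracking $w_s^{(t)}$ --- which is precisely the dynamics the paper declines to analyze. The paper sidesteps this entirely: in Lemmas \ref{a_lemma_9} and \ref{a_lemma_10} the quantities $\{v^{(t)}(\theta^{(t)},x,y)\,G_{j,s}(w_s^{(t)},x)\}$ are treated as an \emph{arbitrary} fixed set (removed from the initialization dependence by an $\bar\varepsilon$-net), using only $\sum_{j\in J_s}G_{j,s}=1$ and the $1/l$ lower bound on the discriminative patch's gate. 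If you want to keep your architecture, you would need to replace the Lipschitz argument with a similar ``worst case over all admissible gate assignments'' quantifier. Separately, your warm-up accounting assumes $T$ exceeds $T'$ by a polynomial margin, but the theorem only posits $T'<T$ with $T'$ unquantified, so that step cannot be made rigorous as stated (the paper simply restricts attention to $t\ge T'$ without bounding the pre-$T'$ movement).
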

Theorem \ref{Thm_mcnn_j} indicates that, 
with proper routers, joint-training pMoE 
needs $N_J=\Omega(k^4l^{6} p^{12}\delta^6/\epsilon^{16})$ training samples and $M_J=\Omega\left(k^3n^2l^{6}p^{12}\delta^6\big/\epsilon^{16}\right)$ neurons to achieve $\epsilon$ generalization error. Compared with CNN in Theorem \ref{Thm_single_cnn}, joint-training pMoE   reduces the sample  complexity and model size by a factor of $\Theta(n^8/k^4l^6)$ and $\Theta(n^{10}/k^3l^6)$,  respectively. With more experts (a larger $k$), it is easier to satisfy Assumption \ref{router_asmptn}  to learn proper routers but requires larger sample and model complexities. When the number of samples is fixed,  the expression of $N_J$ also indicates that $\epsilon$ sales as $k^{1/4}l^{3/8}$,   corresponding to an improved generalization when $k$ and $l$ decrease.

We provide the end-to-end computational complexity comparison between the analyzed pMoE models and general CNN model in Table \ref{table1} (see section \ref{table1_details} in Appendix for details). The results in Table \ref{table1} indicates that the computational complexity in joint-training pMoE is reduced by a factor of $O(n^5/k^2l^3)$ compared with CNN. Similarly, the reduction of computational complexity of separate-training pMoE is $O(n^5/l^5)$.

\section{Experimental Results}

\subsection{pMoE of Two-layer CNN}\label{experiment_mnist}
\textbf{Dataset}: We verify our theoretical findings about the model in (\ref{mcnn_bin}) on synthetic data prepared from MNIST \citep{lecun2010mnist} data set. 
Each sample contains $n=16$ patches with patch size $d=28\times28$. Each patch is drawn from the   MNIST dataset.  See Figure \ref{example_img} as an example. We treat the digits ``\textbf{1}'' and ``\textbf{0}'' as the class-discriminative patterns $o_1$ and   $o_2$, respectively. Each of the digits from ``\textbf{2}'' to ``\textbf{9}'' represents a class-irrelevant pattern set.  

\begin{figure}[ht]
\vskip 0.1in
    \begin{minipage}{0.25\linewidth}
        \centering
        \includegraphics[width=0.8\linewidth]{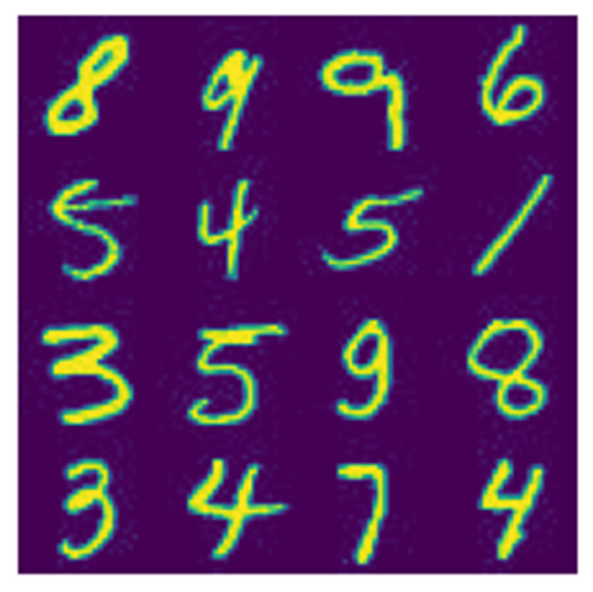}
        \caption{Sample image of the synthetic data from MNIST. Class label is ``1''. }
        \label{example_img}
    \end{minipage}
    ~
    \begin{minipage}{0.75\linewidth}
        \centering
        \includegraphics[width=0.85\linewidth]{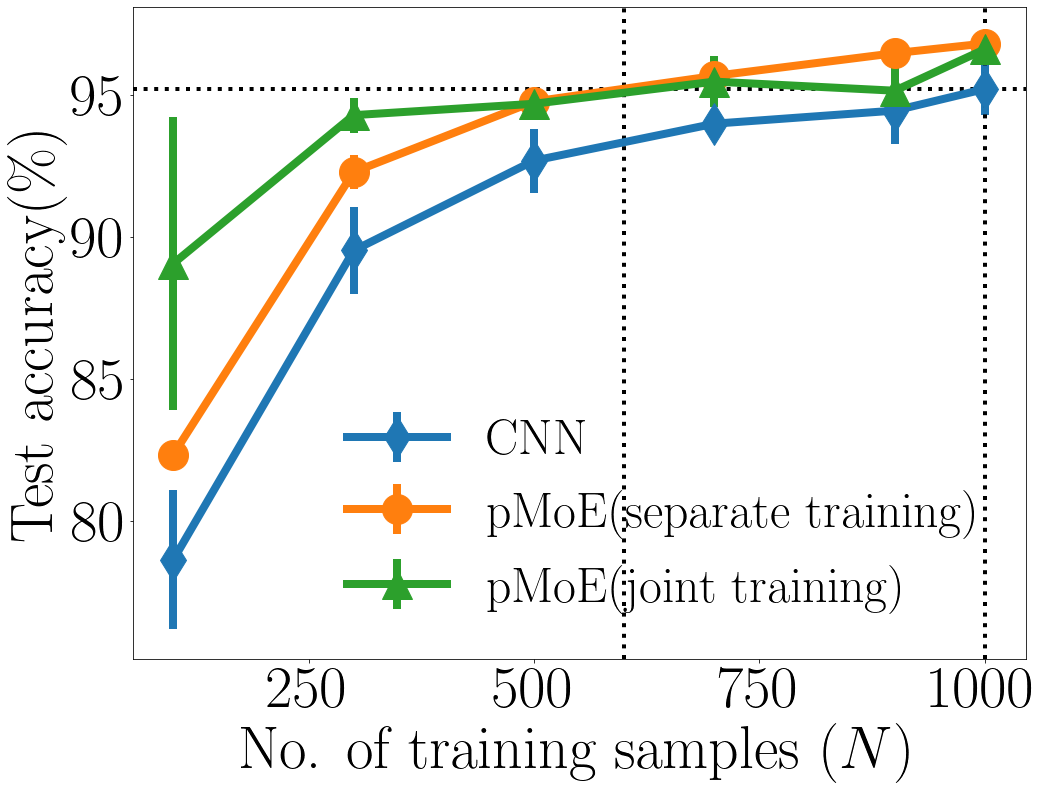}
        \caption{Generalization performance of pMoE and CNN with a similar model size}
        \label{sm_com_mnist}
    \end{minipage}
    \vskip -0.1in
\end{figure}

\begin{figure}[ht]
\vskip 0.1in
    \centering
    \includegraphics[width=0.55\linewidth]{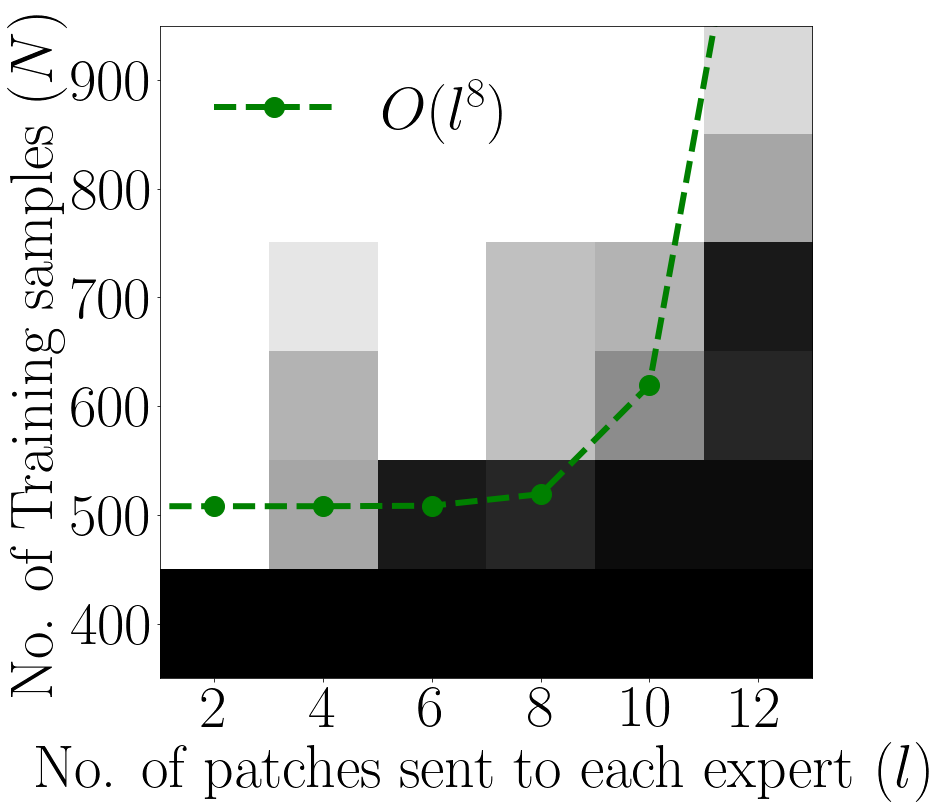}
    \caption{Phase transition of sample complexity with $l$ in separate-training pMoE}
    \label{sm_com_ph_tr_l}
    \vskip -0.1in
\end{figure}

\begin{figure}[ht]
\vskip 0.1in
    \centering
    \includegraphics[width=0.65\linewidth]{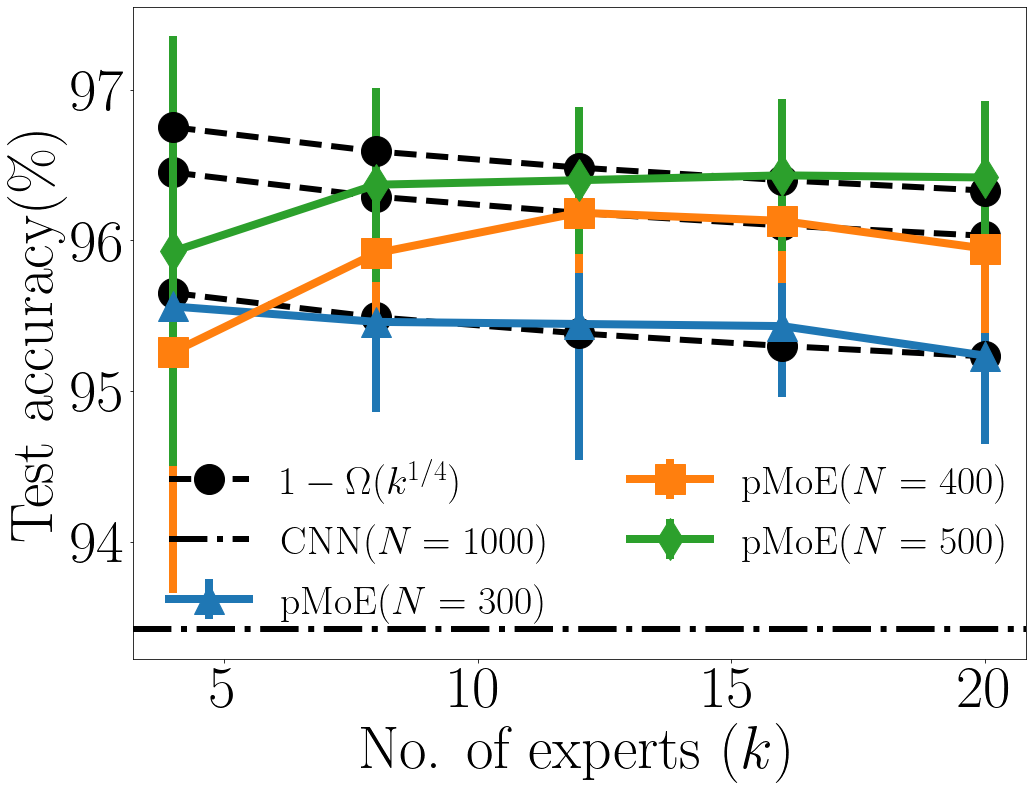}
    \caption{Change of test accuracy in joint-training pMoE with $k$ for fixed sample sizes}
    \label{jpmoe_k_vs_acc}
    \vskip -0.1in
\end{figure}

\begin{figure}[ht]
\vskip 0.1in
    \centering
    \includegraphics[width=0.65\linewidth]{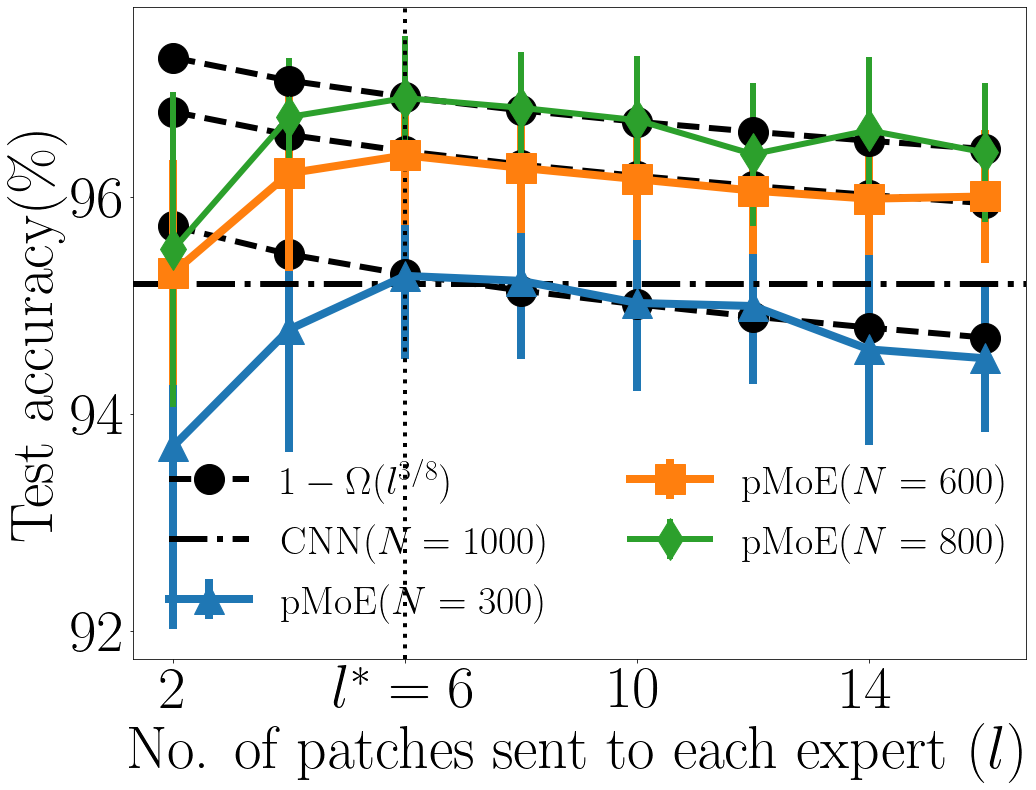}
    \caption{Change of test accuracy in joint-training pMoE with $l$ for fixed sample sizes}
    \label{jpmoe_l_vs_acc}
    \vskip -0.1in
\end{figure}

\textbf{Setup}: We compare separate-training  pMoE,  joint-training pMoE, and CNN with similar model sizes. The separate-training pMoE contains \textit{two} experts with $20$ hidden nodes in each expert. The joint-training pMoE has eight experts with five hidden nodes per expert. The CNN has $40$ hidden nodes. All are trained using SGD with $\eta=0.2$ until zero training error. pMoE converges much faster than CNN, which takes $150$ epochs. 
Before training the experts in the separate-training pMoE, we train the router for $100$ epochs. The models are evaluated on $1000$ test samples.


\textbf{Generalization performance}: Figure \ref{sm_com_mnist} compares the test accuracy of the three models, where   $l=2$ and $l=6$ for separate-training and joint-training pMoE, respectively. The error bars show the mean plus/minus one standard deviation of five independent experiments. 
pMoE outperforms CNN with the same number of training samples.  pMoE only requires 60\% of the training samples needed by CNN to achieve  $95\%$ test accuracy. 

Figure \ref{sm_com_ph_tr_l} shows the sample complexity of separate-training pMoE with respect to $l$. Each block represents 20 independent trials. A white block indicates all success,  and a black block indicates all failure. The sample complexity is polynomial in $l$, verifying Theorem \ref{Thm_mcnn_bin}.  Figure \ref{jpmoe_l_vs_acc} and \ref{jpmoe_k_vs_acc} show  the test accuracy of joint-training pMoE with a fixed sample size when $l$ and $k$ change, respectively. When $l$ is greater than $l^*$, which is $6$ in Figure \ref{jpmoe_l_vs_acc}, the test accuracy matches our predicted order. Similarly,  the dependence on $k$ also matches our prediction, when $k$ is large enough to make Assumption \ref{router_asmptn} hold.

\textbf{Router performance}: 
Figure \ref{fxd_rtr_per} verifies the discriminative property of separately trained routers (Lemma \ref{router_lemma}) by showing the percentage of testing data that have class-discriminative  patterns ($o_1$ and $o_2$) in top $l$ patches of the separately trained router.  With very few training samples (such as $300$), one can already learn a proper router that has discriminative patterns in top-$4$ patches for  95\% of data. 
Figure \ref{jnt_rtr_per} 
 verifies the discriminative property of jointly trained 
  routers (Assumption \ref{router_asmptn}). With only $300$ training samples, the jointly trained router dispatches  $o_1$ with the largest gating value to a particular expert for 95\% of class-1 data and similarly for $o_2$ in   92\%  of class-2 data.  




\begin{figure}[ht]
\vskip 0.1in
    \centering
    \includegraphics[width=0.63\linewidth]{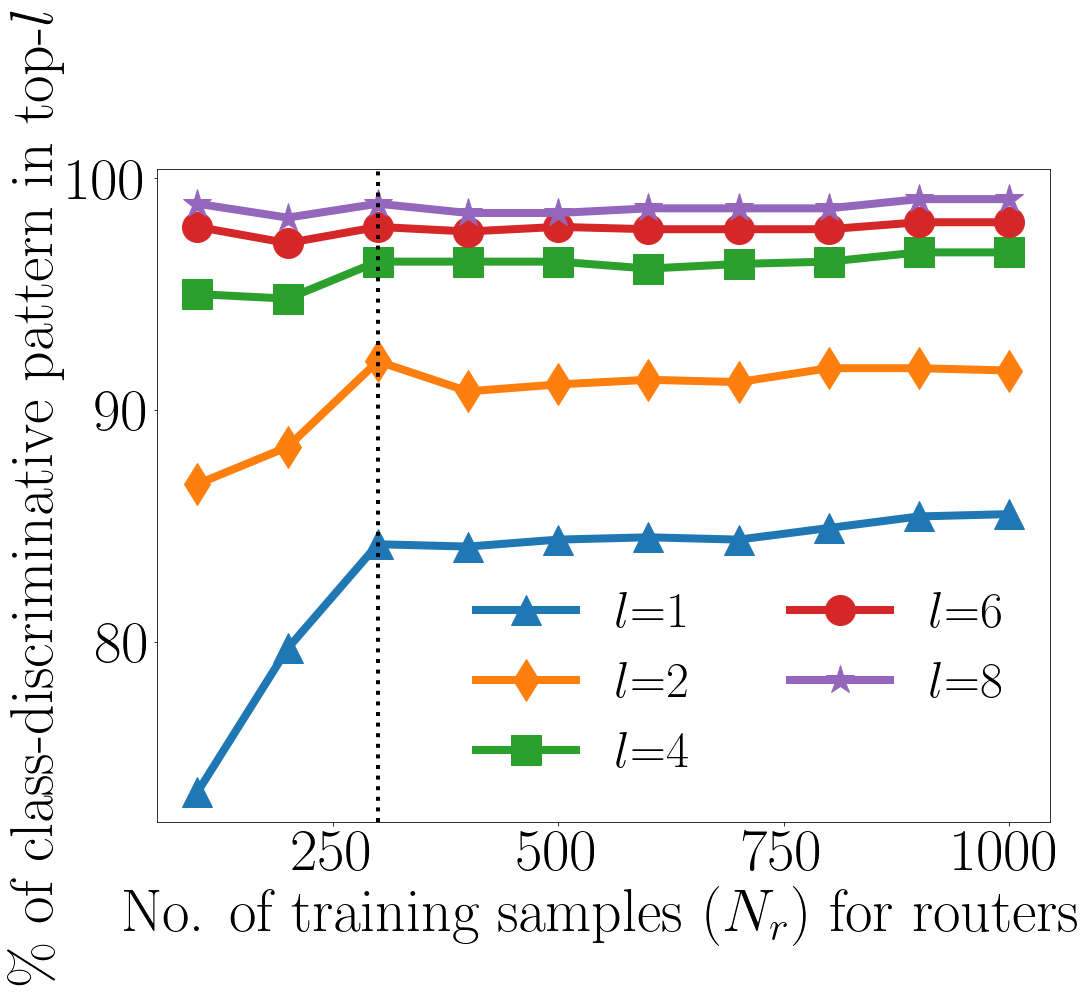}
    \caption{Percentage of properly routed discriminative patterns by  a separately trained router.}
    \label{fxd_rtr_per}
    \vskip -0.1in
\end{figure}

\begin{figure}[ht]
\vskip 0.1in
    \centering
    \includegraphics[width=0.70\linewidth]{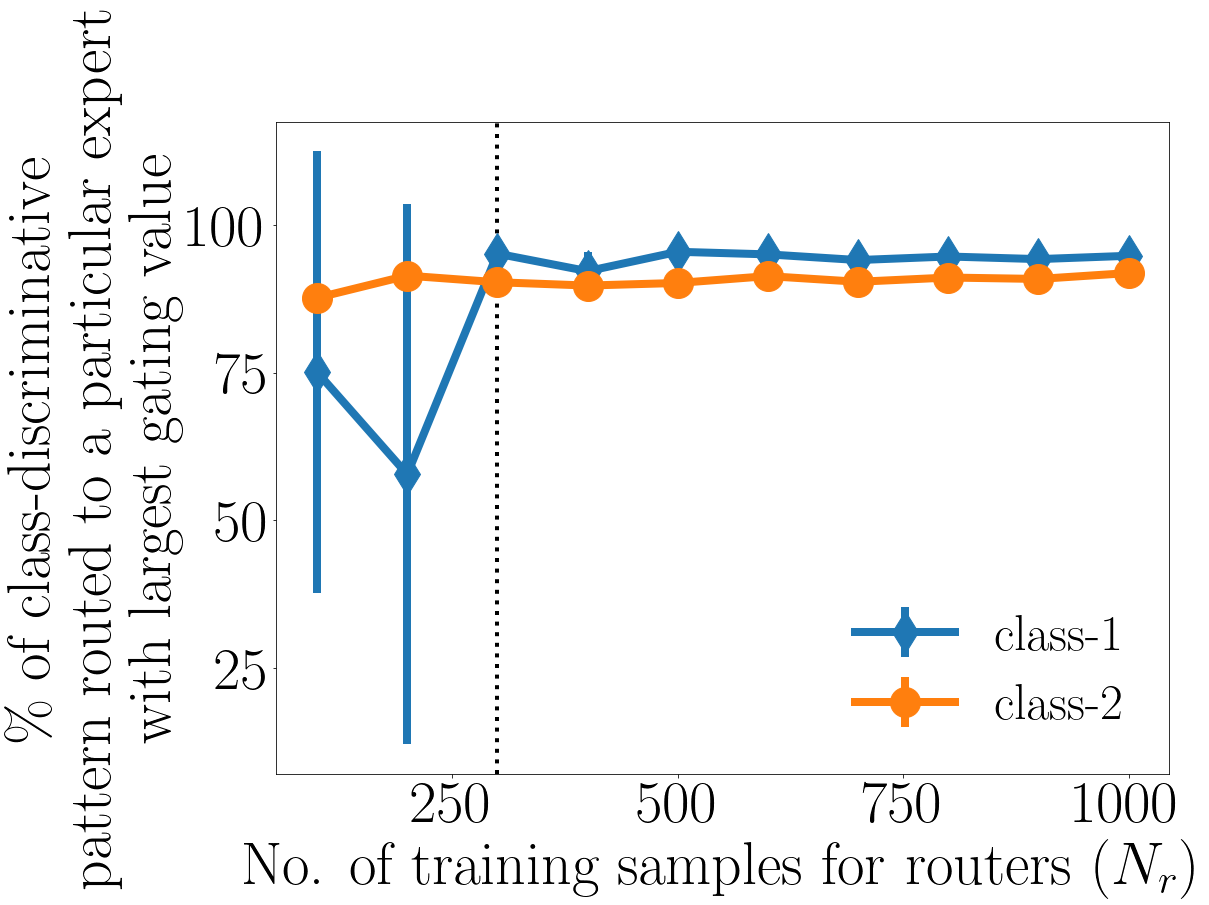}
    \caption{Percentage of properly routed discriminative patterns by a jointly trained router. $l=6$.}
    \label{jnt_rtr_per}
    \vskip -0.1in
\end{figure}

\subsection{pMoE of Wide Residual Networks (WRNs)}

\textbf{Neural network model}: We employ the 10-layer WRN \citep{zagoruyko2016wide} with a widening factor of 10 as the expert. We construct a patch-level MoE counterpart of WRN, referred to as WRN-pMoE,  by replacing the last convolutional layer of WRN with an pMoE layer  of an equal number of trainable parameters  (see Figure \ref{wrn_pmoe} in Appendix for an illustration).  WRN-pMoE is trained with the joint-training method\footnote{Code is available at \url{https://github.com/nowazrabbani/pMoE_CNN}}.  All the results are averaged over five independent experiments.

\textbf{Datasets}: We consider both CelebA \citep{liu2015deep} and CIFAR-10 datasets. The experiments on CIFAR-10 are deferred to the Appendix (see section \ref{cifar_10_exp}). We down-sample the images of CelebA to $64\times64$.   The last convolutional layer of WRN receives a ($16\times16\times640$) dimensional feature map. The feature map is divided into $16$ patches with size $4\times4\times640$ in WRN-pMoE. $k=8$ and $l=2$ for the pMoE layer.

\begin{figure*}[ht]
\vskip 0.1in
\centering
    \begin{minipage}{0.32\linewidth}
        \centering
        \includegraphics[width=0.99\linewidth]{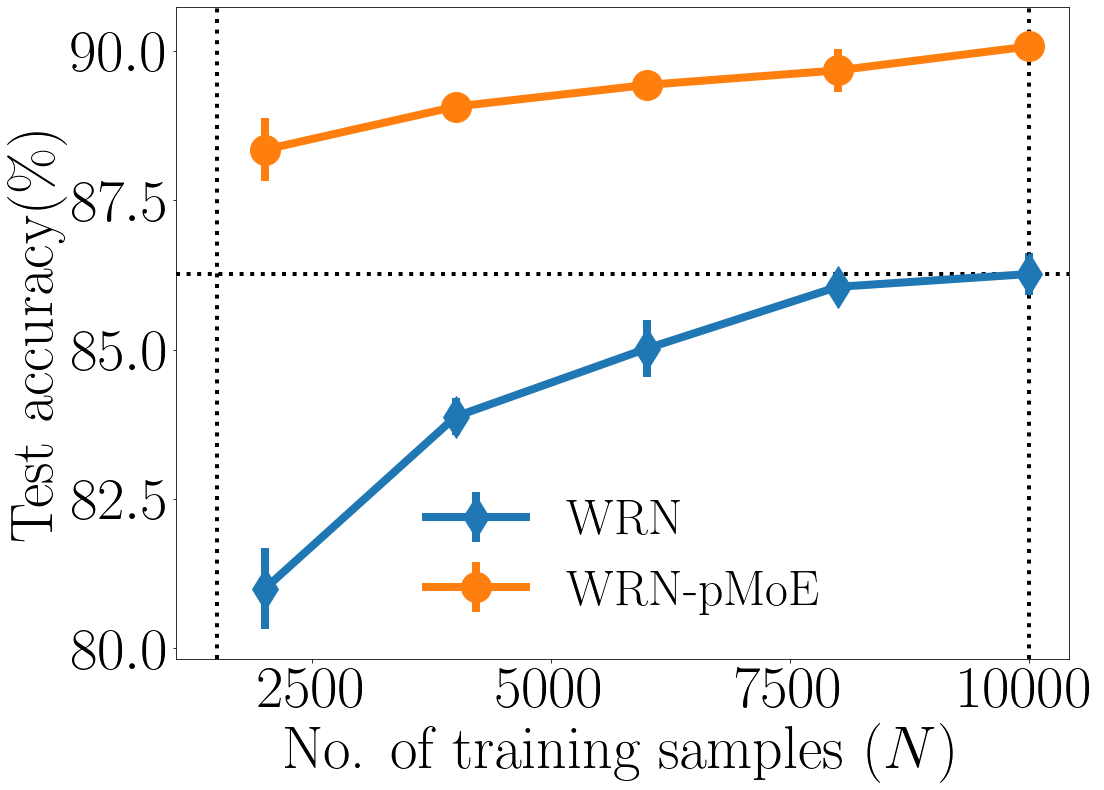}
        \caption{Classification accuracy of  WRN-pMoE and WRN on ``smiling''   in CelebA}
        \label{celeba_smiling}
    \end{minipage}
    ~
    \begin{minipage}{0.32\linewidth}
        \centering
        \includegraphics[width=0.99\linewidth]{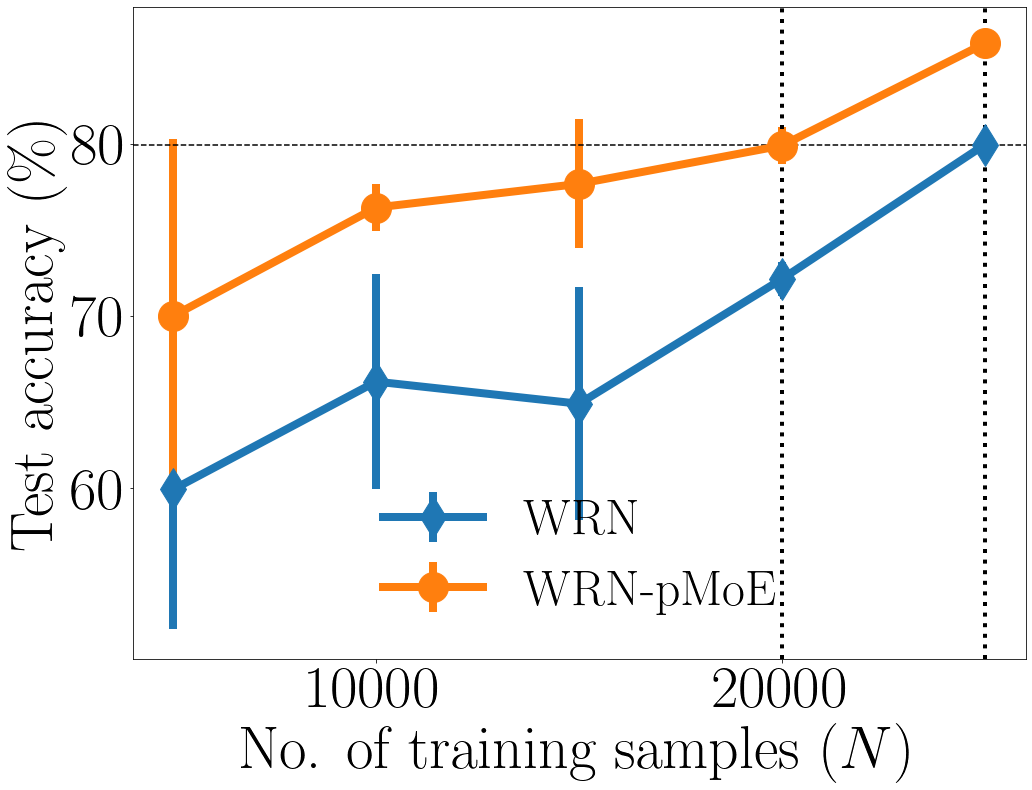}
        \caption{Classification accuracy of  WRN-pMoE and WRN on ``smiling'' when spuriously correlated with ``black hair'' in CelebA}
        \label{celeba_smiling_spurious}
    \end{minipage}
    ~
    \begin{minipage}{0.32\linewidth}
        \centering
        \includegraphics[width=0.99\linewidth]{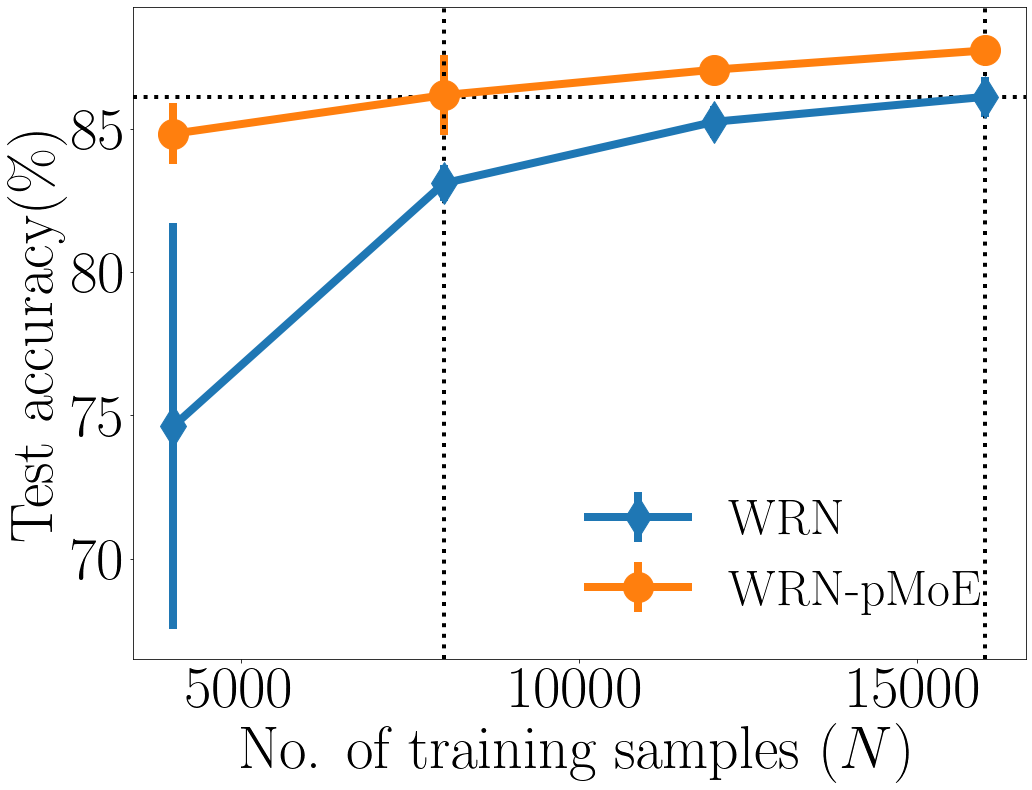}
        \caption{Classification accuracy of  WRN-pMoE and WRN on multiclass classification in CelebA}
        \label{celeba_multi_class}
    \end{minipage}
    \vskip -0.1in
\end{figure*}

\begin{table*}[t]
\caption{Comparison of training compute of WRN and WRN-pMoE.}
\label{table2}
\vskip 0.1in
\begin{center}
\begin{small}
\begin{sc}
\renewcommand{\arraystretch}{1.4}
\begin{tabular}{c|c|c|c|c}
\toprule
\multirow{2}{*}{No. of training samples} & \multicolumn{2}{c|}{Convergence time (sec)} & \multicolumn{2}{c}{Training FLOPs ($\times 10^{15}$)} \\\cline{2-5}
& WRN & WRN-pMoE & WRN & WRN-pMoE \\
\hline
$4000$    & $260$ & $\mathbf{156}$ & $6$ & $\mathbf{3.5}$\\
$8000$    & $324$ & $\mathbf{192}$ & $7.5$ & $\mathbf{4.4}$\\
$12000$    & $468$ & $\mathbf{280}$ & $11$ & $\mathbf{6.4}$\\
$16000$    & $630$ & $\mathbf{368}$ & $15$ & $\mathbf{8.5}$\\
\bottomrule
\end{tabular}
\renewcommand{\arraystretch}{1}
\end{sc}
\end{small}
\end{center}
\vskip -0.1in
\end{table*}

\textbf{Performance Comparison}: Figure \ref{celeba_smiling} shows the test accuracy of the binary classification problem on the attribute ``smiling.''   WRN-pMoE requires less than \textit{one-fifth} of the training samples needed by WRN to achieve 86\% accuracy. Figure \ref{celeba_smiling_spurious} shows the performance when the training data contain spurious correlations with the hair color as a spurious attribute. Specifically,  95\% of the training images with the attribute ``smiling'' also have the attribute ``black hair,'' while
95\% of the training images with the attribute ``not-smiling''   have the attribute ``blond hair.'' The models may learn the hair-color attribute rather than ``smiling'' due to spurious correlation and, thus, the test accuracies are lower in Figure \ref{celeba_smiling_spurious} than those in Figure \ref{celeba_smiling}. Nevertheless, WRN-pMoE outperforms WRN and reduces the sample complexity to achieve the same accuracy. 

Figure \ref{celeba_multi_class} shows the test accuracy of multiclass classification (four classes with class attributes: ``Not smiling, Eyeglass,'' ``Smiling, Eyeglass,'' ``Smiling, No eyeglass,'' and ``Not smiling, No eyeglass'') in CelebA. The results are consistent with    the binary classification results. Furthermore, Table \ref{table2} empirically verifies the computational efficiency of WRN-pMoE over WRN on multiclass classification in CelebA\footnote{An NVIDIA RTX 4500 GPU was used to run the experiments, training FLOPs are calculated as $\text{Training FLOPs}= \text{Training time (second)}\times \text{Number of GPUs}\times \text{peak FLOP/second}\times \text{GPU utilization rate}$}. 
Even with same number of training samples, WRN-pMoE is still more computationally efficient than WRN, because WRN-pMoE requires fewer   iterations to converge and has a lower per-iteration cost.

\section{Conclusion}

 MoE reduces computational costs significantly without hurting the generalization performance in various empirical studies, but the theoretical explanation is mostly elusive. This paper provides the first theoretical analysis of patch-level MoE  and proves its savings in sample complexity and model size quantitatively compared with the single-expert counterpart. Although   centered on a classification task using a mixture of two-layer CNNs, our theoretical insights are verified empirically on deep architectures and multiple datasets. Future works include analyzing other MoE architectures such as MoE in Vision Transformer (ViT) and connecting MoE with other sparsification methods to further reduce the computation.

\section*{Acknowledgements}

This work was supported by AFOSR FA9550-20-1-0122, NSF 1932196 and the Rensselaer-IBM AI Research Collaboration (http://airc.rpi.edu), part of the IBM AI Horizons Network (http://ibm.biz/AIHorizons). We thank Yihua Zhang at Michigan State University for the help in experiments with CelebA dataset. We thank all anonymous reviewers.

\bibliography{Reference/reference}
\bibliographystyle{icml2023}

\newpage
\appendix
\onecolumn
\section{Experiments on CIFAR-10 Datasets}\label{cifar_10_exp}

We also compare WRN and WRN-pMoE on CIFAR-10-based datasets. To better reflect local features, in addition to the original CIFAR-10, we adopt techniques of \citet{karp2021local} to generate two datasets based on CIFAR-10: 

\textbf{1. CIFAR-10 with \textsc{Image}\textsc{Net} noise.} Each CIFAR-10 image is down-sampled to size $16\times16$  and placed at a random location of a background image chosen from ImageNet Plants synset. Figure \ref{cifr_10_example_images}(c) shows an example image of this dataset.

\textbf{2. CIFAR-\textsc{Vehicles}.}  Each vehicle image of CIFAR-10 is down-sampled to size $16\times16$   and placed in one quadrant of an image randomly where the other quadrants are randomly filled with down-sampled animal images in CIFAR-10. See Figure \ref{cifr_10_example_images}(b) for a sample image. 

The last convolutional layer of WRN receives a $(8\times 8\times 640)$ dimensional feature map. 
In WRN-pMoE we divide this feature map into $64$ patches with size $(1\times 1\times640)$. The MoE layer of WRN-pMoE contains $k=4$ experts with each expert receiving $l=16$ patches. 
\begin{figure}[ht]
\vskip 0.1in
\hspace{0.5cm}
    \begin{minipage}{0.40\linewidth}
            \centering
            \includegraphics[width=1\linewidth]{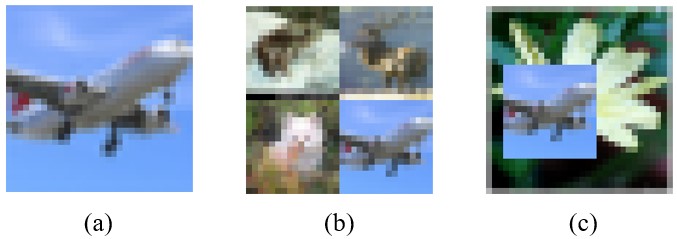}
            \caption{Example images from (a) CIFAR-10, (b) CIFAR-\textsc{Vehicles}, and  (c) CIFAR-10, \textsc{Image}\textsc{Net} noise datasets}
            \label{cifr_10_example_images}
    \end{minipage}
    ~
    \hspace{0.4cm}
    \begin{minipage}{0.50\linewidth}
        \centering
        \includegraphics[width=0.63\linewidth]{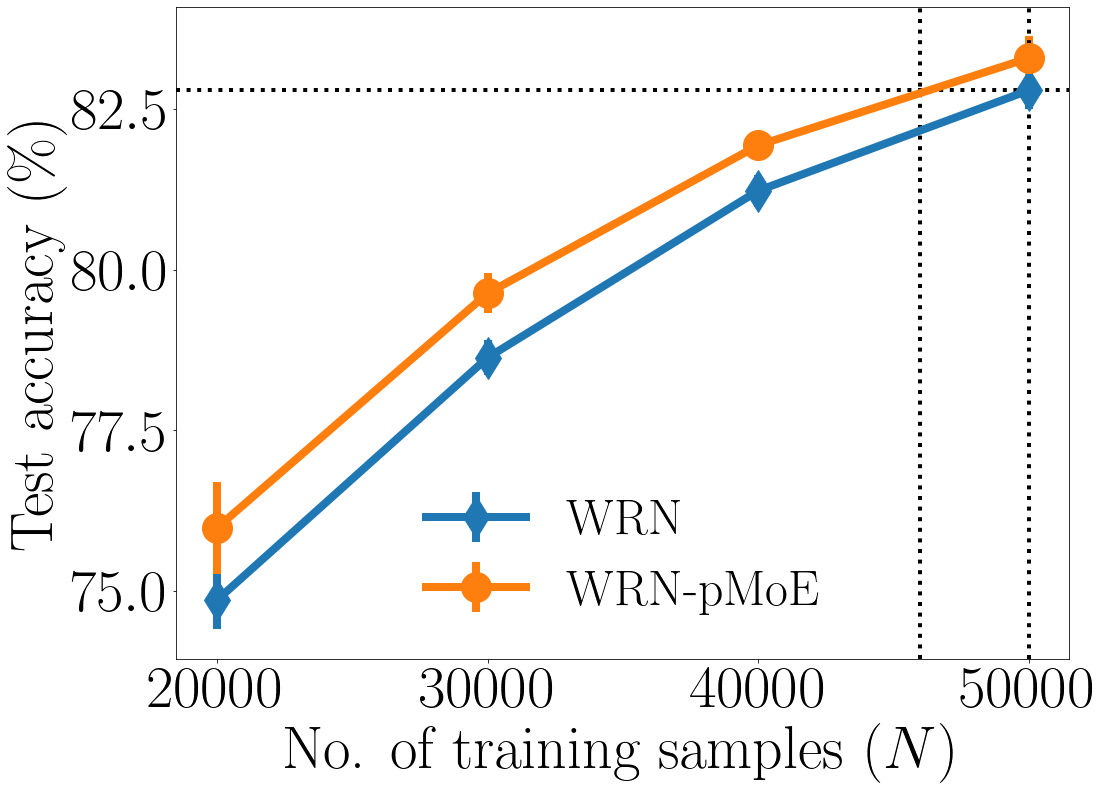}
        \caption{Ten-classification accuracy of WRN and WRN-pMoE on CIFAR-10}
        \label{cifr_10}
    \end{minipage}
    \vskip -0.1in
\end{figure}
\begin{figure}[h]
\vskip 0.1in
    \begin{minipage}{0.48\linewidth}
        \centering
        \includegraphics[width=0.63\linewidth]{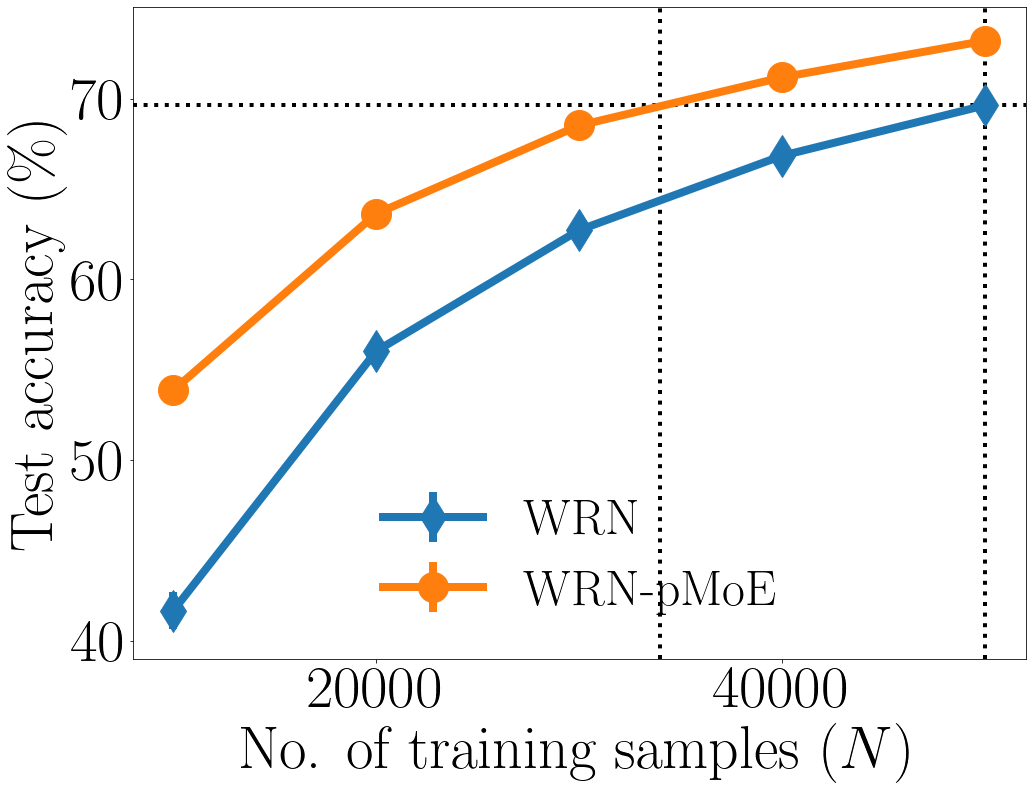}
        \caption{Ten-classification accuracy of  WRN and WRN-pMoE on CIFAR-10, \textsc{Image}\textsc{Net} noise}
        \label{cifr_10_imgnt}
    \end{minipage}
    ~
    \begin{minipage}{0.48\linewidth}
        \centering
        \includegraphics[width=0.63\linewidth]{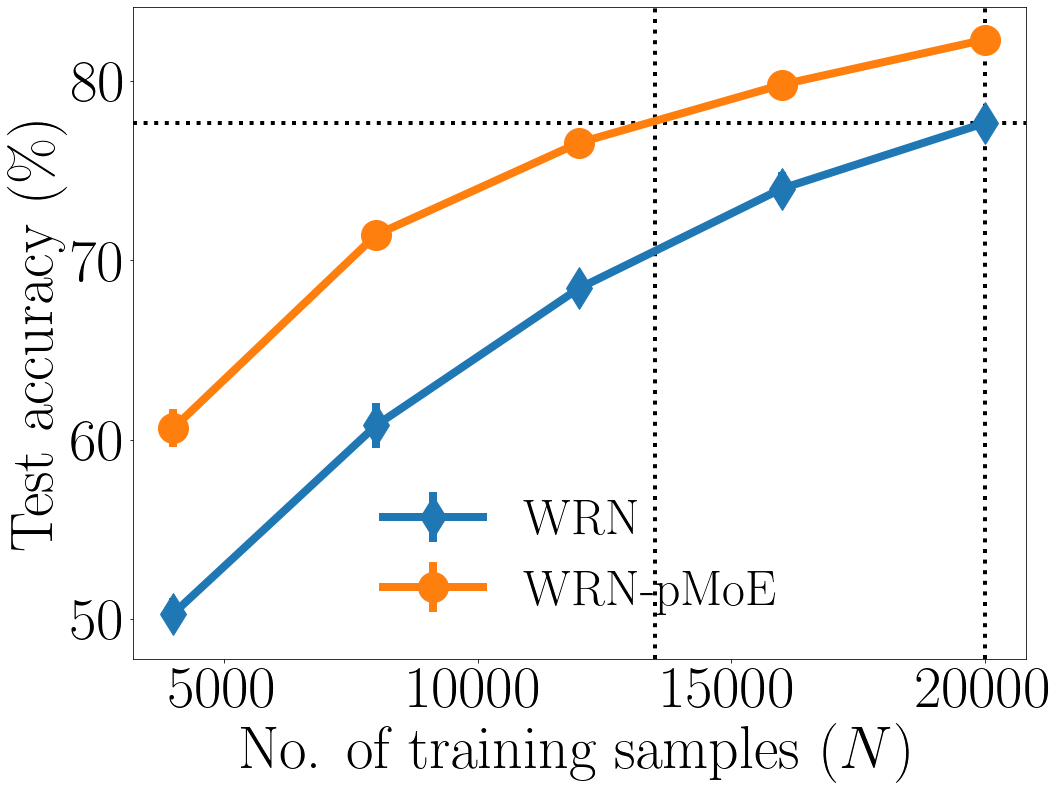}
        \caption{Four-classification accuracy of  WRN and WRN-pMoE on CIFAR-\textsc{Vehicles}}
        \label{cifr_10_block}
    \end{minipage}
    \vskip -0.1in
\end{figure}

Figures \ref{cifr_10}, \ref{cifr_10_imgnt}, and  \ref{cifr_10_block} compare the test accuracy of WRN and WRN-pMoE for the ten-classification problem on CIFAR10 and  CIFAR-10 with \textsc{Image}\textsc{Net} noise, and the four-classification problem in  CIFAR-\textsc{Vehicles}, respectively. WRN-pMoE outperforms WRN in all these datasets, indicating reduced sample complexity using the pMoE layer. The performance gap is more significant in the other two datasets than the original CIFAR-10 dataset. That is because these constructed datasets contain local features, and the pMoE layer has a clear advantage in learning local features effectively. 

\section{Preliminaries}
The loss function for SGD at iteration $t$ with minibatch $\mathcal{B}_t$:
\begin{equation}\label{loss_sgd}
    \mathcal{L}(\theta^{(t)}):=\cfrac{1}{B}\sum_{(x,y)\in\mathcal{B}_t}\log{(1+e^{-yf_M(\theta^{(t)}, x)})}
\end{equation}
For the router-training in separate-training pMoE, the loss function of SGD at iteration $t$ with minibatch $\mathcal{B}_t^r$:
\begin{equation}
    \ell_r(w_1^{(t)},w_2^{(t)}):=-\cfrac{1}{B_r}\hspace{0.1cm}\sum_{(x,y)\in\mathcal{B}_t^r} y  \langle w^{(t)}_1-w_2^{(t)}, \sum_{j=1}^n x^{(j)}\rangle
\end{equation}

\textbf{Notations:}
\begin{enumerate}
    \item Generally $\Tilde{O}(.)$ and $\Tilde{\Omega}(.)$ hides factor $\log(\text{poly}(m,n,p,\delta,\frac{1}{\epsilon}))$. At Lemma \ref{a_lemma_3} and \ref{a_corollary_1}, $\Tilde{\Omega}(.)$ hides factor $\log(\text{poly}(n))$.
    \item Generally with high probability (abbreviated as w.h.p.) implies with probability $1-\cfrac{1}{\text{poly}(m,n,p,\delta,\frac{1}{\epsilon})}$, where $\text{poly}(.)$ implies a sufficiently large polynomial. At Lemma \ref{a_lemma_2}, \ref{a_lemma_3} and \ref{a_corollary_1} ``w.h.p.'' implies $1-\cfrac{1}{\text{poly}(n)}$.
    \item We denote, $\sigma=\frac{1}{\sqrt{m}}$ such that the expert initialization, $w_{r,s}^{(0)}\sim \mathcal{N}(0,\sigma^2\mathbb{I}_{d\times d}), 
    \forall s \in [k], \forall r\in[m/k]$. 
\end{enumerate}
The training algorithms for separate-training and joint-training pMoE are given in Algorithm \ref{alg:1} and Algorithm \ref{alg:2}, respectively:
\begin{algorithm}\caption{Two-phase SGD for separate-training pMoE}\label{alg:1}
\textbf{Input} : Training data $\{(x_i,y_i)\}_{i=1}^N$, learning rates $\eta_r$ and $\eta$, number of iterations $T_r$ and $T$, batch- \\
\indent\hspace{1.2cm}sizes $B_r$ and $B$\\
\textbf{Step-1}: Initialize $w_s^{(0)}, w_{r,s}^{(0)}, a_{r,s}, \forall s\in\{1,2\}, r\in[m/k]$ according to (\ref{eqn:router_ini}) and (\ref{eqn:expert_ini})\\
\textbf{Step-2}: for $t=0,1, ... ,T_r-1$ do:\\
\centerline{$w_{s}^{(t+1)}=w_{s}^{(t)}-\eta_r\cfrac{\partial \ell_r(w_1^{(t)},w_2^{(t)})}{\partial w_s^{(t)}}, \forall s\in\{1,2\}$}\\
\textbf{Step-3}: for $t=0, 1, ..., T-1$ do:\\
\centerline{$w_{r,s}^{(t+1)}=w_{r,s}^{(t)}-\eta\cfrac{\partial \mathcal{L}(\theta^{(t)})}{\partial w_{r,s}^{(t)}}, \hspace{0.2cm}\forall r\in[m/k], s\in\{1,2\}$}
\end{algorithm}
\begin{algorithm}\caption{SGD for joint-training pMoE}\label{alg:2}
\textbf{Input} : Training data $\{(x_i,y_i)\}_{i=1}^N$, learning rate $\eta$, number of iteration $T$, batch-size $B$\\
\textbf{Step-1}: Initialize $w_s^{(0)}, w_{r,s}^{(0)}, a_{r,s}, \forall s\in[k], r\in[m/k]$ according to (\ref{eqn:router_ini}) and (\ref{eqn:expert_ini})\\
\textbf{Step-2}: for $t=0,1, ... ,T-1$ do:\\
\centerline{$w_{s}^{(t+1)}=w_{s}^{(t)}-\eta\cfrac{\partial \mathcal{L}(\theta^{(t)})}{\partial w_{s}^{(t)}}, \forall s\in[k]$}\\
\centerline{$w_{r,s}^{(t+1)}=w_{r,s}^{(t)}-\eta\cfrac{\partial \mathcal{L}(\theta^{(t)})}{\partial w_{r,s}^{(t)}}, \hspace{0.2cm}\forall r\in[m/k], s\in[k]$}
\end{algorithm}

\section{Proof Sketch}
The proof of generalization guarantee for pMoE (i.e., Theorem \ref{Thm_mcnn_bin} and \ref{Thm_mcnn_j}) can be outlined as follows (the proof for single CNN follows a simpler version of the outline provided below):

\textbf{Step 1.} (\textbf{Feature learning in the router}) For separate-training pMoE, we first show that the batch-gradient of the router loss (i.e., $\ell_r(w_1^{(t)},w_2^{(t)})$) w.r.t. the gating kernels (i.e., $w_1^{(t)}$ and $w_2^{(t)}$) has large component (of size $\frac{1-\delta_d}{2}-\Omega\left(\frac{n}{\sqrt{B_r}}\right)$) along the class-discriminative pattern $o_1$ and $o_2$ respectively. Then, by selecting $B_r=\Omega\left(\frac{n^2}{(1-\delta_d)^2}\right)$ (which provides us $\Omega(1)$ loss reduction per step) and training for $\Omega\left(\frac{1}{1-\delta_d}\right)$ iterations, we can show that $w_1$ and $w_2$ is sufficiently aligned with $o_1$ and $o_2$ respectively to guarantee the selection of these class-discriminative patterns in TOP-$l$ patches when $l\ge l^*$ (see Lemma \ref{a_corollary_1} for exact statement).

\textbf{Step 2.} (\textbf{Coupling the experts to pseudo experts}) When the experts of pMoE are sufficiently overparameterized, w.h.p. the experts can be coupled to a smooth pseudo network\footnote{The pseudo network is defined as the network which activation pattern does not change from the initialization i.e., the sign of the pre-activation output of hidden nodes does not change from the sign at initialization; see \citep{li2018learning} for details.} of experts as for every sample drawn from the distribution $\mathcal{D}$ and every $\tau>0$, the activation pattern for $1-\Omega\left(\frac{\tau l}{\sigma}\right)$ (for separate-training pMoE) or $1-\Omega\left(\frac{\tau n}{\sigma}\right)$ (for joint-training pMoE) fraction of hidden nodes in each expert does not change from the initialization for $O(\frac{\tau}{\eta})$ iterations (see Lemma \ref{a_lemma_4} or \ref{a_lemma_8} for exact statement). This indicates that with $\tau=O\left(\frac{\sigma}{l}\right)$ (for separate-training pMoE) or $\tau=O\left(\frac{\sigma}{n}\right)$ (for joint-training pMoE), $\eta=\Omega\left(\frac{1}{ml}\right)$ (for separate-training pMoE) or $\eta=\Omega\left(\frac{1}{mn}\right)$ (for joint-training pMoE) and $\sigma=O\left(\frac{1}{\sqrt{m}}\right)$ we can couple $\Omega(1)$ fraction of hidden nodes of each expert to the corresponding pseudo experts for $O(\sqrt{m})$ iterations.

\textbf{Step 3.}(\textbf{Large error implies large gradient}) We can now analyze the pseudo network of experts corresponding to the separate-training pMoE to show that, at any iteration $t$,  the magnitude of the expected gradient for any expert $s\in\{1,2\}$ of the pseudo network is $\Omega\left(\frac{v_s^{(t)}}{l}\right)$ where $v_s^{(t)}$ characterizes the class-conditional expected error over samples with $y=+1$ and $y=-1$ for $s=1$ and $s=2$, respectively (see Lemma \ref{a_lemma_6} for exact statement). Similarly, for joint-training pMoE we show that the magnitude of the expected gradient is $\Omega\left(\frac{v_s^{(t)}}{l}\right)$, but this time $v_s^{(t)}$ characterizes the maximum of the class-conditional expected-errors over the samples for which the expert ``$s$'' receiving class-discriminative patterns from the router (see Lemma \ref{a_lemma_10} for exact statement). 

\textbf{Step 4.} (\textbf{Convergence}) Now let us define $v^{(t)}=\sqrt{\sum_{s\in[k]}v_s^2(t)}$. For separate-training pMoE, by selecting the batch size $B_t=\Omega(\frac{l^4}{(v^{(t)})^4})$ at iteration $t$, $\eta=\Omega(\frac{(v^{(t)})^2}{ml^2})$ and $\tau=O(\frac{\sigma (v^{(t)})^2}{l^3})$, we can couple the empirical batch gradient of each expert of the true network for that batch to the expected gradient of the corresponding expert of the pseudo network. Because the pseudo network is smooth, we can show that SGD minimizes the expected loss of the true network by $\Omega(\frac{\eta m (v^{(t)})^2}{l^2})$ at each iteration for $t=O(\frac{\sigma (v^{(t)})^2}{\eta l^3})$ iterations (see Lemma \ref{a_lemma_7} for the  exact statement). Similarly, for joint-training pMoE, by selecting $B_t=\Omega(\frac{k^2}{(v^{(t)})^4})$ and $\eta=\Omega(\frac{(v^{(t)})^2l^3}{mk^2})$ we can show that SGD minimizes the expected loss of the true network by $\Omega(\frac{\eta m (v^{(t)})^2}{l^2})$ for $t=O(\frac{\sigma (v^{(t)})^2l^2}{\eta nk})$ (see Lemma \ref{a_lemma_11} for exact statement). As the loss of the true network is $O(1)$ at initialization, eventually the network will converge.

\textbf{Step 5.} (\textbf{Generalization}) We show that to ensure at most $\epsilon$ generalization error after any iteration $t$, we need $\max\{v_1^{(t)},v_2^{(t)}\}<\epsilon^2$ where $v_1^{(t)}$ and $v_2^{(t)}$ correspond  to the class-conditional expected error of the class with $y=+1$ and $y=-1$, respectively. Now as we show that the router in the separate-training pMoE dispatch class-discriminative patches of all the samples labeled as $y=+1$ to the expert indexed by $s=1$ and class-discriminative patches of all the samples labeled as $y=-1$ to the expert indexed by $s=2$ from the beginning of expert-training, $v^{(t)}<\epsilon^2$ ensures $\max\{v_1^{(t)},v_2^{(t)}\}<\epsilon^2$. On the other hand, for the joint-training pMoE,  as we assume that the router ensures the dispatchment of all the class-discriminative patches of a class to a particular expert before the convergence of the model and the gating value of the patch is the largest among all the patches sent to that particular expert, $v^{(t)}<\frac{\epsilon^2}{l}$ implies $\max\{v_1^{(t)},v_2^{(t)}\}<\epsilon^2$. Hence for separate-training pMoE, by setting $v^{(t)}\ge\epsilon^2$ we show that with $B=\Omega(l^4/\epsilon^8)$ and $\eta=\Omega(1/m\text{poly}(l,1/\epsilon))$ for $T=O(l^4/\epsilon^8)$ iterations, we can guarantee that the generalization error is less than $\epsilon$ (see Theorem \ref{a_theorem_1} for exact statement). Similarly, for joint-training pMoE, by setting $v^{(t)}\ge\frac{\epsilon^2}{l}$ and setting $B=\Omega(k^2l^4/\epsilon^8)$ and $\eta=\Omega(1/(m\text{poly}(l,1/\epsilon)$ for $T=O(k^2l^2/\epsilon^8)$ iterations, we can guarantee that the generalization error is less than $\epsilon$ (see Theorem \ref{a_theorem_2} for exact statement).

\section{Proof of the Lemma \ref{router_lemma}}\label{router_lemma_proof}
\begin{definition}{($\delta^\prime$-closer class-irrelevant patterns)}
    For any $\delta^\prime>0$, a class-irrelevant pattern $q$ is $\delta^\prime$-closer to $o_1$ than $o_2$, if $\langle o_1, q\rangle-\langle o_2,q\rangle>\delta^\prime$ for any $\delta^\prime>0$. Similarly, a class-irrelevant pattern $q$ is $\delta^\prime$-closer to $o_2$ than $o_1$ if $\langle o_2,q\rangle-\langle o_1,q\rangle>\delta^\prime$.
\end{definition}
\begin{definition}{(Set of $\delta^\prime$-closer class-irrelevant patterns, $\mathcal{S}_c(\delta^\prime)$)}
    For any $\delta^\prime>0$, define the set of $\delta^\prime$-closer class-irrelevant patterns, denoted as $\mathcal{S}_c(\delta^\prime)\subset\bigcup_{i=1}^pS_j$ such that: $\forall q\in \mathcal{S}_c(\delta^\prime), |\langle o_1-o_2,q\rangle|>\delta^\prime$.
\end{definition}
\begin{definition}\label{thershold_l}{(Threshold, $l^*$)}
     Define the threshold $l^*$ such that:\\\indent\hspace{2.3cm} $\forall (x,y)\sim\mathcal{D}, \left|\{j\in[n]: x^{(j)}\neq o_1 \text{ and } x^{(j)}\in S_c\left(\frac{1-\delta_d}{2}\right)\}\right|\le l^*-1$
\end{definition}
\begin{lemma}\label{a_corollary_1}{(Full version of \textbf{Lemma \ref{router_lemma}})}
    For every $l\geq l^*$, w.h.p. over the random initialization defined in (\ref{eqn:router_ini}), after completing the Step-2 of Algorithm-1 with batch-size $B_r=\Tilde{\Omega}\left(\cfrac{n^2}{(1-\delta_d)^2}\right)$ and learning rate $\eta_r=\Theta\left(\frac{1}{n}\right)$ for $T_r=\Omega\left(\cfrac{1}{1-\delta_d}\right)$ iterations, the returned $w_1^{(T_r)}$ and $w_2^{(T_r)}$ satisfy 
    \begin{equation*}
        \underset{j\in[n]}{\text{arg}}(x^{(j)}=o_1) \in J_1(w_1^{(T_r)}, x), \quad \forall  (x,y=+1) \sim \mathcal{D}
    \end{equation*}
    \begin{equation*}
        \underset{j\in[n]}{\text{arg}}(x^{(j)}=o_2) \in J_2(w_2^{(T_r)}, x), \quad \forall  (x,y=-1) \sim \mathcal{D}
    \end{equation*}
\end{lemma}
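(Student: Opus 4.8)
The plan is to exploit that the router objective $\ell_r$ in \eqref{router_erm} is \emph{linear} in $w_1-w_2$, so that the SGD iterations in Step~2 of Algorithm~\ref{alg:1} form an additive random walk whose deterministic drift is the population gradient and whose fluctuations are governed by the batch size $B_r$. I will show that this drift points along $o_1-o_2$ and dominates both the random initialization and the accumulated stochastic error, after which a short deterministic comparison against the definition of $l^*$ (Definition~\ref{thershold_l}) finishes the argument. Since the trained router is, after Step~2, a fixed pair of vectors $(w_1^{(T_r)},w_2^{(T_r)})$, once the ``good'' properties of this pair are established w.h.p.\ over the training randomness the top-$l$ conclusion holds for \emph{all} $x\sim\mathcal D$.

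\textbf{Step 1 (population gradient points along $o_1-o_2$).} The update reads $w_1^{(t+1)}=w_1^{(t)}+\frac{\eta_r}{B_r}\sum_{(x,y)\in\mathcal B_t^r}y\sum_{j=1}^n x^{(j)}$ and $w_2^{(t+1)}=w_2^{(t)}-\frac{\eta_r}{B_r}\sum_{(x,y)\in\mathcal B_t^r}y\sum_{j=1}^n x^{(j)}$. Every $x\sim\mathcal D$ carries exactly one discriminative patch ($o_1$ if $y=+1$, $o_2$ if $y=-1$, each with probability $1/2$), while its $n-1$ class-irrelevant patches are drawn independently of $y$ and the classes are balanced; hence a single summand has expectation $\mathbb E[y\sum_j x^{(j)}]=\frac12 o_1-\frac12 o_2=\frac12(o_1-o_2)$ (the irrelevant patches contribute a factor $\mathbb E[y\mid\text{irrelevant content}]=0$). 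So the expected increment of $w_1$ is $\frac{\eta_r}{2}(o_1-o_2)$ and of $w_2$ is $-\frac{\eta_r}{2}(o_1-o_2)$.

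\textbf{Step 2 (fluctuations are negligible).} Each summand $y\sum_j x^{(j)}$ is a sum of $n$ unit vectors and thus has $\ell_2$-norm at most $n$; a vector Bernstein/Hoeffding bound together with a union bound over the $T_r$ iterations gives, w.h.p., that every per-iteration increment deviates from its mean by at most $\xi=O\big(n\sqrt{\log(\mathrm{poly}(n))/B_r}\big)$. Choosing $B_r=\Tilde{\Omega}\big(n^2/(1-\delta_d)^2\big)$ forces $\xi$ to be an arbitrarily small fraction of $1-\delta_d$. Unrolling the recursion, $w_1^{(T_r)}=\frac{\eta_r T_r}{2}(o_1-o_2)+E_1$ with $\|E_1\|\le\|w_1^{(0)}\|+\eta_r T_r\,\xi$; by the choice of $\sigma_r$ in \eqref{eqn:router_ini} we have $\|w_1^{(0)}\|=O(\sigma_r\sqrt d)$, which is negligible, and since $\eta_r=\Theta(1/n)$ and $T_r=\Omega(1/(1-\delta_d))$ this yields $\|E_1\|=o\big(\eta_r T_r(1-\delta_d)\big)$. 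The same bound holds for $w_2^{(T_r)}=\frac{\eta_r T_r}{2}(o_2-o_1)+E_2$.

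\textbf{Step 3 (top-$l$ comparison via $l^*$).} Fix any $(x,y=+1)\sim\mathcal D$ and let $j^\star$ be the index with $x^{(j^\star)}=o_1$. For any other patch $q=x^{(j)}$ (a class-irrelevant pattern),
\[
g_{j,1}(x)-g_{j^\star,1}(x)=\langle w_1^{(T_r)},\,q-o_1\rangle=\frac{\eta_r T_r}{2}\big(\langle o_1-o_2,q\rangle-(1-\delta_d)\big)+\langle E_1,\,q-o_1\rangle .
\]
Since $\|q-o_1\|\le 2$ and $\|E_1\|=o\big(\eta_r T_r(1-\delta_d)\big)$, the last term is dominated, so $g_{j,1}(x)<g_{j^\star,1}(x)$ unless $\langle o_1-o_2,q\rangle>\frac{1-\delta_d}{2}$, i.e.\ unless $q\in\mathcal S_c\!\big(\frac{1-\delta_d}{2}\big)$. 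By Definition~\ref{thershold_l}, any $x\sim\mathcal D$ has at most $l^*-1$ such patches, so at most $l^*-1\le l-1$ patches of $x$ have routing value strictly exceeding that of $o_1$, whence $j^\star\in J_1(w_1^{(T_r)},x)$. Swapping $o_1\leftrightarrow o_2$ and $w_1\leftrightarrow w_2$ gives the analogous statement for $(x,y=-1)$ and expert $2$, proving the lemma.

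\textbf{Main obstacle.} The crux is Step~2: one must make the accumulated stochastic error $E_1$ negligible relative to the deterministic drift \emph{simultaneously in all directions}, not merely in expectation or along $o_1-o_2$. This is exactly what pins down $B_r=\Tilde{\Omega}(n^2/(1-\delta_d)^2)$ (the $(1-\delta_d)^{-2}$ factor from needing $\xi\lesssim 1-\delta_d$, the $n^2$ from $\|y\sum_j x^{(j)}\|\le n$), with the union bound over $T_r$ iterations and the tiny initialization scale $\sigma_r$ of \eqref{eqn:router_ini} as supporting ingredients. Note that $B_r$ and $T_r$ depend on $\delta_d$ but not on the separation $\delta_r$ between discriminative and irrelevant patterns, precisely as claimed.
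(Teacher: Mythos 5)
Your proposal is correct and follows essentially the same route as the paper's proof (Lemmas \ref{lemma_a_1}--\ref{a_lemma_3}): compute the population gradient $-\frac12(o_1-o_2)$ using the symmetry of class-irrelevant patches across classes, bound the accumulated stochastic deviation by $\eta_r T_r\,\Tilde O(n/\sqrt{B_r})$ plus the $O(1/n^2)$ initialization, and then compare routing values so that only patches in $\mathcal S_c\!\left(\frac{1-\delta_d}{2}\right)$ can out-rank the discriminative patch, which by Definition~\ref{thershold_l} leaves $o_1$ (resp.\ $o_2$) in the top-$l$ for $l\ge l^*$. The only difference is presentational: you bound the difference $g_{j,1}-g_{j^\star,1}$ directly rather than lower- and upper-bounding the two inner products separately as the paper does.
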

\begin{proof}
    The proof follows directly from the Definition \ref{thershold_l} and the Lemma \ref{a_lemma_3}.
\end{proof}

\section{Lemmas Used to Prove the Lemma \ref{router_lemma}}\label{aux_router_lemma_proof}
We denote,\\
$\nabla_{w_s^{(t)}}\mathbb{E}[\ell_r(w_1,w_2)]:=\mathbb{E}_{\mathcal{D}}\left[\cfrac{\partial \ell_r(w^{(t)}_1,w_2^{(t)})}{\partial w_s^{(t)}}\right]$ where $w_s^{(t)}\in\{w_1^{(t)},w_2^{(t)}\}$ for all $t\in[T_r]$.
\begin{lemma}\label{lemma_a_1}
At any iteration $t\le T_r$ of the Step-2 of Algorithm \ref{alg:1},
\begin{center}
    $\nabla_{w_1^{(t)}}\mathbb{E}[l(f(x),y)]=-\cfrac{1}{2}\left(o_1-o_2\right)$, and $\nabla_{w_2^{(t)}}\mathbb{E}[l(f(x),y)]=-\cfrac{1}{2}\left(o_2-o_1\right)$
\end{center}
\end{lemma}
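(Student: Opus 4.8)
The plan is to exploit the fact that the router's linear loss $\ell_r$ is \emph{linear} in the gating kernels $(w_1,w_2)$, so its population gradient is a fixed vector, independent of the current iterate $w_s^{(t)}$ and hence of $t$. Recall that pointwise $\ell_r = -y\,\langle w_1-w_2,\ \sum_{j=1}^n x^{(j)}\rangle$, so the population objective is $\mathbb{E}[\ell_r(w_1,w_2)] = -\big\langle w_1-w_2,\ \mathbb{E}_{(x,y)\sim\mathcal{D}}[\,y\sum_{j=1}^n x^{(j)}\,]\big\rangle$, and differentiating gives
\[
\nabla_{w_1^{(t)}}\mathbb{E}[\ell_r] \;=\; -\,\mathbb{E}_{(x,y)\sim\mathcal{D}}\Big[\,y\textstyle\sum_{j=1}^n x^{(j)}\,\Big],
\qquad
\nabla_{w_2^{(t)}}\mathbb{E}[\ell_r] \;=\; +\,\mathbb{E}_{(x,y)\sim\mathcal{D}}\Big[\,y\textstyle\sum_{j=1}^n x^{(j)}\,\Big].
\]
Thus the whole lemma reduces to the single moment computation $\mathbb{E}[\,y\sum_j x^{(j)}\,] = \tfrac12(o_1-o_2)$.

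First I would split $\sum_{j=1}^n x^{(j)}$ into its unique class-discriminative patch and its $n-1$ class-irrelevant patches. By the data model of Section \ref{data_model}, the discriminative patch equals $o_1$ when $y=+1$ and $o_2$ when $y=-1$, the two classes are balanced, so the contribution of the discriminative patch to $\mathbb{E}[y\sum_j x^{(j)}]$ is $\tfrac12\cdot(+1)\cdot o_1 + \tfrac12\cdot(-1)\cdot o_2 = \tfrac12(o_1-o_2)$. The (random) location of the discriminative patch within the $n$ positions is irrelevant here, since we are summing over all positions.

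Next, for the class-irrelevant patches I would invoke the assumption that class-irrelevant patterns distribute equally for both classes: the conditional law of the multiset of the $n-1$ irrelevant patches given $y=+1$ coincides with that given $y=-1$. Hence $\mathbb{E}\big[\sum_{j:\,x^{(j)}\text{ irrelevant}} x^{(j)}\,\big|\,y=+1\big] = \mathbb{E}\big[\sum_{j:\,x^{(j)}\text{ irrelevant}} x^{(j)}\,\big|\,y=-1\big] =: v$, so the irrelevant patches contribute $\tfrac12\cdot(+1)\cdot v + \tfrac12\cdot(-1)\cdot v = 0$. Adding the two contributions gives $\mathbb{E}[y\sum_j x^{(j)}] = \tfrac12(o_1-o_2)$; substituting back into the two gradient expressions yields exactly $\nabla_{w_1^{(t)}}\mathbb{E}[\ell_r] = -\tfrac12(o_1-o_2)$ and $\nabla_{w_2^{(t)}}\mathbb{E}[\ell_r] = -\tfrac12(o_2-o_1)$, and since no step used the iterate $w_s^{(t)}$, the identity holds for every $t\le T_r$.

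There is no serious obstacle here — it is a direct first-moment computation. The only points requiring care are (i) the use of the data-model assumption that the class-irrelevant patterns are identically distributed across the two labels, which is precisely what annihilates the cross term, and (ii) the balancedness of the two classes, which makes the discriminative term come out to $\tfrac12(o_1-o_2)$ rather than a label-weighted combination. This lemma is then the input to the subsequent result (Lemma \ref{a_lemma_3}) that controls the deviation of the minibatch gradient from this population gradient.
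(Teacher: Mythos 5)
Your proposal is correct and follows essentially the same route as the paper's proof: differentiate the linear router loss to reduce the claim to the first moment $\mathbb{E}[y\sum_j x^{(j)}]$, split the sum into the single class-discriminative patch and the class-irrelevant patches, and use class balance together with the assumption that class-irrelevant patterns are identically distributed across the two labels to obtain $\tfrac12(o_1-o_2)$. Your explicit remark that linearity makes the gradient independent of the iterate (hence of $t$) is a point the paper leaves implicit, but the argument is otherwise identical.
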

\begin{proof}
    As, $\ell_r(w_1^{(t)},w_2^{(t)})=-\cfrac{1}{B_r}\hspace{0.1cm}\sum_{(x,y)\in\mathcal{B}_t^r} y  \langle w^{(t)}_1-w_2^{(t)}, \sum_{j=1}^n x^{(j)}\rangle$,
    
    $\nabla_{w_1^{(t)}}\mathbb{E}[l_r(w_1,w_2)]=-\mathbb{E}_{\mathcal{D}}\left[y\sum_{j=1}^nx^{(j)}\right]$ and $\nabla_{w_2^{(t)}}\mathbb{E}[l_r(w_1,w_2)]=\mathbb{E}_{\mathcal{D}}\left[y\sum_{j=1}^nx^{(j)}\right]$

    Therefore,
    \begin{align*}
        &\nabla_{w_1^{(t)}}\mathbb{E}[l_r(w_1,w_2)]=-\frac{1}{2}\mathbb{E}_{\mathcal{D}|y=+1}\left[\sum_{j=1}^nx^{(j)}|y=+1\right]+\frac{1}{2}\mathbb{E}_{\mathcal{D}|y=-1}\left[\sum_{j=1}^nx^{(j)}|y=-1\right]\\
        &=-\frac{1}{2}\mathbb{E}_{\mathcal{D}|y=+1}\left[\sum_{j\in[n]/\underset{j}{\text{arg }}x^{(j)}=o_1}x^{(j)}|y=+1\right]+\frac{1}{2}\mathbb{E}_{\mathcal{D}|y=-1}\left[\sum_{j\in[n]/\underset{j}{\text{arg }}x^{(j)}=o_2}x^{(j)}|y=-1\right]\\
        &\hspace{0.35cm}-\frac{1}{2}\left(o_1-o_2\right)\\
        &=-\frac{1}{2}\left(o_1-o_2\right)
    \end{align*}
    where the last equality comes from the fact that class-irrelevant patterns are distributed identically in both classes. Using similar line of arguments we can show that, $\nabla_{w_2^{(t)}}\mathbb{E}[l(f(x),y)]=-\cfrac{1}{2}\left(o_2-o_1\right)$.
\end{proof}
\begin{lemma}\label{a_lemma_2}
    With probability $1-\frac{1}{poly(n)}$ (i.e., w.h.p.) over the random initialization of the gating kernels defined in (\ref{eqn:router_ini}), $\left|\left|w_s^{(0)}\right|\right|\le\frac{1}{n^2}$; $\forall s\in\{1,2\}$
\end{lemma}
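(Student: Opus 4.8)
The plan is to recognize $\|w_s^{(0)}\|^2$ as a scaled chi-squared random variable and apply a standard Gaussian-norm concentration inequality; the only point requiring care is to let the logarithmic factor built into $\sigma_r$ absorb the constants produced by the tail bound. By the initialization rule (\ref{eqn:router_ini}), for each fixed $s\in\{1,2\}$ we may write $w_s^{(0)}=\sigma_r\, g_s$ with $g_s\sim\mathcal{N}(0,\mathbb{I}_{d\times d})$, so that $\|w_s^{(0)}\|^2=\sigma_r^2\|g_s\|^2$ and $\|g_s\|^2\sim\chi^2_d$.

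First I would invoke the Laurent--Massart tail bound for chi-squared variables: for any $t>0$, $\mathbb{P}[\|g_s\|^2\ge d+2\sqrt{dt}+2t]\le e^{-t}$. Setting $t=c\log n$ for a sufficiently large absolute constant $c$ (chosen so that the eventual failure probability matches the desired polynomial), and using $2\sqrt{cd\log n}\le d+c\log n$, this gives $\|g_s\|^2\le 2d+3c\log n$ with probability at least $1-n^{-c}$. Then I would substitute $\sigma_r=\Theta\big(1/(n^2\log(\textrm{poly}(n))\sqrt{d})\big)$, so that on this good event $\|w_s^{(0)}\|^2\le\sigma_r^2\,(2d+3c\log n)=\Theta\big(1/(n^4(\log\textrm{poly}(n))^2)\big)+\Theta\big(\log n/(n^4 d\,(\log\textrm{poly}(n))^2)\big)$. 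Since $\log(\textrm{poly}(n))=\Theta(\log n)\to\infty$ and $d\ge 1$, both terms are $o(1/n^4)$, so for every $n$ larger than an absolute constant the right-hand side is at most $1/n^4$; taking square roots yields $\|w_s^{(0)}\|\le 1/n^2$.

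Finally I would take a union bound over the two experts $s\in\{1,2\}$, which merely doubles the failure probability and hence keeps it at $1/\textrm{poly}(n)$. I do not expect a genuine obstacle in this step: it is a routine Gaussian concentration estimate, and the only subtlety is that the explicit $\log(\textrm{poly}(n))$ factor present in $\sigma_r$ in (\ref{eqn:router_ini}) is exactly what upgrades the $O(1/n^2)$ order one would get from a naive norm estimate to the constant-free bound $1/n^2$ asserted in the statement.
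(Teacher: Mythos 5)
Your proof is correct, but it takes a different route from the paper's. The paper works coordinate-wise: it applies the scalar Gaussian tail bound to each entry $w_{s_i}^{(0)}\sim\mathcal{N}(0,\sigma_r^2)$ to get $|w_{s_i}^{(0)}|\le 1/(n^2\sqrt{d})$ with failure probability $1/\mathrm{poly}(n)$ per coordinate, takes a union bound over the $d$ coordinates, and then sums squares to conclude $\|w_s^{(0)}\|\le 1/n^2$. You instead treat $\|w_s^{(0)}\|^2=\sigma_r^2\|g_s\|^2$ as a scaled $\chi^2_d$ variable and invoke the Laurent--Massart tail bound once. Both arguments hinge on the same mechanism you correctly identify: the $\log(\mathrm{poly}(n))$ factor built into $\sigma_r$ is what absorbs the constants (and, in the paper's version, the extra union-bound factor of $d$) so that the clean, constant-free bound $1/n^2$ comes out. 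Your version is marginally tidier in that the failure probability $2n^{-c}$ is controlled independently of $d$, whereas the paper's union bound over coordinates implicitly assumes $d$ is not super-polynomially large in $n$; the paper's version is more elementary in that it only uses scalar Gaussian tails. Either argument is a complete proof of the lemma.
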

\begin{proof}
    Let us denote the $i$-th element of the vector $w_s^{(0)}$ as $w_{s_i}^{(0)}$ where $i\in[d]$.\\
    Then according to the random initialization of $w_s^{(0)}$ and using a Gaussian tail-bound (i.e., for $X\sim\mathcal{N}(0,\sigma^2):Pr[|X|\ge t]\le 2e^{-t^2/2\sigma^2}$): $\mathbb{P}\left[\left|w_{s_i}^{(0)}\right|\ge\frac{1}{n^2\sqrt{d}}\right]\le\frac{1}{poly(n)}$.\\
    Let us denote the event $\mathcal{E}:\forall i\in[d], \left|w_{s_i}^{(0)}\right|\le\frac{1}{n^2\sqrt{d}}$.
    Therefore, $\mathbb{P}\left[\mathcal{E}\right]\ge1-\frac{1}{poly(n)}$.\\
    Now, conditioned on the event $\mathcal{E}, \left|\left|w_s^{(0)}\right|\right|\le\frac{1}{n^2}$.\\
    Therefore, $\mathbb{P}\left[\left|\left|w_s^{(0)}\right|\right|\le\frac{1}{n^2}\right]\le\mathbb{P}\left[\left|\left|w_s^{(0)}\right|\right|\ge\frac{1}{n^2}|\mathcal{E}\right]\mathbb{P}\left[\mathcal{E}\right]=1-\frac{1}{poly(n)}$
\end{proof}

\begin{lemma}\label{a_lemma_3}
     W.h.p. over the random initialization of the gating-kernels defined in (\ref{eqn:router_ini}) and randomly selected batch of batch-size $B_r=\Tilde{\Omega}\left(\cfrac{n^2}{(1-\delta_d)^2}\right)$ at each iteration, after $T_r=\Omega\left(\cfrac{1}{1-\delta_d}\right)$ iterations of Step-2 of Algorithm \ref{alg:1} with learning rate $\eta_r=\Theta\left(\frac{1}{n}\right)$, $\forall (x,y)\sim\mathcal{D}, j\in[n]:x^{(j)}\not\in\mathcal{S}_c(\frac{1-\delta_d}{2})$, $\langle w_1^{(T_r)}, o_1\rangle>\langle w_1^{(T_r)},x^{(j)}\rangle$ and $\langle w_2^{(T_r)}, o_2\rangle>\langle w_2^{(T_r)},x^{(j)}\rangle$.
\end{lemma}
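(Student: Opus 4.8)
The plan is to use the linearity of the router loss $\ell_r$ in $(w_1,w_2)$ to write the SGD trajectory in closed form. Writing $g_t := \frac{1}{B_r}\sum_{(x,y)\in\mathcal{B}_t^r}y\sum_{j=1}^n x^{(j)}$, the updates in Step-2 of Algorithm~\ref{alg:1} are $w_1^{(t+1)}=w_1^{(t)}+\eta_r g_t$ and $w_2^{(t+1)}=w_2^{(t)}-\eta_r g_t$ (the gradient does not depend on the iterate), so $w_1^{(T_r)}=w_1^{(0)}+\eta_r\sum_{t<T_r}g_t$ and $w_2^{(T_r)}=w_2^{(0)}-\eta_r\sum_{t<T_r}g_t$. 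By Lemma~\ref{lemma_a_1} and the fact that class-irrelevant patterns are distributed identically across the two balanced classes, $\mathbb{E}[g_t]=\tfrac12(o_1-o_2)$. Hence $w_1^{(T_r)}$ carries a deterministic ``signal'' in the direction $o_1-o_2$ of magnitude $\Theta(\eta_r T_r)$ (and $w_2^{(T_r)}$ in direction $o_2-o_1$); it remains to show this signal beats both the random initialization and the per-batch stochastic fluctuation, uniformly over all patches.

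First, by Lemma~\ref{a_lemma_2}, w.h.p.\ $\|w_s^{(0)}\|\le 1/n^2$, so $|\langle w_s^{(0)},u-q\rangle|\le 2/n^2$ for unit vectors $u,q$. Second, for a fixed unit vector $v$, $\langle g_t,v\rangle$ is an average of $B_r$ i.i.d.\ bounded terms $y\langle\sum_j x^{(j)},v\rangle\in[-n,n]$, so Hoeffding gives $|\langle g_t,v\rangle-\tfrac12\langle o_1-o_2,v\rangle|=O\!\big(n\sqrt{\log(1/\rho)/B_r}\big)$ w.p.\ $1-\rho$. Taking $B_r=\tilde\Omega\big(n^2/(1-\delta_d)^2\big)$ and union-bounding over the $T_r$ iterations and over one representative $\bar q_i$ of each pattern set $S_i$, this deviation is at most $\tfrac{1-\delta_d}{16}$ for all those directions at once. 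For an arbitrary class-irrelevant $q\in S_i$, I transfer the bound from $\bar q_i$ to $q$ via $|\langle g_t,q-\bar q_i\rangle|\le\|g_t\|\cdot\mathrm{diam}(S_i)\le n\cdot\Theta\big(\sqrt{(1-\delta_r^2)/(dp^2)}\big)$, which the diameter assumption on the pattern sets renders negligible relative to $1-\delta_d$; thus $\langle g_t,o_1-q\rangle$ stays within $\tfrac{1-\delta_d}{8}$ of $\tfrac12\langle o_1-o_2,o_1-q\rangle$ for every $t$ and every class-irrelevant $q$ simultaneously.

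Now fix class-irrelevant $q$ with $q\notin\mathcal{S}_c(\tfrac{1-\delta_d}{2})$, i.e.\ $\langle o_1-o_2,q\rangle\le\tfrac{1-\delta_d}{2}$. Then $\tfrac12\langle o_1-o_2,o_1-q\rangle=\tfrac12\big[(1-\delta_d)-\langle o_1-o_2,q\rangle\big]\ge\tfrac{1-\delta_d}{4}$, so by the previous paragraph $\langle g_t,o_1-q\rangle\ge\tfrac{1-\delta_d}{8}$ for all $t<T_r$; summing and adding back the initialization term,
\[
\langle w_1^{(T_r)},o_1-q\rangle\ \ge\ -\tfrac{2}{n^2}+\eta_r T_r\cdot\tfrac{1-\delta_d}{8}\ =\ -\tfrac{2}{n^2}+\Theta\!\big(\tfrac1n\big)\ >\ 0,
\]
using $\eta_r=\Theta(1/n)$ and $T_r=\Omega(1/(1-\delta_d))$. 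That is, $\langle w_1^{(T_r)},o_1\rangle>\langle w_1^{(T_r)},q\rangle$; running the same computation with $g_t$ replaced by $-g_t$ and $o_1$ by $o_2$ (the sign flip makes $o_2$ the favored direction) gives $\langle w_2^{(T_r)},o_2\rangle>\langle w_2^{(T_r)},q\rangle$. Lemma~\ref{a_corollary_1} then follows immediately from Definition~\ref{thershold_l}: a $y=+1$ sample has at most $l^*-1$ class-irrelevant patches lying in $\mathcal{S}_c(\tfrac{1-\delta_d}{2})$, and every other non-discriminative patch has a strictly smaller routing value than $o_1$ under $w_1^{(T_r)}$, so $o_1$ ranks among the top $l\ge l^*$ patches and hence $\underset{j}{\text{arg}}(x^{(j)}=o_1)\in J_1(w_1^{(T_r)},x)$; symmetrically for $o_2$ and $J_2$.

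I expect the main obstacle to be making the concentration of $\langle g_t,\cdot\rangle$ \emph{uniform} over the (possibly infinite) family of class-irrelevant patterns: a plain union bound is unavailable, so the argument must be funneled through the $p$ bounded-diameter pattern sets, which is exactly where the data-model condition $dp^2\zeta^2=O(1-\delta_r^2)$ (the $\Theta(\sqrt{(1-\delta_r^2)/(dp^2)})$ diameter bound) is used; this has to be balanced against choosing $B_r$ large enough that the per-step Hoeffding deviation stays below a constant fraction of $1-\delta_d$ after the union bound over iterations, which pins down $B_r=\tilde\Omega(n^2/(1-\delta_d)^2)$ and $T_r=\Omega(1/(1-\delta_d))$.
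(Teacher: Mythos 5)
Your overall strategy is the same as the paper's: exploit the linearity of $\ell_r$ in $(w_1,w_2)$ to write $w_1^{(T_r)}=w_1^{(0)}+\eta_r\sum_{t<T_r}g_t$ in closed form, identify the population signal $\mathbb{E}[g_t]=\tfrac12(o_1-o_2)$ via the balanced-class cancellation (Lemma~\ref{lemma_a_1}), control the initialization with Lemma~\ref{a_lemma_2}, and show that the accumulated signal $\Theta(\eta_r T_r(1-\delta_d))=\Omega(1/n)$ dominates both the $O(1/n^2)$ initialization term and the stochastic fluctuation once $B_r=\Tilde{\Omega}\big(n^2/(1-\delta_d)^2\big)$. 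The final comparison for patches outside $\mathcal{S}_c\big(\tfrac{1-\delta_d}{2}\big)$ and the deduction of Lemma~\ref{a_corollary_1} from Definition~\ref{thershold_l} are exactly as in the paper.

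The one place you diverge is in making the concentration uniform over all class-irrelevant patterns, and there your argument has a gap. You apply scalar Hoeffding to $\langle g_t,v\rangle$ for finitely many representatives $\bar q_i$ and then transfer to an arbitrary $q\in S_i$ via $|\langle g_t,q-\bar q_i\rangle|\le\|g_t\|\cdot\mathrm{diam}(S_i)\le n\cdot\Theta\big(\sqrt{(1-\delta_r^2)/(dp^2)}\big)$, asserting that the diameter assumption makes this negligible relative to $1-\delta_d$. Nothing in the data model guarantees that: you would need $d=\Omega\big(n^2(1-\delta_r^2)/(p^2(1-\delta_d)^2)\big)$, and no such lower bound on $d$ is assumed. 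The paper sidesteps the issue entirely by bounding the norm of the deviation vector itself, $\|\Tilde{\nabla}_{w_s}^{(t)}-\nabla_{w_s}^{(t)}\|=\Tilde{O}(n/\sqrt{B_r})$; since $|\langle \Tilde{\nabla}_{w_s}^{(t)}-\nabla_{w_s}^{(t)},u\rangle|\le\|\Tilde{\nabla}_{w_s}^{(t)}-\nabla_{w_s}^{(t)}\|$ for every unit vector $u$, uniformity over the (possibly infinite) family of patches is automatic and the diameter condition is never needed in this lemma. Replacing your per-direction Hoeffding plus covering step with that single vector-norm bound closes the gap; everything else you wrote goes through.
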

\begin{proof}
    Let, at $t$-th iteration of Step-2 of Algorithm \ref{alg:1}, $\Tilde{\nabla}_{w_s}^{(t)}=\cfrac{\partial \ell_r(w_1^{(t)},w_2^{(t)})}{\partial w_s^{(t)}}$ for all $s\in\{1,2\}$\\\\
Also let us denote, $\nabla_{w_s^{(t)}}\mathbb{E}\left[\ell_r(w_1^{(t)},w_2^{(t)})\right]=\nabla_{w_s}^{(t)}$ for all $s\in\{1,2\}$ \\\\
Therefore, after $T_r$-th iteration of SGD and using Lemma \ref{lemma_a_1},
\begin{align*}
    w_1^{(T_r)}&=w_1^{(0)}-\eta_r\overset{T_r-1}{\underset{t=0}{\sum}}\Tilde{\nabla}_{w_1}^{(t)}\\
    &=w_1^{(0)}+\cfrac{\eta_r T_r}{2}\left(o_1-o_2\right)-\eta_r\overset{T_r-1}{\underset{t=0}{\sum}}\left(\Tilde{\nabla}_{w_1}^{(t)}-\nabla_{w_1}^{(t)}\right)
\end{align*}
Similarly, $w_2^{(T_r)}=w_2^{(0)}+\cfrac{\eta_r T_r}{2}\left(o_2-o_1\right)-\eta_r\overset{T_r-1}{\underset{t=0}{\sum}}\left(\Tilde{\nabla}_{w_2}^{(t)}-\nabla_{w_2}^{(t)}\right)$.\\\\
Now, $||\Tilde{\nabla}_{w_s}^{(t)}||=O(n)$. Hence, w.h.p. over a randomly sampled batch of size $B_r$, using Hoeffding's concentration,\\\\
$||\Tilde{\nabla}_{w_s}^{(t)}-\nabla_{w_s}^{(t)}||=\Tilde{O}\left(\cfrac{n}{\sqrt{B_r}}\right); \forall s\in\{1,2\}$.\\\\
Now,
\begin{align*}
    \langle w_1^{(T_r)},o_1\rangle&=\langle w_1^{(0)}, o_1\rangle+\cfrac{\eta_r T_r}{2}\left(1-\langle o_1,o_2\rangle\right)-\eta_r\overset{T_r-1}{\underset{t=0}{\sum}}\langle \Tilde{\nabla}_{w_1}^{(t)}-\nabla_{w_1}^{(t)},o_1\rangle\\
    &\ge \cfrac{\eta_r T_r}{2}\left(1-\delta_d\right)-\eta_r T_r\Tilde{O}\left(\cfrac{n}{\sqrt{B_r}}\right)-\left|\left|w_1^{(0)}\right|\right|
\end{align*}
On the other hand, $\forall (x,y)\sim\mathcal{D}, \forall j\in[n]: x^{(j)}\not\in\mathcal{S}_c\left(\frac{1-\delta_d}{2}\right)$,
\begin{align*}
    \langle w_1^{(T_r)}, x^{(j)}\rangle&=\langle w_1^{(0)},x^{(j)}\rangle+\cfrac{\eta_r T_r}{2}\left(\langle o_1,x^{(j)}\rangle-\langle o_2,x^{(j)}\rangle\right)-\eta_r\overset{T_r-1}{\underset{t=0}{\sum}}\langle \Tilde{\nabla}_{w_1}^{(t)}-\nabla_{w_1},x^{(j)}\rangle\\
    &\le\cfrac{\eta_r T_r}{4}(1-\delta_d)+\eta_r T_r\Tilde{O}\left(\cfrac{n}{\sqrt{B_r}}\right)+\left|\left|w_1^{(0)}\right|\right|
\end{align*}
From Lemma \ref{a_lemma_2}, w.h.p. over the random initialization: $\left|\left|w_1^{(0)}\right|\right|\le\cfrac{1}{n^2}$.\\\\
Therefore, selecting $B_r=\Tilde{\Omega}\left(\cfrac{n^2}{(1-\delta_d)^2}\right)$ and $\eta_r=\Theta\left(\frac{1}{n}\right)$, we need $T_r=\Omega\left(\cfrac{1}{1-\delta_d}\right)$ iterations to achieve $\langle w_1^{(T_r)}, o_1\rangle>\langle w_1^{(T_r)},x^{(j)}\rangle$, $\forall j\in[n]:x^{(j)}\in\mathcal{S}_c\left(\frac{1-\delta_d}{2}\right)$\\\\
Similar line of arguments can be made to show with batch size $B_r=\Tilde{\Omega}\left(\cfrac{n^2}{(1-\delta_d)^2}\right)$ and learning rate $\eta_r=\Theta\left(\frac{1}{n}\right)$, after $T_r=\Omega\left(\cfrac{1}{1-\delta_d}\right)$ iterations, $\langle w_2^{(T_r)}, o_2\rangle\ge\langle w_2^{(T_r)},x^{(j)}\rangle$, $\forall j\in[n]:x^{(j)}\in\mathcal{S}_c\left(\frac{1-\delta_d}{2}\right)$.\\\\
\end{proof}

\section{Proofs of the Theorem \ref{Thm_mcnn_bin}, \ref{Thm_single_cnn} and \ref{Thm_mcnn_j}}\label{theorems_proof}
\begin{definition}\label{value_function} 
    At any iteration $t$ of the minibatch SGD,
    \begin{enumerate}
        \item Define the \textbf{value function}, $\displaystyle v^{(t)}(\theta^{(t)},x,y):=\frac{1}{1+e^{yf_M(\theta^{(t)},x)}}$. It is easy to show that for any $(x,y)\sim\mathcal{D}$, $0\le v^{(t)}(\theta^{(t)},x,y)\le1$. The function captures the prediction error, i.e., a larger $v^{(t)}$ indicates a larger prediction error.
        \item Define, the \textbf{class-conditional expected value function}, $v_1^{(t)}:=\mathbb{E}_{\mathcal{D}|y=+1}[v^{(t)}(\theta^{(t)},x,y)|y=+1]$ and $v_2^{(t)}:=\mathbb{E}_{\mathcal{D}|y=-1}[v^{(t)}(\theta^{(t)},x,y)|y=-1]$.
        Here, $v_1^{(t)}$ captures the expected error for the class with label $y=+1$ and $v_2^{(t)}$ captures the expected error for the class with label $y=-1$.  
    \end{enumerate}
\end{definition}
\begin{definition}\label{loss_reduction}
    At any iteration $t$ of the minibatch SGD,
    \begin{enumerate}
        \item For any sample $(x,y)\sim\mathcal{D}$, we define the \textbf{reduction of loss} at the $t$-th iteration of SGD as,
        \begin{align*}
            \Delta L(\theta^{(t)},\theta^{(t+1)},x,y):=\mathcal{L}(\theta^{(t)},x,y)-\mathcal{L}(\theta^{(t+1)},x,y)
        \end{align*}
        where, $\mathcal{L}(\theta^{(t)},x,y):=\log(1+e^{-yf_M(\theta^{(t)},x)})$ is the \textbf{single-sample loss function}.
        \item Define the \textbf{expected reduction of loss} at the $t$-th iteration of SGD as,
        \begin{align*}
            \Delta L(\theta^{(t)},\theta^{(t+1)}):=\mathbb{E}_\mathcal{D}\left[\mathcal{L}(\theta^{(t)},x,y)-\mathcal{L}(\theta^{(t+1)},x,y)\right]
        \end{align*}
    \end{enumerate}
\end{definition}
\begin{theorem}\label{a_theorem_1}{(Full version of \textbf{Theorem \ref{Thm_mcnn_bin}})}
    For every $\epsilon>0$ and $l\ge l^*$, for every $m\ge M_S=\Tilde{\Omega}\left(l^{10}p^{12}\delta^6\big/\epsilon^{16}\right)$ with at least   $N_S=\Tilde{\Omega}(l^8 p^{12}\delta^6/\epsilon^{16})$ training samples, after performing minibatch SGD with the batch size $B=\Tilde{\Omega}\left(l^4p^6\delta^3\big/\epsilon^{8}\right)$ and the learning rate $\eta=\Tilde{O}\big(1\big/m \textrm{poly}(l,p,\delta,1/\epsilon, \log m)\big)$ for $T=\Tilde{O}\left(l^4p^6\delta^3\big/\epsilon^{8}\right)$ iterations, it holds w.h.p. that
    \begin{center}
        $\underset{(x,y)\sim\mathcal{D}}{\mathbb{P}}\left[yf_{M}(\theta^{(T)},x)>0\right]\ge1-\epsilon$
    \end{center}
\end{theorem}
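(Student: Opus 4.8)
The plan is to reduce the statement to a convergence-and-generalization analysis of the two experts \emph{conditioned on} the discriminative routing of Lemma~\ref{router_lemma}, and then to adapt the over-parameterized SGD framework of \citet{li2018learning} (originally for fully-connected nets on well-separated data) to the pMoE architecture~\eqref{mcnn_bin} on our weakly-separated patch model. First I would invoke Lemma~\ref{router_lemma}: since $l\ge l^*$, after the router phase the frozen kernels $w_1,w_2$ guarantee that expert $1$ receives $o_1$ on every $(x,+1)\sim\mathcal{D}$, expert $2$ receives $o_2$ on every $(x,-1)\sim\mathcal{D}$, and all gating values equal $1$; the remaining training updates only the hidden weights $w_{r,s}$ from initialization~\eqref{eqn:expert_ini}, with the signs $a_{r,s}$ fixed. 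Because $w_1,w_2$ never move in this phase, the routing guarantee is preserved throughout, which is exactly why $l$, not $n$, ends up governing the rates.

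Next I would set up the pseudo-network, replacing each $\textbf{ReLU}(\langle w_{r,s},x^{(j)}\rangle)$ by $\mathbb{1}[\langle w_{r,s}^{(0)},x^{(j)}\rangle>0]\langle w_{r,s},x^{(j)}\rangle$, i.e.\ freezing the activation patterns at initialization. Gaussian anti-concentration of the pre-activations ($\langle w_{r,s}^{(0)},x^{(j)}\rangle\sim\mathcal{N}(0,\sigma^2)$, $\sigma=1/\sqrt m$) together with an a-priori bound on $\|w_{r,s}^{(t)}-w_{r,s}^{(0)}\|$ shows that for every sample from $\mathcal{D}$ and every $t=O(\tau/\eta)$ at most an $O(\tau l/\sigma)$ fraction of the $m/2$ neurons of an expert ever flip sign (the $l$ appears because each expert's gradient is a $1/l$-weighted sum over its $l$ routed patches). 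Taking $\tau=O(\sigma/l)$ and $\eta$ as in the statement keeps an $\Omega(1)$ fraction of neurons coupled for $\Omega(\sqrt m)$ iterations once $m\ge M_S$.

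The crux is the existence of a ``good'' pseudo-network near initialization together with a gradient lower bound. I would construct weights $\{w_{r,s}^\ast\}$ close to $\{w_{r,s}^{(0)}\}$ so that, using only coupled neurons, expert $1$ acts as a robust detector of $o_1$ — output $\Theta(1)$ on the patch $o_1$, output $o(1)$ on every class-irrelevant patch — and symmetrically expert $2$ detects $o_2$ with a negative sign, giving $yf_M(\theta^\ast,x)$ a constant margin on all of $\mathcal{D}$. Since the irrelevant patches routed to an expert may be drawn from infinitely many patterns spread over the $p$ sets $S_1,\dots,S_p$ and are only $\delta_r$-separated from $o_1,o_2$, while $o_1,o_2$ are only $\delta_d$-separated from each other, this detector must approximate an indicator-type function whose smoothness scale is controlled by $\delta$ of~\eqref{eqn:delta}; realizing it from random-ReLU features is what forces $m$ and $N$ to carry the $\mathrm{poly}(p,\delta)$ factors. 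Given $\theta^\ast$, a correlation argument (the expected logistic-loss gradient points, after normalization, toward $w^{(t)}-w^\ast$ by an amount $\Omega(v_s^{(t)}/l)$) shows the expected pseudo-gradient of expert $s$ has norm $\Omega(v_s^{(t)}/l)$, where $v_1^{(t)},v_2^{(t)}$ are the class-conditional value functions of Definition~\ref{value_function} and the $1/l$ is again the dilution from averaging over routed patches. This step is the main obstacle: unlike the $\Omega(n)$ inter-class margin in \citet{li2018learning}, which makes the good network almost immediate, here the detector must be robust against arbitrarily many, arbitrarily placed, barely-separated irrelevant patterns, and the delicate part is bounding how much over-parameterization and how many samples this requires in terms of $l$, $p$, and $\delta$.

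Finally I would close the loop with a descent argument. Writing $v^{(t)}=\sqrt{(v_1^{(t)})^2+(v_2^{(t)})^2}$, choose $B=\Omega(l^4p^6\delta^3/\epsilon^8)=\Omega(l^4p^6\delta^3/(v^{(t)})^4)$ so the minibatch gradient tracks the expected pseudo-gradient within a factor small against its square, and $\tau=O(\sigma(v^{(t)})^2/l^3)$ so the coupling error is likewise second order; a descent lemma for the (nearly linear, hence smooth) pseudo-network then gives an expected one-step loss decrease $\Omega(\eta m(v^{(t)})^2/l^2)$ whenever $v^{(t)}\ge\epsilon^2$. As $\mathcal{L}(\theta^{(0)})=O(1)$ w.h.p., this can hold for at most $T=O(l^4p^6\delta^3/\epsilon^8)$ iterations, after which $v^{(T)}<\epsilon^2$ — monotonicity of the expected loss plus standard online-to-batch bookkeeping carries us from ``some good iterate'' to the final iterate and also absorbs the gap between the true network and its pseudo-network. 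Then $v^{(T)}<\epsilon^2$ implies $\max\{v_1^{(T)},v_2^{(T)}\}<\epsilon^2$, and because Lemma~\ref{router_lemma} makes $v_1^{(T)}$ (resp.\ $v_2^{(T)}$) an exact bound on the $y=+1$ (resp.\ $y=-1$) error, Markov's inequality yields $\mathbb{P}_{(x,y)\sim\mathcal{D}}[yf_M(\theta^{(T)},x)\le0]\le 2\epsilon^2<\epsilon$. Multiplying the per-iteration sample cost $B$ by $T$ reproduces $N_S=\Omega(l^8p^{12}\delta^6/\epsilon^{16})$, and the coupling requirement on $m$ reproduces $M_S=\Omega(l^{10}p^{12}\delta^6/\epsilon^{16})$.
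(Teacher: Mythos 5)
Your skeleton matches the paper's: invoke Lemma \ref{router_lemma} to fix the routing, couple each expert to a pseudo-network whose activations are frozen at initialization (your coupling claim, with the $O(\tau l/\sigma)$ flipped fraction over an $O(\tau/\eta)$ horizon, is exactly Lemma \ref{a_lemma_4}), prove a population-gradient lower bound of order $v_s^{(t)}/l$, run a descent argument with $B=\tilde\Omega\left(l^4p^6\delta^3/(v^{(t)})^4\right)$ against the $\tilde O(1)$ initial loss, and convert $\max\{v_1^{(T)},v_2^{(T)}\}\le\epsilon^2$ into $1-\epsilon$ accuracy by Markov's inequality, exactly as in Theorem \ref{a_theorem_1} and Lemma \ref{a_lemma_7}. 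Where you genuinely diverge is at the step you yourself flag as the crux: you propose to get the gradient lower bound by constructing a nearby ``good'' pseudo-network $\theta^\ast$ that is a constant-margin detector of $o_1$/$o_2$ and then correlating the gradient with $\theta^{(t)}-\theta^\ast$ via convexity of the pseudo-loss. The paper never constructs such a network. Instead (Lemmas \ref{a_lemma_5} and \ref{a_lemma_6}) it fixes a neuron $r$, writes $\langle P_{r,1}^{(t)},w_{r,1}^{(0)}\rangle=\phi(\alpha)-l(\alpha)$ as a difference of two convex functions of the component $\alpha$ of $w_{r,1}^{(0)}$ along $o_1$, observes that $\phi$ has an unmatched kink of size $v_1^{(t)}/l$ at $\alpha=0$ contributed by the routed $o_1$ patch (this is where $l\ge l^*$ enters), while conditioned on a suitable event on $\beta=w_{r,1}^{(0)}-\alpha o_1$ the function $l(\alpha)$ is linear on $[-\tau,\tau]$; the sub-gradient-gap anti-concentration bound of \citet{li2018learning} (Lemma \ref{convex_li_liang_18}) then forces $\|P_{r,1}^{(t)}\|=\tilde\Omega\left(v_1^{(t)}/(lp\sqrt{\delta})\right)$ for an $\Omega(1/(p\sqrt{\delta}))$ fraction of neurons, and an $\bar\varepsilon$-net over the value-function configuration removes the dependence of $v^{(t)}$ on the initialization. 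The $p$ and $\delta$ factors enter there, not through the complexity of any representer.

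The concern with your route is that its load-bearing step is asserted rather than carried out, and it is not clear it survives in this data model: the experts are bias-free ReLUs, the class-irrelevant patterns form infinitely many unit vectors that are only $\delta_r$-separated from $o_1,o_2$ (with $\delta_r$ possibly close to $1$), and $o_1,o_2$ are only $\delta_d$-separated from each other, so the ``robust detector'' you need $\theta^\ast$ to realize is an indicator of one direction among many nearly collinear ones --- exactly the kind of function that is expensive, or impossible without bias terms, to express with random ReLU features at a controlled norm. You also give no bound on $\|\theta^\ast-\theta^{(0)}\|$, which is what would be needed to convert the correlation $\langle\nabla\hat{\mathcal{L}},\theta^{(t)}-\theta^\ast\rangle$ into a norm lower bound on the gradient, and hence no derivation of the $\mathrm{poly}(p,\delta)$ factors you attribute to this construction. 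To complete the proof, either replace the $\theta^\ast$ construction with the per-neuron difference-of-convex-functions argument above, or exhibit $\theta^\ast$ explicitly and track its norm in $l$, $p$, and $\delta$ --- which is precisely the part you left as ``the main obstacle.''
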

\begin{proof}
    First we will show that for any $\epsilon<\frac{1}{2}$, if $\mathbb{P}_{(x,y)\sim \mathcal{D}}\left[yf(\theta^{(t)}, x)>0\right]\le1-\epsilon$, then $\max\{v_1^{(t)},v_2^{(t)}\}\ge\epsilon^2$.\\\\
    Now for any $(x,y)\sim\mathcal{D}$ and $\epsilon<\frac{1}{2}$, if $v^{(t)}(\theta^{(t)},x,y)\le\epsilon$, $yf_M(\theta^{(t)},x,y)>0$ i.e., the prediction is correct.\\\\
    Now if $v_1^{(t)}=\mathbb{E}_{\mathcal{D}|y=+1}\left[v^{(t)}(\theta^{(t)},x,y)\big|y=+1\right]\le\epsilon^2$, then using Markov's inequality $\mathbb{P}_{\mathcal{D}|y=+1}\left[v^{(t)}(\theta^{(t)},x,y)\le\epsilon\right]\ge1-\epsilon$ which implies for any $\epsilon<\frac{1}{2}$, $\mathbb{P}_{\mathcal{D}|y=+1}\left[yf(\theta^{(t)}, x)>0\right]\ge1-\epsilon$.\\\\
    Similarly, if $v_2^{(t)}=\mathbb{E}_{\mathcal{D}|y=-1}\left[v^{(t)}(\theta^{(t)},x,y)\big|y=-1\right]\le\epsilon^2$, for any $\epsilon<\frac{1}{2}$, $\mathbb{P}_{\mathcal{D}|y=-1}\left[yf(\theta^{(t)}, x)>0\right]\ge1-\epsilon$.\\\\
    Therefore, for any $\epsilon<\frac{1}{2}$, if $\mathbb{P}_{(x,y)\sim \mathcal{D}}\left[yf(\theta^{(t)}, x)>0\right]\le1-\epsilon$, then $\max\{v_1^{(t)},v_2^{(t)}\}\ge\epsilon^2$.\\\\
    Now, if $v^{(t)}:=\sqrt{\underset{s\in\{1,2\}}{\sum}(v_s^{(t)})^2}\le\epsilon^2$ then $\max\{v_1^{(t)},v_2^{(t)}\}\le\epsilon^2$, which implies after a proper number of iterations if $v^{(t)}\le\epsilon^2$ then $\underset{(x,y)\sim\mathcal{D}}{\mathbb{P}}\left[yf_{M}(\theta^{(T)},x)>0\right]\ge1-\epsilon$.\\\\
    Let, $v^{(t)}\ge\epsilon^2$. Then by using Lemma \ref{a_lemma_7} for every $l\ge l^*$, with $\eta=\Tilde{O}\left(\cfrac{\epsilon^4}{ml^2p^3\delta^{3/2}}\right)$ and $B=\Tilde{\Omega}\left(\cfrac{l^4p^6\delta^3}{\epsilon^8}\right)$, at least for $t=\Tilde{O}\left(\cfrac{\sigma\epsilon^4}{\eta l^3p^3\delta^{3/2}}\right)$ we have,
    \begin{equation}\label{a_loss_reduction_eqn}
        \Delta L(\theta^{(t)},\theta^{(t+1)})=\Tilde{\Omega}\left(\cfrac{\eta m\epsilon^4}{l^2p^3\delta^{3/2}}\right)
    \end{equation}
    Now, as $w_{r,s}^{(0)}\sim\mathcal{N}(0,\sigma^2)$ with $\sigma=\frac{1}{\sqrt{m}}$, $\left\langle w_{r,s}^{(0)},x^{(j)}\right\rangle\sim\mathcal{N}(0,\sigma^2)$  $\forall j\in J_s(w_s^{(0)},x)$ and $\forall (x,y)\sim\mathcal{D}$. Therefore, w.h.p. $\left|f_M(\theta^{(0)},x)\right|=\Tilde{O}(1)$ which implies $\mathcal{L}(\theta^{(0)},x,y)=\Tilde{O}(1)$. Now as $\mathcal{L}(\theta^{(t)},x,y)>0$, (\ref{a_loss_reduction_eqn}) can happen at most $\Tilde{O}\left(\frac{l^2p^3\delta^{3/2}}{\eta m\epsilon^4}\right)$ iterations. Now as $\eta m=\Tilde{O}\left(\cfrac{\epsilon^4}{l^2p^3\delta^{3/2}}\right)$, we need $T=\Tilde{O}\left(\cfrac{l^4p^6\delta^3}{\epsilon^8}\right)$ iterations to ensure that $v^{(t)}\le\epsilon^2$.\\\\
    On the other hand, to ensure (\ref{a_loss_reduction_eqn}) hold for $T$ iterations, we need,
    \begin{align*}
        \cfrac{\sigma\epsilon^4}{\eta l^3p^3\delta^{3/2}}=\Tilde{\Omega}\left(\cfrac{l^2p^3\delta^{3/2}}{\eta m\epsilon^4}\right)
    \end{align*}
    which implies we need $m=\Tilde{\Omega}\left(\cfrac{l^{10}p^{12}\delta^6}{\epsilon^{16}}\right)$.
\end{proof}

Now, for any $(x,y=+1)\sim\mathcal{D}$ and $(x,y=-1)\sim\mathcal{D}$, let us denote the index of the class-discriminative patterns i.e., $o_1$ and $o_2$ as $j_{o_1}$ and $j_{o_2}$, respectively.
\begin{definition}\label{joint_training_value_function}
    At any iteration $t$ of \textbf{minibatch SGD of the joint-training pMoE} (i.e., Step-2 of Algorithm \ref{alg:2}),
    \begin{enumerate}
        \item For any $(x,y=+1)\sim\mathcal{D}$ and the expert $s\in[k]$, define the \textbf{event that $o_1$ in Top-$l$} as, $\mathcal{E}_{1,s}^{(t)}: j_{o_1}\in J_s(w_s^{(t)},x)$. Similarly, for any $(x,y=-1)\sim\mathcal{D}$ define the \textbf{event that $o_2$ in Top-$l$} as, $\mathcal{E}_{2,s}^{(t)}: j_{o_2}\in J_s(w_s^{(t)},x)$.
        \item For any expert $s\in[k]$, define the \textbf{probability of the event that $o_1$ in Top-$l$} as, $p_{1,s}^{(t)}:=\mathbb{P}_{\mathcal{D}|y=+1}\left[\mathcal{E}_{1,s}^{(t)}\big|y=+1\right]$ and the \textbf{probability of the event that $o_2$ in Top-$l$} as, $p_{2,s}^{(t)}:=\mathbb{P}_{\mathcal{D}|y=-1}\left[\mathcal{E}_{2,s}^{(t)}\big|y=-1\right]$
        \item For any expert $s\in[k]$ define, $v_{1,s}^{(t)}:=\mathbb{E}_{\mathcal{D}|y=+1,\mathcal{E}_{1,s}^{(t)}}\left[p_{1,s}^{(t)}G_{j_{o_1},s}^{(t)}(x)v^{(t)}(\theta^{(t)},x,y)\big|y=+1,\mathcal{E}_{1,s}^{(t)}\right]$ and $v_{2,s}^{(t)}:=\mathbb{E}_{\mathcal{D}|y=-1,\mathcal{E}_{2,s}^{(t)}}\left[p_{2,s}^{(t)}G_{j_{o_2},s}^{(t)}(x)v^{(t)}(\theta^{(t)},x,y)\big|y=-1,\mathcal{E}_{2,s}^{(t)}\right]$ where $G_{j_{o_1},s}^{(t)}(x)$ and $G_{j_{o_2},s}^{(t)}(x)$ denote the gating value for the class-discriminative patterns $o_1$ and $o_2$ conditioned on $\mathcal{E}_{1,s}^{(t)}$ and $\mathcal{E}_{2,s}^{(t)}$, respectively.
    \end{enumerate}
\end{definition}
\begin{theorem}\label{a_theorem_2}{(Full version of the \textbf{Theorem \ref{Thm_mcnn_j}})}
    Suppose Assumption \ref{router_asmptn} hold. Then for every $\epsilon>0$, for every  $m \ge M_J=\Tilde{\Omega}\left(k^3n^2l^{6}p^{12}\delta^6\big/\epsilon^{16}\right)$ with at least $N_J=\Tilde{\Omega}(k^4l^{6} p^{12}\delta^6/\epsilon^{16})$ training samples,  after performing minibatch SGD with the batch size $B=\Tilde{\Omega}\left(k^2l^4p^6\delta^3\big/\epsilon^{8}\right)$ and the learning rate $\eta=\Tilde{O}\big(1\big/m poly(l,p,\delta,1/\epsilon, \log m)\big)$ for $T=\Tilde{O}\left(k^2l^2p^6\delta^3\big/\epsilon^{8}\right)$ iterations, it holds w.h.p. that
    \begin{center}
       $\underset{(x,y)\sim\mathcal{D}}{\mathbb{P}}\left[yf_{M}(\theta^{(T)}, x)>0\right]\ge1-\epsilon$
    \end{center}
\end{theorem}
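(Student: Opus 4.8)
\textbf{Proof proposal for Theorem~\ref{Thm_mcnn_j} (joint-training pMoE).}
The plan is to follow the same five-step skeleton outlined in the proof sketch for Theorem~\ref{Thm_mcnn_bin}, adapting each quantitative bound to account for (i) the softmax gating values $G_{j,s}$ now entering the forward map nontrivially, (ii) the $k$ experts rather than $2$, and (iii) the fact that the router is no longer fixed, so that the discriminative-routing property must be supplied by Assumption~\ref{router_asmptn} instead of Lemma~\ref{router_lemma}. First I would set up the pseudo-network coupling: since each expert has $m/k$ neurons initialized as $\mathcal{N}(0,\tfrac1m \mathbb{I})$ and the gating kernels are initialized tiny (as in \eqref{eqn:router_ini}), with high probability over the initialization the sign pattern of $\langle w_{r,s}^{(t)}, x^{(j)}\rangle$ changes for only a $1-\Omega(\tau n/\sigma)$ fraction of neurons over $O(\tau/\eta)$ iterations (this is the joint-training analogue, Lemma~\ref{a_lemma_8} in the appendix); the extra factor $n$ instead of $l$ comes from the gradient flowing back through the router into all $n$ patches. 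Choosing $\tau = O(\sigma/n)$, $\eta = \Omega(1/(mn))$, $\sigma = 1/\sqrt m$ couples a constant fraction of neurons to a smooth pseudo-network for $O(\sqrt m)$ iterations.

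Next I would prove the ``large error $\Rightarrow$ large gradient'' step: using Assumption~\ref{router_asmptn}, for $t \ge T'$ there is an expert $s$ that receives $o_1$ (resp.\ $o_2$) with the top gating value for \emph{every} class-$+1$ (resp.\ class-$-1$) sample. On the pseudo-network I would lower-bound the norm of the expected gradient of that expert by $\Omega(v_s^{(t)}/l)$, where now $v_s^{(t)}$ is the gating-weighted class-conditional error defined in Definition~\ref{joint_training_value_function} (the $1/l$ is the averaging factor in \eqref{mcnn_bin}, and the fact that $G_{j_{o_1},s}\ge G_{j,s}$ for all other patches routed to $s$ means the discriminative signal is not washed out by softmax normalization; this is Lemma~\ref{a_lemma_10}). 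Summing over $s$ and setting $v^{(t)}=\sqrt{\sum_{s} (v_s^{(t)})^2}$, I would then run the standard descent-lemma argument: with batch size $B_t = \Omega(k^2/(v^{(t)})^4)$, $\eta = \Omega((v^{(t)})^2 l^3/(mk^2))$, and $\tau$ correspondingly small, the empirical batch gradient concentrates around the pseudo-network population gradient, smoothness of the pseudo-network gives a per-iteration loss decrease of $\Omega(\eta m (v^{(t)})^2/l^2)$, sustained for $O(\sigma (v^{(t)})^2 l^2/(\eta n k))$ iterations (Lemma~\ref{a_lemma_11}).

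Finally I would convert to a generalization bound exactly as in Theorem~\ref{a_theorem_1}: by Markov's inequality, $\max\{v_1^{(t)},v_2^{(t)}\} < \epsilon^2$ suffices for $1-\epsilon$ test accuracy; but because of the softmax gating one only controls $v_s^{(t)}$, and Assumption~\ref{router_asmptn} together with $G_{j_{o},s}\le 1$ gives $v^{(t)} < \epsilon^2/l \Rightarrow \max\{v_1^{(t)},v_2^{(t)}\}<\epsilon^2$ (the extra $1/l$ is why $N_J$ carries $l^6$ rather than $l^8$ — we are allowed a weaker target on $v^{(t)}$). Setting $v^{(t)} \ge \epsilon^2/l$ as the stopping threshold, plugging into the convergence rate, and using that the initial loss is $\tilde O(1)$ w.h.p., I would solve for the number of iterations $T = \tilde O(k^2 l^2 p^6 \delta^3/\epsilon^8)$, the model size $m = \tilde\Omega(k^3 n^2 l^6 p^{12}\delta^6/\epsilon^{16})$ needed so that the coupling survives for all $T$ iterations, and the sample complexity $N_J = B\cdot T = \tilde\Omega(k^4 l^6 p^{12}\delta^6/\epsilon^{16})$.

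\textbf{Main obstacle.} The hardest part is Step~3 in the joint setting: controlling the contribution of the softmax gating values to the gradient. Unlike separate training where $G_{j,s}\equiv 1$, here $G_{j,s}^{(t)}(x)$ depends on $w_s^{(t)}$ and couples the router dynamics to the expert dynamics, so the gradient of the loss w.r.t.\ a neuron weight $w_{r,s}$ picks up terms proportional to $\partial G_{j,s}/\partial(\text{stuff})$ that are not present in the CNN or separate-training analyses. The key technical lemma must show that, under Assumption~\ref{router_asmptn}, these gating-derivative terms do not cancel the $\Omega(v_s^{(t)}/l)$ lower bound coming from the class-discriminative patch — intuitively because that patch carries the \emph{largest} gating weight, so its contribution dominates the sum over the $l$ routed patches. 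Making this domination quantitative while simultaneously tracking the pseudo-network approximation error (which now also accumulates through the router) is the crux, and is what forces the extra $n^2$ factor in $M_J$ and the $k$-dependence throughout.
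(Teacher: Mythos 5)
Your proposal follows essentially the same route as the paper's proof: the pseudo-network coupling with a $1-\Omega(\tau n/\sigma)$ fraction of stable neurons (Lemma~\ref{a_lemma_8}, though the coupling there survives for $\Tilde{O}(\tau l/\eta)$ iterations since the per-neuron gradient is $\Tilde{O}(1/l)$), the gating-weighted value functions of Definition~\ref{joint_training_value_function} with the gradient lower bound $\Tilde{\Omega}\bigl(v_s^{(t)}/(lp\sqrt{\delta})\bigr)$ (Lemmas~\ref{a_lemma_9} and~\ref{a_lemma_10}), the convergence step with exactly the batch size, step size, and per-iteration decrease you state (Lemma~\ref{a_lemma_11}), and the final conversion $v^{(t)}\le\epsilon^2/l\Rightarrow\max\{v_1^{(t)},v_2^{(t)}\}\le\epsilon^2$ via Assumption~\ref{router_asmptn}. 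One correction to your ``main obstacle'': since $G_{j,s}$ depends only on the router weights $w_s$ and not on the expert weights $w_{r,s}$, no $\partial G_{j,s}$ terms enter the gradient with respect to $w_{r,s}$ (they appear only in the router gradient, whose dynamics are entirely absorbed by Assumption~\ref{router_asmptn}); the only role the softmax plays in the expert analysis is the multiplicative factor on the discriminative patch, and the operative inequality is $G_{j_{o_1},s}\ge 1/l$ (the largest of $l$ values summing to one), not $G_{j_{o_1},s}\le 1$.
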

\begin{proof}
    From the argument of the proof of Theorem \ref{a_theorem_1}, we know that for any $\epsilon<\frac{1}{2}$, if $\mathbb{P}_{(x,y)\sim \mathcal{D}}\left[yf(\theta^{(t)}, x)>0\right]\le1-\epsilon$, then $\max\{v_1^{(t)},v_2^{(t)}\}\ge\epsilon^2$ where $v_1^{(t)}:=\mathbb{E}_{\mathcal{D}|y=+1}[v^{(t)}(\theta^{(t)},x,y)|y=+1]$ and $v_2^{(t)}:=\mathbb{E}_{\mathcal{D}|y=-1}[v^{(t)}(\theta^{(t)},x,y)|y=-1]$\\\\
    Now, we will consider the case when $t\ge T^\prime$ where $T^\prime$ is defined in Assumption \ref{router_asmptn}.\\\\
    Now, if the expert $s_1\in[k]$ satisfies Assumption \ref{router_asmptn} for $y=+1$, then $p_{1,s_1}^{(t)}=1$ and $G_{j_{o_1},s_1}^{(t)}(x)\ge\cfrac{1}{l}$ for any $(x,y=+1)\sim\mathcal{D}$. Therefore, $v_{1,s_1}^{(t)}\ge\cfrac{v_1^{(t)}}{l}$.\\\\ 
    Similarly, if the expert $s_2\in[k]$ satisfies Assumption \ref{router_asmptn} for $y=-1$, then $v_{2,s_2}^{(t)}\ge\cfrac{v_2^{(t)}}{l}$.\\\\
    Now for any expert $s\in[k]$, let us define $v_s^{(t)}:=\max\{v_{1,s}^{(t)},v_{2,s}^{(t)}\}$\\\\
    Now, if $v^{(t)}:=\sqrt{\underset{s\in[k]}{\sum}(v_s^{(t)})^2}\le\cfrac{\epsilon^2}{l}$, then $v_{s_1}^{(t)}\le\cfrac{\epsilon^2}{l}$ and $v_{s_2}^{(t)}\le\cfrac{\epsilon^2}{l}$.\\\\
    This implies, $\max\{v_{1,s_1}^{(t)},v_{2,s_1}^{(t)}\}\le\cfrac{\epsilon^2}{l}$ and $\max\{v_{1,s_2}^{(t)},v_{2,s_2}^{(t)}\}\le\cfrac{\epsilon^2}{l}$.\\\\
    Therefore, $v_{1,s_1}^{(t)}\le\cfrac{\epsilon^2}{l}$ and $v_{2,s_2}^{(t)}\le\cfrac{\epsilon^2}{l}$ which implies $v_1^{(t)}\le\epsilon^2$ and $v_2^{(t)}\le\epsilon^2$.\\\\
    In that case, $\max\{v_1^{(t)},v_2^{(t)}\}\le\epsilon^2$.\\\\
    Therefore, by taking $v^{(t)}\ge\cfrac{\epsilon^2}{l}$, using the results of Lemma \ref{a_lemma_11} and following same procedure as in Theorem \ref{a_theorem_1} we can complete the proof.
\end{proof}
\begin{theorem}\label{a_theorem_3}{(Full version of the \textbf{Theorem \ref{Thm_single_cnn}})}
    For every $\epsilon>0$, for every $m\ge  M_C=\Tilde{\Omega}\left(n^{10}p^{12}\delta^6\big/\epsilon^{16}\right)$ with at least  $N_C=\Tilde{\Omega}(n^8p^{12}\delta^6/\epsilon^{16})$ training samples,
    after performing minibatch SGD with  the batch size $B=\Tilde{\Omega}\left(n^4p^6\delta^3\big/\epsilon^{8}\right)$ and the learning rate $\eta=\Tilde{O}\big(1\big/m \textrm{poly}(n,p,\delta,1/\epsilon, \log m)\big)$ for $T=\Tilde{O}\left(n^4p^6\delta^3\big/\epsilon^{8}\right)$ iterations, it holds w.h.p. that 
    \begin{center}
        $\underset{(x,y)\sim\mathcal{D}}{\mathbb{P}}\left[yf_{C}(\theta^{(T)},x)>0\right]\ge1-\epsilon$
    \end{center} 
\end{theorem}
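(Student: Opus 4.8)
The plan is to obtain Theorem~\ref{Thm_single_cnn} as the $k=1$, $l=n$, $G_{j,s}\equiv 1$ specialization of the argument behind Theorem~\ref{Thm_mcnn_bin}, with the router step (Step~1 of the Proof Sketch) simply deleted. Following Definition~\ref{value_function}, I would work with the value function $v^{(t)}(\theta^{(t)},x,y)=1/(1+e^{yf_C(\theta^{(t)},x)})\in[0,1]$ and the class-conditional expected values $v_1^{(t)},v_2^{(t)}$. The first observation, proved exactly as in Theorem~\ref{a_theorem_1} via Markov's inequality, is that whenever $\mathbb{P}_{(x,y)\sim\mathcal D}[yf_C(\theta^{(t)},x)>0]\le 1-\epsilon$ one has $\max\{v_1^{(t)},v_2^{(t)}\}\ge\epsilon^2$, so it suffices to run SGD until $v^{(t)}:=\sqrt{(v_1^{(t)})^2+(v_2^{(t)})^2}<\epsilon^2$. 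A point to stress here is that for the CNN \emph{every} sample contributes to $v_1^{(t)}$ or $v_2^{(t)}$ because the network always processes the class-discriminative patch (there is no router that can drop it), so no analogue of Assumption~\ref{router_asmptn} is needed and the reduction to $v^{(t)}<\epsilon^2$ is immediate.

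Next I would carry out the ``couple--gradient--descend'' steps (Steps~2--4 of the Proof Sketch) with $l$ replaced by $n$ throughout. (i) Coupling: since $m$ is large and $\sigma=1/\sqrt m$, w.h.p.\ for every $x\sim\mathcal D$ and every $\tau>0$ at least a $1-\Omega(\tau n/\sigma)$ fraction of the $m$ hidden nodes keeps its activation sign for $O(\tau/\eta)$ iterations, so $\Omega(1)$ of the nodes stay coupled to a smooth pseudo-network for $O(\sqrt m)$ iterations once $\tau=O(\sigma/n)$ and $\eta=\Omega(1/(mn))$. (ii) Large error implies large gradient: on the pseudo-network the expected gradient has magnitude $\tilde\Omega(v^{(t)}/n)$ along the $o_1,o_2$ directions, using that a nonnegligible fraction of initial weights $w_r^{(0)}$ land near $o_1$ or $o_2$ and that the pattern geometry enters only through $\delta$ (defined in \eqref{eqn:delta}) and $p$. (iii) Descent: choosing $B=\tilde\Omega(n^4p^6\delta^3/\epsilon^8)$ and $\eta=\tilde O(\epsilon^4/(mn^2p^3\delta^{3/2}))$ lets me concentrate the empirical minibatch gradient onto the expected pseudo-gradient, and smoothness of the pseudo-network then yields an expected loss reduction of $\tilde\Omega(\eta m\epsilon^4/(n^2p^3\delta^{3/2}))$ per iteration whenever $v^{(t)}\ge\epsilon^2$.

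Finally I would close the bookkeeping as in Theorem~\ref{a_theorem_1}: w.h.p.\ the initial loss is $\tilde O(1)$ because $\langle w_r^{(0)},x^{(j)}\rangle\sim\mathcal N(0,\sigma^2)$, so the per-step reduction can occur at most $T=\tilde O(n^4p^6\delta^3/\epsilon^8)$ times before $v^{(t)}<\epsilon^2$; requiring the coupling window $O(\sigma\epsilon^4/(\eta n^3p^3\delta^{3/2}))$ to exceed $T$ forces $m\ge M_C=\tilde\Omega(n^{10}p^{12}\delta^6/\epsilon^{16})$, and the total sample count is $N_C=BT=\tilde\Omega(n^8p^{12}\delta^6/\epsilon^{16})$. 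Once $v^{(T)}<\epsilon^2$, Markov's inequality gives $\max\{v_1^{(T)},v_2^{(T)}\}\le\epsilon^2$ and hence test accuracy $\ge 1-\epsilon$. The main obstacle — the genuinely new work, shared with Theorem~\ref{Thm_mcnn_bin} — is steps~(ii)--(iii): establishing that the random-initialization pseudo-network is simultaneously expressive enough to fit this data, which is \emph{not} linearly separable and only weakly separated (the inter-class margin is merely $\min\{\sqrt{2(1-\delta_d)},2\sqrt{1-\delta_r}\}$, far below the $\Omega(n)$ margin that the FCN analysis of \citet{li2018learning} would require), and admits a gradient lower bound that scales like $1/n$ rather than worse; it is precisely this $1/n$ scaling that keeps $N_C$ polynomial (not exponential) in $n$ and produces the $\Theta((n/l)^8)$ sample-complexity gap over pMoE.
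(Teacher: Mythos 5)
Your plan coincides with the paper's own proof: the paper establishes Theorem~\ref{a_theorem_3} by the same Markov-inequality reduction to $\max\{v_1^{(t)},v_2^{(t)}\}<\epsilon^2$ and then invokes the CNN analogues of the pMoE lemmas (Lemmas~\ref{a_lemma_12}--\ref{a_lemma_15}), which are exactly the $l\to n$, router-free, $G_{j,s}\equiv 1$ specializations of the coupling, gradient-lower-bound, and descent steps you describe, with identical parameter bookkeeping for $B$, $\eta$, $T$, $M_C$, and $N_C=BT$. The proposal is correct and takes essentially the same route.
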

\begin{proof}
    From the argument of the proof of Theorem \ref{a_theorem_1}, we know that for any $\epsilon<\frac{1}{2}$, if $\mathbb{P}_{(x,y)\sim \mathcal{D}}\left[yf(\theta^{(t)}, x)>0\right]\le1-\epsilon$, then $v^{(t)}:=\max\{v_1^{(t)},v_2^{(t)}\}\ge\epsilon^2$ where $v_1^{(t)}:=\mathbb{E}_{\mathcal{D}|y=+1}[v^{(t)}(\theta^{(t)},x,y)|y=+1]$ and $v_2^{(t)}:=\mathbb{E}_{\mathcal{D}|y=-1}[v^{(t)}(\theta^{(t)},x,y)|y=-1]$.\\\\
    Therefore, taking $v^{(t)}\ge\epsilon^2$, using the results of Lemma \ref{a_lemma_14} and following similar procedure as in Theorem \ref{a_theorem_1} we can complete the proof.
\end{proof}

\section{Lemmas Used to Prove the Theorem \ref{Thm_mcnn_bin}}\label{aux_sep_pmoe_thm_proof}
For any iteration $t$ of the Step-3 of Algorithm \ref{alg:1}, recall the loss function for a single-sample generated by the distribution $\mathcal{D}$, $\mathcal{L}(\theta^{(t)},x,y):=\log(1+e^{-yf_M(\theta^{(t)},x)})$. The gradient of the loss for a  single sample with respect to the hidden nodes of the experts:
\begin{equation}\label{grad_sep_pmoe}
    \frac{\partial \mathcal{L}(\theta^{(t)},x,y)}{\partial w_{r,s}^{(t)}}=-ya_{r,s}v^{(t)}(\theta^{(t)},x,y)\left(\cfrac{1}{l}\underset{j\in J_s(w_s^{(t)},x)}{\sum}x^{(j)}1_{\langle w_{r,s}^{(t)},x^{(j)}\rangle\ge0}\right)
\end{equation}
We define the corresponding \textit{\textbf{pseudo-gradient}} as:
\begin{equation}\label{pseudo_grad_sep_moe}
    \frac{\overset{\sim}{\partial} \mathcal{L}(\theta^{(t)},x,y)}{\partial w_{r,s}^{(t)}}=-ya_{r,s}v^{(t)}(\theta^{(t)},x,y)\left(\cfrac{1}{l}\underset{j\in J_s(w_s^{(t)},x)}{\sum}x^{(j)}1_{\langle w_{r,s}^{(0)},x^{(j)}\rangle\ge0}\right)
\end{equation}
Therefore, the expected pseudo-gradient:
\begin{align*}
    \frac{\overset{\sim}{\partial}\Hat{\mathcal{L}}(\theta^{(t)})}{\partial w_{r,s}^{(t)}}&=\mathbb{E}_{\mathcal{D}}\left[\frac{\overset{\sim}{\partial} \mathcal{L}(\theta^{(t)},x,y)}{\partial w_{r,s}^{(t)}}\right]\\
    &=-\cfrac{a_{r,s}}{2}\left(\mathbb{E}_{\mathcal{D}|y=+1}\left[v^{(t)}(\theta^{(t)},x,y)\left(\cfrac{1}{l}\underset{j\in J_s(w_s^{(t)},x)}{\sum}x^{(j)}1_{\langle w_{r,s}^{(0)},x^{(j)}\rangle\ge0}\right)\Big\vert y=+1\right]\right.\\
    &\left.\hspace{1.5cm}-\mathbb{E}_{\mathcal{D}|y=-1}\left[v^{(t)}(\theta^{(t)},x,y)\left(\cfrac{1}{l}\underset{j\in J_s(w_s^{(t)},x)}{\sum}x^{(j)}1_{\langle w_{r,s}^{(0)},P_jx\rangle\ge0}\right)\Big\vert y=-1\right]\right)\\
    &=-\cfrac{a_{r,s}}{2}P_{r,s}^{(t)}
\end{align*}
Here,
\begin{align*}
    P_{r,s}^{(t)}&=\mathbb{E}_{\mathcal{D}|y=+1}\left[v^{(t)}(\theta^{(t)},x,y)\left(\cfrac{1}{l}\underset{j\in J_s(w_s^{(t)},x)}{\sum}x^{(j)}1_{\langle w_{r,s}^{(0)},x^{(j)}\rangle\ge0}\right)\Big\vert y=+1\right]\\
    &\hspace{1.5cm}-\mathbb{E}_{\mathcal{D}|y=-1}\left[v^{(t)}(\theta^{(t)},x,y)\left(\cfrac{1}{l}\underset{j\in J_s(w_s^{(t)},x)}{\sum}x^{(j)}1_{\langle w_{r,s}^{(0)},x^{(j)}\rangle\ge0}\right)\Big\vert y=-1\right]
\end{align*}

\begin{lemma}\label{a_lemma_4}
    W.h.p. over the random initialization of the hidden nodes of the experts defined in \ref{eqn:expert_ini}, for every $(x,y)\sim\mathcal{D}$ and for every $\tau>0$, for every $t=\Tilde{O}\left(\cfrac{\tau}{\eta}\right)$ of the Step-3 of Algorithm \ref{alg:1}, we have that for at least $\left(1-\cfrac{2e\tau l}{\sigma}\right)$ fraction of $r\in[m/2]$ of the expert $s\in\{1,2\}$:\\\\
    \centerline{$\cfrac{\partial \mathcal{L}(\theta^{(t)},x,y)}{\partial w_{r,s}^{(t)}}=\cfrac{\overset{\sim}{\partial} \mathcal{L}(\theta^{(t)},x,y)}{\partial w_{r,s}^{(t)}}$ and $|\langle w_{r,s}^{(t)}, x^{(j)}\rangle|\ge\tau, \forall j\in J_s(w_s^{(t)},x)$} \\\\
\end{lemma}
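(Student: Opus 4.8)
The plan is to establish the coupling of each expert to its pseudo-network: I will show that during Step-3 of Algorithm~\ref{alg:1} every hidden weight $w_{r,s}^{(t)}$ stays within distance $\tau$ of its initialization $w_{r,s}^{(0)}$, so that for all but a small fraction of neurons $r$ the sign of $\langle w_{r,s}^{(t)},x^{(j)}\rangle$ on each routed patch $j\in J_s(w_s,x)$ never flips; comparing \eqref{grad_sep_pmoe} with \eqref{pseudo_grad_sep_moe}, frozen signs is exactly what forces the true gradient to equal the pseudo-gradient, and the same estimate also gives the $\tau$ lower bound on $|\langle w_{r,s}^{(t)},x^{(j)}\rangle|$.

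\emph{Step 1 (weight movement).} From \eqref{grad_sep_pmoe}, using $0\le v^{(t)}\le1$, $\|x^{(j)}\|=1$, and $|J_s(w_s,x)|=l$, the single-sample gradient of $\mathcal{L}$ with respect to $w_{r,s}^{(t)}$ has Euclidean norm at most $|a_{r,s}|$, and averaging over a minibatch preserves this bound. Since $a_{r,s}\sim\mathcal{N}(0,1)$, a union bound over the $m$ neurons gives $\max_{r,s}|a_{r,s}|\le\Tilde{O}(1)$ w.h.p., hence $\|w_{r,s}^{(t)}-w_{r,s}^{(0)}\|\le\eta t\,|a_{r,s}|\le\tau$ for all $t=\Tilde{O}(\tau/\eta)$. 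Because the router kernels $w_s$ are frozen during Step-3, the routed index set $J_s(w_s,x)$ coincides with the one at initialization, so only the hidden-weight movement is relevant.

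\emph{Step 2 (sign preservation and anti-concentration).} If $|\langle w_{r,s}^{(0)},x^{(j)}\rangle|\ge2\tau$, then $\|x^{(j)}\|=1$ together with $\|w_{r,s}^{(t)}-w_{r,s}^{(0)}\|\le\tau$ yields $|\langle w_{r,s}^{(t)},x^{(j)}\rangle|\ge\tau$ with the same sign, so $1_{\langle w_{r,s}^{(t)},x^{(j)}\rangle\ge0}=1_{\langle w_{r,s}^{(0)},x^{(j)}\rangle\ge0}$ and the two gradient expressions are identical. Since $x^{(j)}$ is a unit vector, $\langle w_{r,s}^{(0)},x^{(j)}\rangle\sim\mathcal{N}(0,\sigma^2)$, so Gaussian anti-concentration gives $\mathbb{P}[\,|\langle w_{r,s}^{(0)},x^{(j)}\rangle|<2\tau\,]\le 4\tau/(\sqrt{2\pi}\sigma)$. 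Thus the ``bad'' neurons for a fixed $x$ are those for which some routed patch has initial pre-activation of magnitude below $2\tau$.

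\emph{Step 3 (uniformity over the support of $\mathcal{D}$).} This is the main obstacle, since $\mathcal{D}$ can place infinitely many distinct class-irrelevant patterns in $x$, so a naive union bound over patches fails. I would use the bounded-diameter assumption: every patch lies in one of the $p+2$ groups $\{o_1\},\{o_2\},S_1,\dots,S_p$, each of diameter $\zeta=\Theta(\sqrt{(1-\delta_r^2)/(dp^2)})$, and $\|w_{r,s}^{(0)}\|\le\Tilde{O}(\sigma\sqrt{d})$ w.h.p., so $\langle w_{r,s}^{(0)},\cdot\rangle$ varies by at most $\|w_{r,s}^{(0)}\|\zeta$ inside a group. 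Replacing each patch by its group center $c$ and enlarging the threshold accordingly, the set of neurons bad for some routed patch of any $x$ is contained in the union, over the $\le l$ groups hit by $x$, of the sets $\{r:|\langle w_{r,s}^{(0)},c\rangle|<2\tau+\|w_{r,s}^{(0)}\|\zeta\}$; by anti-concentration and independence across $r\in[m/2]$ a Chernoff bound controls the fraction of such $r$ for each center by $O(\tau/\sigma)$ up to an $\Tilde{O}(1/\sqrt{m})$ error, and a union bound over the $p+2$ centers makes this simultaneous. Multiplying by $l$ and absorbing the constants and lower-order terms into the factor $2e$ gives the claimed fraction $2e\tau l/\sigma$, uniformly over $(x,y)\sim\mathcal{D}$. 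The delicate points are that $\tau$ must be large enough that $\|w_{r,s}^{(0)}\|\zeta=O(\tau)$ and the Chernoff slack is negligible, both of which hold because $m$ is polynomially large in the regime of Theorem~\ref{Thm_mcnn_bin}; combining Steps~1--3 then delivers both conclusions of the lemma.
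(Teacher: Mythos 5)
Your Steps 1 and 2 are essentially the paper's proof of Lemma \ref{a_lemma_4} verbatim: bound the per-sample gradient norm by $|a_{r,s}|$, use Gaussian concentration to get $|a_{r,s}|=\Tilde{O}(1)$ w.h.p., telescope the SGD updates to get $\|w_{r,s}^{(t)}-w_{r,s}^{(0)}\|\le\tau$ for $t=\Tilde{O}(\tau/\eta)$, define the good set $\mathcal{H}_s=\{r:|\langle w_{r,s}^{(0)},x^{(j)}\rangle|\ge 2\tau\ \forall j\in J_s\}$, and control its complement by Gaussian anti-concentration plus a union bound over the $l$ routed patches. The paper stops there: it converts the per-neuron probability $1-2e\tau l/\sigma$ directly into the fraction bound $|\mathcal{H}_s|\ge(1-2e\tau l/\sigma)(m/2)$ for each fixed $x$, without a concentration step over $r$ and without addressing uniformity over the (possibly infinite) support of $\mathcal{D}$. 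Your Step 3 is the genuine point of departure — you correctly identify this as the delicate issue and propose an epsilon-net over the $p+2$ group centers using the diameter bound on the $S_i$, which is in the spirit of the trick the paper itself uses later (in Lemma \ref{a_lemma_5}, via $q_i^*$ and Markov's inequality) but is not part of the paper's proof of this lemma.

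One caveat on your Step 3: the condition $\|w_{r,s}^{(0)}\|\zeta=O(\tau)$ does not follow from $m$ being large. Since $\|w_{r,s}^{(0)}\|=\Tilde{O}(\sigma\sqrt{d})$ and $\zeta=\Theta(\sqrt{(1-\delta_r^2)/(dp^2)})$, the product is $\Tilde{O}(\sigma\sqrt{1-\delta_r^2}/p)$, i.e., a fixed multiple of $\sigma$ independent of $m$; meanwhile the $\tau$ actually used downstream (in Lemma \ref{a_lemma_7}) is $\Tilde{O}(\sigma\epsilon^4/(l^3p^3\delta^{3/2}))$, which is typically much smaller than $\sigma/p$. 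So your enlarged threshold $2\tau+\|w_{r,s}^{(0)}\|\zeta$ would dominate $\tau$ and the resulting bad fraction would not reduce to $2e\tau l/\sigma$; the lemma is also claimed ``for every $\tau>0$,'' which your construction cannot deliver. This does not invalidate your argument relative to the paper — the paper's own proof simply asserts the fraction bound per sample without the uniformity step — but if you want to keep Step 3 you should either weaken the conclusion to a per-sample/in-expectation statement (as the paper implicitly does) or accept a bad-fraction bound of order $l(\tau+\sigma\zeta\sqrt{d})/\sigma$ rather than $2e\tau l/\sigma$.
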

\begin{proof}
    Recall the gradient of the loss for single-sample $(x,y)\sim \mathcal{D}$ w.r.t. the hidden node $r\in[m/2]$ of the expert $s\in\{1,2\}$:
    \begin{align*}
    \frac{\partial \mathcal{L}(\theta^{(t)},x,y)}{\partial w_{r,s}^{(t)}}=-ya_{r,s}v^{(t)}(\theta^{(t)},x,y)\left(\cfrac{1}{l}\underset{j\in J_s(w_s^{(t)},x)}{\sum}x^{(j)}1_{\langle w_{r,s}^{(t)},x^{(j)}\rangle\ge0}\right)
    \end{align*}
    and the corresponding pseudo-gradient:
    \begin{align*}
    \frac{\overset{\sim}{\partial} \mathcal{L}(\theta^{(t)},x,y)}{\partial w_{r,s}^{(t)}}=-ya_{r,s}v^{(t)}(\theta^{(t)},x,y)\left(\cfrac{1}{l}\underset{j\in J_s(w_s^{(t)},x)}{\sum}x^{(j)}1_{\langle w_{r,s}^{(0)},x^{(j)}\rangle\ge0}\right)
    \end{align*}
    Now, $a_{r,s}\sim \mathcal{N}(0,1)$.
    Hence, using the concentration bound of Gaussian random variable (i.e., for $X\sim\mathcal{N}(0,\sigma^2):Pr[|X|\le t]\ge 1-2e^{-t^2/2\sigma^2}$) and as $\Tilde{O}(.)$ hides factor $\log \left(poly(m, n, p, \delta,\frac{1}{\epsilon})\right)$ we get:
    \begin{align*}
        \mathbb{P}\left[|a_{r,s}|=\Tilde{O}(1)\right]\ge 1-\frac{1}{poly(m,n,p,\delta,\frac{1}{\epsilon})} \text{ (i.e., w.h.p.)}
    \end{align*}
    Now as $v^{(t)}(\theta^{(t)},x,y)\le1$ and $||x^{(j)}||=1$, w.h.p. $\left|\left|\frac{\partial \mathcal{L}(\theta^{(t)},x,y)}{\partial w_{r,s}^{(t)}}\right|\right|=\Tilde{O}(1)$ so as the mini-batch gradient, $\left|\left|\frac{\partial \mathcal{L}(\theta^{(t)})}{\partial w_{r,s}^{(t)}}\right|\right|=\Tilde{O}(1)$.\\
    Now, from the update rule of the Step-3 of Algorithm \ref{alg:1}, $w_{r,s}^{(t)}-w_{r,s}^{(t+1)}=\eta\frac{\partial \mathcal{L}(\theta^{(t)})}{\partial w_{r,s}^{(t)}}$\\\\
    Therefore, using the property of Telescoping series, $\displaystyle w_{r,s}^{(0)}-w_{r,s}^{(t)}=\eta\overset{t}{\underset{i=1}{\sum}}\frac{\partial \mathcal{L}(\theta^{(i-1)})}{\partial w_{r,s}^{(t)}}$\\
    Therefore, $\left|\left|w_{r,s}^{(t)}-w_{r,s}^{(0)}\right|\right|=\Upsilon\eta t$ where we denote $\Tilde{O}(1)$ by $\Upsilon$\\\\
    Now, for every $\tau > 0,$ consider the set $\mathcal{H}_s:=\left\{ r\in[m/2]: \forall j\in J_s(w_s^{(t)},x), |\langle w_{r,s}^{(0)},x^{(j)}\rangle|\ge2\tau\right\}$\\\\
    Now, for every $t\le \cfrac{\tau}{\Upsilon\eta}$,
    $|\langle w_{r,s}^{(t)}-w_{r,s}^{(0)},x^{(j)}\rangle|\le \tau$\\ Which implies for every $r\in\mathcal{H}_s$, $t\le \cfrac{\tau}{\Upsilon\eta}$ and $j\in J_s(w_s^{(t)},x)$, $|\langle w_{r,s}^{(t)}, x^{(j)}\rangle|\ge\tau$\\\\
    Therefore, for every $r\in\mathcal{H}_s$, $t=\Tilde{O}(\frac{\tau}{\eta})$ and $j\in J_s(w_s^{(t)},x)$, $1_{\langle w_{r,s}^{(t)}, x^{(j)}\rangle\ge0}=1_{\langle w_{r,s}^{(0)}, x^{(j)}\rangle\ge0}$
    and hence, $\cfrac{\partial \mathcal{L}(\theta^{(t)},x,y)}{\partial w_{r,s}^{(t)}}=\cfrac{\overset{\sim}{\partial} \mathcal{L}(\theta^{(t)},x,y)}{\partial w_{r,s}^{(t)}}$\\\\
    Now, we will find the lower bound of $|\mathcal{H}_s|:$\\\\
    As, $w_{r,s}^{(0)}\sim\mathcal{N}(0,\sigma^2\mathbb{I}_{d\times d}),
    \forall j\in J_s(w_s^{(t)},x), \langle w_{r,s}^{(0)},x^{(j)}\rangle \sim \mathcal{N}(0,\sigma^2)$\\\\
    Hence, $\mathbb{P}[|\langle w_{r,s}^{(0)},x^{(j)}\rangle|\le2\tau]\le\cfrac{2e\tau}{\sigma}$\\ 
    Now as $|J_s(w_s^{(t)},x)|=l$, $\mathbb{P}[\forall j\in J_s(w_s^{(t)},x), |\langle w_{r,s}^{(0)},x^{(j)}\rangle|\ge2\tau]\ge1-\cfrac{2e\tau l}{\sigma}$\\\\
    Therefore, $|\mathcal{H}_s|\ge\left( 1-\cfrac{2e\tau l}{\sigma}\right)\frac{m}{2}$\\\\
\end{proof}
Using the following two lemmas we show that when $v_1^{(t)}$ is large, the expected pseudo-gradient of the loss function w.r.t. the hidden nodes of the \textit{expert 1} is large. Similar thing happens for \textit{expert 2} when $v_2^{(t)}$ is large.
We prove the first of these two lemmas for a fixed set $\{v^{(t)}(\theta^{(t)},x,y):(x,y)\sim\mathcal{D}\}$ which does not depend on the random initialization of the hidden nodes of the experts (i.e., on $\{w_{r,s}^{(0)}\}$). In the second of these two lemmas we remove the dependency on fixed set by means of a sampling trick introduced in \citep{li2018learning} to take a union bound over an epsilon-net on the set $\{v^{(t)}(\theta^{(t)},x,y):(x,y)\sim\mathcal{D}\}$.\\
\begin{lemma}\label{a_lemma_5}
    For any possible fixed set $\{v^{(t)}(\theta^{(t)},x,y):(x,y)\sim\mathcal{D}\}$ (that does not depend on $w_{r,s}^{(0)}$) such that $v_s^{(t)}=v_1^{(t)}$ for $s=1$ and $v_s^{(t)}=v_2^{(t)}$ for $s=2$ we have for every $l\ge l^*$:\\\\
    \centerline{$\mathbb{P}\left[||P_{r,s}^{(t)}||=\overset{\sim}{\Omega}\left(\cfrac{v_s^{(t)}}{lp\sqrt{\delta}}\right)\right]=\Omega\left(\cfrac{1}{p\sqrt{\delta}}\right)$}
\end{lemma}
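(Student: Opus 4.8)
The plan is to extract from inside $P_{r,s}^{(t)}$ the contribution of the class-discriminative pattern that the router always forwards to expert $s$, and then to show this contribution survives a projection onto a carefully chosen direction. Fix $s=1$; the case $s=2$ is identical after swapping $o_1\leftrightarrow o_2$ and $v_1^{(t)}\leftrightarrow v_2^{(t)}$. Since $l\ge l^*$, Lemma~\ref{router_lemma} guarantees that for every $(x,+1)\sim\mathcal D$ the patch equal to $o_1$ lies in $J_1(w_1^{(t)},x)$, so the summand $\tfrac1l\,o_1\,\mathbf 1_{\langle w_{r,1}^{(0)},o_1\rangle\ge 0}$ is present for every positive sample. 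Because the fixed set $\{v^{(t)}(\theta^{(t)},x,y)\}$ does not depend on $w_{r,1}^{(0)}$, I can write
\begin{equation*}
  P_{r,1}^{(t)}=\frac{v_1^{(t)}}{l}\,o_1\,\mathbf 1_{\langle w_{r,1}^{(0)},o_1\rangle\ge 0}\;+\;E,
\end{equation*}
where $E$ gathers the class-irrelevant patches routed to expert $1$ from both classes, plus a possible $o_2$-term from negative samples for which $o_2$ happens to lie in $J_1$. The goal becomes to lower bound $\|P_{r,1}^{(t)}\|$ by showing the first term dominates $E$ along some direction.

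I would then build a unit test vector $u$ that (a) has $\langle o_1,u\rangle$ as large as possible, (b) is orthogonal to $o_2$, and (c) is nearly orthogonal to every class-irrelevant pattern. For (c), note each class-irrelevant pattern $q$ lies in a ball of diameter $\Theta(\sqrt{(1-\delta_r^2)/(dp^2)})$ around one of $p$ centers $c_1,\dots,c_p$, so taking $u\perp\operatorname{span}\{o_2,c_1,\dots,c_p\}$ forces $|\langle q,u\rangle|\le\|q-c_i\|=O(\sqrt{(1-\delta_r^2)/(dp^2)})$. Using $|\langle o_1,o_2\rangle|=|\delta_d|\le\sqrt{1-1/\delta}$, $|\langle o_1,c_j\rangle|\le\delta_r\le\sqrt{1-1/\delta}$, and the fact that the $p$ concentrated sets effectively occupy a subspace of dimension at most $p$, such a $u$ can be chosen with $\langle o_1,u\rangle=\widetilde\Omega\big(1/(p\sqrt\delta)\big)$; this is where the factors $p$ and $\sqrt\delta$ are expected to enter. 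Projecting the decomposition onto $u$, the $o_2$-term in $E$ vanishes since $\langle o_2,u\rangle=0$, while the class-irrelevant part of $E$ is bounded in absolute value by $O(\sqrt{(1-\delta_r^2)/(dp^2)})$ after the $1/l$ average over at most $l$ patches --- negligible compared with $\tfrac{v_1^{(t)}}{l}\langle o_1,u\rangle$ once $d$ is polynomially large in the relevant parameters. Hence, whenever $w_{r,1}^{(0)}$ lies in a suitable region (for instance when $\langle w_{r,1}^{(0)},o_1\rangle$ is at least a $\Theta(\sqrt{\log(p^2\delta)})$ multiple of $\sigma$, which also keeps spurious activations of class-irrelevant patches under control), we obtain $\|P_{r,1}^{(t)}\|\ge|\langle P_{r,1}^{(t)},u\rangle|\ge\tfrac{v_1^{(t)}}{2l}\langle o_1,u\rangle=\widetilde\Omega\big(v_1^{(t)}/(lp\sqrt\delta)\big)$.

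The probability bound then follows from a Gaussian tail estimate: since $w_{r,1}^{(0)}\sim\mathcal N(0,\sigma^2\mathbb I_{d\times d})$, we have $\langle w_{r,1}^{(0)},o_1\rangle\sim\mathcal N(0,\sigma^2)$, and the relevant lower-tail event above has probability $\widetilde\Omega(1/(p\sqrt\delta))$, which matches the claimed $\Omega(1/(p\sqrt\delta))$. I expect the main obstacle to be steps (a)--(c): the assumptions impose no separation among the class-irrelevant pattern sets, so their centers $c_1,\dots,c_p$ may be nearly collinear, and one must exploit the diameter (concentration) bound to reduce to a well-conditioned subspace of dimension at most $p$ before invoking the angle bounds, carefully tracking how the $1/(p\sqrt\delta)$ overlap with $o_1$ arises so it is consistent with the separation parameter $\delta$ while still annihilating $o_2$ and all class-irrelevant directions.
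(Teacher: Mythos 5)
Your overall frame (isolate the guaranteed $o_1$ contribution of size $v_1^{(t)}/l$ inside $P_{r,1}^{(t)}$ and show it cannot be cancelled) is the right instinct, but the mechanism you propose for detecting it fails, and at the point you yourself flag as "the main obstacle." The fixed test vector $u$ of steps (a)--(c) need not exist: nothing in the data model prevents $o_1$ from lying in, or arbitrarily close to, $\operatorname{span}\{o_2,c_1,\dots,c_p\}$. For instance, with $p=2$ orthogonal centers and $o_1=(c_1+c_2)/\|c_1+c_2\|$ one has $|\langle o_1,c_i\rangle|=1/\sqrt{2}$, which is admissible whenever $\delta_r\ge 1/\sqrt{2}$ (hence $\delta\ge 2$, a regime where the lemma still asserts the nontrivial bound $\widetilde\Omega(v_1^{(t)}/(lp\sqrt\delta))$); yet every $u$ orthogonal to $\{c_1,c_2\}$ has $\langle o_1,u\rangle=0$, so your projection yields nothing. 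Note the obstruction is not ill-conditioning or near-collinearity of the centers (that would only shrink their span and help you); it is the possible containment of $o_1$ in that span, which no reduction to a well-conditioned subspace can repair. A secondary gap: even granting $u$, your comparison of the irrelevant contribution $O\bigl(\sqrt{(1-\delta_r^2)/(dp^2)}\bigr)$ against $v_1^{(t)}\langle o_1,u\rangle/l$ forces $d=\Omega(\mathrm{poly}(l,p,\delta,1/v_1^{(t)}))$, and since $v_1^{(t)}$ can be as small as $\epsilon^2$ this is a dimension assumption the theorem does not make.

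The paper's proof avoids choosing any fixed test direction. It studies the scalar $h(w_{r,1}^{(0)})=\langle P_{r,1}^{(t)},w_{r,1}^{(0)}\rangle$ as a function of the coefficient $\alpha$ in the decomposition $w_{r,1}^{(0)}=\alpha o_1+\beta$, $\beta\perp o_1$, and writes $h=\phi(\alpha)-l(\alpha)$ with both pieces convex. Because $l\ge l^*$ guarantees (Lemma \ref{router_lemma}) that the $o_1$ patch is routed to expert $1$ for every positive sample, $\phi$ contains the term $(v_1^{(t)}/l)\,\textbf{ReLU}(\alpha)$, whose subgradient jumps by $v_1^{(t)}/l$ at $\alpha=0$; conditioned on the event that $|\langle\beta,q'\rangle|\ge 4\tau$ for every center $q_i^*$ and for $o_2$ (probability at least $1/2$ once $\tau\lesssim\sigma/(p\sqrt\delta)$, which is exactly where the $p\sqrt\delta$ factor enters) and using the diameter bound to control $\langle w_{r,1}^{(0)},q-q_i^*\rangle$, every other ReLU term is linear in $\alpha$ on $[-\tau,\tau]$, so $l(\alpha)$ has no subgradient jump there. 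The anti-concentration inequality for differences of convex functions (Lemma \ref{convex_li_liang_18}) then gives $|h|\gtrsim v_1^{(t)}\tau/l$ with probability $\Omega(\tau/\sigma)$, and comparing this lower tail with the Gaussian upper tail of $\langle P_{r,1}^{(t)},w_{r,1}^{(0)}\rangle$ forces $\|P_{r,1}^{(t)}\|=\widetilde\Omega\bigl(v_1^{(t)}/(lp\sqrt\delta)\bigr)$. The signal being exploited is thus the non-smoothness of the ReLU at the discriminative patch, not any angular separation between $o_1$ and the span of the irrelevant patterns---which is precisely the separation your construction would need but the assumptions do not provide. If you want to keep a projection-style argument, the direction must be allowed to depend on $w_{r,1}^{(0)}$ itself, as the paper's implicit choice $u=w_{r,1}^{(0)}$ does.
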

\begin{proof}
        WLOG, let's assume $s=1$. Now,
    \begin{align*}
        P_{r,1}^{(t)}&=\mathbb{E}_{\mathcal{D}|y=+1}\left[v^{(t)}(\theta^{(t)},x,y)\left(\cfrac{1}{l}\underset{j\in J_1(w_1^{(t)},x)}{\sum}x^{(j)}1_{\langle w_{r,1}^{(0)},x^{(j)}\rangle\ge0}\right)\Big\vert y=+1\right]\\
        &\hspace{1.5cm}-\mathbb{E}_{\mathcal{D}|y=-1}\left[v^{(t)}(\theta^{(t)},x,y)\left(\cfrac{1}{l}\underset{j\in J_1(w_1^{(t)},x)}{\sum}x^{(j)}1_{\langle w_{r,1}^{(0)},x^{(j)}\rangle\ge0}\right)\Big\vert y=-1\right]
    \end{align*}
    Then,
    \begin{align*}
        h(w_{r,1}^{(0)})&:=\langle P_{r,1}, w_{r,1}^{(0)}\rangle\\
        &=\mathbb{E}_{\mathcal{D}|y=+1}\left[v^{(t)}(\theta^{(t)},x,y)\left(\cfrac{1}{l}\underset{j\in J_1(w_1^{(t)},x)}{\sum}\textbf{ReLU}\left(\langle w_{r,1}^{(0)},x^{(j)}\rangle\right)\right)\Big\vert y=+1\right]\\
        &\hspace{1.2cm}-\mathbb{E}_{\mathcal{D}|y=-1}\left[v^{(t)}(\theta^{(t)},x,y)\left(\cfrac{1}{l}\underset{j\in J_1(w_1^{(t)},x)}{\sum}\textbf{ReLU}\left(\langle w_{r,1}^{(0)},x^{(j)}\rangle\right)\right)\Big\vert y=-1\right]
    \end{align*}
    Now, let us decompose $w_{r,1}^{(0)}=\alpha o_1 +\beta$, where $\beta\perp o_1$\\\\
    Then,
    \begin{align*}
        h(w_{r,1}^{(0)})&=\mathbb{E}_{\mathcal{D}|y=+1}\left[v^{(t)}(\theta^{(t)},x,y)\left(\cfrac{1}{l}\underset{j\in J_1(w_1^{(t)},x)}{\sum}\textbf{ReLU}\left(\alpha\langle o_1,x^{(j)}\rangle+\langle \beta,x^{(j)}\rangle\right)\right)\Big\vert y=+1\right]\\
        &\hspace{0.2cm}-\mathbb{E}_{\mathcal{D}|y=-1}\left[v^{(t)}(\theta^{(t)},x,y)\left(\cfrac{1}{l}\underset{j\in J_1(w_1^{(t)},x)}{\sum}\textbf{ReLU}\left(\alpha\langle o_1,x^{(j)}\rangle+\langle \beta,x^{(j)}\rangle\right)\right)\Big\vert y=-1\right]\\
        &=\phi(\alpha)-l(\alpha)
    \end{align*}
    Where,
    \begin{align*}
    \phi(\alpha):=&\mathbb{E}_{\mathcal{D}|y=+1}\left[v^{(t)}(\theta^{(t)},x,y)\left(\cfrac{1}{l}\underset{j\in J_1(w_1^{(t)},x)}{\sum}\textbf{ReLU}\left(\alpha\langle o_1,x^{(j)}\rangle+\langle \beta,x^{(j)}\rangle\right)\right)\Big\vert y=+1\right]
    \end{align*}
    and,
    \begin{align*}
        l(\alpha):=\mathbb{E}_{\mathcal{D}|y=-1}\left[v^{(t)}(\theta^{(t)},x,y)\left(\cfrac{1}{l}\underset{j\in J_1(w_1^{(t)},x)}{\sum}\textbf{ReLU}\left(\alpha\langle o_1,x^{(j)}\rangle+\langle \beta,x^{(j)}\rangle\right)\right)\Big\vert y=-1\right]
    \end{align*}
    Note that, $\phi(\alpha)$ and $l(\alpha)$ both are convex functions.\\\\
    Now for $l\ge l^*$, using Lemma \ref{a_corollary_1}, we can express $\phi(\alpha)$ as follows:
    \begin{align*}
    &\phi(\alpha)=\cfrac{v_1^{(t)}}{l}\textbf{ReLU}\left(\alpha\right)\\
    &\hspace{0.2cm}+\mathbb{E}_{\mathcal{D}|y=+1}\left[v^{(t)}(\theta^{(t)},x,y)\left(\cfrac{1}{l}\underset{j\in J_1(x)/\underset{j\in J_1(x)}{\text{arg}}x^{(j)}=o_1}{\sum}\textbf{ReLU}\left(\alpha\langle o_1,x^{(j)}\rangle+\langle \beta,x^{(j)}\rangle\right)\right)\Big\vert y=+1\right]
    \end{align*}
    Now, for any class-irrelevant pattern set $S_i$ where $i\in[p]$, let us define $q_i^*\in S_i$ such that $q_i^*=\cfrac{\mathbb{E}_{S_i}[q]}{||\mathbb{E}_{S_i}[q]||}$. Also, let us define the set, $\mathcal{H}:=\{q_i^*:i\in[p]\}\cup\{o_2\}$\\\\
    Now let us define the event $\mathcal{E}_\tau: (i) \hspace{0.1cm}|\alpha|\le\tau; \hspace{0.1cm}(ii)\hspace{0.1cm} \forall q^\prime\in\mathcal{H}: |\langle\beta,q^\prime\rangle|\ge4\tau$\\\\
    Now, as $\alpha\sim\mathcal{N}(0,\sigma^2)$, for every $q^\prime\in\mathcal{H}, \langle\beta,q^\prime\rangle\sim\mathcal{N}\left(0,\left(1-\langle o_1,q^\prime\rangle^2\right)\sigma^2\right)$\\\\
    Now, $1-\langle o_1,q^\prime\rangle^2\ge\frac{1}{\delta}$. Hence, $\mathbb{P}[\exists q^\prime\in\mathcal{H}: |\langle\beta, q^\prime\rangle|\le4\tau]\le\cfrac{4e\tau p\sqrt{\delta}}{\sigma}$\\\\
    Therefore, $\mathbb{P}[\forall q^\prime\in\mathcal{H}: |\langle\beta, q^\prime\rangle|\ge4\tau]\ge1-\cfrac{4e\tau p\sqrt{\delta}}{\sigma}$.\\\\
    Picking, $\tau\le\cfrac{\sigma}{8ep\sqrt{\delta}}$ gives, $\mathbb{P}[\forall q^\prime\in\mathcal{H}: |\langle\beta, q^\prime\rangle|\ge4\tau]\ge\cfrac{1}{2}$.\\\\
    On the other hand, $\mathbb{P}[|\alpha|\le\tau]=\Omega\left(\cfrac{\tau}{\sigma}\right)$. Therefore, $\mathbb{P}[\mathcal{E}_\tau]=\Omega\left(\cfrac{\tau}{\sigma}\right)$\\\\
    Now, $\forall i\in[p]$ s.t. $q\in S_i, \mathbb{E}[|\langle w_{r,1}^{(0)},q-q_i^*\rangle|]\le\mathbb{E}_{\mathcal{N}(0,\sigma^2\mathbb{I}_{d\times d})}[||w_{r,1}^{(0)}||]\mathbb{E}_{S_i}[||q-q_i^*||]\le\tau$,
    where the last inequality comes from the bound of the diameter of the pattern sets and the fact that for any $X\sim\mathcal{N}(0,\sigma^2\mathbb{I}_{d\times d}), \mathbb{E}[||X||]\le 4\sigma\sqrt{d}$.\\\\
    Therefore, using Markov's inequality $\forall i\in[p]$ s.t. $q\in S_i,\mathbb{P}[|\langle w_{r,1}^{(0)},q-q_i^*\rangle|\le 2\tau]\ge\frac{1}{2}$\\\\
    Now,\\
    $\forall i\in[p],$ s.t. $q\in S_i, \textbf{ReLU}\left(\alpha\langle o_1,q\rangle+\langle\beta, q\rangle\right)=\textbf{ReLU}\left(\alpha\langle o_1,q_i^*\rangle+\langle\beta, q_i^*\rangle+\langle w_{r,1}^{(0)},q-q_i^*\rangle\right)$\\\\
    Now, conditioned on the event $\mathcal{E}_\tau$, for a fixed $\beta$ and $\alpha$ is the only random variable,\\
    $\forall i\in[p]$ s.t. $q\in S_i, \textbf{ReLU}\left(\alpha\langle o_1,q\rangle+\langle\beta, q\rangle\right)=\left(\alpha\langle o_1,q\rangle+\langle\beta, q\rangle\right)1_{\langle\beta, q_i^*\rangle\ge0}$ 
    which is a linear function of $\alpha\in[-\tau,\tau]$ with probability at least $\frac{1}{2}$ and, $\textbf{ReLU}\left(\alpha\langle o_1,o_2\rangle+\langle\beta, o_2\rangle\right)=\left(\alpha\langle o_1,o_2\rangle+\langle\beta, o_2\rangle\right)1_{\langle\beta, o_2\rangle\ge0}$ which is a linear function of $\alpha\in[-\tau,\tau]$ with probability $1$.\\\\
    Now, let us define $\{\partial l(\alpha)\}$ and $\{\partial \phi(\alpha)\}$ as the set of sub-gradient at the point $\alpha$ for $l(\alpha)$ and $\phi(\alpha)$ respectively such that $\partial_{\text{max}}l(\alpha)=\text{max}\{\partial l(\alpha)\}$, $\partial_{\text{max}}\phi(\alpha)=\text{max}\{\partial \phi(\alpha)\}$, $\partial_{\text{min}}l(\alpha)=\text{min}\{\partial l(\alpha)\}$ and $\partial_{\text{min}}\phi(\alpha)=\text{min}\{\partial \phi(\alpha)\}$.\\\\
    Then, using the above argument, conditioned on the event $\mathcal{E}_\tau$, $\partial_{\text{max}}l(\tau)-\partial_{\text{min}}l(-\tau)=0$.\\
    On the other hand, $\partial_{\text{max}}\phi(\tau/2)-\partial_{\text{min}}\phi(-\tau/2)=\cfrac{v_1^{(t)}}{l}$.\\\\
    Now using Lemma \ref{convex_li_liang_18}, conditioned on the event $\mathcal{E}_\tau$, $\underset{\alpha\sim U(-\tau,\tau)}{\mathbb{P}}\left[|\phi(\alpha)-l(\alpha)|\ge\cfrac{v_1^{(t)}\tau }{512l}\right]\ge\cfrac{1}{64}$.\\\\
    Now, for $\tau\le\cfrac{\sigma}{8ep\sqrt{\delta}}$, conditioned on $\mathcal{E}_\tau$, the density $p(\alpha)\in\left[\cfrac{1}{e\tau},\cfrac{e}{\tau}\right]$, which implies that,
    \begin{equation}\label{eq:1}
        \mathbb{P}\left[h(w_{r,1}^{(0)})\ge\cfrac{v_1^{(t)}\tau }{128l}\right]\ge\mathbb{P}\left[h(w_{r,1}^{(0)})\ge\cfrac{v_1^{(t)}\tau }{128l}\big|\mathcal{E}_\tau\right]\mathbb{P}\left[\mathcal{E}_\tau\right]=\Omega\left(\cfrac{\tau}{\sigma}\right)
    \end{equation}
    Now, as $v_1^{(t)}$ does not depends on $w_{r,1}^{(0)}$, $\langle P_{r,1}^{(t)}, w_{r,1}^{(0)}\rangle\sim\mathcal{N}(0,\sigma^2||P_{r,1}^{(t)}||^2)$.\\\\
    Now, using a concentration bound of Gaussian RV (i.e., $\mathbb{P}[X\ge\sigma x]\le e^{-x^2/2}$),
    \begin{equation}\label{eq:2}
        \mathbb{P}[\langle P_{r,1}^{(t)}, w_{r,1}^{(0)}\rangle\ge (\sigma||P_{r,1}^{(t)}||)10c]\le e^{-50c^2}; \text{ here }c>10.
    \end{equation}
    Now, taking $c=100\sqrt{\log{\cfrac{p\sqrt{\delta}}{\sigma}}}$ in (\ref{eq:2}) we get,
    \begin{equation}\label{eq:3}
        \mathbb{P}[\langle P_{r,1}^{(t)}, w_{r,1}^{(0)}\rangle=\Tilde{\Omega}(\sigma||P_{r,1}^{(t)}||)]=o(1)
    \end{equation}
    On the other hand, picking $\tau=\Theta(\frac{\sigma}{p\sqrt{\delta}})$ and plugging in at (\ref{eq:1}) gives,
    \begin{equation}\label{eq:4}
        \mathbb{P}\left[\langle P_{r,1}^{(t)},w_{r,1}^{(0)}\rangle=\Omega\left(\sigma\cfrac{v_1^{(t)}}{lp\sqrt{\delta}}\right)\right]=\Omega\left(\frac{1}{p\sqrt{\delta}}\right)
    \end{equation}
    Comparing (\ref{eq:3}) and (\ref{eq:4}) we get,
    $\mathbb{P}\left[||P_{r,1}^{(t)}||=\Tilde{\Omega}\left(\cfrac{v_1^{(t)}}{lp\sqrt{\delta}}\right)\right]=\Omega\left(\cfrac{1}{p\sqrt{\delta}}\right)$\\\\
\end{proof}
\begin{lemma}\label{a_lemma_6}
    Let $v_s^{(t)}=v_1^{(t)}$ for $s=1$ and $v_s^{(t)}=v_2^{(t)}$ for $s=2$. Then, for every $v_s^{(t)}>0$, for $m=\Tilde{\Omega}\left(\cfrac{l^2p^3\delta^{3/2}}{(v_s^{(t)})^2}\right)$, for every possible set $\{v^{(t)}(\theta^{(t)},x,y):(x,y)\sim\mathcal{D}\}$ (that depends on $w_{r,s}^{(0)}$), there exist at least $\Omega\left(\frac{1}{p\sqrt{\delta}}\right)$ fraction of $r\in [m/2]$ of the expert $s\in\{1,2\}$ such that for every $l\ge l^*$,\\\\
    \centerline{$\left|\left|\cfrac{\overset{\sim}{\partial}\Hat{\mathcal{L}}(\theta^{(t)})}{\partial w_{r,s}^{(t)}}\right|\right|=\Tilde{\Omega}\left(\cfrac{v_s^{(t)}}{lp\sqrt{\delta}}\right)$}\\\\
\end{lemma}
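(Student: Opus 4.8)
The plan is to reduce the claim to the per-neuron estimate of Lemma \ref{a_lemma_5} after decoupling the value collection from the initialization. Since $\dfrac{\overset{\sim}{\partial}\Hat{\mathcal{L}}(\theta^{(t)})}{\partial w_{r,s}^{(t)}}=-\dfrac{a_{r,s}}{2}P_{r,s}^{(t)}$, we have $\Big\|\dfrac{\overset{\sim}{\partial}\Hat{\mathcal{L}}(\theta^{(t)})}{\partial w_{r,s}^{(t)}}\Big\|=\dfrac{|a_{r,s}|}{2}\|P_{r,s}^{(t)}\|$, and because $a_{r,s}\sim\mathcal{N}(0,1)$ has $|a_{r,s}|=\Omega(1)$ for a constant fraction of $r$, it suffices to show that an $\Omega(1/(p\sqrt{\delta}))$ fraction of $r\in[m/2]$ in expert $s$ satisfy $\|P_{r,s}^{(t)}\|=\Tilde{\Omega}\!\big(v_s^{(t)}/(lp\sqrt{\delta})\big)$; intersecting with the set where $|a_{r,s}|=\Omega(1)$ costs only a constant factor in the fraction. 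Lemma \ref{a_lemma_5} gives exactly this bound with probability $\Omega(1/(p\sqrt{\delta}))$ over $w_{r,s}^{(0)}$ --- but only when $\{v^{(t)}(\theta^{(t)},x,y)\}_{(x,y)\sim\mathcal{D}}$ is a fixed collection independent of $w_{r,s}^{(0)}$. In the SGD iterates it is not: $\theta^{(t)}$, hence every $v^{(t)}(\theta^{(t)},x,y)$, depends on all the random initializations through the trajectory, and removing this dependence is the whole point of the lemma.

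\noindent I would carry this out with the sampling/net argument of \citet{li2018learning}. First note that $P_{r,s}^{(t)}$ is linear in the value collection and that its kernel $\tfrac1l\sum_{j\in J_s(w_s^{(t)},x)}x^{(j)}1_{\langle w_{r,s}^{(0)},x^{(j)}\rangle\ge0}$ has norm $\le1$, so $\|P_{r,s}^{(t)}[v]-P_{r,s}^{(t)}[\hat v]\|\le\|v-\hat v\|_\infty$ and $|v_s^{(t)}[v]-v_s^{(t)}[\hat v]|\le\|v-\hat v\|_\infty$; moreover $f_M\mapsto 1/(1+e^{yf_M})$ is $\tfrac14$-Lipschitz, and by Lemma \ref{a_lemma_4} the iterate $\theta^{(t)}$ stays in a small neighborhood of $\theta^{(0)}$ for $t=\Tilde{O}(\tau/\eta)$, so the realized value collection ranges over a function class of controlled complexity. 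Fix an $\epsilon$-net $\mathcal{N}$ of that class at resolution $\nu=\Theta\!\big(v_s^{(t)}/(lp\sqrt{\delta})\big)$; each $\hat v\in\mathcal{N}$ is deterministic and independent of $\{w_{r,s}^{(0)}\}$, so for fixed $\hat v$ Lemma \ref{a_lemma_5} applies, the events $\{\|P_{r,s}^{(t)}[\hat v]\|=\Tilde{\Omega}(\hat v_s/(lp\sqrt{\delta}))\}_{r\in[m/2]}$ are i.i.d.\ with probability $\Omega(1/(p\sqrt{\delta}))$, and a Chernoff bound makes an $\Omega(1/(p\sqrt{\delta}))$ fraction of them hold with probability $1-e^{-\Omega(m/(p\sqrt{\delta}))}$. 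A union bound over $\mathcal{N}$, followed by transferring the bound from the nearest net point to the realized $v^{(t)}$ via the Lipschitz estimates above, completes the argument; the hypothesis $m=\Tilde{\Omega}\!\big(l^2p^3\delta^{3/2}/(v_s^{(t)})^2\big)$ is precisely what makes the $\nu$-rounding error fall below half the target scale and the Chernoff exponent $m/(p\sqrt{\delta})$ dominate $\log|\mathcal{N}|$.

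\noindent The main obstacle is this decoupling: the value collection is entangled with \emph{every} neuron's initialization through the optimization, so the net must be taken over the correct value-function class with its log-cardinality kept $\Tilde{O}(1)$ relative to $m/(p\sqrt{\delta})$, and the rounding error from the net must be pushed below $v_s^{(t)}/(lp\sqrt{\delta})$ --- this is exactly where the quantitative lower bound on $m$ is forced. Once the value collection is treated as fixed, the genuinely analytic work --- the convexity/kink estimate exploiting that the router always sends $o_s$ to expert $s$ (Lemma \ref{a_corollary_1}) --- has already been done in Lemma \ref{a_lemma_5}.
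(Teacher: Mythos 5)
Your high-level strategy is the right one and matches the paper's: freeze the value collection, invoke Lemma \ref{a_lemma_5} per neuron, concentrate over $r\in[m/2]$, and union-bound over a net to remove the dependence of $\{v^{(t)}(\theta^{(t)},x,y)\}$ on the initialization. But the specific net you propose is the wrong object, and this is where the argument breaks. You take $\mathcal{N}$ to be a $\nu$-net of the class of value collections realizable as $\theta^{(t)}$ ranges over a neighborhood of $\theta^{(0)}$, indexed by all $(x,y)\sim\mathcal{D}$, and assert its complexity is ``controlled'' because the iterates stay near initialization. That class is parameterized by $\theta$, so its $L^\infty$ covering number at resolution $\nu$ is of order $(\mathrm{radius}/\nu)^{\Theta(md)}$, giving $\log|\mathcal{N}|=\Tilde{\Omega}(md)$ --- which is \emph{not} dominated by the Chernoff exponent $\Omega(m/(p\sqrt{\delta}))$, no matter how large $m$ is, since both sides scale with $m$ but the net side carries the extra factor $d\,p\sqrt{\delta}$. (A secondary issue: the neighborhood is centered at the random $\theta^{(0)}$, so the net itself would depend on the very initialization you are trying to decouple from.) The hypothesis $m=\Tilde{\Omega}(l^2p^3\delta^{3/2}/(v_s^{(t)})^2)$ cannot rescue this; in the paper that bound arises from a sample count $S$ that never appears in your sketch.

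What the paper actually does --- the ``sampling trick'' it attributes to \citet{li2018learning} --- is to first draw $S$ samples $\mathbf{S}$, with $S=\Tilde{\Theta}\big(l^2p^2\delta/(v_s^{(t)})^2\big)$ chosen so that (i) the empirical class-conditional average of the values lies in $[\tfrac12 v_s^{(t)},\tfrac32 v_s^{(t)}]$ and (ii) the empirical average pseudo-gradient over $\mathbf{S}$ is within $\Tilde{O}(1/\sqrt{S})=\Tilde{O}\big(v_s^{(t)}/(lp\sqrt{\delta})\big)$ of the population one. The $\bar\varepsilon$-net is then taken over value \emph{assignments on the $S$ points}, i.e.\ over a subset of $[0,1]^S$, not over the realizable function class; its log-cardinality is $S\log(v_s^{(t)}/\bar\varepsilon)=\Tilde{O}(S)$, independent of $m$ and $d$, and the condition $m=\Tilde{\Omega}(Sp\sqrt{\delta})$ is exactly what makes the Chernoff exponent beat the union bound. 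The final Hoeffding step transferring from the average over $\mathbf{S}$ back to the expected pseudo-gradient --- absent from your sketch --- is what forces $S=\Tilde{\Omega}\big((lp\sqrt{\delta}/v_s^{(t)})^2\big)$ and hence the stated bound on $m$. Your Lipschitz observations and the treatment of $a_{r,s}$ are fine and reusable once the net is placed on $[0,1]^S$.
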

\begin{proof}
    Let us pick $S$ samples to form $\textbf{S}=\{(x_i,y_i)\}_{i=1}^S$ with $S/2$ many samples from $y=+1$ and $S/2$ many samples from $y=-1$. Let us denote the subset of samples with $y=+1$ as $\textbf{S}_{1}$ and the subset of samples with $y=-1$ as $\textbf{S}_{2}$.
    Therefore, $|\textbf{S}_{1}|=|\textbf{S}_{2}|=S/2$. Let us denote the corresponding value function of $i$-th sample of $\textbf{S}$ as $v^{(t)}(\theta^{(t)},x_i,y_i)$. Since, each $v^{(t)}(\theta^{(t)},x_i,y_i)\in[0,1]\,$ using Hoeffding's inequality we know that w.h.p. :
    \begin{align*}
        \left|v_s^{(t)}-\cfrac{1}{S/2}\underset{(x_i,y_i)\in\textbf{S}_s}{\sum}v^{(t)}(\theta^{(t)},x_i,y_i)\right|=\Tilde{O}\left(\cfrac{1}{\sqrt{S}}\right)
    \end{align*}
    This implies that, as long as $S=\Tilde{\Omega}\left(\cfrac{1}{(v_s^{(t)})^2}\right)$, we will have that,
    \begin{align*}
        &\cfrac{1}{S/2} \underset{(x_i,y_i)\in\textbf{S}_{s}}{\sum} v^{(t)}(\theta^{(t)},x_i,y_i) \in\left[\cfrac{1}{2}v_s^{(t)},\cfrac{3}{2}v_s^{(t)}\right]
    \end{align*}
    Now, the average pseudo-gradient over the set $\textbf{S}$,
    \begin{align*}
        \displaystyle\cfrac{1}{S}\underset{(x_i,y_i)\in \textbf{S}}{\sum}\cfrac{\overset{\sim}{\partial}\mathcal{L}(\theta^{(t)},x_i,y_i)}{\partial w_{r,s}^{(t)}}&=\cfrac{1}{S}\underset{(x_i,y_i)\in \textbf{S}}{\sum}-ya_{r,s}v^{(t)}(\theta^{(t)},x_i,y_i)\left(\cfrac{1}{l}\underset{j\in J_s(w_s^{(t)},x_i)}{\sum}x_i^{(j)}1_{\langle w_{r,s}^{(0)},x_i^{(j)}\rangle\ge0}\right)\\
        &=-\cfrac{a_{r,s}}{2}P_{r,s}^{(t)}(\textbf{S})
    \end{align*}
    where,
    \begin{align*}
        P_{r,s}^{(t)}(\textbf{S})&=\cfrac{1}{S/2}\underset{(x_i,y_i)\in \textbf{S}_{1}}{\sum}v^{(t)}(\theta^{(t)},x_i,y_i)\left(\cfrac{1}{l}\underset{j\in J_s(w_s^{(t)},x_i)}{\sum}x_i^{(j)}1_{\langle w_{r,s}^{(0)},x_i^{(j)}\rangle\ge0}\right)\\
        &\hspace{1cm}-\cfrac{1}{S/2}\underset{(x_i,y_i)\in \textbf{S}_{2}}{\sum}v^{(t)}(\theta^{(t)},x_i,y_i)\left(\cfrac{1}{l}\underset{j\in J_s(w_s^{(t)},x_i)}{\sum}x_i^{(j)}1_{\langle w_{r,s}^{(0)},x_i^{(j)}\rangle\ge0}\right)
    \end{align*}
    Now as $a_{r,s}\sim\mathcal{N}(0,1)$, $\mathbb{P}\left[\left|\left|\cfrac{1}{S}\underset{(x_i,y_i)\in \textbf{S}}{\sum}\cfrac{\overset{\sim}{\partial}\mathcal{L}(\theta^{(t)},x_i,y_i)}{\partial w_{r,s}^{(t)}}\right|\right|=\left|\left|\cfrac{a_{r,s}}{2}P_{r,s}^{(t)}(\textbf{S})\right|\right|\ge\cfrac{1}{2}\left|\left|P_{r,s}^{(t)}(\textbf{S})\right|\right|\right]\ge\cfrac{1}{e}$\\\\
    Now for a fixed set $\{v^{(t)}(\theta^{(t)},x_i,y_i):(x_i,y_i)\in \textbf{S}\}$ as long as $S=\Tilde{\Omega}\left(\cfrac{1}{v_s^2{(t)}}\right)$, for every $l\ge l^*$ using Lemma \ref{a_lemma_5},
    \begin{align*}
        \mathbb{P}\left[||P_{r,s}^{(t)}(\textbf{S})||=\Tilde{\Omega}\left(\cfrac{v_s^{(t)}}{lp\sqrt{\delta}}\right)\right]=\Omega\left(\cfrac{1}{p\sqrt{\delta}}\right)
    \end{align*}
    Hence, for a fixed set $\{v^{(t)}(\theta^{(t)},x_i,y_i):(x_i,y_i)\in\textbf{S}\}$, the probability that there are less than $O\left(\frac{1}{p\sqrt{\delta}}\right)$ fraction of $r\in[m/2]$ such that $\left|\left|\cfrac{1}{S}\underset{(x_i,y_i)\in \textbf{S}}{\sum}\cfrac{\overset{\sim}{\partial}\mathcal{L}(\theta^{(t)},x_i,y_i)}{\partial w_{r,s}^{(t)}}\right|\right|$ is $\Tilde{\Omega}\left(\cfrac{v_s^{(t)}}{lp\sqrt{\delta}}\right)$ is no more than $p_{\text{fix}}$ where, $p_{\text{fix}}\le \exp{\left(-\Omega\left(\frac{m}{p\sqrt{\delta}}\right)\right)}$.\\\\
    Moreover, for every $\bar{\varepsilon}>0$, for two different $\{v^{(t)}(\theta^{(t)},x_i,y_i):(x_i,y_i)\in \textbf{S}\}$, $\{v^{\prime(t)}(\theta^{(t)},x_i,y_i):(x_i,y_i)\in \textbf{S}\}$ such that $\forall (x_i,y_i)\in \textbf{S}$, $|v^{(t)}(\theta^{(t)},x_i,y_i)-v^{\prime(t)}(\theta^{(t)},x_i,y_i)|\le\bar{\varepsilon}$, since w.h.p. $|a_{r,s}|=\Tilde{O}(1)$,
    \begin{align*}
        &\left|\left|\cfrac{1}{S}\underset{(x_i,y_i)\in \textbf{S}}{\sum}-ya_{r,s}(v^{(t)}(\theta^{(t)},x_i,y_i)-v^{\prime(t)}(\theta^{(t)},x_i,y_i))\left(\cfrac{1}{l}\underset{j\in J_s(w_s^{(t)},x_i)}{\sum}x_i^{(j)}1_{\langle w_{r,s}^{(0)},x_i^{(j)}\rangle\ge0}\right)\right|\right|\\&=\Tilde{O}(\bar{\varepsilon})
    \end{align*}
    which implies that we can take $\bar{\varepsilon}$-net with $\bar{\varepsilon}=\Tilde{\Theta}\left(\cfrac{v_s^{(t)}}{lp\sqrt{\delta}}\right)$.\\\\
    Thus, the probability that there exists $\{v^{(t)}(\theta^{(t)},x_i,y_i):(x_i,y_i)\in\textbf{S}\}$ such that there are no more than $O\left(\frac{1}{p\sqrt{\delta}}\right)$ fraction of $r\in [m/2]$ with $\left|\left|\cfrac{1}{S}\underset{(x_i,y_i)\in \textbf{S}}{\sum}\cfrac{\overset{\sim}{\partial}\mathcal{L}(\theta^{(t)},x_i,y_i)}{\partial w_{r,s}^{(t)}}\right|\right|=\Tilde{\Omega}\left(\cfrac{v_s^{(t)}}{lp\sqrt{\delta}}\right)$ is no more than,
    $p\le p_{\text{fix}}\left(\cfrac{v_s^{(t)}}{\bar{\varepsilon}}\right)^{S}\le \exp{\left(-\Omega\left(\frac{m}{p\sqrt{\delta}}\right)+S\log{\left(\cfrac{v_s^{(t)}}{\bar{\varepsilon}}\right)}\right)}$.\\\\
    Hence, for $m=\overset{\sim}{\Omega}\left(Sp\sqrt{\delta}\right)$ with $S=\Tilde{\Omega}\left(\cfrac{1}{v_s^2{(t)}}\right)$, w.h.p. for every possible choice of $\{v^{(t)}(\theta^{(t)},x_i,y_i):(x_i,y_i)\in\textbf{S}\}$, there are at least $\Omega\left(\frac{1}{p\sqrt{\delta}}\right)$ fraction of $r\in[m/2]$ such that,
    \begin{align*}
        \left|\left|\cfrac{1}{S}\underset{(x_i,y_i)\in \textbf{S}}{\sum}\cfrac{\overset{\sim}{\partial}\mathcal{L}(\theta^{(t)},x_i,y_i)}{\partial w_{r,s}^{(t)}}\right|\right|=\Tilde{\Omega}\left(\cfrac{v_s^{(t)}}{lp\sqrt{\delta}}\right)
    \end{align*}
    Now, we consider the difference between the sample gradient and the expected gradient. Since, $\left|\left|\cfrac{\overset{\sim}{\partial}\mathcal{L}(\theta^{(t)},x_i,y_i)}{\partial w_{r,s}^{(t)}}\right|\right|=\overset{\sim}{O}(1)$, by using the Hoeffding's inequality, we know that for every $r \in [m/2]$:
    \begin{align*}
        \left|\left|\cfrac{1}{S}\underset{(x_i,y_i)\in S}{\sum}\cfrac{\overset{\sim}{\partial}\mathcal{L}(\theta^{(t)},x_i,y_i)}{\partial w_{r,s}^{(t)}}-\cfrac{\overset{\sim}{\partial}\Hat{\mathcal{L}}(\theta^{(t)})}{\partial w_{r,s}^{(t)}}\right|\right|=\Tilde{O}\left(\cfrac{1}{\sqrt{S}}\right)
    \end{align*}
    This implies that as long as $S=\Tilde{\Omega}\left(\left(\cfrac{lp\sqrt{\delta}}{v_s^{(t)}}\right)^2\right)$ and hence for $m=\Tilde{\Omega}\left(\cfrac{l^2p^3\delta^{3/2}}{(v_s^{(t)})^2}\right)$, such $r \in [m/2]$ also have:
    \begin{align*}
        \left|\left|\cfrac{\overset{\sim}{\partial}\Hat{\mathcal{L}}(\theta^{(t)})}{\partial w_{r,s}^{(t)}}\right|\right|=\Tilde{\Omega}\left(\cfrac{v_s^{(t)}}{lp\sqrt{\delta}}\right)
    \end{align*}
\end{proof}
\begin{lemma}\label{a_lemma_7}
    Let us define $v^{(t)}:=\sqrt{\underset{s\in\{1,2\}}{\sum}(v_s^{(t)})^2}$ where $v_s^{(t)}=v_1^{(t)}$ for $s=1$ and $v_s^{(t)}=v_2^{(t)}$ for $s=2$; $\gamma:=\Omega\left(\frac{1}{p\sqrt{\delta}}\right)$. 
    Then, by selecting learning rate $\eta=\Tilde{O}\left(\cfrac{\gamma^3(v^{(t)})^2}{ml^2}\right)$ and batch size $B=\Tilde{\Omega}\left(\cfrac{l^4}{\gamma^6(v^{(t)})^4}\right)$, at each iteration $t$ of the Step-3 of Algorithm \ref{alg:1} such that $t=\Tilde{O}\left(\cfrac{\sigma\gamma^3(v^{(t)})^2}{\eta l^3}\right)$, w.h.p. we can ensure that for every $l\ge l^*$,
    \begin{align*}
        \Delta L(\theta^{(t)},\theta^{(t+1)})\ge \cfrac{\eta m \gamma^3}{l^2} \Tilde{\Omega}\left((v^{(t)})^2\right)
    \end{align*}
\end{lemma}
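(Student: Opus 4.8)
The plan is the standard ``large error $\Rightarrow$ large gradient $\Rightarrow$ large progress'' argument carried out on the pseudo-network of experts, gluing together Lemma~\ref{a_lemma_4} (the true minibatch gradient couples to the pseudo-gradient), Lemma~\ref{a_lemma_6} (the expected pseudo-gradient is large when the error is large), a minibatch concentration bound, and smoothness of the pseudo-loss. Write $\widehat g^{(t)}$ for the minibatch gradient of the true loss over $\mathcal B_t$ and $\widetilde\nabla\widehat{\mathcal L}^{(t)}$ for the population pseudo-gradient, and recall $\Delta L(\theta^{(t)},\theta^{(t+1)})=\widehat{\mathcal L}(\theta^{(t)})-\widehat{\mathcal L}(\theta^{(t+1)})$ in the notation of Definition~\ref{loss_reduction}. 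First I would invoke Lemma~\ref{a_lemma_4} with $\tau=\tilde O\!\bigl(\sigma\gamma^3(v^{(t)})^2/l^3\bigr)$: this makes the ``decoupled'' fraction $2e\tau l/\sigma$ of hidden nodes negligible against the $\gamma=\Omega(1/(p\sqrt\delta))$ fraction of ``good'' nodes, and guarantees the coupling survives for all $t=\tilde O(\tau/\eta)=\tilde O\!\bigl(\sigma\gamma^3(v^{(t)})^2/(\eta l^3)\bigr)$ iterations; on the coupled nodes the per-sample true gradient equals the per-sample pseudo-gradient, while each of the $(2e\tau l/\sigma)m$ decoupled nodes contributes a discrepancy of norm $\tilde O(1)$, so the decoupled block of $\widehat g^{(t)}-(\text{minibatch pseudo-gradient})$ has norm $\tilde O(\sqrt{\tau l\,m/\sigma})$.

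Second, because in the one-pass scheme $\mathcal B_t$ is a fresh batch independent of $\theta^{(t)}$, I would condition on $\theta^{(t)}$ and apply Hoeffding coordinate-wise (each per-sample pseudo-gradient has norm $\tilde O(1)$) to obtain w.h.p. $\|(\text{minibatch pseudo-gradient})-\widetilde\nabla\widehat{\mathcal L}^{(t)}\|=\tilde O(\sqrt{m/B})$, so the choice $B=\tilde\Omega\!\bigl(l^4/(\gamma^6(v^{(t)})^4)\bigr)$ drives this below a constant fraction of $\|\widetilde\nabla\widehat{\mathcal L}^{(t)}\|$. For the lower bound on the latter I would apply Lemma~\ref{a_lemma_6} separately to experts $1$ and $2$ with the realized (initialization-dependent) values $v_1^{(t)},v_2^{(t)}$ --- which Lemma~\ref{a_lemma_6} permits via its $\bar\varepsilon$-net --- giving, for $m=\tilde\Omega(l^2p^3\delta^{3/2}/(v_s^{(t)})^2)$, a $\gamma$-fraction of each expert's $m/2$ nodes with $\|\widetilde\nabla_{w_{r,s}}\widehat{\mathcal L}^{(t)}\|=\tilde\Omega(v_s^{(t)}/(lp\sqrt\delta))$; summing squared norms over both experts' good nodes and using $\gamma^2=\Theta(1/(p^2\delta))$ yields $\|\widetilde\nabla\widehat{\mathcal L}^{(t)}\|^2=\tilde\Omega\!\bigl(\gamma^3 m(v^{(t)})^2/l^2\bigr)$.

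Finally, since the pseudo-network is linear in $\{w_{r,s}\}$ on the fixed initialization activation patterns and the logistic loss is $\tfrac{1}{4}$-smooth, the population pseudo-loss is $\beta$-smooth with $\beta=\tilde O(m)$, so the descent inequality gives
\begin{equation*}
\Delta L(\theta^{(t)},\theta^{(t+1)})\;\ge\;\eta\,\langle\nabla\widehat{\mathcal L}^{(t)},\widehat g^{(t)}\rangle-\tfrac{\beta\eta^2}{2}\,\|\widehat g^{(t)}\|^2 .
\end{equation*}
Using the coupling and Hoeffding errors from the first two steps to replace $\nabla\widehat{\mathcal L}^{(t)}$ and $\widehat g^{(t)}$ by $\widetilde\nabla\widehat{\mathcal L}^{(t)}$, the cross terms get absorbed into $\tfrac{1}{2}\|\widetilde\nabla\widehat{\mathcal L}^{(t)}\|^2$; and since $\|\widehat g^{(t)}\|^2=\tilde O(m)$ and $\beta=\tilde O(m)$, taking $\eta=\tilde O\!\bigl(\gamma^3(v^{(t)})^2/(ml^2)\bigr)$ makes $\tfrac{\beta\eta^2}{2}\|\widehat g^{(t)}\|^2\le\tfrac{1}{4}\eta\|\widetilde\nabla\widehat{\mathcal L}^{(t)}\|^2$. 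This leaves $\Delta L(\theta^{(t)},\theta^{(t+1)})\ge\tfrac{1}{4}\eta\|\widetilde\nabla\widehat{\mathcal L}^{(t)}\|^2=\tfrac{\eta m\gamma^3}{l^2}\,\tilde\Omega((v^{(t)})^2)$, as claimed.

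The delicate part, I expect, is balancing the three error budgets --- the decoupled-node error from Lemma~\ref{a_lemma_4}, the minibatch sampling error, and the second-order smoothness term --- so that each is at most a constant fraction of $\|\widetilde\nabla\widehat{\mathcal L}^{(t)}\|^2\sim\gamma^3 m(v^{(t)})^2/l^2$; this is exactly what pins down the scalings $\tau\sim\sigma\gamma^3(v^{(t)})^2/l^3$, $B\sim l^4/(\gamma^6(v^{(t)})^4)$, $\eta\sim\gamma^3(v^{(t)})^2/(ml^2)$, and the horizon $t=\tilde O(\sigma\gamma^3(v^{(t)})^2/(\eta l^3))$. It is further complicated by $v^{(t)}$ being a random quantity (handled by the net argument inside Lemma~\ref{a_lemma_6}) and by the coupled set in Lemma~\ref{a_lemma_4} depending on the sample, so one must also check that the contribution of the non-``good'' but coupled nodes to $\langle\nabla\widehat{\mathcal L}^{(t)},\widehat g^{(t)}\rangle$ does not cancel the progress beyond the controlled errors, which holds because on coupled nodes $\widehat g^{(t)}$ is itself the minibatch pseudo-gradient and the remaining discrepancy is governed by the Hoeffding bound.
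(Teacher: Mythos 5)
Your proposal follows essentially the same route as the paper's proof: couple the true minibatch gradient to the pseudo-gradient via Lemma \ref{a_lemma_4}, lower-bound the expected pseudo-gradient on a $\gamma$-fraction of each expert's nodes via Lemma \ref{a_lemma_6}, transfer this to the batch gradient by Hoeffding concentration, and then apply a smoothness-based descent inequality (the paper packages this last step as Lemma \ref{converge_li_liang_18}, which handles the decoupled nodes as a Lipschitz-only error term exactly as you do), with the same three error budgets pinning down $\tau$, $B$, and $\eta$. The argument and the resulting scalings match the paper's.
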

\begin{proof}
    For every $l\ge l^*$, from Lemma \ref{a_lemma_6}, for at least $\gamma$ fraction of $r\in[m/2]$ of expert $s$:
    \begin{align*}
        \left|\left|\cfrac{\overset{\sim}{\partial}\Hat{\mathcal{L}}(\theta^{(t)})}{\partial w_{r,s}^{(t)}}\right|\right|=\Tilde{\Omega}\left(\cfrac{v_s^{(t)}}{lp\sqrt{\delta}}\right)
    \end{align*}
    Now w.h.p., $\left|\left|\cfrac{\Tilde{\partial}\mathcal{L}(\theta^{(t)},x,y)}{\partial w_{r,s}^{(t)}}\right|\right|=\overset{\sim}{O}(1)$. Therefore, w.h.p. over a randomly sampled batch from $\mathcal{D}$ at iteration $t$ denoted as $\mathcal{B}_t$ of size $B$:
    \begin{align*}
         \left|\left|\cfrac{1}{B}\underset{(x,y)\in \mathcal{B}_t}{\sum}\cfrac{\overset{\sim}{\partial}\mathcal{L}(\theta^{(t)},x,y)}{\partial w_{r,s}^{(t)}}-\cfrac{\overset{\sim}{\partial}\Hat{\mathcal{L}}(\theta^{(t)})}{\partial w_{r,s}^{(t)}}\right|\right|=\Tilde{O}\left(\cfrac{1}{\sqrt{B}}\right)
    \end{align*}
    This implies, by selecting batch-size of $B=\Omega\left(\cfrac{l^2p^2\delta}{(v_s^{(t)})^2}\right)$, for these $\gamma$ fraction of $r\in[m/2]$ of expert $s$ we can ensure that:
    \begin{align*}
        \left|\left|\cfrac{1}{B}\underset{(x_i,y_i)\in \mathcal{B}_t}{\sum}\cfrac{\overset{\sim}{\partial}\mathcal{L}(\theta^{(t)},x,y)}{\partial w_{r,s}^{(t)}}\right|\right|=\Tilde{\Omega}\left(\cfrac{v_s^{(t)}}{lp\sqrt{\delta}}\right)
    \end{align*}
    Now using Lemma \ref{a_lemma_4}, for a fixed $(x,y)\in\mathcal{B}_t$, by selecting $\tau=\cfrac{\sigma\gamma}{4elB}$ we have $\left(1-\cfrac{\gamma}{2B}\right)$ fraction of $r\in[m/2]$ of the expert $s$:
    \begin{align*}
        \cfrac{\partial \mathcal{L}(\theta^{(t)},x,y)}{\partial w_{r,s}^{(t)}}=\cfrac{\overset{\sim}{\partial} \mathcal{L}(\theta^{(t)},x,y)}{\partial w_{r,s}^{(t)}}
    \end{align*}
    Therefore, at least $(1-\gamma/2)$ fraction of $r\in[m/2]$ of the expert $s$:
    \begin{align*}
        \cfrac{\partial \mathcal{L}(\theta^{(t)},x,y)}{\partial w_{r,s}^{(t)}}=\cfrac{\overset{\sim}{\partial} \mathcal{L}(\theta^{(t)},x,y)}{\partial w_{r,s}^{(t)}} \hspace{1cm}\forall (x,y)\in\mathcal{B}_t
    \end{align*}
    Recall our definition of loss-function for SGD at iteration $t$ with mini-batch $\mathcal{B}_t$, $\mathcal{L}(\theta^{(t)})=\cfrac{1}{B}\sum_{(x,y)\in\mathcal{B}_t}\log{(1+e^{-yf_M(\theta^{(t)}, x)})}=\cfrac{1}{B}\sum_{(x,y)\in\mathcal{B}_t}\mathcal{L}(\theta^{(t)},x,y)$ and the corresponding batch-gradient at iteration $t$, $\cfrac{\partial\mathcal{L}(\theta^{(t)})}{\partial w_{r,s}^{(t)}}=\cfrac{1}{B}\sum_{(x,y)\in\mathcal{B}_t}\cfrac{\partial\mathcal{L}(\theta^{(t)},x,y)}{\partial w_{r,s}^{(t)}}$. Therefore, there are at least $\gamma/2$ fraction of $r\in[m/2]$ of the expert $s$:
    \begin{align*}
        \left|\left|\cfrac{\partial\mathcal{L}(\theta^{(t)})}{\partial w_{r,s}^{(t)}}\right|\right|=\Tilde{\Omega}\left(\cfrac{v_s^{(t)}}{lp\sqrt{\delta}}\right)
    \end{align*}
    Now for any $(x^\prime,y^\prime)\sim\mathcal{D}$, according to Lemma \ref{a_lemma_4}, w.h.p. there are at least $1-\cfrac{2e\tau l}{\sigma}$ fraction of $r \in [m/2]$ of the expert $s$ such that $\forall j\in J_s(w_s^{(t)},x^\prime), |\langle w_{r,s}^{(t)}, x^{\prime(j)}\rangle|\ge\tau$. Let us denote the set of these $r$'s of $s$ as $\mathcal{S}_{r,s}$. Therefore, on the set $\underset{s\in\{1,2\}}{\bigcup}\mathcal{S}_{r,s}$, the loss function $\mathcal{L}(\theta^{(t)}, x^{\prime}, y^{\prime})$ is $\overset{\sim}{O}(1)$ -smooth and  $\overset{\sim}{O}(1)$ -Lipschitz smooth.\\\\
    On the other hand, the update rule of SGD at the iteration $t$ is, $\theta^{(t+1)}=\theta^{(t)}-\eta\cfrac{\partial\mathcal{L}(\theta^{(t)})}{\partial w_{r,s}^{(t)}}$\\\\
    Therefore, using Lemma \ref{converge_li_liang_18},
    \begin{align*}
        &\Delta L(\theta^{(t)},\theta^{(t+1)},x^\prime,y^\prime):=\mathcal{L}(\theta^{(t)},x^\prime,y^\prime)-\mathcal{L}({\theta^{(t+1)},x^\prime,y^\prime})\\
        &\ge\eta\underset{r \in \underset{s\in[2]}{\bigcup}\mathcal{S}_{r,s}}{\sum} \left\langle\cfrac{\partial\mathcal{L}(\theta^{(t)})}{\partial w_{r,s}^{(t)}},\cfrac{\partial\mathcal{L}(\theta^{(t)},x^\prime,y^\prime)}{\partial w_{r,s}^{(t)}}\right\rangle-\underset{s\in[2],r\in[m/2]\backslash\underset{s\in[2]}{\cup}\mathcal{S}_{r,s}}{\sum}\overset{\sim}{O}(\eta)-\overset{\sim}{O}\left(\eta^2m^2\right)\\
        &\ge\eta\underset{r \in [m/2],s\in[2]}{\sum} \left\langle\cfrac{\partial\mathcal{L}(\theta^{(t)})}{\partial w_{r,s}^{(t)}},\cfrac{\partial\mathcal{L}(\theta^{(t)},x^\prime,y^\prime)}{\partial w_{r,s}^{(t)}}\right\rangle-\overset{\sim}{O}\left(\cfrac{\eta \tau lm}{\sigma}\right)-\overset{\sim}{O}\left(\eta^2m^2\right)
    \end{align*}
    Let us denote the event,
    \begin{align*}
        \mathcal{E}_0:&\\
        &\Delta L(\theta^{(t)},\theta^{(t+1)},x^\prime,y^\prime)\\
        &\ge\eta\underset{r \in [m/2],s\in[2]}{\sum} \left\langle\cfrac{\partial\mathcal{L}(\theta^{(t)})}{\partial w_{r,s}^{(t)}},\cfrac{\partial\mathcal{L}(\theta^{(t)},x^\prime,y^\prime)}{\partial w_{r,s}^{(t)}}\right\rangle-\overset{\sim}{O}\left(\cfrac{\eta \tau lm}{\sigma}\right)-\overset{\sim}{O}\left(\eta^2m^2\right)
    \end{align*}
    Then, $\mathbb{P}\left[\mathcal{E}_0\right]\ge 1-\cfrac{1}{poly(m,n,p,\delta,\frac{1}{\epsilon})}$ (i.e., w.h.p.) and hence $\mathbb{P}\left[\neg\mathcal{E}_0\right]\le \cfrac{1}{poly(m,n,p,\delta,\frac{1}{\epsilon})}$\\\\
    Also, let us define the event,
    \begin{align*}
        \mathcal{E}_1:&\\
        &\left|\mathcal{L}(\theta^{(t)},x^\prime,y^\prime)\right|=\Tilde{O}(m),\left|\left|\cfrac{\partial\mathcal{L}(\theta^{(t)},x^\prime,y^\prime)}{\partial w_{r,s}^{(t)}}\right|\right|=\Tilde{O}(1)\text{ and }\left|\left|\cfrac{\partial\mathcal{L}(\theta^{(t)})}{\partial w_{r,s}^{(t)}}\right|\right|=\Tilde{O}(1)
    \end{align*}
    Then, $\mathbb{P}\left[\mathcal{E}_1\right]\ge 1-\cfrac{1}{poly(m,n,p,\delta,\frac{1}{\epsilon})}$ and hence $\mathbb{P}\left[\neg\mathcal{E}_1\right]\le \cfrac{1}{poly(m,n,p,\delta,\frac{1}{\epsilon})}$\\\\
    Now, the expected gradient at iteration $t$, $\cfrac{{\partial}\Hat{\mathcal{L}}(\theta^{(t)})}{\partial w_{r,s}^{(t)}}:=\mathbb{E}_\mathcal{D}\left[\cfrac{\partial\mathcal{L}(\theta^{(t)},x^\prime,y^\prime)}{\partial w_{r,s}^{(t)}}\right]$\\\\
    Therefore condition on $\mathcal{E}_1$,
    \begin{align*}
        \cfrac{{\partial}\Hat{\mathcal{L}}(\theta^{(t)})}{\partial w_{r,s}^{(t)}}&=\mathbb{E}_\mathcal{D}\left[\cfrac{\partial\mathcal{L}(\theta^{(t)},x^\prime,y^\prime)}{\partial w_{r,s}^{(t)}}\big|\mathcal{E}_1\right]\\
        &=\mathbb{E}_\mathcal{D}\left[\cfrac{\partial\mathcal{L}(\theta^{(t)},x^\prime,y^\prime)}{\partial w_{r,s}^{(t)}}\big|\mathcal{E}_0,\mathcal{E}_1\right]\mathbb{P}\left[\mathcal{E}_0\big|\mathcal{E}_1\right]+\mathbb{E}_\mathcal{D}\left[\cfrac{\partial\mathcal{L}(\theta^{(t)},x^\prime,y^\prime)}{\partial w_{r,s}^{(t)}}\big|\neg\mathcal{E}_0,\mathcal{E}_1\right]\mathbb{P}\left[\neg\mathcal{E}_0\big|\mathcal{E}_1\right]
    \end{align*}
    Which implies,
    \begin{align*}
        \left|\left|\cfrac{{\partial}\Hat{\mathcal{L}}(\theta^{(t)})}{\partial w_{r,s}^{(t)}}-\mathbb{E}_\mathcal{D}\left[\cfrac{\partial\mathcal{L}(\theta^{(t)},x^\prime,y^\prime)}{\partial w_{r,s}^{(t)}}\big|\mathcal{E}_0,\mathcal{E}_1\right]\right|\right|\le\cfrac{\Tilde{O}(1)}{poly(m,n,p,\delta,\frac{1}{\epsilon})}
    \end{align*}
    Again, condition on $\mathcal{E}_1$,
    \begin{align*}
        &\Delta L(\theta^{(t)},\theta^{(t+1)}):=\mathbb{E}_\mathcal{D}\left[\mathcal{L}(\theta^{(t)},x^\prime,y^\prime)-\mathcal{L}(\theta^{(t+1)},x^\prime,y^\prime)\big|\mathcal{E}_1\right]\\
        &=\mathbb{E}_\mathcal{D}\left[\mathcal{L}(\theta^{(t)},x^\prime,y^\prime)-\mathcal{L}(\theta^{(t+1)},x^\prime,y^\prime)\big|\mathcal{E}_0,\mathcal{E}_1\right]\mathbb{P}\left[\mathcal{E}_0\big|\mathcal{E}_1\right]\\
        &\hspace{1cm}+\mathbb{E}_\mathcal{D}\left[\mathcal{L}(\theta^{(t)},x^\prime,y^\prime)-\mathcal{L}(\theta^{(t+1)},x^\prime,y^\prime)\big|\neg\mathcal{E}_0,\mathcal{E}_1\right]\mathbb{P}\left[\neg\mathcal{E}_0\big|\mathcal{E}_1\right]\\
        &\ge\eta\underset{r \in \left[m/2\right],s\in[2]}{\sum} \left\langle\cfrac{\partial\mathcal{L}(\theta^{(t)})}{\partial w_{r,s}^{(t)}},\mathbb{E}_\mathcal{D}\left[\cfrac{\partial\mathcal{L}(\theta^{(t)},x^\prime,y^\prime)}{\partial w_{r,s}^{(t)}}\big|\mathcal{E}_0,\mathcal{E}_1\right]\right\rangle-\overset{\sim}{O}\left(\cfrac{\eta \tau lm}{\sigma}\right)-\overset{\sim}{O}\left(\eta^2m^2\right)\\
        &\hspace{1cm}-\cfrac{\Tilde{O}(m)}{poly(m,n,p,\delta,\frac{1}{\epsilon})}\\
        &\ge\eta\underset{r \in \left[m/2\right],s\in[2]}{\sum} \left\langle\cfrac{\partial\mathcal{L}(\theta^{(t)})}{\partial w_{r,s}^{(t)}},\cfrac{{\partial}\Hat{\mathcal{L}}(\theta^{(t)})}{\partial w_{r,s}^{(t)}}\right\rangle-\overset{\sim}{O}\left(\cfrac{\eta \tau lm}{\sigma}\right)-\overset{\sim}{O}\left(\eta^2m^2\right)-\cfrac{\Tilde{O}(m)}{poly(m,n,p,\delta,\frac{1}{\epsilon})}\\
        &\hspace{1cm}-\cfrac{\Tilde{O}(\eta m)}{poly(m,n,p,\delta,\frac{1}{\epsilon})}\\
        &\ge\eta\underset{r \in \left[m/2\right],s\in[2]}{\sum} \left\langle\cfrac{\partial\mathcal{L}(\theta^{(t)})}{\partial w_{r,s}^{(t)}},\cfrac{{\partial}\Hat{\mathcal{L}}(\theta^{(t)})}{\partial w_{r,s}^{(t)}}\right\rangle-\overset{\sim}{O}\left(\cfrac{\eta \tau lm}{\sigma}\right)-\overset{\sim}{O}\left(\eta^2m^2\right)
    \end{align*}
    Now, w.h.p.
    \begin{align*}
        \left|\left|\cfrac{\partial\mathcal{L}(\theta^{(t)})}{\partial w_{r,s}^{(t)}}-\cfrac{{\partial}\Hat{\mathcal{L}}(\theta^{(t)})}{\partial w_{r,s}^{(t)}}\right|\right|=\Tilde{O}\left(\cfrac{1}{\sqrt{B}}\right)
    \end{align*}
    Therefore,
    \begin{align*}
        \left\langle\cfrac{\partial\mathcal{L}(\theta^{(t)})}{\partial w_{r,s}^{(t)}},\cfrac{{\partial}\Hat{\mathcal{L}}(\theta^{(t)})}{\partial w_{r,s}^{(t)}}\right\rangle\ge\left|\left|\cfrac{\partial\mathcal{L}(\theta^{(t)})}{\partial w_{r,s}^{(t)}}\right|\right|^2-\Tilde{O}\left(\cfrac{1}{\sqrt{B}}\right)
    \end{align*}
    Therefore,
    \begin{align*}
        \Delta L(\theta^{(t)},\theta^{(t+1)})&\ge\eta\underset{r \in \left[m\right],s\in[2]}{\sum} \left|\left|\cfrac{\partial\mathcal{L}(\theta^{(t)})}{\partial w_{r,s}^{(t)}}\right|\right|^2-\Tilde{O}\left(\cfrac{\eta \tau lm}{\sigma}\right)-\Tilde{O}\left(\eta^2m^2\right)-\eta\Tilde{O}\left(\cfrac{m}{\sqrt{B}}\right)\\
        &\ge \cfrac{\eta m \gamma^3}{l^2} \Tilde{\Omega}\left(\underset{s\in[2]}{\sum}(v_s^{(t)})^2\right)-\Tilde{O}\left(\cfrac{\eta \tau lm}{\sigma}\right)-\Tilde{O}\left(\eta^2m^2\right)-\eta\Tilde{O}\left(\cfrac{m}{\sqrt{B}}\right)\\
        &\ge \cfrac{\eta m \gamma^3}{l^2} \overset{\sim}{\Omega}\left((v^{(t)})^2\right)-\Tilde{O}\left(\cfrac{\eta \tau lm}{\sigma}\right)-\Tilde{O}\left(\eta^2m^2\right)-\eta\Tilde{O}\left(\cfrac{m}{\sqrt{B}}\right)
    \end{align*}
    Now selecting, $\eta=\Tilde{O}\left(\cfrac{\gamma^3(v^{(t)})^2}{ml^2}\right)$, $B=\Tilde{\Omega}\left(\cfrac{l^4}{\gamma^6(v^{(t)})^4}\right)$, $\tau=\Tilde{O}\left(\cfrac{\sigma\gamma^3(v^{(t)})^2}{l^3}\right)$ and hence for\\ $t=\Tilde{O}\left(\cfrac{\sigma\gamma^3(v^{(t)})^2}{\eta l^3}\right)$, we get,
    \begin{align*}
        \Delta L(\theta^{(t)},\theta^{(t+1)})\ge \cfrac{\eta m \gamma^3}{l^2} \Tilde{\Omega}\left((v^{(t)})^2\right)
    \end{align*}
\end{proof}

\section{Lemmas Used to Prove the Theorem \ref{Thm_mcnn_j}}\label{aux_joint_pmoe_thm_proof}
In joint-training pMoE i.e., for any iteration $t$ of the Step-2 of Algorithm \ref{alg:2}, the gradient of the loss for single-sample with respect to the hidden nodes of the experts:
\begin{equation}\label{grad_joint_pmoe}
    \frac{\partial \mathcal{L}(\theta^{(t)},x,y)}{\partial w_{r,s}^{(t)}}=-ya_{r,s}v^{(t)}(\theta^{(t)},x,y)\left(\cfrac{1}{l}\underset{j\in J_s(w_s^{(t)},x)}{\sum}G_{j,s}(w_s^{(t)},x)x^{(j)}1_{\langle w_{r,s}^{(t)},x^{(j)}\rangle\ge0}\right)
\end{equation}
and the corresponding \textit{\textbf{pseudo-gradient}}:
\begin{equation}\label{pseudo_grad_joint_moe}
    \frac{\overset{\sim}{\partial} \mathcal{L}(\theta^{(t)},x,y)}{\partial w_{r,s}^{(t)}}=-ya_{r,s}v^{(t)}(\theta^{(t)},x,y)\left(\cfrac{1}{l}\underset{j\in J_s(w_s^{(t)},x)}{\sum}G_{j,s}(w_s^{(t)},x)x^{(j)}1_{\langle w_{r,s}^{(0)},x^{(j)}\rangle\ge0}\right)
\end{equation}
and the expected pseudo-gradient:
\begin{align*}
    &\frac{\overset{\sim}{\partial}\Hat{\mathcal{L}}(\theta^{(t)})}{\partial w_{r,s}^{(t)}}=\mathbb{E}_{\mathcal{D}}\left[\frac{\overset{\sim}{\partial} \mathcal{L}(\theta^{(t)},x,y)}{\partial w_{r,s}^{(t)}}\right]\\
    &=-\cfrac{a_{r,s}}{2}\left(\mathbb{E}_{\mathcal{D}|y=+1}\left[v^{(t)}(\theta^{(t)},x,y)\left(\cfrac{1}{l}\underset{j\in J_s(w_s^{(t)},x)}{\sum}G_{j,s}(w_s^{(t)},x)x^{(j)}1_{\langle w_{r,s}^{(0)},x^{(j)}\rangle\ge0}\right)\Big\vert y=+1\right]\right.\\
    &\left.\hspace{1cm}-\mathbb{E}_{\mathcal{D}|y=-1}\left[v^{(t)}(\theta^{(t)},x,y)\left(\cfrac{1}{l}\underset{j\in J_s(w_s^{(t)},x)}{\sum}G_{j,s}(w_s^{(t)},x)x^{(j)}1_{\langle w_{r,s}^{(0)},P_jx\rangle\ge0}\right)\Big\vert y=-1\right]\right)\\
    &=-\cfrac{a_{r,s}}{2}P_{r,s}^{(t)}
\end{align*}
with,
\begin{align*}
    P_{r,s}^{(t)}&=\mathbb{E}_{\mathcal{D}|y=+1}\left[v^{(t)}(\theta^{(t)},x,y)\left(\cfrac{1}{l}\underset{j\in J_s(w_s^{(t)},x)}{\sum}G_{j,s}(w_s^{(t)},x)x^{(j)}1_{\langle w_{r,s}^{(0)},x^{(j)}\rangle\ge0}\right)\Big\vert y=+1\right]\\
    &\hspace{1cm}-\mathbb{E}_{\mathcal{D}|y=-1}\left[v^{(t)}(\theta^{(t)},x,y)\left(\cfrac{1}{l}\underset{j\in J_s(w_s^{(t)},x)}{\sum}G_{j,s}(w_s^{(t)},x)x^{(j)}1_{\langle w_{r,s}^{(0)},x^{(j)}\rangle\ge0}\right)\Big\vert y=-1\right]
\end{align*}
\begin{lemma}\label{a_lemma_8}
    W.h.p. over the random initialization of the hidden nodes of the experts defined in (\ref{eqn:expert_ini}), for every $(x,y)\sim\mathcal{D}$ and for every $\tau>0$, for every $t=\Tilde{O}\left(\cfrac{\tau l}{\eta}\right)$ of the Step-2 of Algorithm \ref{alg:2}, we have that for at least $\left(1-\cfrac{2e\tau n}{\sigma}\right)$ fraction of $r\in[m/k]$ of the expert $s\in [k]$:\\\\
    \centerline{$\cfrac{\partial \mathcal{L}(\theta^{(t)},x,y)}{\partial w_{r,s}^{(t)}}=\cfrac{\overset{\sim}{\partial} \mathcal{L}(\theta^{(t)},x,y)}{\partial w_{r,s}^{(t)}}$ and $|\langle w_{r,s}^{(t)}, x^{(j)}\rangle|\ge\tau, \forall j\in [n]$}
\end{lemma}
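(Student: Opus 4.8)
Lemma \ref{a_lemma_8} is the joint-training analogue of Lemma \ref{a_lemma_4}, so I would follow the same two-part recipe — first bound how far each expert weight $w_{r,s}^{(t)}$ can travel in a bounded number of steps, then intersect with the set of hidden nodes whose pre-activations at initialization are bounded away from zero — adjusting for the fact that in Algorithm \ref{alg:2} the gating kernels $w_s^{(t)}$ also move.

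First I would bound a single-sample gradient. Starting from (\ref{grad_joint_pmoe}), w.h.p.\ over the Gaussian initialization we have $|a_{r,s}|=\Tilde{O}(1)$, and since $v^{(t)}(\theta^{(t)},x,y)\le 1$, $\|x^{(j)}\|=1$, and the softmax gating values over the routed patches satisfy $\sum_{j\in J_s(w_s^{(t)},x)}G_{j,s}(w_s^{(t)},x)\le 1$, it follows that
\begin{equation*}
\left\|\frac{\partial \mathcal{L}(\theta^{(t)},x,y)}{\partial w_{r,s}^{(t)}}\right\|\le \Tilde{O}(1)\cdot\frac{1}{l}\sum_{j\in J_s(w_s^{(t)},x)}G_{j,s}(w_s^{(t)},x)=\Tilde{O}\!\left(\frac{1}{l}\right),
\end{equation*}
hence the mini-batch gradient also has norm $\Tilde{O}(1/l)=:\Upsilon/l$ with $\Upsilon=\Tilde{O}(1)$, and telescoping the Step-2 update of Algorithm \ref{alg:2} gives $\|w_{r,s}^{(t)}-w_{r,s}^{(0)}\|\le \Upsilon\eta t/l$. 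This is precisely where joint training departs from the separate-training estimate in Lemma \ref{a_lemma_4}: there the per-sample gradient is only $\Tilde{O}(1)$, whereas here the softmax weights summing to one shrink it by a further factor $l$, which is what buys the longer horizon $t=\Tilde{O}(\tau l/\eta)$.

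Second, because the router index set $J_s(w_s^{(t)},x)$ cannot be pinned down in advance, I would ask for activation-pattern stability on \emph{all} $n$ patches: set $\mathcal{H}_s:=\{\, r\in[m/k]:|\langle w_{r,s}^{(0)},x^{(j)}\rangle|\ge 2\tau \text{ for all } j\in[n]\,\}$. For $r\in\mathcal{H}_s$ and $t\le \tau l/(\Upsilon\eta)=\Tilde{O}(\tau l/\eta)$, the displacement bound yields $|\langle w_{r,s}^{(t)}-w_{r,s}^{(0)},x^{(j)}\rangle|\le\tau$ for every $j\in[n]$, so $|\langle w_{r,s}^{(t)},x^{(j)}\rangle|\ge\tau$ and $1_{\langle w_{r,s}^{(t)},x^{(j)}\rangle\ge0}=1_{\langle w_{r,s}^{(0)},x^{(j)}\rangle\ge0}$ for every $j\in[n]$, in particular for every $j\in J_s(w_s^{(t)},x)$; since the only discrepancy between (\ref{grad_joint_pmoe}) and (\ref{pseudo_grad_joint_moe}) is this indicator, we get $\partial\mathcal{L}(\theta^{(t)},x,y)/\partial w_{r,s}^{(t)}=\overset{\sim}{\partial}\mathcal{L}(\theta^{(t)},x,y)/\partial w_{r,s}^{(t)}$. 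Finally, $w_{r,s}^{(0)}\sim\mathcal{N}(0,\sigma^2\mathbb{I}_{d\times d})$ and $\|x^{(j)}\|=1$ give $\langle w_{r,s}^{(0)},x^{(j)}\rangle\sim\mathcal{N}(0,\sigma^2)$, so a Gaussian anti-concentration bound plus a union bound over the $n$ patches show $\mathbb{P}[r\notin\mathcal{H}_s]\le 2e\tau n/\sigma$; by independence of $\{w_{r,s}^{(0)}\}_r$ across $r$, w.h.p.\ at least a $(1-2e\tau n/\sigma)$ fraction of $r\in[m/k]$ lies in $\mathcal{H}_s$, which closes the argument.

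The main obstacle, relative to Lemma \ref{a_lemma_4}, is controlling the joint dynamics: one must confirm that the moving gating kernels cannot amplify the expert gradient — handled by the normalization $\sum_j G_{j,s}\le 1$, which in fact helps by a factor $l$ — and that enlarging the union bound from the $l$ routed patches to all $n$ patches, the price of not knowing which patches the router will select at iteration $t$, is the only additional cost, accounting exactly for the appearance of $n$ in place of $l$ in the surviving fraction while the horizon gains a compensating factor $l$.
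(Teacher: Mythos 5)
Your proposal is correct and follows essentially the same route as the paper's proof: bound the per-sample gradient by $\Tilde{O}(1/l)$ using the normalization $\sum_{j\in J_s}G_{j,s}(w_s^{(t)},x)=1$, telescope to get $\|w_{r,s}^{(t)}-w_{r,s}^{(0)}\|=\Tilde{O}(\eta t/l)$, and define $\mathcal{H}_s$ by requiring $|\langle w_{r,s}^{(0)},x^{(j)}\rangle|\ge 2\tau$ over all $n$ patches (rather than only the $l$ routed ones) precisely because the router's selection at iteration $t$ is not known in advance. Your identification of the two changes relative to Lemma \ref{a_lemma_4} — the extra factor $l$ in the horizon from the gating normalization, and $n$ replacing $l$ in the surviving fraction from the enlarged union bound — matches the paper exactly.
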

\begin{proof}
    Using similar argument as in Lemma \ref{a_lemma_4} and as $\sum_{j\in J_s(w_s^{(t)},x)}G_{j,s}(w_s^{(t)},x)=1$ w.h.p. $\left|\left|\frac{\partial \mathcal{L}(\theta^{(t)},x,y)}{\partial w_{r,s}^{(t)}}\right|\right|=\Tilde{O}\left(\frac{1}{l}\right)$ so as the mini-batch gradient, $\left|\left|\frac{\partial \mathcal{L}(\theta^{(t)})}{\partial w_{r,s}^{(t)}}\right|\right|=\Tilde{O}\left(\frac{1}{l}\right)$.\\\\
    Therefore, $\left|\left|w_{r,s}^{(t)}-w_{r,s}^{(0)}\right|\right|=\Tilde{O}\left(\frac{\eta t}{l}\right)$.\\\\
    Now, for every $\tau > 0,$ considering the set $\mathcal{H}_s:=\left\{ r\in[m/k]: \forall j\in [n], |\langle w_{r,s}^{(0)},x^{(j)}\rangle|\ge2\tau\right\}$ and following the same procedure as in Lemma \ref{a_lemma_4} we can complete the proof.\\\\
\end{proof}
\begin{lemma}\label{a_lemma_9}
    For the expert $s\in[k]$ and any possible fixed set $\{v^{(t)}(\theta^{(t)},x,y)G_{j,s}(w_s^{(t)},x):(x,y)\sim\mathcal{D},j\in J_s(w_s^{(t)},x)\}$ (that does not depend on $w_{r,s}^{(0)}$) such that $v_s^{(t)}=v_{1,s}^{(t)}=\max\{v_{1,s}^{(t)},v_{2,s}^{(t)}\}$, we have:\\\\
    \centerline{$\mathbb{P}\left[||P_{r,s}^{(t)}||=\overset{\sim}{\Omega}\left(\cfrac{v_s^{(t)}}{lp\sqrt{\delta}}\right)\right]=\Omega\left(\cfrac{1}{p\sqrt{\delta}}\right)$}
\end{lemma}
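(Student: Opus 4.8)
The plan is to mirror the proof of Lemma~\ref{a_lemma_5}, treating the gating values $G_{j,s}(w_s^{(t)},x)$ as fixed nonnegative weights (they do not depend on $w_{r,s}^{(0)}$ by hypothesis and satisfy $\sum_{j\in J_s(w_s^{(t)},x)}G_{j,s}(w_s^{(t)},x)=1$), and to let the bookkeeping in Definition~\ref{joint_training_value_function} play the role that Lemma~\ref{a_corollary_1} played in the separate-training case. Without loss of generality I assume the class $y=+1$ dominates for expert $s$, so $v_s^{(t)}=v_{1,s}^{(t)}$. First I would form the scalar $h(w_{r,s}^{(0)}):=\langle P_{r,s}^{(t)},w_{r,s}^{(0)}\rangle$; expanding the inner product and using $\langle w,x^{(j)}\rangle 1_{\langle w,x^{(j)}\rangle\ge 0}=\textbf{ReLU}(\langle w,x^{(j)}\rangle)$, $h$ becomes the difference (over $y=+1$ minus over $y=-1$) of the $y$-conditional expectations of $\frac{1}{l}\sum_{j\in J_s(w_s^{(t)},x)}v^{(t)}G_{j,s}\,\textbf{ReLU}(\langle w_{r,s}^{(0)},x^{(j)}\rangle)$. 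Decomposing $w_{r,s}^{(0)}=\alpha o_1+\beta$ with $\beta\perp o_1$ then writes $h=\phi(\alpha)-l(\alpha)$, with $\phi$ and $l$ convex in $\alpha$ since each is a $G_{j,s}$-weighted sum of ReLUs of affine functions of $\alpha$.

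The key structural step is to isolate, inside $\phi(\alpha)$, the contribution of the patch $j=j_{o_1}$ on the event $\mathcal{E}_{1,s}^{(t)}$ that $o_1$ is routed to expert $s$. Because $\langle o_1,o_1\rangle=1$ and $\langle\beta,o_1\rangle=0$, this contribution equals exactly $\frac{p_{1,s}^{(t)}}{l}\,\mathbb{E}_{\mathcal{D}|y=+1,\mathcal{E}_{1,s}^{(t)}}[G_{j_{o_1},s}^{(t)}v^{(t)}\mid y=+1,\mathcal{E}_{1,s}^{(t)}]\,\textbf{ReLU}(\alpha)=\frac{v_{1,s}^{(t)}}{l}\textbf{ReLU}(\alpha)$, and it is the only source of a ``kink'' at $\alpha=0$: every remaining patch $q$ --- class-irrelevant patches of either class (where $|\langle o_1,q\rangle|\le\delta_r$), the class-irrelevant patches of $y=+1$ samples on which $o_1$ was dropped by the router, and the $o_2$ patch of $y=-1$ samples (where $\langle o_1,o_2\rangle=\delta_d$) --- obeys $1-\langle o_1,q\rangle^2\ge 1/\delta$. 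I would then reproduce the Lemma~\ref{a_lemma_5} machinery verbatim: introduce the representative set $\mathcal{H}=\{q_i^*:i\in[p]\}\cup\{o_2\}$ and the event $\mathcal{E}_\tau$ that $|\alpha|\le\tau$ and $|\langle\beta,q'\rangle|\ge 4\tau$ for all $q'\in\mathcal{H}$, use Gaussian anti-concentration with $1-\langle o_1,q'\rangle^2\ge 1/\delta$ to get $\mathbb{P}[\mathcal{E}_\tau]=\Omega(\tau/\sigma)$ for $\tau\le\sigma/(8ep\sqrt\delta)$, and use the pattern-set diameter bound to show that on $\mathcal{E}_\tau$ every non-$o_1$ ReLU term is affine in $\alpha\in[-\tau,\tau]$. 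Hence, on $\mathcal{E}_\tau$, $l(\alpha)$ has no kink ($\partial_{\text{max}}l(\tau)-\partial_{\text{min}}l(-\tau)=0$) while $\partial_{\text{max}}\phi(\tau/2)-\partial_{\text{min}}\phi(-\tau/2)=v_{1,s}^{(t)}/l$, so Lemma~\ref{convex_li_liang_18} gives $\mathbb{P}_{\alpha\sim U(-\tau,\tau)}[|h|\ge v_{1,s}^{(t)}\tau/(512l)]\ge 1/64$. Converting to the Gaussian law for $\alpha$ (density ratio in $[1/(e\tau),e/\tau]$) and picking $\tau=\Theta(\sigma/(p\sqrt\delta))$ yields $\mathbb{P}[h(w_{r,s}^{(0)})=\Omega(\sigma v_{1,s}^{(t)}/(lp\sqrt\delta))]=\Omega(1/(p\sqrt\delta))$; comparing with the generic centered-Gaussian tail for $\langle P_{r,s}^{(t)},w_{r,s}^{(0)}\rangle$ (variance $\sigma^2\|P_{r,s}^{(t)}\|^2$, since $\{v^{(t)}G_{j,s}\}$ is fixed), exactly as in equations~(\ref{eq:3})--(\ref{eq:4}) of the proof of Lemma~\ref{a_lemma_5}, forces $\|P_{r,s}^{(t)}\|=\Tilde{\Omega}(v_s^{(t)}/(lp\sqrt\delta))$ on that event.

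The main obstacle is the bookkeeping in the middle step. Unlike the separate-training case, where Lemma~\ref{a_corollary_1} guaranteed $o_1\in J_1(w_1,x)$ for every positive sample and $G_{j,s}\equiv 1$, here the router's top-$l$ set $J_s(w_s^{(t)},x)$ is itself a function of the training trajectory and may fail to contain $o_1$; one must therefore split the $y=+1$ expectation along $\mathcal{E}_{1,s}^{(t)}$, confirm that the dropped-$o_1$ positive samples contribute only affine (kink-free) terms under $\mathcal{E}_\tau$, and verify that the arbitrary weights $G_{j,s}\in[0,1]$ destroy neither the convexity of $\phi$ and $l$ nor the ``affine on $\mathcal{E}_\tau$'' property of the non-$o_1$ terms. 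None of this was present in the separate-training argument, and it is exactly what makes the kink coefficient come out to $v_{1,s}^{(t)}/l$ rather than $v_1^{(t)}/l$. (Assumption~\ref{router_asmptn} is not needed for this lemma; it enters later, in Theorem~\ref{a_theorem_2}, only to conclude $v_{1,s}^{(t)}\ge v_1^{(t)}/l$ for the distinguished expert.)
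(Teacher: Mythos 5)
Your proposal is correct and follows essentially the same route as the paper: the paper's proof of Lemma~\ref{a_lemma_9} likewise forms $h(w_{r,s}^{(0)})=\langle P_{r,s}^{(t)},w_{r,s}^{(0)}\rangle$, decomposes $w_{r,s}^{(0)}=\alpha o_1+\beta$, splits the $y=+1$ expectation along $\mathcal{E}_{1,s}^{(t)}$ so that the lone kink term is $\tfrac{v_{1,s}^{(t)}}{l}\textbf{ReLU}(\alpha)$ (with the gating weight absorbed into $v_{1,s}^{(t)}$ exactly as in Definition~\ref{joint_training_value_function}), and then invokes the convexity/anti-concentration machinery of Lemma~\ref{a_lemma_5} verbatim. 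Your write-up in fact spells out the bookkeeping (the $\neg\mathcal{E}_{1,s}^{(t)}$ branch being kink-free, the $G_{j,s}$ weights preserving convexity) more explicitly than the paper, which simply defers to the earlier argument.
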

\begin{proof}
    We know that,
    \begin{align*}
        P_{r,s}^{(t)}&=\mathbb{E}_{\mathcal{D}|y=+1}\left[v^{(t)}(\theta^{(t)},x,y)\left(\cfrac{1}{l}\underset{j\in J_s(w_s^{(t)},x)}{\sum}G_{j,s}(w_s^{(t)},x)x^{(j)}1_{\langle w_{r,s}^{(0)},x^{(j)}\rangle\ge0}\right)\Big\vert y=+1\right]\\
        &\hspace{1cm}-\mathbb{E}_{\mathcal{D}|y=-1}\left[v^{(t)}(\theta^{(t)},x,y)\left(\cfrac{1}{l}\underset{j\in J_s(w_s^{(t)},x)}{\sum}G_{j,s}(w_s^{(t)},x)x^{(j)}1_{\langle w_{r,s}^{(0)},x^{(j)}\rangle\ge0}\right)\Big\vert y=-1\right]
    \end{align*}
    Therefore,
    \begin{align*}
        &h(w_{r,s}^{(0)}):=\langle P_{r,s}, w_{r,s}^{(0)}\rangle\\
        &=\mathbb{E}_{\mathcal{D}|y=+1}\left[v^{(t)}(\theta^{(t)},x,y)\left(\cfrac{1}{l}\underset{j\in J_s(w_s^{(t)},x)}{\sum}G_{j,s}(w_s^{(t)},x)\textbf{ReLU}\left(\langle w_{r,s}^{(0)},x^{(j)}\rangle\right)\right)\Big\vert y=+1\right]\\
        &\hspace{1cm}-\mathbb{E}_{\mathcal{D}|y=-1}\left[v^{(t)}(\theta^{(t)},x,y)\left(\cfrac{1}{l}\underset{j\in J_s(w_s^{(t)},x)}{\sum}G_{j,s}(w_s^{(t)},x)\textbf{ReLU}\left(\langle w_{r,s}^{(0)},x^{(j)}\rangle\right)\right)\Big\vert y=-1\right]
    \end{align*}
    Now, decomposing $w_{r,s}^{(0)}=\alpha o_1 +\beta$ with $\beta\perp o_1$ we get,
    \begin{align*}
        &h(w_{r,s}^{(0)})=\cfrac{v_{1,s}^{(t)}}{l}\textbf{ReLU}\left(\alpha\right)\\
        &+\mathbb{E}_{\mathcal{D}|y=+1,\mathcal{E}_{1,s}^{(t)}}\left[p_{1,s}^{(t)}v^{(t)}(\theta^{(t)},x,y)\left(\cfrac{1}{l}\underset{j\in J_s(x)/j_{o_1}}{\sum}G_{j,s}\textbf{ReLU}\left(\alpha\langle o_1,x^{(j)}\rangle+\langle \beta,x^{(j)}\rangle\right)\right)\right]\\
        &+\mathbb{E}_{\mathcal{D}|y=+1,\neg\mathcal{E}_{1,s}^{(t)}}\left[(1-p_{1,s}^{(t)})v^{(t)}(\theta^{(t)},x,y)\left(\cfrac{1}{l}\underset{j\in J_s(x)}{\sum}G_{j,s}\textbf{ReLU}\left(\alpha\langle o_1,x^{(j)}\rangle+\langle \beta,x^{(j)}\rangle\right)\right)\right]\\
        &-\mathbb{E}_{\mathcal{D}|y=-1}\left[v^{(t)}(\theta^{(t)},x,y)\left(\cfrac{1}{l}\underset{j\in J_s(x)}{\sum}G_{j,s}\textbf{ReLU}\left(\alpha\langle o_1,x^{(j)}\rangle+\langle \beta,x^{(j)}\rangle\right)\right)\right]\\
        &=\phi(\alpha)-l(\alpha)
    \end{align*}
    where,
    \begin{align*}
        &\phi(\alpha):=\cfrac{v_{1,s}^{(t)}}{l}\textbf{ReLU}\left(\alpha\right)\\
        &+\mathbb{E}_{\mathcal{D}|y=+1,\mathcal{E}_{1,s}^{(t)}}\left[p_{1,s}^{(t)}v^{(t)}(\theta^{(t)},x,y)\left(\cfrac{1}{l}\underset{j\in J_s(x)/j_{o_1}}{\sum}G_{j,s}\textbf{ReLU}\left(\alpha\langle o_1,x^{(j)}\rangle+\langle \beta,x^{(j)}\rangle\right)\right)\right]\\
        &+\mathbb{E}_{\mathcal{D}|y=+1,\neg\mathcal{E}_{1,s}^{(t)}}\left[(1-p_{1,s}^{(t)})v^{(t)}(\theta^{(t)},x,y)\left(\cfrac{1}{l}\underset{j\in J_s(x)}{\sum}G_{j,s}\textbf{ReLU}\left(\alpha\langle o_1,x^{(j)}\rangle+\langle \beta,x^{(j)}\rangle\right)\right)\right]
    \end{align*}
    and
    \begin{align*}
        &l(\alpha):=\mathbb{E}_{\mathcal{D}|y=-1}\left[v^{(t)}(\theta^{(t)},x,y)\left(\cfrac{1}{l}\underset{j\in J_s(x)}{\sum}G_{j,s}\textbf{ReLU}\left(\alpha\langle o_1,x^{(j)}\rangle+\langle \beta,x^{(j)}\rangle\right)\right)\right]
    \end{align*}
    Now as $\phi(\alpha)$ and $l(\alpha)$ both are convex functions, using the same procedure as in Lemma \ref{a_lemma_4} we can complete the proof.
\end{proof}
\begin{lemma}\label{a_lemma_10}
    Let $v_s^{(t)}=\max\{v_{1,s}^{(t)},v_{2,s}^{(t)}\}$. Then, for every $v_s^{(t)}>0$, for $m=\Tilde{\Omega}\left(\cfrac{klp^3\delta^{3/2}}{(v_s^{(t)})^2}\right)$, for every possible set $\{v^{(t)}(\theta^{(t)},x,y)G_{j,s}(w_s^{(t)},x):(x,y)\sim\mathcal{D},j\in J_s(w_s^{(t)},x)\}$ (that depends on $w_{r,s}^{(0)}$), there exist at least $\Omega\left(\frac{1}{p\sqrt{\delta}}\right)$ fraction of $r\in [m/k]$ of the expert $s\in[k]$ such that,\\\\
    \centerline{$\left|\left|\cfrac{\overset{\sim}{\partial}\Hat{\mathcal{L}}(\theta^{(t)})}{\partial w_{r,s}^{(t)}}\right|\right|=\Tilde{\Omega}\left(\cfrac{v_s^{(t)}}{lp\sqrt{\delta}}\right)$}
\end{lemma}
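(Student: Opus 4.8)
The plan is to follow the proof of Lemma~\ref{a_lemma_6} almost verbatim, with Lemma~\ref{a_lemma_9} replacing Lemma~\ref{a_lemma_5} and the gating weights $G_{j,s}(w_s^{(t)},x)$ carried through every step. First I would fix an expert $s\in[k]$ and assume WLOG that $v_s^{(t)}=v_{1,s}^{(t)}$. Then I would draw a balanced sample set $\textbf{S}=\textbf{S}_1\cup\textbf{S}_2$, where $\textbf{S}_1$ consists of $S/2$ draws from $\mathcal{D}$ conditioned on $y=+1$ and $\mathcal{E}_{1,s}^{(t)}$, and $\textbf{S}_2$ of $S/2$ draws from $\mathcal{D}$ conditioned on $y=-1$, with $S=\tilde{\Omega}(p^2\delta/(v_s^{(t)})^2)$. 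Since $p_{1,s}^{(t)}G_{j_{o_1},s}^{(t)}(x)v^{(t)}(\theta^{(t)},x,y)\in[0,1]$, Hoeffding's inequality gives w.h.p. that the corresponding empirical class-conditional averages lie in $[\tfrac12 v_s^{(t)},\tfrac32 v_s^{(t)}]$, so the averaged pseudo-gradient over $\textbf{S}$ equals $-\tfrac{a_{r,s}}{2}P_{r,s}^{(t)}(\textbf{S})$ for a suitable empirical version $P_{r,s}^{(t)}(\textbf{S})$, and using $a_{r,s}\sim\mathcal{N}(0,1)$ it suffices to lower bound $\|P_{r,s}^{(t)}(\textbf{S})\|$ for an $\Omega(1/(p\sqrt{\delta}))$ fraction of $r\in[m/k]$.

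I would then apply Lemma~\ref{a_lemma_9} to any \emph{fixed} (initialization-independent) choice of $\{v^{(t)}(\theta^{(t)},x_i,y_i)G_{j,s}(w_s^{(t)},x_i)\}$: each $r$ satisfies $\|P_{r,s}^{(t)}(\textbf{S})\|=\tilde{\Omega}(v_s^{(t)}/(lp\sqrt{\delta}))$ with probability $\Omega(1/(p\sqrt{\delta}))$, and since the $w_{r,s}^{(0)}$ are independent over $r\in[m/k]$, a Chernoff bound makes the fraction of such $r$ drop below $\Omega(1/(p\sqrt{\delta}))$ only with probability $\exp(-\Omega(m/(kp\sqrt{\delta})))$. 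To remove the dependence of the \emph{true} set on $\{w_{r,s}^{(0)}\}$ (it enters through $w_s^{(t)}$), I would take an $\bar{\varepsilon}$-net with $\bar{\varepsilon}=\tilde{\Theta}(v_s^{(t)}/(lp\sqrt{\delta}))$ over the at most $lS$ reals $\{v^{(t)}(\theta^{(t)},x_i,y_i)G_{j,s}(w_s^{(t)},x_i):(x_i,y_i)\in\textbf{S},\ j\in J_s(w_s^{(t)},x_i)\}$; since $G_{j,s}\in[0,1]$ and $\|\tfrac1l\sum_{j\in J_s(w_s^{(t)},x)}x^{(j)}\|\le 1$, perturbing these reals by $\bar{\varepsilon}$ moves every $P_{r,s}^{(t)}(\textbf{S})$ by only $\tilde{O}(\bar{\varepsilon})$ (w.h.p. $|a_{r,s}|=\tilde{O}(1)$). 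A union bound over the net, of size $(v_s^{(t)}/\bar{\varepsilon})^{lS}=\exp(\tilde{O}(lS))$, then succeeds provided $m=\tilde{\Omega}(klSp\sqrt{\delta})$, which together with $S=\tilde{\Omega}(p^2\delta/(v_s^{(t)})^2)$ yields the stated bound $m=\tilde{\Omega}(klp^3\delta^{3/2}/(v_s^{(t)})^2)$. The exponent is $kl$ rather than the $l^2$ of Lemma~\ref{a_lemma_6} because joint training's softmax normalization makes each per-sample pseudo-gradient have norm $\tilde{O}(1/l)$ rather than $\tilde{O}(1)$ (shrinking the required $S$ by $l^2$), while the net now ranges over $l$ numbers per sample instead of one, and the extra $k$ comes from having only $m/k$ neurons per expert.

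Finally, for each good $r$ I would pass from the empirical average to the expected pseudo-gradient $\tfrac{\overset{\sim}{\partial}\widehat{\mathcal{L}}(\theta^{(t)})}{\partial w_{r,s}^{(t)}}$: since each per-sample pseudo-gradient has norm $\tilde{O}(1/l)$ w.h.p., Hoeffding over $\textbf{S}$ gives $\big\|\tfrac1S\sum_{(x_i,y_i)\in\textbf{S}}\tfrac{\overset{\sim}{\partial}\mathcal{L}(\theta^{(t)},x_i,y_i)}{\partial w_{r,s}^{(t)}}-\tfrac{\overset{\sim}{\partial}\widehat{\mathcal{L}}(\theta^{(t)})}{\partial w_{r,s}^{(t)}}\big\|=\tilde{O}(1/(l\sqrt{S}))$, which is $o(v_s^{(t)}/(lp\sqrt{\delta}))$ once $S=\tilde{\Omega}(p^2\delta/(v_s^{(t)})^2)$, so a triangle inequality gives $\|\tfrac{\overset{\sim}{\partial}\widehat{\mathcal{L}}(\theta^{(t)})}{\partial w_{r,s}^{(t)}}\|=\tilde{\Omega}(v_s^{(t)}/(lp\sqrt{\delta}))$ for an $\Omega(1/(p\sqrt{\delta}))$ fraction of $r\in[m/k]$, which is the claim. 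I expect the main obstacle to be the bookkeeping in the $\bar{\varepsilon}$-net step: one must verify that, although $w_s^{(t)}$ evolved under SGD from the shared initialization, the objects being netted genuinely form a set of at most $lS$ reals, and that the conditioning on $\mathcal{E}_{1,s}^{(t)}$ and the constant factor $p_{1,s}^{(t)}$ built into $v_{1,s}^{(t)}$ are tracked consistently through both the sampling step and Lemma~\ref{a_lemma_9}; the final overparameterization exponent depends on getting this exactly right.
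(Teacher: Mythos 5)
Your proposal is correct and follows essentially the same route as the paper's proof: sample a set $\textbf{S}$ with the class-conditional averages of $p_{1,s}^{(t)}G_{j_{o_1},s}^{(t)}v^{(t)}$ concentrating around $v_{1,s}^{(t)}$, invoke Lemma~\ref{a_lemma_9} for fixed sets, take an $\bar{\varepsilon}$-net over the $Sl$ products $v^{(t)}G_{j,s}$, and close with a Hoeffding step exploiting the $\Tilde{O}(1/l)$ per-sample pseudo-gradient norm, yielding $m=\Tilde{\Omega}(klSp\sqrt{\delta})$ with $S=\Tilde{\Omega}(p^2\delta/(v_s^{(t)})^2)$. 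Your accounting of where the $kl$ (versus $l^2$) exponent comes from matches the paper's derivation exactly; the only cosmetic difference is that you condition $\textbf{S}_1$ directly on the event $\mathcal{E}_{1,s}^{(t)}$ rather than drawing unconditionally and counting the $\tfrac12 p_{1,s}^{(t)}S$ samples that satisfy it.
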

\begin{proof}
    Let us pick $S$ samples to form $\textbf{S}=\{(x_i,y_i)\}_{i=1}^S$ with $S/2$ many samples from $y=+1$ such that $\frac{1}{2}p_{1,s}^{(t)}S$ many samples of them satisfy the event $\mathcal{E}_{1,s}^{(t)}$ and $S/2$ many samples from $y=-1$ such that $\frac{1}{2}p_{2,s}^{(t)}S$ many samples of them satisfy the event $\mathcal{E}_{2,s}^{(t)}$. We denote the subset of $\textbf{S}$ satisfying the event $\mathcal{E}_{1,s}^{(t)}$ by $\textbf{S}_{1}$ and the subset of $\textbf{S}$ satisfying the event $\mathcal{E}_{2,s}^{(t)}$ by $\textbf{S}_{2}$. Therefore, $\left|\textbf{S}_{1}\right|=\frac{1}{2}p_{1,s}^{(t)}S$ and $\left|\textbf{S}_{2}\right|=\frac{1}{2}p_{2,s}^{(t)}S$. Now, w.h.p. :
    \begin{align*}
        &\left|v_{1,s}^{(t)}-\cfrac{2}{p_{1,s}^{(t)}S}\underset{(x_i,y_i)\in\textbf{S}_{1}}{\sum}p_{1,s}^{(t)}G_{j_{o_1},s}^{(t)}(x_i)v^{(t)}(\theta^{(t)},x_i,y_i)\right|=\Tilde{O}\left(\cfrac{1}{\sqrt{p_{1,s}^{(t)}S}}\right)\text{ and }\\
        &\left|v_{2,s}^{(t)}-\cfrac{2}{p_{2,s}^{(t)}S}\underset{(x_i,y_i)\in\textbf{S}_{2}}{\sum}p_{2,s}^{(t)}G_{j_{o_2},s}^{(t)}(x_i)v^{(t)}(\theta^{(t)},x_i,y_i)\right|=\Tilde{O}\left(\cfrac{1}{\sqrt{p_{2,s}^{(t)}S}}\right)
    \end{align*}
    This implies that, as long as $S=\Tilde{\Omega}\left(\cfrac{1}{(v_s^{(t)})^2}\right)$, we will have that,
    \begin{align*}
        &\max\left\{\cfrac{2}{p_{1,s}^{(t)}S}\underset{(x_i,y_i)\in\textbf{S}_{1}}{\sum}p_{1,s}^{(t)}G_{j_{o_1},s}^{(t)}(x_i)v^{(t)}(\theta^{(t)},x_i,y_i),\right.\\
        &\left.\hspace{2cm}\cfrac{2}{p_{2,s}^{(t)}S}\underset{(x_i,y_i)\in\textbf{S}_{2}}{\sum}p_{2,s}^{(t)}G_{j_{o_2},s}^{(t)}(x_i)v^{(t)}(\theta^{(t)},x_i,y_i)\right\}\in\left[\cfrac{1}{2}v_s^{(t)},\cfrac{3}{2}v_s^{(t)}\right]
    \end{align*}
    Now using the same procedure as in Lemma \ref{a_lemma_6} and using Lemma \ref{a_lemma_9} we can show that, for a fixed set $\{v^{(t)}(\theta^{(t)},x_i,y_i)G_{j,s}(w_s^{(t)},x_i):(x_i,y_i)\in\textbf{S},j\in J_s(w_s^{(t)},x_i)\}$ as long as $S=\Tilde{\Omega}\left(\cfrac{1}{(v_s^{(t)})^2}\right)$, the probability that there are less than $O\left(\frac{1}{p\sqrt{\delta}}\right)$ fraction of $r\in[m/k]$ such that $\left|\left|\cfrac{1}{S}\underset{(x_i,y_i)\in \textbf{S}}{\sum}\cfrac{\overset{\sim}{\partial}\mathcal{L}(\theta^{(t)},x_i,y_i)}{\partial w_{r,s}^{(t)}}\right|\right|$ is $\Tilde{\Omega}\left(\cfrac{v_s^{(t)}}{lp\sqrt{\delta}}\right)$ is no more than $p_{\text{fix}}$ where, $p_{\text{fix}}\le \exp{\left(-\Omega\left(\frac{m}{kp\sqrt{\delta}}\right)\right)}$.\\\\
    Now, for every $\bar{\varepsilon}>0$, for two different $\{v^{(t)}(\theta^{(t)},x_i,y_i)G_{j,s}(w_s^{(t)},x_i):(x_i,y_i)\in\textbf{S},j\in J_s(w_s^{(t)},x_i)\}$, $\{v^{\prime(t)}(\theta^{(t)},x_i,y_i)G_{j,s}^\prime(w_s^{(t)},x_i):(x_i,y_i)\in\textbf{S},j\in J_s(w_s^{(t)},x_i)\}$ such that $\forall (x_i,y_i)\in \textbf{S}, j\in J_s(w_s^{(t)},x_i)$, $|v^{(t)}(\theta^{(t)},x_i,y_i)G_{j,s}(w_s^{(t)},x_i)-v^{\prime(t)}(\theta^{(t)},x_i,y_i)G_{j,s}^\prime(w_s^{(t)},x_i)|\le\bar{\varepsilon}$, w.h.p.,
    \begin{align*}
        &\left|\left|\cfrac{1}{S}\underset{(x_i,y_i)\in \textbf{S}}{\sum}\cfrac{-ya_{r,s}}{l}\underset{j\in J_s(w_s^{(t)},x_i)}{\sum}\left(v^{(t)}(\theta^{(t)},x_i,y_i)G_{j,s}(w_s^{(t)},x_i)\right.\right.\right.\\
        &\left.\left.\left.\hspace{5cm}-v^{\prime(t)}(\theta^{(t)},x_i,y_i)G_{j,s}^\prime(w_s^{(t)},x_i)\right)x_i^{(j)}1_{\langle w_{r,s}^{(0)},x_i^{(j)}\rangle\ge0}\right|\right|=\Tilde{O}(\bar{\varepsilon})
    \end{align*}
    Therefore taking $\Bar{\varepsilon}$-net with $\bar{\varepsilon}=\Tilde{\Theta}\left(\cfrac{v_s^{(t)}}{lp\sqrt{\delta}}\right)$ we can show that the probability that there exists $\{v^{(t)}(\theta^{(t)},x_i,y_i)G_{j,s}(w_s^{(t)},x_i):(x_i,y_i)\in\textbf{S},j\in J_s(w_s^{(t)},x_i)\}$ such that there are no more than $O\left(\frac{1}{p\sqrt{\delta}}\right)$ fraction of $r\in [m/k]$ with $\left|\left|\cfrac{1}{S}\underset{(x_i,y_i)\in \textbf{S}}{\sum}\cfrac{\overset{\sim}{\partial}\mathcal{L}(\theta^{(t)},x_i,y_i)}{\partial w_{r,s}^{(t)}}\right|\right|=\Tilde{\Omega}\left(\cfrac{v_s^{(t)}}{lp\sqrt{\delta}}\right)$ is no more than,
    $p\le p_{\text{fix}}\left(\cfrac{v_s^{(t)}}{\bar{\varepsilon}}\right)^{Sl}\le \exp{\left(-\Omega\left(\frac{m}{kp\sqrt{\delta}}\right)+Sl\log{\left(\cfrac{v_s^{(t)}}{\bar{\varepsilon}}\right)}\right)}$.\\\\
    Hence, for $m=\overset{\sim}{\Omega}\left(kSlp\sqrt{\delta}\right)$ with $S=\Tilde{\Omega}\left(\cfrac{1}{(v_s^{(t)})^2}\right)$, w.h.p. for every possible choice of $\{v^{(t)}(\theta^{(t)},x_i,y_i)G_{j,s}(w_s^{(t)},x_i):(x_i,y_i)\in\textbf{S},j\in J_s(w_s^{(t)},x_i)\}$, there are at least $\Omega\left(\frac{1}{p\sqrt{\delta}}\right)$ fraction of $r\in[m/k]$ such that,
    \begin{align*}
        \left|\left|\cfrac{1}{S}\underset{(x_i,y_i)\in \textbf{S}}{\sum}\cfrac{\overset{\sim}{\partial}\mathcal{L}(\theta^{(t)},x_i,y_i)}{\partial w_{r,s}^{(t)}}\right|\right|=\Tilde{\Omega}\left(\cfrac{v_s^{(t)}}{lp\sqrt{\delta}}\right)
    \end{align*}
    Now as $\left|\left|\cfrac{\overset{\sim}{\partial}\mathcal{L}(\theta^{(t)},x_i,y_i)}{\partial w_{r,s}^{(t)}}\right|\right|=\overset{\sim}{O}(1/l)$, using the same procedure as in Lemma \ref{a_lemma_6} we can complete the proof which gives us $m=\Tilde{\Omega}\left(\cfrac{klp^3\delta^{3/2}}{(v_s^{(t)})^2}\right)$.
\end{proof}
\begin{lemma}\label{a_lemma_11}
    Let us define $v^{(t)}:=\sqrt{\underset{s\in[k]}{\sum}(v_s^{(t)})^2}$ where $v_s^{(t)}=\max\{v_{1,s}^{(t)},v_{2,s}^{(t)}\}$ for all $s\in[k]$; $\gamma:=\Omega\left(\frac{1}{p\sqrt{\delta}}\right)$. 
    Then, by selecting learning rate $\eta=\Tilde{O}\left(\cfrac{\gamma^3(v^{(t)})^2l^3}{mk^2}\right)$ and batch size $B=\Tilde{\Omega}\left(\cfrac{k^2}{\gamma^6(v^{(t)})^4}\right)$, at each iteration $t$ of the Step-2 of Algorithm \ref{alg:2} such that $t=\Tilde{O}\left(\cfrac{\sigma\gamma^3(v^{(t)})^2l^2}{\eta nk}\right)$, w.h.p. we can ensure that,
    \begin{align*}
        \Delta L(\theta^{(t)},\theta^{(t+1)})\ge \cfrac{\eta m \gamma^3}{l^2} \Tilde{\Omega}\left((v^{(t)})^2\right)
    \end{align*}
\end{lemma}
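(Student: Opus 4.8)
The plan is to mirror the proof of Lemma~\ref{a_lemma_7} step by step, replacing each two-expert, separate-training ingredient with its $k$-expert, joint-training analogue: the per-expert width becomes $m/k$, the activation-pattern coupling is Lemma~\ref{a_lemma_8} rather than Lemma~\ref{a_lemma_4}, and the ``large error $\Rightarrow$ large expected pseudo-gradient'' statement is Lemma~\ref{a_lemma_10} rather than Lemma~\ref{a_lemma_6}. Write $\gamma=\Omega(1/(p\sqrt\delta))$ for the good-neuron fraction produced by Lemma~\ref{a_lemma_10}.

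The first step is to turn the expected pseudo-gradient bound of Lemma~\ref{a_lemma_10} into a batch bound: for each expert $s$ a $\gamma$ fraction of its $m/k$ neurons has $\|\overset{\sim}{\partial}\hat{\mathcal{L}}(\theta^{(t)})/\partial w_{r,s}^{(t)}\|=\tilde\Omega(v_s^{(t)}/(lp\sqrt\delta))$, and since a single-sample pseudo-gradient with respect to an expert weight has norm $\tilde O(1/l)$ (because $\sum_{j\in J_s(w_s^{(t)},x)}G_{j,s}(w_s^{(t)},x)=1$), a minibatch of size $B=\tilde\Omega(k^2/(\gamma^6(v^{(t)})^4))$ keeps the empirical batch pseudo-gradient of those neurons at order $\tilde\Omega(v_s^{(t)}/(lp\sqrt\delta))$ by Hoeffding. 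The second step is to pass from pseudo-gradients to true gradients with Lemma~\ref{a_lemma_8}: for $t=\tilde O(\tau l/\eta)$ iterations and $\tau$ small enough that $2e\tau n/\sigma\ll\gamma$, at least a $1-2e\tau n/\sigma$ fraction of each expert's neurons keep their activation patterns frozen on every one of the $n$ patches of every batch sample, so on that set the true gradient equals the pseudo-gradient; intersecting leaves an $\Omega(\gamma)$ fraction of neurons per expert with true batch gradient of norm $\tilde\Omega(v_s^{(t)}/(lp\sqrt\delta))$, hence $\|\partial\mathcal{L}(\theta^{(t)})\|^2\ge\frac{m\gamma}{k}\,\tilde\Omega\!\big((v^{(t)})^2/(l^2p^2\delta)\big)$, which is $\frac{m\gamma^3}{l^2}\tilde\Omega((v^{(t)})^2)$ up to $k$-factors that are carried into the choices of $\eta$ and $B$. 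The third step is the second-order descent estimate, exactly as in Lemma~\ref{a_lemma_7}: on the coupled neurons the single-sample loss is Lipschitz-smooth in the expert weights and the softmax gate is Lipschitz-smooth in $w_s$, so after the usual Lipschitz/concentration bookkeeping for the non-coupled neurons and the router coordinates, $\Delta L(\theta^{(t)},\theta^{(t+1)})\ge\eta\,\|\partial\mathcal{L}(\theta^{(t)})\|^2-\tilde O(\eta\tau nm/\sigma)-\tilde O(\eta^2m^2)-\tilde O(\eta m/(l\sqrt B))$. Choosing $\eta=\tilde O(\gamma^3(v^{(t)})^2l^3/(mk^2))$, $B=\tilde\Omega(k^2/(\gamma^6(v^{(t)})^4))$ and $\tau=\tilde O(\sigma\gamma^3(v^{(t)})^2l^2/(nk))$ makes every subtracted term dominated by $\frac{\eta m\gamma^3}{l^2}(v^{(t)})^2$ and at the same time pins the coupling window to $t=\tilde O(\tau l/\eta)=\tilde O(\sigma\gamma^3(v^{(t)})^2l^2/(\eta nk))$, which is precisely the stated range.

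I expect the main obstacle to be the simultaneous motion of the routers and the softmax gates. Unlike in Lemma~\ref{a_lemma_7}, the top-$l$ set $J_s(w_s^{(t)},x)$ itself drifts as $w_s^{(t)}$ moves, so the coupling of Lemma~\ref{a_lemma_8} must freeze activation patterns on all $n$ patches rather than only the $l$ routed ones --- this is what shrinks the window and injects the extra $n/l$-type factors, and it is where Assumption~\ref{router_asmptn} is essential: it guarantees that for all $t\ge T'$ the class-discriminative patch stays routed to a fixed expert with the top gating value, so the class-conditional errors $v_{1,s}^{(t)},v_{2,s}^{(t)}$ controlled by Lemma~\ref{a_lemma_10} remain the relevant ones along the whole window. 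In addition, the router gradients $\partial\mathcal{L}/\partial w_s^{(t)}$ are not small, so care is needed to confirm they affect the loss-decrease bound only through the non-negative inner-product term and through the benign, softmax-Lipschitz second-order terms, never subtracting from the $\frac{\eta m\gamma^3}{l^2}(v^{(t)})^2$ progress; getting that accounting right --- rather than the routine Hoeffding and smoothness estimates --- is the real content of the argument.
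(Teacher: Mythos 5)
Your proposal matches the paper's own proof, which is exactly this: invoke Lemma~\ref{a_lemma_10} in place of Lemma~\ref{a_lemma_6}, use the $\Tilde{O}(1/l)$ single-sample gradient bound (from $\sum_{j}G_{j,s}=1$) together with Lemma~\ref{a_lemma_8} (choosing $\tau=\sigma\gamma/(4enB)$ so the coupling holds on all $n$ patches for every batch sample), and then repeat the descent/concentration bookkeeping of Lemma~\ref{a_lemma_7} with the $\Tilde{O}(1/l)$-smoothness constants. Your parameter choices and the resulting window $t=\Tilde{O}(\tau l/\eta)$ agree with the stated bounds, so this is essentially the same argument.
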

\begin{proof}
    As w.h.p. $\left|\left|\cfrac{\Tilde{\partial}\mathcal{L}(\theta^{(t)},x,y)}{\partial w_{r,s}^{(t)}}\right|\right|=\overset{\sim}{O}(1/l)$, for a randomly sampled batch $\mathcal{B}_t$ of size $B$, by selecting $\tau=\cfrac{\sigma\gamma}{4enB}$ in Lemma \ref{a_lemma_8} and using the same procedure as in Lemma \ref{a_lemma_7}, we can show that for at least $\gamma/2$ fraction of $r\in[m/k]$ of expert $s\in[k]$:
    \begin{align*}
        \left|\left|\cfrac{\partial\mathcal{L}(\theta^{(t)})}{\partial w_{r,s}^{(t)}}\right|\right|=\Tilde{\Omega}\left(\cfrac{v_s^{(t)}}{lp\sqrt{\delta}}\right)
    \end{align*}
    Now, for any $(x^\prime,y^\prime)\sim\mathcal{D}$, from Lemma \ref{a_lemma_8} we know that for at least $1-\cfrac{2e\tau n}{\sigma}$ fraction of $r\in[m/k]$ of any expert $s\in[k]$, the loss function is $\Tilde{O}(1/l)$-Lipschitz smooth and also $\Tilde{O}(1/l)$-smooth.\\\\
    Therefore, using same procedure as in Lemma \ref{a_lemma_7} we can complete the proof.
\end{proof}

\section{Lemmas Used to Prove the Theorem \ref{Thm_single_cnn}}\label{aux_cnn_thm_proof}
For the single CNN model, as all the patches of an input $(x,y)\sim\mathcal{D}$ are sent to the model (i.e., there is no router), the gradient of the single sample loss function w.r.t. hidden node $r\in[m]$,
\begin{align}\label{grad_cnn}
    \frac{\partial \mathcal{L}(\theta^{(t)},x,y)}{\partial w_{r}^{(t)}}=-ya_{r}v^{(t)}(\theta^{(t)},x,y)\left(\cfrac{1}{n}\underset{j\in [n]}{\sum}x^{(j)}1_{\langle w_{r}^{(t)},x^{(j)}\rangle\ge0}\right)
\end{align}
the corresponding \textbf{pseudo-gradient},
\begin{align*}
    \frac{\partial \mathcal{L}(\theta^{(t)},x,y)}{\partial w_{r}^{(t)}}=-ya_{r}v^{(t)}(\theta^{(t)},x,y)\left(\cfrac{1}{n}\underset{j\in [n]}{\sum}x^{(j)}1_{\langle w_{r}^{(0)},x^{(j)}\rangle\ge0}\right)
\end{align*}
and the expected pseudo-gradient,
\begin{align*}
    \frac{\overset{\sim}{\partial}\Hat{\mathcal{L}}(\theta^{(t)})}{\partial w_{r}^{(t)}}&=\mathbb{E}_{\mathcal{D}}\left[\frac{\overset{\sim}{\partial} \mathcal{L}(\theta^{(t)},x,y)}{\partial w_{r}^{(t)}}\right]\\
    &=-\cfrac{a_{r}}{2}\left(\mathbb{E}_{\mathcal{D}|y=+1}\left[v^{(t)}(\theta^{(t)},x,y)\left(\cfrac{1}{n}\underset{j\in [n]}{\sum}x^{(j)}1_{\langle w_{r}^{(0)},x^{(j)}\rangle\ge0}\right)\Big\vert y=+1\right]\right.\\
    &\left.\hspace{1.5cm}-\mathbb{E}_{\mathcal{D}|y=-1}\left[v^{(t)}(\theta^{(t)},x,y)\left(\cfrac{1}{n}\underset{j\in [n]}{\sum}x^{(j)}1_{\langle w_{r}^{(0)},P_jx\rangle\ge0}\right)\Big\vert y=-1\right]\right)\\
    &=-\cfrac{a_{r}}{2}P_{r}^{(t)}
\end{align*}
where,
\begin{align*}
    P_{r}^{(t)}&=\mathbb{E}_{\mathcal{D}|y=+1}\left[v^{(t)}(\theta^{(t)},x,y)\left(\cfrac{1}{n}\underset{j\in [n])}{\sum}x^{(j)}1_{\langle w_{r}^{(0)},x^{(j)}\rangle\ge0}\right)\Big\vert y=+1\right]\\
    &\hspace{1.5cm}-\mathbb{E}_{\mathcal{D}|y=-1}\left[v^{(t)}(\theta^{(t)},x,y)\left(\cfrac{1}{n}\underset{j\in [n]}{\sum}x^{(j)}1_{\langle w_{r}^{(0)},x^{(j)}\rangle\ge0}\right)\Big\vert y=-1\right]
\end{align*}
\begin{lemma}\label{a_lemma_12}
    W.h.p. over the random initialization, for every $(x,y)\sim\mathcal{D}$ and for every $\tau>0$, for every iteration $t=\Tilde{O}\left(\cfrac{\tau}{\eta}\right)$ of the minibatch SGD, we have that for at least $\left(1-\cfrac{2e\tau n}{\sigma}\right)$ fraction of $r\in[m]$:\\\\
    \centerline{$\cfrac{\partial \mathcal{L}(\theta^{(t)},x,y)}{\partial w_{r}^{(t)}}=\cfrac{\overset{\sim}{\partial} \mathcal{L}(\theta^{(t)},x,y)}{\partial w_{r}^{(t)}}$ and $|\langle w_{r}^{(t)}, x^{(j)}\rangle|\ge\tau, \forall j\in [n]$}
\end{lemma}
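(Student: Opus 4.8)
The plan is to reproduce, for the single CNN in (\ref{cnn_bin}), the ``no sign flip'' coupling argument already carried out for the pMoE experts in Lemma~\ref{a_lemma_4} and Lemma~\ref{a_lemma_8}, with two simplifications: there is no router, so the routed index set $J_s(w_s^{(t)},x)$ is replaced everywhere by the full patch set $[n]$ and no gating factors appear, leaving the gradient in the form (\ref{grad_cnn}); and the width is the full $m$ rather than a per-expert $m/k$. Since every patch of every sample is fed into the network, the role of $l$ in those lemmas is played by $n$, which is why the stable fraction here is $1-2e\tau n/\sigma$ and the horizon is $t=\Tilde{O}(\tau/\eta)$.

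First I would bound the per-step drift of each hidden weight. Since $a_r\sim\mathcal{N}(0,1)$, a Gaussian tail bound gives $|a_r|=\Tilde{O}(1)$ for all $r\in[m]$ w.h.p.; combined with $0\le v^{(t)}(\theta^{(t)},x,y)\le 1$ and $\|x^{(j)}\|=1$, the single-sample gradient (\ref{grad_cnn}) obeys $\|\partial\mathcal{L}(\theta^{(t)},x,y)/\partial w_r^{(t)}\|\le|a_r|\,v^{(t)}(\theta^{(t)},x,y)\cdot\tfrac{1}{n}\sum_{j\in[n]}\|x^{(j)}\|\le|a_r|=\Tilde{O}(1)$, and hence so does the mini-batch gradient; write $\Upsilon=\Tilde{O}(1)$ for this bound. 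Telescoping the mini-batch SGD update $w_r^{(t-1)}-w_r^{(t)}=\eta\,\partial\mathcal{L}(\theta^{(t-1)})/\partial w_r^{(t-1)}$ then gives $\|w_r^{(t)}-w_r^{(0)}\|\le\Upsilon\eta t$ for every $r\in[m]$ and every $t$.

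Next, fix $(x,y)\sim\mathcal{D}$ and set $\mathcal{H}:=\{r\in[m]:\ \forall j\in[n],\ |\langle w_r^{(0)},x^{(j)}\rangle|\ge 2\tau\}$. For any iteration $t\le\tau/(\Upsilon\eta)=\Tilde{O}(\tau/\eta)$ and any $j\in[n]$, Cauchy--Schwarz together with $\|x^{(j)}\|=1$ gives $|\langle w_r^{(t)}-w_r^{(0)},x^{(j)}\rangle|\le\Upsilon\eta t\le\tau$, so for $r\in\mathcal{H}$ we have $|\langle w_r^{(t)},x^{(j)}\rangle|\ge\tau>0$ with the same sign as $\langle w_r^{(0)},x^{(j)}\rangle$; in particular $\mathbf{1}_{\langle w_r^{(t)},x^{(j)}\rangle\ge 0}=\mathbf{1}_{\langle w_r^{(0)},x^{(j)}\rangle\ge 0}$ for all $j\in[n]$, which is exactly the identity that collapses (\ref{grad_cnn}) onto its pseudo-gradient. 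It then remains to lower-bound $|\mathcal{H}|$: since $w_r^{(0)}\sim\mathcal{N}(0,\sigma^2\mathbb{I}_{d\times d})$ and $\|x^{(j)}\|=1$, each $\langle w_r^{(0)},x^{(j)}\rangle\sim\mathcal{N}(0,\sigma^2)$, so the Gaussian density bound yields $\mathbb{P}[|\langle w_r^{(0)},x^{(j)}\rangle|\le 2\tau]\le 2e\tau/\sigma$ and a union bound over the $n$ patches gives $\mathbb{P}[r\notin\mathcal{H}]\le 2e\tau n/\sigma$; hence the fraction of $r\in[m]$ lying in $\mathcal{H}$ is at least $1-2e\tau n/\sigma$ (in expectation, and w.h.p.\ for large $m$ by a Chernoff bound), which is the asserted fraction, and on $\mathcal{H}$ both conclusions of the lemma hold.

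There is no genuinely hard step here: it is the verbatim CNN analog of the expert-coupling lemmas. The only points needing care are (i) the bound $\Upsilon=\Tilde{O}(1)$ must hold simultaneously over all $r\in[m]$ and all iterations $t\le\Tilde{O}(\tau/\eta)$ that enter the telescoping sum, which is absorbed by a union bound into the ``w.h.p.'' and $\Tilde{O}(\cdot)$ notation; (ii) the threshold $2\tau$ (not $\tau$) in the definition of $\mathcal{H}$ is what leaves a residual margin $\tau$ after a weight drift of size at most $\tau$; and (iii) the ``for every $(x,y)$'' is read per sample, the only initialization randomness entering being the $|a_r|$ events, which do not depend on $x$. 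This lemma then feeds, exactly as in the pMoE case, into a ``large error implies large pseudo-gradient'' estimate and a smoothness bound that together drive the convergence and generalization argument for Theorem~\ref{Thm_single_cnn} via Lemma~\ref{a_lemma_14}.
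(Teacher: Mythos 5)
Your proposal is correct and follows essentially the same route as the paper, which itself proves this lemma by invoking the argument of Lemma~\ref{a_lemma_4} with the routed index set $J_s(w_s^{(t)},x)$ replaced by $[n]$ (hence the factor $n$ in place of $l$), the same $\Tilde{O}(1)$ gradient bound and telescoping drift estimate, the same set $\mathcal{H}$ with threshold $2\tau$, and the same Gaussian anti-concentration plus union bound over patches. Your added remarks on the Chernoff concentration of $|\mathcal{H}|$ and the uniformity of $\Upsilon$ over $r$ and $t$ are slightly more explicit than the paper but change nothing substantive.
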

\begin{proof}
    Using similar argument as in Lemma \ref{a_lemma_4} we can show that w.h.p., $\left|\left|\frac{\partial \mathcal{L}(\theta^{(t)},x,y)}{\partial w_{r}^{(t)}}\right|\right|=\Tilde{O}\left(1\right)$ so as the mini-batch gradient, $\left|\left|\frac{\partial \mathcal{L}(\theta^{(t)})}{\partial w_{r}^{(t)}}\right|\right|=\Tilde{O}\left(1\right)$.\\\\
    Therefore, $\left|\left|w_{r}^{(t)}-w_{r}^{(0)}\right|\right|=\Tilde{O}\left(1\right)$.\\\\
    Now, for every $\tau > 0,$ considering the set $\mathcal{H}:=\left\{ r\in[m]: \forall j\in [n], |\langle w_{r}^{(0)},x^{(j)}\rangle|\ge2\tau\right\}$ and following the same procedure as in Lemma \ref{a_lemma_4} we can complete the proof.\\\\
\end{proof}
Recall, $v_1^{(t)}:=\mathbb{E}_{\mathcal{D}|y=+1}[v^{(t)}(\theta^{(t)},x,y)|y=+1]$ and $v_2^{(t)}:=\mathbb{E}_{\mathcal{D}|y=-1}[v^{(t)}(\theta^{(t)},x,y)|y=-1]$.
\begin{lemma}\label{a_lemma_13}
    For any possible fixed set $\{v^{(t)}(\theta^{(t)},x,y):(x,y)\sim\mathcal{D}\}$ (that does not depend on $w_{r}^{(0)}$) such that $v^{(t)}=v_1^{(t)}=\max\{v_1^{(t)},v_2^{(t)}\}$, we have:\\\\
    \centerline{$\mathbb{P}\left[||P_{r}^{(t)}||=\overset{\sim}{\Omega}\left(\cfrac{v^{(t)}}{np\sqrt{\delta}}\right)\right]=\Omega\left(\cfrac{1}{p\sqrt{\delta}}\right)$}
\end{lemma}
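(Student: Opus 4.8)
The plan is to follow the template of Lemma \ref{a_lemma_5} essentially verbatim, with the per-expert patch count $l$ replaced by the full patch count $n$ and with every reference to the routed index set $J_s$ deleted, since in the CNN model (\ref{cnn_bin}) all $n$ patches of every sample pass through the hidden nodes. Per the hypothesis $v^{(t)}=v_1^{(t)}=\max\{v_1^{(t)},v_2^{(t)}\}$, I would first project the initialization onto the direction $o_1$: write $w_r^{(0)}=\alpha o_1+\beta$ with $\beta\perp o_1$, and consider $h(w_r^{(0)}):=\langle P_r^{(t)},w_r^{(0)}\rangle$. Expanding the inner product converts each $x^{(j)}1_{\langle w_r^{(0)},x^{(j)}\rangle\ge0}$ into $\textbf{ReLU}(\alpha\langle o_1,x^{(j)}\rangle+\langle\beta,x^{(j)}\rangle)$, so $h(w_r^{(0)})=\phi(\alpha)-l(\alpha)$, where $\phi$ collects the $y=+1$ contribution and $l$ the $y=-1$ contribution, and both are convex in $\alpha$. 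Because every $y=+1$ sample has exactly one patch equal to $o_1$ (contributing $\textbf{ReLU}(\alpha)$ since $\langle o_1,o_1\rangle=1$, $\langle\beta,o_1\rangle=0$) and $n-1$ class-irrelevant patches, I can peel a clean $\frac{v^{(t)}}{n}\textbf{ReLU}(\alpha)$ term out of $\phi$; the remaining patches in $\phi$, together with every patch appearing in $l$ (including the $o_2$ patch, whose $o_1$-coordinate is $\delta_d$ with $|\delta_d|<1$), contribute only terms $\textbf{ReLU}(\alpha\langle o_1,q\rangle+\langle\beta,q\rangle)$ with $|\langle o_1,q\rangle|$ strictly below $1$.

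Next I would introduce the event $\mathcal{E}_\tau$: (i) $|\alpha|\le\tau$; and (ii) $|\langle\beta,q^\prime\rangle|\ge4\tau$ for every $q^\prime$ in the finite set $\mathcal{H}=\{q_i^*:i\in[p]\}\cup\{o_2\}$ of representative directions (one unit vector per class-irrelevant pattern set, plus $o_2$). Since $\langle\beta,q^\prime\rangle\sim\mathcal{N}(0,(1-\langle o_1,q^\prime\rangle^2)\sigma^2)$ and $1-\langle o_1,q^\prime\rangle^2\ge1/\delta$, a union bound over $\mathcal{H}$ together with the anticoncentration of $\alpha$ gives $\mathbb{P}[\mathcal{E}_\tau]=\Omega(\tau/\sigma)$ whenever $\tau\le\sigma/(8ep\sqrt{\delta})$. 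Conditioned on $\mathcal{E}_\tau$ and on a fixed $\beta$, with $\alpha$ ranging over $[-\tau,\tau]$, the concentration of each pattern set $S_i$ around its representative $q_i^*$ (a Markov bound on $\|q-q_i^*\|$ using the pattern-set diameter assumption) makes every $\textbf{ReLU}$ term except the peeled-off $o_1$ term \emph{locally linear} in $\alpha$, and $|\alpha\delta_d|\le\tau<4\tau\le|\langle\beta,o_2\rangle|$ keeps the $o_2$ term linear as well. Hence the subgradient jump of $\phi$ across $\alpha=0$ equals $\frac{v^{(t)}}{n}$ while that of $l$ equals $0$, and the convexity/anti-concentration lemma (Lemma \ref{convex_li_liang_18}, from \citet{li2018learning}) yields $\mathbb{P}_{\alpha\sim U(-\tau,\tau)}[|\phi(\alpha)-l(\alpha)|\ge\frac{v^{(t)}\tau}{512n}]\ge\frac{1}{64}$; combining with the bounded density of $\alpha$ on $\mathcal{E}_\tau$ upgrades this to $\mathbb{P}[h(w_r^{(0)})\ge\frac{v^{(t)}\tau}{128n}]=\Omega(\tau/\sigma)$.

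Finally, since the set $\{v^{(t)}(\theta^{(t)},x,y)\}$ is fixed and independent of $w_r^{(0)}$, we have $h(w_r^{(0)})=\langle P_r^{(t)},w_r^{(0)}\rangle\sim\mathcal{N}(0,\sigma^2\|P_r^{(t)}\|^2)$, so a standard Gaussian upper-tail bound shows $h(w_r^{(0)})=\Tilde{\Omega}(\sigma\|P_r^{(t)}\|)$ holds only with $o(1)$ probability. Choosing $\tau=\Theta(\sigma/(p\sqrt{\delta}))$ makes the lower-bound event of the previous paragraph occur with probability $\Omega(1/(p\sqrt{\delta}))$; comparing the two events forces $\|P_r^{(t)}\|=\Tilde{\Omega}(v^{(t)}/(np\sqrt{\delta}))$ on that event, which is the claim. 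The main obstacle, exactly as in Lemma \ref{a_lemma_5}, is the second step: verifying that, conditioned on $\mathcal{E}_\tau$, the $o_2$ patch and all class-irrelevant patches contribute genuinely \emph{linear} (not merely convex) pieces on $[-\tau,\tau]$, so that the sole source of a derivative jump is the ReLU applied to the $o_1$ patch; this is where the separation hypotheses ($|\delta_d|<1$, $|\langle o_i,q\rangle|\le\delta_r$, and the small pattern-set diameter) and the scale $\tau=\Theta(\sigma/(p\sqrt{\delta}))$ are all consumed. The CNN coupling lemma (Lemma \ref{a_lemma_12}) is not needed for this step but will be combined with this bound downstream in the analogue of Lemma \ref{a_lemma_7}.
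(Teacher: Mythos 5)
Your proposal is correct and follows essentially the same route as the paper: the paper's own proof of this lemma sets up the decomposition $h(w_r^{(0)})=\phi(\alpha)-l(\alpha)$ with the peeled-off $\frac{v^{(t)}}{n}\textbf{ReLU}(\alpha)$ term and then explicitly defers to "the same procedure" as the separate-training analogue (Lemma \ref{a_lemma_5}), which is exactly the event $\mathcal{E}_\tau$ / local-linearity / Lemma \ref{convex_li_liang_18} / Gaussian-tail comparison argument you spell out with $l$ replaced by $n$ and the router removed. You also correctly observe that the coupling lemma (Lemma \ref{a_lemma_12}) plays no role at this step and only enters downstream.
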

\begin{proof}
     We know that,
    \begin{align*}
        P_{r}^{(t)}&=\mathbb{E}_{\mathcal{D}|y=+1}\left[v^{(t)}(\theta^{(t)},x,y)\left(\cfrac{1}{n}\underset{j\in [n]}{\sum}x^{(j)}1_{\langle w_{r}^{(0)},x^{(j)}\rangle\ge0}\right)\Big\vert y=+1\right]\\
        &\hspace{1cm}-\mathbb{E}_{\mathcal{D}|y=-1}\left[v^{(t)}(\theta^{(t)},x,y)\left(\cfrac{1}{n}\underset{j\in [n]}{\sum}x^{(j)}1_{\langle w_{r}^{(0)},x^{(j)}\rangle\ge0}\right)\Big\vert y=-1\right]
    \end{align*}
    Therefore,
    \begin{align*}
        &h(w_{r}^{(0)}):=\langle P_{r}, w_{r}^{(0)}\rangle\\
        &=\mathbb{E}_{\mathcal{D}|y=+1}\left[v^{(t)}(\theta^{(t)},x,y)\left(\cfrac{1}{n}\underset{j\in [n]}{\sum}\textbf{ReLU}\left(\langle w_{r}^{(0)},x^{(j)}\rangle\right)\right)\Big\vert y=+1\right]\\
        &\hspace{1cm}-\mathbb{E}_{\mathcal{D}|y=-1}\left[v^{(t)}(\theta^{(t)},x,y)\left(\cfrac{1}{n}\underset{j\in [n]}{\sum}\textbf{ReLU}\left(\langle w_{r}^{(0)},x^{(j)}\rangle\right)\right)\Big\vert y=-1\right]
    \end{align*}
    Now, decomposing $w_{r}^{(0)}=\alpha o_1 +\beta$ with $\beta\perp o_1$ we get,
    \begin{align*}
        &h(w_{r}^{(0)})=\cfrac{v^{(t)}}{n}\textbf{ReLU}\left(\alpha\right)\\
        &+\mathbb{E}_{\mathcal{D}|y=+1}\left[v^{(t)}(\theta^{(t)},x,y)\left(\cfrac{1}{n}\underset{j\in [n]/j_{o_1}}{\sum}\textbf{ReLU}\left(\alpha\langle o_1,x^{(j)}\rangle+\langle \beta,x^{(j)}\rangle\right)\right)\right]\\
        &-\mathbb{E}_{\mathcal{D}|y=-1}\left[v^{(t)}(\theta^{(t)},x,y)\left(\cfrac{1}{n}\underset{j\in [n]}{\sum}\textbf{ReLU}\left(\alpha\langle o_1,x^{(j)}\rangle+\langle \beta,x^{(j)}\rangle\right)\right)\right]\\
        &=\phi(\alpha)-l(\alpha)
    \end{align*}
    where,
    \begin{align*}
        &\phi(\alpha):=\cfrac{v^{(t)}}{n}\textbf{ReLU}\left(\alpha\right)\\
        &\hspace{1.2cm}+\mathbb{E}_{\mathcal{D}|y=+1}\left[v^{(t)}(\theta^{(t)},x,y)\left(\cfrac{1}{n}\underset{j\in [n]/j_{o_1}}{\sum}\textbf{ReLU}\left(\alpha\langle o_1,x^{(j)}\rangle+\langle \beta,x^{(j)}\rangle\right)\right)\right]
    \end{align*}
    and
    \begin{align*}
        &l(\alpha):=\mathbb{E}_{\mathcal{D}|y=-1}\left[v^{(t)}(\theta^{(t)},x,y)\left(\cfrac{1}{n}\underset{j\in [n]}{\sum}\textbf{ReLU}\left(\alpha\langle o_1,x^{(j)}\rangle+\langle \beta,x^{(j)}\rangle\right)\right)\right]
    \end{align*}
    Now as $\phi(\alpha)$ and $l(\alpha)$ both are convex functions, using the same procedure as in Lemma \ref{a_lemma_4} we can complete the proof.
\end{proof}
\begin{lemma}\label{a_lemma_14}
    Let $v^{(t)}=\max\{v_{1}^{(t)},v_{2}^{(t)}\}$. Then, for every $v^{(t)}>0$, for $m=\Tilde{\Omega}\left(\cfrac{n^2p^3\delta^{3/2}}{(v^{(t)})^2}\right)$, for every possible set $\{v^{(t)}(\theta^{(t)},x,y)(w_s^{(t)},x):(x,y)\sim\mathcal{D}\}$ (that depends on $w_{r}^{(0)}$), there exist at least $\Omega\left(\frac{1}{p\sqrt{\delta}}\right)$ fraction of $r\in [m]$ such that,\\\\
    \centerline{$\left|\left|\cfrac{\overset{\sim}{\partial}\Hat{\mathcal{L}}(\theta^{(t)})}{\partial w_{r}^{(t)}}\right|\right|=\Tilde{\Omega}\left(\cfrac{v^{(t)}}{np\sqrt{\delta}}\right)$}\\\\
\end{lemma}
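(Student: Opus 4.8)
The plan is to follow verbatim the route used to prove Lemma~\ref{a_lemma_6}, with Lemma~\ref{a_lemma_5} replaced by its single-CNN analogue Lemma~\ref{a_lemma_13}. Two cosmetic changes occur: there is no router, so every input is averaged over all $n$ patches with weight $1/n$ instead of over $l$ routed patches with weight $1/l$ (this is exactly why $n$ appears in the denominator where $l$ appeared for pMoE), and the hidden nodes now range over $r\in[m]$ rather than $r\in[m/k]$. Without loss of generality assume the maximum in $v^{(t)}=\max\{v_1^{(t)},v_2^{(t)}\}$ is attained at $v_1^{(t)}$ (otherwise decompose $w_r^{(0)}$ along $o_2$ in place of $o_1$), so that the hypothesis of Lemma~\ref{a_lemma_13} holds.

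First I would draw a balanced sample $\textbf{S}=\{(x_i,y_i)\}_{i=1}^S$ with $S/2$ points per class. Since each $v^{(t)}(\theta^{(t)},x_i,y_i)\in[0,1]$, Hoeffding's inequality gives w.h.p.\ that the two class-conditional empirical averages deviate from $v_1^{(t)}$ and $v_2^{(t)}$ by $\Tilde{O}(1/\sqrt{S})$, so that for $S=\Tilde{\Omega}(1/(v^{(t)})^2)$ the average over the $y=+1$ half lies in $[\tfrac12 v^{(t)},\tfrac32 v^{(t)}]$. Writing the empirical pseudo-gradient as $-\tfrac{a_r}{2}P_r^{(t)}(\textbf{S})$ and using $a_r\sim\mathcal{N}(0,1)$, with probability at least $1/e$ its norm is at least $\tfrac12\|P_r^{(t)}(\textbf{S})\|$. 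For a \emph{fixed} value-function set (one not depending on the $w_r^{(0)}$), Lemma~\ref{a_lemma_13} yields $\mathbb{P}[\|P_r^{(t)}(\textbf{S})\|=\Tilde{\Omega}(v^{(t)}/(np\sqrt{\delta}))]=\Omega(1/(p\sqrt{\delta}))$, so a Chernoff bound over the i.i.d.\ $r\in[m]$ shows that the probability fewer than an $\Omega(1/(p\sqrt{\delta}))$ fraction of $r$ have large empirical pseudo-gradient is at most $p_{\mathrm{fix}}\le\exp(-\Omega(m/(p\sqrt{\delta})))$.

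The remaining step is the $\bar\varepsilon$-net decoupling of \citet{li2018learning}: the true value-function set depends on $\{w_r^{(0)}\}$ through $f_C$, but since $|a_r|=\Tilde{O}(1)$ w.h.p.\ and $\|x^{(j)}\|=1$, perturbing each $v^{(t)}(\theta^{(t)},x_i,y_i)$ by at most $\bar\varepsilon$ changes the empirical pseudo-gradient by $\Tilde{O}(\bar\varepsilon)$; taking $\bar\varepsilon=\Tilde{\Theta}(v^{(t)}/(np\sqrt{\delta}))$ and union-bounding over an $\bar\varepsilon$-net of cardinality $(v^{(t)}/\bar\varepsilon)^S$ gives total failure probability $\le\exp(-\Omega(m/(p\sqrt{\delta}))+S\log(v^{(t)}/\bar\varepsilon))$, which is $o(1)$ once $m=\Tilde{\Omega}(Sp\sqrt{\delta})$. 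Hence w.h.p.\ the $\Omega(1/(p\sqrt{\delta}))$-fraction statement holds \emph{uniformly} over all realizations of the value-function set. Finally I would apply Hoeffding once more to pass from the empirical pseudo-gradient to the expected pseudo-gradient $\overset{\sim}{\partial}\Hat{\mathcal{L}}(\theta^{(t)})/\partial w_r^{(t)}$: each single-sample pseudo-gradient has norm $\Tilde{O}(1)$, so the two differ by $\Tilde{O}(1/\sqrt{S})$, which is dominated by $\Tilde{\Omega}(v^{(t)}/(np\sqrt{\delta}))$ as soon as $S=\Tilde{\Omega}((np\sqrt{\delta}/v^{(t)})^2)$; combined with $m=\Tilde{\Omega}(Sp\sqrt{\delta})$ this forces $m=\Tilde{\Omega}(n^2p^3\delta^{3/2}/(v^{(t)})^2)$, matching the stated width requirement, and the same $\Omega(1/(p\sqrt{\delta}))$ fraction of $r$ then satisfy $\|\overset{\sim}{\partial}\Hat{\mathcal{L}}(\theta^{(t)})/\partial w_r^{(t)}\|=\Tilde{\Omega}(v^{(t)}/(np\sqrt{\delta}))$.

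The main obstacle is the $\bar\varepsilon$-net / uniformity step: Lemma~\ref{a_lemma_13} is proved only for value-function sets \emph{fixed} in advance, whereas the set actually arising during training depends on the random weights through the network output, so it cannot be plugged in directly; the whole polynomial dependence of $m$ on $n,p,\delta,1/v^{(t)}$ comes out of balancing the net cardinality $(v^{(t)}/\bar\varepsilon)^{S}$ against $p_{\mathrm{fix}}=\exp(-\Omega(m/(p\sqrt{\delta})))$. Everything else is a bookkeeping adaptation of the proof of Lemma~\ref{a_lemma_6} with the substitutions $l\mapsto n$ and $m/k\mapsto m$.
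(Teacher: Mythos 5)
Your proposal is correct and follows essentially the same route as the paper, which itself proves Lemma \ref{a_lemma_14} by drawing a balanced sample of size $S=\Tilde{\Omega}(1/(v^{(t)})^2)$, invoking the fixed-set bound of Lemma \ref{a_lemma_13}, and then repeating the argument of Lemma \ref{a_lemma_6} (Gaussian anti-concentration in $a_r$, Chernoff over $r\in[m]$, the $\bar\varepsilon$-net decoupling, and a final Hoeffding step) with the substitutions $l\mapsto n$ and $m/k\mapsto m$. Your accounting of how $S=\Tilde{\Omega}((np\sqrt{\delta}/v^{(t)})^2)$ and $m=\Tilde{\Omega}(Sp\sqrt{\delta})$ combine to give the stated width requirement matches the paper exactly.
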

\begin{proof}
    Similar as in the proof of Lemma \ref{a_lemma_6}, by picking $S$ samples from the distribution $\mathcal{D}$ to form the set $\textbf{S}=\{(x_i,y_i)\}_{i=1}^S$ such that $S/2$ many samples from $y=+1$ (denoting the sub-set by $\textbf{S}_{+1}$) and $S/2$ many samples from $y=-1$ (denoting the sub-set by $\textbf{S}_{-1}$), we can show that w.h.p.,
    \begin{align*}
        &\left|v_1^{(t)}-\cfrac{1}{S/2}\underset{(x_i,y_i)\in\textbf{S}_{+1}}{\sum}v^{(t)}(\theta^{(t)},x_i,y_i)\right|=\Tilde{O}\left(\cfrac{1}{\sqrt{S}}\right)\text{ and }\\
        &\left|v_2^{(t)}-\cfrac{1}{S/2}\underset{(x_i,y_i)\in\textbf{S}_{-1}}{\sum}v^{(t)}(\theta^{(t)},x_i,y_i)\right|=\Tilde{O}\left(\cfrac{1}{\sqrt{S}}\right)
    \end{align*}
    This implies that, as long as $S=\Tilde{\Omega}\left(\cfrac{1}{(v^{(t)})^2}\right)$ we have,
    \begin{align*}
        \max\left\{\cfrac{1}{S/2}\underset{(x_i,y_i)\in\textbf{S}_{+1}}{\sum}v^{(t)}(\theta^{(t)},x_i,y_i),\cfrac{1}{S/2}\underset{(x_i,y_i)\in\textbf{S}_{-1}}{\sum}v^{(t)}(\theta^{(t)},x_i,y_i)\right\}\in\left[\cfrac{1}{2}v^{(t)},\cfrac{3}{2}v^{(t)}\right]
    \end{align*}
    Now using Lemma \ref{a_lemma_13} and following similar procedure as in Lemma \ref{a_lemma_6} we can complete the proof.
\end{proof}
\begin{lemma}\label{a_lemma_15}
    With $v^{(t)}=\max\{v_1^{(t)},v_2^{(t)}\}$ and $\gamma=\Omega\left(\frac{1}{p\sqrt{\delta}}\right)$, by selecting learning rate $\eta=\Tilde{O}\left(\cfrac{\gamma^3(v^{(t)})^2}{mn^2}\right)$ and batch-size $B=\Tilde{\Omega}\left(\cfrac{n^4}{\gamma^6(v^{(t)})^4}\right)$, for $t=\Tilde{O}\left(\cfrac{\sigma\gamma^3(v^{(t)})^2}{\eta n^3}\right)$ iterations of SGD, w.h.p. we can ensure that,
    \begin{align*}
        \Delta L(\theta^{(t)},\theta^{(t+1)})\ge \cfrac{\eta m\gamma^3}{n^2} \Tilde{\Omega}\left((v^{(t)})^2\right)
    \end{align*}
\end{lemma}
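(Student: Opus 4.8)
The plan is to transcribe the argument of Lemma \ref{a_lemma_7} (and its joint-training analogue Lemma \ref{a_lemma_11}) to the single-expert setting, replacing the number of patches per expert $l$ by the total number of patches $n$ throughout, since the CNN has no router and all $n$ patches feed its one expert. First I would invoke Lemma \ref{a_lemma_14}: for at least a $\gamma$ fraction of the hidden nodes $r\in[m]$, the expected pseudo-gradient obeys $\left\|\overset{\sim}{\partial}\Hat{\mathcal{L}}(\theta^{(t)})/\partial w_r^{(t)}\right\|=\Tilde{\Omega}\left(v^{(t)}/(np\sqrt{\delta})\right)$, where $v^{(t)}=\max\{v_1^{(t)},v_2^{(t)}\}$. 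Since each single-sample pseudo-gradient in (\ref{grad_cnn}) has norm $\Tilde{O}(1)$ w.h.p.\ (the $n$ patch contributions each carry a factor $1/n$), Hoeffding's inequality over a random batch $\mathcal{B}_t$ of size $B$ gives $\left\|\tfrac{1}{B}\sum_{(x,y)\in\mathcal{B}_t}\overset{\sim}{\partial}\mathcal{L}(\theta^{(t)},x,y)/\partial w_r^{(t)}-\overset{\sim}{\partial}\Hat{\mathcal{L}}(\theta^{(t)})/\partial w_r^{(t)}\right\|=\Tilde{O}(1/\sqrt{B})$, so choosing $B=\Tilde{\Omega}\left(n^2p^2\delta/(v^{(t)})^2\right)$ keeps the batch pseudo-gradient of those $\gamma m$ nodes at magnitude $\Tilde{\Omega}\left(v^{(t)}/(np\sqrt{\delta})\right)$.

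Next I would couple the true gradient to the pseudo-gradient using Lemma \ref{a_lemma_12}: for a fixed sample the activation patterns of a $\left(1-2e\tau n/\sigma\right)$ fraction of nodes are frozen, so taking $\tau=\Theta\left(\sigma\gamma/(enB)\right)$ and a union bound over the $B$ samples in the batch shows that for a $(1-\gamma/2)$ fraction of $r$ one has $\partial\mathcal{L}(\theta^{(t)},x,y)/\partial w_r^{(t)}=\overset{\sim}{\partial}\mathcal{L}(\theta^{(t)},x,y)/\partial w_r^{(t)}$ for every $(x,y)\in\mathcal{B}_t$; intersecting with the $\gamma$-fraction of the previous step leaves a $\gamma/2$ fraction of nodes with $\left\|\partial\mathcal{L}(\theta^{(t)})/\partial w_r^{(t)}\right\|=\Tilde{\Omega}\left(v^{(t)}/(np\sqrt{\delta})\right)$. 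For any fresh test point $(x',y')$, Lemma \ref{a_lemma_12} also gives that, restricted to the frozen-activation nodes, the single-sample loss is $\Tilde{O}(1)$-Lipschitz and $\Tilde{O}(1)$-smooth in the relevant neighbourhood. Applying the descent estimate (Lemma \ref{converge_li_liang_18}) to the SGD update $w_r^{(t+1)}=w_r^{(t)}-\eta\,\partial\mathcal{L}(\theta^{(t)})/\partial w_r^{(t)}$, and conditioning on the high-probability events exactly as in the proof of Lemma \ref{a_lemma_7} so that replacing the conditional expected gradient by the unconditional one costs only a $1/\text{poly}$ term, yields
\[
\Delta L(\theta^{(t)},\theta^{(t+1)})\ \ge\ \eta\sum_{r\in[m]}\left\langle\frac{\partial\mathcal{L}(\theta^{(t)})}{\partial w_r^{(t)}},\frac{\partial\Hat{\mathcal{L}}(\theta^{(t)})}{\partial w_r^{(t)}}\right\rangle-\Tilde{O}\!\left(\frac{\eta\tau nm}{\sigma}\right)-\Tilde{O}(\eta^2m^2).
\]
Since $\left\|\partial\mathcal{L}(\theta^{(t)})/\partial w_r^{(t)}-\partial\Hat{\mathcal{L}}(\theta^{(t)})/\partial w_r^{(t)}\right\|=\Tilde{O}(1/\sqrt{B})$, each inner product is at least $\left\|\partial\mathcal{L}(\theta^{(t)})/\partial w_r^{(t)}\right\|^2-\Tilde{O}(1/\sqrt{B})$, and summing over the $\gamma m/2$ good nodes and substituting $\gamma=\Omega(1/(p\sqrt{\delta}))$ produces a main term of size $\tfrac{\eta m\gamma^3}{n^2}\Tilde{\Omega}\left((v^{(t)})^2\right)$.

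Finally I would close the loop by choosing parameters so that the error terms $\Tilde{O}(\eta\tau nm/\sigma)$, $\Tilde{O}(\eta^2m^2)$, and $\eta\,\Tilde{O}(m/\sqrt{B})$ are each dominated by the main term: $\eta=\Tilde{O}\left(\gamma^3(v^{(t)})^2/(mn^2)\right)$, $B=\Tilde{\Omega}\left(n^4/(\gamma^6(v^{(t)})^4)\right)$, and $\tau=\Tilde{O}\left(\sigma\gamma^3(v^{(t)})^2/n^3\right)$; the last choice, together with the coupling window $t=\Tilde{O}(\tau/\eta)$ of Lemma \ref{a_lemma_12}, gives the stated range $t=\Tilde{O}\left(\sigma\gamma^3(v^{(t)})^2/(\eta n^3)\right)$ of iterations over which $\Delta L(\theta^{(t)},\theta^{(t+1)})\ge\tfrac{\eta m\gamma^3}{n^2}\Tilde{\Omega}\left((v^{(t)})^2\right)$ holds w.h.p. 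The whole proof is essentially the proof of Lemma \ref{a_lemma_7} with $l$ replaced by $n$; the only genuine difference is that the single-sample gradient norm is here $\Tilde{O}(1)$ rather than $\Tilde{O}(1/l)$, which is exactly what moves the $l^2$ in the denominator to $n^2$ and rescales $\eta$ and $B$ accordingly. The main obstacle, as in all these lemmas, is the bookkeeping around conditioning on the high-probability events so that passing between expected, batch, pseudo-, and true gradients leaks no more than $1/\text{poly}$ error, and checking that the activation-freezing union bound over the random batch only costs the extra factor $B$ inside $\tau$ while still leaving the per-step decrease dominant.
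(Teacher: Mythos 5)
Your proposal is correct and follows exactly the route the paper takes: its proof of this lemma is literally the instruction to combine Lemma \ref{a_lemma_12} and Lemma \ref{a_lemma_14} and repeat the argument of Lemma \ref{a_lemma_7} with $l$ replaced by $n$, which is what you have written out in full. One small slip in your closing remark: the single-sample gradient norm in Lemma \ref{a_lemma_7} is also $\Tilde{O}(1)$ (the $\Tilde{O}(1/l)$ bound belongs to the joint-training Lemmas \ref{a_lemma_8} and \ref{a_lemma_11}), and the $l^2$ versus $n^2$ in the per-step decrease actually comes from the gradient lower bounds $\Tilde{\Omega}\bigl(v^{(t)}/(lp\sqrt{\delta})\bigr)$ versus $\Tilde{\Omega}\bigl(v^{(t)}/(np\sqrt{\delta})\bigr)$ in Lemmas \ref{a_lemma_6} and \ref{a_lemma_14} --- but this misattribution does not affect the validity of your argument.
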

\begin{proof}
    Using Lemma \ref{a_lemma_12} and \ref{a_lemma_14} and following similar technique as in Lemma \ref{a_lemma_7}, the proof can be completed.
\end{proof}

\section{Auxiliary Lemmas}
\begin{lemma}\label{convex_li_liang_18}{\citep{li2018learning}}
    Let $\psi: \mathbb{R}\rightarrow\mathbb{R}$ and $\zeta:\mathbb{R}\rightarrow\mathbb{R}$ are convex functions. Let $\{\partial\psi(x)\}$ and $\{\partial\zeta(x)\}$ are the sets of sub-gradient of $\psi$ and $\zeta$ at $x$ respectively such that $\partial_{\text{max}}\psi(x)=\text{max}\{\partial\psi(x)\}$ , $\partial_{\text{max}}\zeta(x)=\text{max}\{\partial\zeta(x)\}$, $\partial_{\text{min}}\psi(x)=\text{min}\{\partial\psi(x)\}$ and $\partial_{\text{min}}\zeta(x)=\text{min}\{\partial\zeta(x)\}$. Then for any $\tau\ge0$ such that $\gamma=(\partial_{\text{max}}\psi(\tau/2)-\partial_{\text{min}}\psi(-\tau/2)-(\partial_{\text{max}}\zeta(\tau/2)-\partial_{\text{min}}\zeta(-\tau/2)$,
    \begin{center}
        $\mathbb{P}_{\alpha\sim U(-\tau,\tau)}\left[|\psi(\alpha)-\zeta(\alpha)|\ge\frac{\tau\gamma}{512}\right]\ge\frac{1}{64}$
    \end{center}
\end{lemma}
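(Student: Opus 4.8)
The plan is to reduce the statement to a one-dimensional estimate on convex functions and argue as in \citet{li2018learning}. First, if $\gamma\le 0$ then $\tau\gamma/512\le 0$ and the event $\{|\psi(\alpha)-\zeta(\alpha)|\ge\tau\gamma/512\}$ has probability $1$, so we may assume $\gamma>0$. Set $a:=\partial_{\text{max}}\psi(\tau/2)-\partial_{\text{min}}\psi(-\tau/2)\ge 0$ and $b:=\partial_{\text{max}}\zeta(\tau/2)-\partial_{\text{min}}\zeta(-\tau/2)\ge 0$, so $\gamma=a-b$, and let $h:=\psi-\zeta$, which is continuous (a difference of convex functions). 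Because $\psi'$ and $\zeta'$ are monotone, $a$ (resp.\ $b$) is exactly the total increase of a subgradient of $\psi$ (resp.\ $\zeta$) as the point moves from just left of $-\tau/2$ to just right of $\tau/2$; the whole point is to turn this ``excess bending'' $\gamma=a-b$ of $\psi$ relative to $\zeta$ into a quantitative lower bound for $|h|$ on a fixed fraction of $[-\tau,\tau]$.

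The key ingredient is a pair of second-difference inequalities that follow from monotonicity of subgradients alone: for any convex $f$,
\begin{align*}
f(\tau)+f(-\tau)-2f(0)&\ \ge\ \tfrac{\tau}{2}\bigl(\partial_{\text{max}}f(\tfrac{\tau}{2})-\partial_{\text{min}}f(-\tfrac{\tau}{2})\bigr),\\
f(\tfrac{\tau}{2})+f(-\tfrac{\tau}{2})-2f(0)&\ \le\ \tfrac{\tau}{2}\bigl(\partial_{\text{max}}f(\tfrac{\tau}{2})-\partial_{\text{min}}f(-\tfrac{\tau}{2})\bigr),
\end{align*}
the first because any subgradient of $f$ on $[\tau/2,\tau]$ is $\ge\partial_{\text{max}}f(\tau/2)$ and any subgradient on $[-\tau,-\tau/2]$ is $\le\partial_{\text{min}}f(-\tau/2)$, the second by the dual comparison of subgradients on $[0,\tau/2]$ and $[-\tau/2,0]$ with these extreme values. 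Applying the first inequality to $\psi$ gives $\psi(\tau)+\psi(-\tau)-2\psi(0)\ge \tau a/2$. The plan is then a short case analysis over the $O(1)$ anchor points $0,\pm\tau/2,\pm\tau$: if $|h|$ were below $\varepsilon:=\tau\gamma/512$ at all of them, one combines the above $\psi$-estimate with the $\zeta$-estimate and convexity of $\zeta$ (used to compare $\zeta$ at $\pm\tau$ with $\zeta$ at $0,\pm\tau/2$) to deduce $a-b\le O(\varepsilon/\tau)<\gamma$, a contradiction; hence $|h|\ge\varepsilon$ at one of the anchors.

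To upgrade this pointwise bound to $\mathbb{P}_{\alpha\sim U(-\tau,\tau)}[|h(\alpha)|\ge\tau\gamma/512]\ge 1/64$, I would again invoke convexity: on each of $[-\tau,-\tau/2]$, $[-\tau/2,\tau/2]$, $[\tau/2,\tau]$ the function $h=\psi-\zeta$ is sandwiched between chord and tangent-type envelopes, so once $|h|$ is large at an anchor (and the adjacent anchors are not oppositely large) $|h|$ stays $\ge\tau\gamma/512$ on a sub-interval of length $\ge\tau/32$; tracking the deliberately loose constants through the case split yields the probability $1/64$. The step I expect to be the main obstacle is exactly this last bookkeeping: $a$ and $b$ only control $\psi$ and $\zeta$ on $[-\tau/2,\tau/2]$, while the randomness lives on the twice-larger interval $[-\tau,\tau]$, so one must use $\gamma>0$ to ensure that the uncontrolled curvature of $\psi$ and $\zeta$ on $[\tau/2,\tau]\cup[-\tau,-\tau/2]$ either cancels in $h$ or only helps, rather than bounding those contributions term by term. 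As the lemma is quoted verbatim from \citet{li2018learning}, I would cite their argument for this part.
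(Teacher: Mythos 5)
The paper never proves this lemma; it is imported from \citet{li2018learning} as an auxiliary fact, so the only question is whether your argument stands on its own, and it does not. The difficulty you flag at the end --- that $a$ and $b$ control $\psi$ and $\zeta$ only on $[-\tau/2,\tau/2]$ while the probability lives on $[-\tau,\tau]$ --- is not bookkeeping; it is fatal, because the statement as transcribed here is actually false. Take $\psi(x)=M\max(0,x-\tau/2)$ and $\zeta(x)=M\max(0,x-\tau/2-\tau/1000)$: then $\partial_{\text{max}}\psi(\tau/2)-\partial_{\text{min}}\psi(-\tau/2)=M$ while $\partial_{\text{max}}\zeta(\tau/2)-\partial_{\text{min}}\zeta(-\tau/2)=0$, so $\gamma=M>0$, yet $|\psi-\zeta|\le M\tau/1000<\tau\gamma/512$ everywhere, so the probability in question is zero. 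This refutes precisely your hope that the uncontrolled curvature of $\zeta$ outside $[-\tau/2,\tau/2]$ ``either cancels in $h$ or only helps'': $\zeta$ can reproduce $\psi$'s bend just to the right of $\tau/2$ and keep $h$ uniformly small. The version that is true --- and the one this paper actually invokes in the proof of Lemma \ref{a_lemma_5}, where the computed quantity is $\partial_{\text{max}}l(\tau)-\partial_{\text{min}}l(-\tau)$ --- measures $\zeta$'s subgradient gap at $\pm\tau$, i.e.\ $\gamma=(\partial_{\text{max}}\psi(\tau/2)-\partial_{\text{min}}\psi(-\tau/2))-(\partial_{\text{max}}\zeta(\tau)-\partial_{\text{min}}\zeta(-\tau))$. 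Since no proof of the literal statement exists, deferring the decisive step to the citation cannot close the gap.

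Even for the corrected statement, your anchor-point plan loses a factor of two. Your two second-difference inequalities are correct, but they give $\psi(\tau)+\psi(-\tau)-2\psi(0)\ge\tfrac{\tau}{2}a$ against $\zeta(\tau)+\zeta(-\tau)-2\zeta(0)\le\tau b$ (with $b$ now measured at $\pm\tau$), so ``$|h|\le\varepsilon$ at all anchors'' only yields $a\le 2b+8\varepsilon/\tau$, hence $\gamma\le b+8\varepsilon/\tau$, which is no contradiction when $b>0$. The argument that works compares chord slopes rather than second differences, in the contrapositive: if $\Pr[|h|\ge\varepsilon]<1/64$, the bad set has measure less than $\tau/32$, so each of the length-$\tau/32$ intervals $[\tau/2,\tau/2+\tau/32]$, $[\tau-\tau/32,\tau]$, $[-\tau,-\tau+\tau/32]$, $[-\tau/2-\tau/32,-\tau/2]$ contains a point where $|h|<\varepsilon$; the chord of $\psi$ through the two right-hand points $x_3<x_4$ has slope at least $\partial_{\text{max}}\psi(\tau/2)$ and at most $\partial_{\text{min}}\zeta(x_4)+2\varepsilon/(x_4-x_3)\le\partial_{\text{max}}\zeta(\tau)+8\varepsilon/\tau$, and symmetrically on the left, giving $\gamma\le16\varepsilon/\tau$, which contradicts $\varepsilon=\tau\gamma/512$. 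I would either write this out or keep the lemma as a pure citation with the corrected definition of $\gamma$.
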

\begin{lemma}\label{converge_li_liang_18}\citep{li2018learning}
    Let for any $i\in[m]$, the function $h_i:\mathbb{R}^d\rightarrow\mathbb{R}$ is $L$-Lipschitz smooth and there exists $r\in[m]$ such that for all $i\in[m-r]$ the function $h_i$ is also $L$-smooth. Furthermore, let us assume that the function $g:\mathbb{R}\rightarrow\mathbb{R}$ is both $L$-Lipschitz smooth and $L$-smooth. Let define $f(w):=g\left(\sum_{i\in[m]}h_i(w_i)\right)$ where $w\in\mathbb{R}^{dm}$ such that $w_i\in\mathbb{R}^d$. Then for every $\xi\in\mathbb{R}^{dm}$ such that $\xi_i\in\mathbb{R}^d$ with $\left|\left|\xi_i\right|\right|\le\rho$, we have:
    \begin{align*}
        g\left(\sum_{i\in[m]}h_i(w_i+\xi_i)\right)-g\left(\sum_{i\in[m]}h_i(w_i)\right)\le\sum_{i\in[m-r]}\left\langle\cfrac{\partial f(w)}{\partial w_i},\xi_i\right\rangle+L^3m^2\rho^2+L^2r\rho
    \end{align*}
\end{lemma}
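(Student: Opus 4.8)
The plan is to obtain the inequality by composing the second-order descent bound for $g$ with the first-order behaviour of each inner function $h_i$. Write $S:=\sum_{i\in[m]}h_i(w_i)$ and $S^\prime:=\sum_{i\in[m]}h_i(w_i+\xi_i)$, and note that by the chain rule $\partial f(w)/\partial w_i=g^\prime(S)\,\nabla h_i(w_i)$. Since $g$ is $L$-smooth, the descent lemma gives $g(S^\prime)-g(S)\le g^\prime(S)(S^\prime-S)+\tfrac{L}{2}(S^\prime-S)^2$. It therefore suffices to (i) bound the linear term $g^\prime(S)(S^\prime-S)$ by $\sum_{i\in[m-r]}\langle\partial f(w)/\partial w_i,\xi_i\rangle$ plus an error of order $mL^2\rho^2+rL^2\rho$, and (ii) bound the quadratic term $\tfrac{L}{2}(S^\prime-S)^2$ by order $L^3m^2\rho^2$.

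For (i), I would split $S^\prime-S=\sum_{i\in[m-r]}\big(h_i(w_i+\xi_i)-h_i(w_i)\big)+\sum_{i=m-r+1}^{m}\big(h_i(w_i+\xi_i)-h_i(w_i)\big)$. For each $i\le m-r$, $L$-smoothness of $h_i$ gives $\big|h_i(w_i+\xi_i)-h_i(w_i)-\langle\nabla h_i(w_i),\xi_i\rangle\big|\le\tfrac{L}{2}\|\xi_i\|^2\le\tfrac{L\rho^2}{2}$; multiplying the increment by $g^\prime(S)$, bounding the remainder using $|g^\prime(S)|\le L$ (from $L$-Lipschitzness of $g$) and identifying the linear part via $g^\prime(S)\langle\nabla h_i(w_i),\xi_i\rangle=\langle\partial f(w)/\partial w_i,\xi_i\rangle$ yields $g^\prime(S)\big(h_i(w_i+\xi_i)-h_i(w_i)\big)\le\langle\partial f(w)/\partial w_i,\xi_i\rangle+\tfrac{L^2\rho^2}{2}$; summing over $i\le m-r$ produces the main term plus an error of at most $\tfrac{mL^2\rho^2}{2}$. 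For the last $r$ indices, $h_i$ is only $L$-Lipschitz, so $|h_i(w_i+\xi_i)-h_i(w_i)|\le L\rho$, hence $g^\prime(S)\big(h_i(w_i+\xi_i)-h_i(w_i)\big)\le L^2\rho$, contributing at most $L^2r\rho$.

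For (ii), the same two estimates control the aggregate scalar perturbation: $|S^\prime-S|\le(m-r)\big(L\rho+\tfrac{L\rho^2}{2}\big)+rL\rho=O(mL\rho)$, so $\tfrac{L}{2}(S^\prime-S)^2=O(L^3m^2\rho^2)$. Adding the pieces gives $g(S^\prime)-g(S)\le\sum_{i\in[m-r]}\langle\partial f(w)/\partial w_i,\xi_i\rangle+O(mL^2\rho^2)+L^2r\rho+O(L^3m^2\rho^2)$, and absorbing the $mL^2\rho^2$ contribution into $L^3m^2\rho^2$ (up to an absolute constant, which is harmless since the lemma is only ever invoked inside asymptotic error bounds) gives the stated inequality.

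The only point needing care — and about the only obstacle here — is that $g^\prime(S)$ need not be nonnegative: one must bound the second-order remainders of the $h_i$'s in absolute value before multiplying by $|g^\prime(S)|\le L$, while leaving the linear parts $g^\prime(S)\langle\nabla h_i(w_i),\xi_i\rangle$ exact so that they can be matched with $\langle\partial f(w)/\partial w_i,\xi_i\rangle$. Everything else is bookkeeping of an $O(mL\rho)$ perturbation of the scalar $S$, which is precisely what produces the $m^2$ factor in the quadratic error term. If $g$ or the $h_i$ happen to be only one-sidedly differentiable, the same argument goes through verbatim with all Taylor expansions read as sub-gradient inequalities.
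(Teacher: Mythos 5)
The paper itself offers no proof of this lemma---it is imported from \citet{li2018learning}---and the argument you give is precisely the standard one used there: the descent inequality for the $L$-smooth outer function $g$, second-order Taylor control of the $m-r$ smooth inner functions, and plain Lipschitz control of the remaining $r$; your proof is correct. Two minor points: in part (ii) you can bound $|S'-S|\le mL\rho$ directly from the Lipschitzness of every $h_i$, which removes the spurious $L\rho^2/2$ contribution (your ``$O(mL\rho)$'' otherwise tacitly assumes $\rho=O(1)$) and yields $\tfrac{L}{2}(S'-S)^2\le\tfrac{1}{2}L^3m^2\rho^2$ exactly; and, as you yourself flag, absorbing the leftover $\tfrac{1}{2}mL^2\rho^2$ into $L^3m^2\rho^2$ requires $Lm\ge 1$ to get the inequality with the stated constants---harmless in every invocation in this paper, where $L=\tilde{O}(1)$ and $m$ is polynomially large, but worth stating if one wants the bound literally as written.
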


\section{Proof of the Non-linear Separability of the Data-model}\label{non_linearly_separable}
\begin{lemma}\label{non_linear_data_model}
    As long as $l^*=\Omega(1)$, the distribution $\mathcal{D}$ is \textbf{NOT} linearly separable.
\end{lemma}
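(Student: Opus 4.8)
I read ``$\mathcal D$ is linearly separable'' as: there exist $w\in\mathbb R^{nd}$ and $b\in\mathbb R$ with $y(\langle w,x\rangle+b)>0$ for every $(x,y)\in\mathrm{supp}\,\mathcal D$. (Taking $w$ constant across the $n$ patch blocks recovers the single-filter, router-style notion from \eqref{router_erm}, so refuting the above refutes that too.) The plan is to invoke the separating-hyperplane / Gordan alternative: such a $(w,b)$ exists if and only if the convex hulls of the two classes are disjoint, i.e. there is \emph{no} finite collection of positive samples $\{p_a\}$, negative samples $\{m_c\}$ and probability weights $\{\alpha_a\},\{\gamma_c\}$ with $\sum_a\alpha_a p_a=\sum_c\gamma_c m_c$. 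Hence it suffices to exhibit one such ``collision'': applying the affine functional $x\mapsto\langle w,x\rangle+b$ to both sides would force a strictly positive number to equal a strictly negative one. So the entire proof reduces to constructing realizable positive and negative samples whose convex combinations coincide.

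\textbf{Harvesting a confusing pattern and using the symmetries.} First I would note that $l^*=\Omega(1)$ means $l^*\ge 2$, so by Definition~\ref{thershold_l} the set $S_c(\tfrac{1-\delta_d}{2})$ of patterns $q$ with $|\langle o_1-o_2,q\rangle|>\tfrac{1-\delta_d}{2}$ meets the support of the class-irrelevant patterns, and up to $l^*-1\ge 1$ such patterns may coexist in one sample; fix such a $q^\star$, WLOG with $\langle o_1-o_2,q^\star\rangle>\tfrac{1-\delta_d}{2}>0$. Then I would exploit three levers that the data model grants: (i) class-irrelevant patterns are distributed \emph{identically} in the two classes, so any irrelevant ``layout'' realizable with $o_1$ as the discriminative patch is also realizable with $o_2$; (ii) the discriminative patch and the irrelevant patches may occupy any positions; (iii) a sample may carry up to $l^*-1$ confusing patches. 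Writing each sample's patch-sum as (discriminative patch) $+$ (sum of the irrelevant patches), and using (i) so that the set of attainable irrelevant-sums is the \emph{same} for both classes, the problem becomes: produce two points $Q,Q'$ in the convex hull of attainable irrelevant-sums with $Q-Q'=o_2-o_1$; combining with the positional bookkeeping in (ii)--(iii) (each filler layout used once with the discriminative patch at position $j$ and once at position $j'$, so the filler discrepancies cancel in pairs and only the intended $(o_1-o_2)$-aligned terms survive) then yields the collision.

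\textbf{The collision and the main obstacle.} The crux is precisely the last step: one must write down an explicit finite family of realizable samples and a feasible choice of convex weights realizing $\sum_a\alpha_a p_a=\sum_c\gamma_c m_c$. The sign condition $\langle o_1-o_2,q^\star\rangle>0$ is exactly what makes the surviving $(o_1-o_2)$-aligned terms cancellable with \emph{nonnegative} weights, and taking up to $l^*-1$ copies of confusing patterns is what supplies enough ``reach'' in the $(o_1-o_2)$ direction to bridge $\|o_1-o_2\|$. I expect two genuine difficulties here. The first is a support-realizability check: the data model only says patch positions and irrelevant patterns follow ``arbitrary'' distributions, so one has to argue explicitly that the layouts invoked really lie in $\mathrm{supp}\,\mathcal D$ — I would restrict to operations the model unambiguously allows, namely permuting patch positions and swapping the discriminative patch between $o_1$ and $o_2$. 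The second, and the real bottleneck, is that a single $q^\star$ spans only a one-dimensional direction, which need not contain $o_1-o_2$; making the hull-collision go through therefore leans on the remaining data-model hypotheses (the $p$ irrelevant pattern sets sitting at generic locations, the option to combine several confusing patterns, and/or $n$ large relative to the separation), and the careful accounting of exactly which of these is needed is where the work lies. A subsidiary point worth stating cleanly is the reduction itself: strict affine separability of the (possibly non-closed) support is equivalent to disjointness of the two classes' convex hulls, so exhibiting one common convex combination is both necessary and sufficient.
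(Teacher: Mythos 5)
Your proposal takes a genuinely different route from the paper — a primal, certificate-based argument (exhibit a point common to the two classes' convex hulls) versus the paper's dual argument (take an arbitrary candidate hyperplane $h=[h^{(1)\intercal},\dots,h^{(n)\intercal}]$ and show it must misclassify something). Unfortunately, the proposal stops exactly where the proof has to happen, and the obstacle you flag yourself is, I believe, fatal to the plan as stated. The data model constrains a class-irrelevant pattern $q$ only through $|\langle o_i,q\rangle|\le\delta_r$ and the small diameter of its set $S_j$; it says nothing about the component of $q$ orthogonal to $\mathrm{span}(o_1,o_2)$. Your cancellation scheme (cyclically permuting positions so that the filler discrepancies cancel and only $(o_1-o_2)$-aligned terms survive) produces differences of the form $\tfrac{1}{l^*}\bigl((o_1-o_2)+(l^*-1)(q^{+}-q^{-})\bigr)$ in each patch block, and for an exact collision you would need $o_1-o_2$ to lie in the cone generated by differences of realizable confusing patterns — a condition on the orthogonal components that the assumptions simply do not supply. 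You can always cancel the \emph{projections} onto $o_1-o_2$ once $l^*$ exceeds a constant, but not the full vectors. Since (for these compact supports) the existence of a hull collision is \emph{equivalent} to non-separability, proving the collision exists is exactly as hard as the lemma itself, and without extra structure you are pushed back to the dual argument anyway.

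The paper's proof sidesteps this entirely: assuming a unit-norm separator $h$ exists, position-symmetry of the discriminative pattern forces $\|h^{(j)}\|^2=\Theta(1/n)$ for every patch $j$, so the single discriminative patch can contribute only $O\bigl(\sqrt{(1-\delta_d)/n}\bigr)$ to the margin $x_1^{\intercal}h-x_2^{\intercal}h$, whereas a positive sample carrying $l^*-1$ class-irrelevant patches that are $\Theta(1-\delta_d)$-closer to $o_2$ (and a negative sample carrying ones closer to $o_1$) drives the irrelevant contribution to $-\Omega\bigl(l^*\sqrt{(1-\delta_d)/n}\bigr)$, flipping the sign once $l^*$ exceeds a constant. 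Note this only needs a \emph{sign violation} for one realizable pair, not exact cancellation of a convex combination, which is precisely why it survives the weak assumptions on the pattern sets. Two smaller points: the equivalence you assert between strict separability and disjoint convex hulls needs compactness of the supports (you only use the easy direction, so this is cosmetic); and $l^*=\Omega(1)$ here has to be read as ``$l^*$ at least a sufficiently large absolute constant'' — $l^*\ge 2$ is not enough, since a single mildly confusing patch (say $\langle o_1-o_2,q\rangle$ barely above $(1-\delta_d)/2$) still leaves the patch-sum functional $\sum_j\langle o_1-o_2,x^{(j)}\rangle$ a valid separator.
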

\begin{proof}
    We will prove the Lemma by contradiction.\\\\
    Now, if the distribution, $\mathcal{D}$ is linearly separable, then there exists a hyperplane $h=\left[h^{(1)T}, h^{(2)T}, ..., h^{(n)T}\right]$ with $||h||=1$ (here, $h^{(j)}$ represents the $j$-th patch of the hyperplane for $j\in[n]$) such that,
    \begin{equation}\label{linear_separable_eqn}
        \forall(x_1,y=+1)\sim\mathcal{D} \text{ and } (x_2,y=-1)\sim\mathcal{D},
        x_1^{T}h-x_2^Th\ge0
    \end{equation}
    Now, as the class-discriminative patterns $o_1$ and $o_2$ can occur at any position $j\in[n]$, $||h^{(j)}||^2=\Theta(\frac{1}{n});\hspace{0.1cm}\forall j\in[n]$.\\\\
    Now, $\forall j\in[n]$, we can decompose $h^{(j)}$ as $h^{(j)}=a_jo_1+b_jo_2$.\\\\ Then, $|a_j|=|b_j|=\Theta\left(\cfrac{1}{\sqrt{n(1-\delta_d)}}\right)$, $\forall j\in[n]$ as $||o_1||=||o_2||=1$.\\\\
    Now,
    \begin{align*}
        x_1^Th-x_2^Th=\langle o_1,h^{(j_{o_1})}\rangle-\langle o_2,h^{(j_{o_2})}\rangle+\sum_{j\in[n]/j_{o_1}}\langle x_1^{(j)},h^{(j)}\rangle-\sum_{j\in[n]/j_{o_2}}\langle x_2^{(j)},h^{(j)}\rangle
    \end{align*}
    Now,
    \begin{align*}
        &\langle o_1,h^{(j_{o_1})}\rangle-\langle o_2,h^{(j_{o_2})}\rangle=(a_{j_{o_1}}-b_{j_{o_2}})-(a_{j_{o_2}}-b_{j_{o_1}})\delta_d\\
        &\le|a_{j_{o_1}}-b_{j_{o_2}}|-|a_{j_{o_1}}-b_{j_{o_2}}|\delta_d\hspace{1cm}\text{[WLOG, let assume $\delta_d<0$]}\\
        &=O\left(\sqrt{\frac{1-\delta_d}{n}}\right)
    \end{align*}
    Now,
    \begin{align*}
        &\sum_{j\in[n]/j_{o_2}}\langle x_2^{(j)},h^{(j)}\rangle-\sum_{j\in[n]/j_{o_1}}\langle x_1^{(j)},h^{(j)}\rangle=\sum_{j\in[n]/j_{o_2}}\langle x_2^{(j)},a_jo_1+b_jo_2\rangle-\sum_{j\in[n]/j_{o_1}}\langle x_1^{(j)},a_jo_1+b_jo_2\rangle\\
        &=O\left(l^*\sqrt{\frac{(1-\delta_d)}{n}}\right)
    \end{align*}
    Therefore, for $l^*=\Omega(1)$ there is contradiction with (\ref{linear_separable_eqn}).
\end{proof}
\clearpage
\section{WRN and WRN-pMoE Architectures Implemented in the Experiments}
\begin{figure}[ht]
\vskip 0.1in
    \begin{minipage}{0.25\linewidth}
        \centering
        \includegraphics[width=0.70\linewidth]{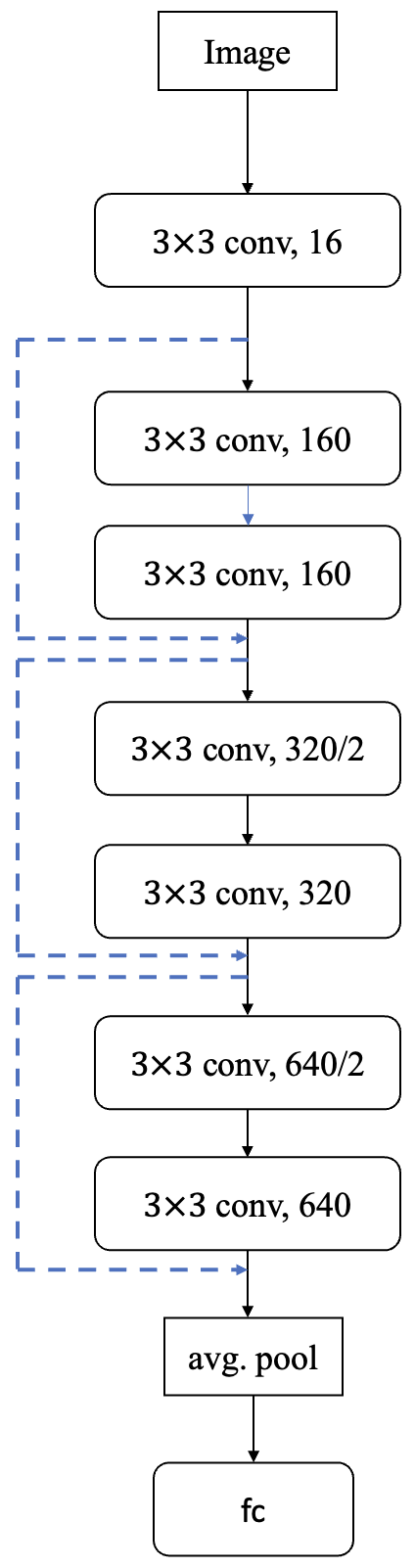}
        \caption{The WRN architecture implemented to learn CelebA dataset}
        \label{wrn}
    \end{minipage}
    ~
    \begin{minipage}{0.75\linewidth}
        \centering
        \includegraphics[width=\linewidth]{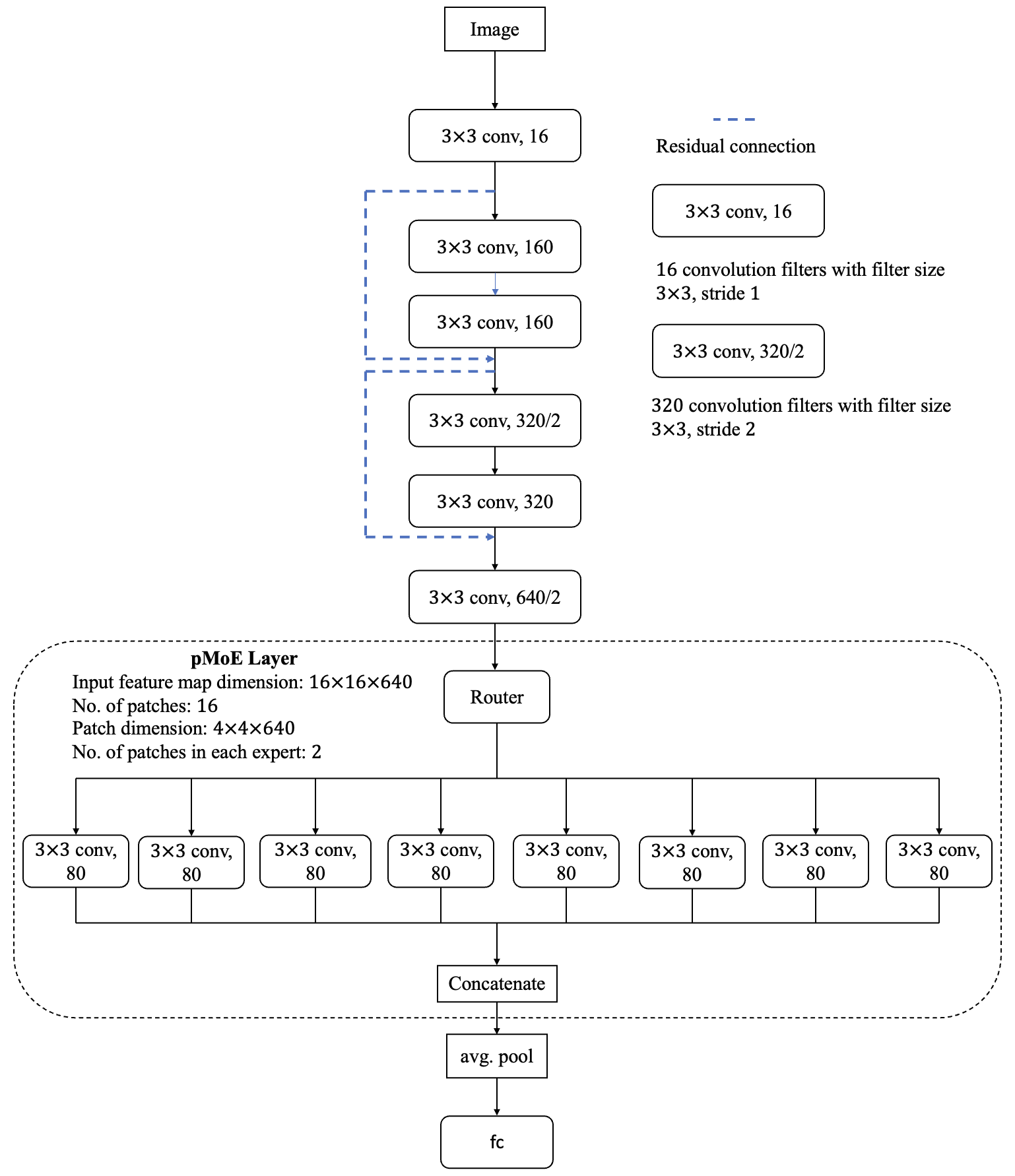}
        \caption{The WRN-pMoE architecture implemented to learn CelebA dataset}
        \label{wrn_pmoe}
    \end{minipage}
    \vskip -0.1in
\end{figure}

\section{Extension to Multi-class Classification}\label{multi_class_extension}

Let us consider $c$-class classification problem where $c>2$. Then, we have $(x,y)\sim\mathcal{D}_c$ where $y\in\{1,2, ..., c\}$ for the multi-class distribution $\mathcal{D}_c$. 

\textbf{The multi-class data model:}\\
Now, according to the data model presented in section \ref{data_model}, we have $\{o_1,o_2, ..., o_c\}$ as class-discriminative pattern set. $\forall j,j^\prime\in[c]$ such that $j\neq j^\prime$, we define $\delta_{d_{j,j^\prime}}:=\langle o_j,o_{j^\prime}\rangle$. We further define $\delta_d:=\max\{\delta_{d_{j,j^\prime}}\}$. Then,
\begin{align*}
    \delta=\cfrac{1}{(1-\max\{\delta^2_{d_{j,j^\prime}},\delta^2_r\}_{j,j^\prime\in[c],j\neq j^\prime})}
\end{align*}
\clearpage
\textbf{The multi-class pMoE model:}\\
The pMoE model for multi-class case is given by,
\begin{equation}\label{multiclass_pmoe}
    \forall i\in[c], f_{M_i}(\theta, x)=\overset{k}{\underset{s=1}{\sum}}\overset{\frac{m}{k}}{\underset{r=1}{\sum}}\cfrac{a_{r,s,i}}{l}\underset{j \in J_s(w_s,x)}{\sum}\textbf{ReLU}(\langle w_{r,s},x^{(j)}\rangle)G_{j,s}(w_s,x)
\end{equation}
An illustration of (\ref{multiclass_pmoe}) is given in Figure \ref{arch_eqn_pmoe_multiclass}.\\
\begin{figure}[h]
\vskip 0.1in
    \centering
    \includegraphics[width=0.5\linewidth]{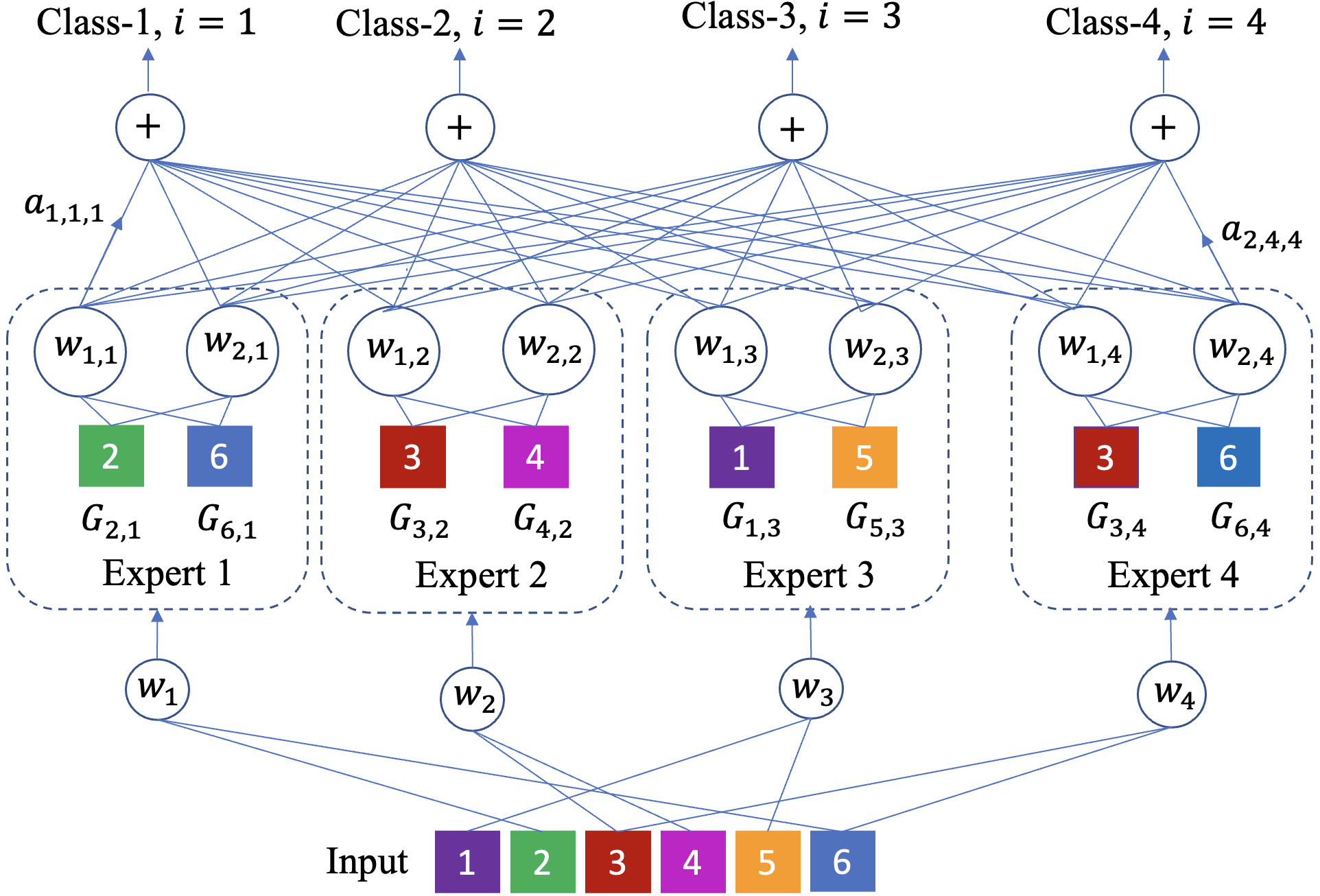}
    \caption{An illustration of the pMoE model in (\ref{multiclass_pmoe}) with $c=4, k=4, m=8, n=6$ and $l=2$.}
    \label{arch_eqn_pmoe_multiclass}
    \vskip -0.1in
\end{figure}

For mult-class case, we replace the logistic loss function by the softmax loss function (also known as cross-entropy loss). For the training dataset $\{x_j,y_j\}_{j=1}^{N}$, we minimize the following empirical risk minimization problem:
\begin{equation}\label{erm_multiclass}
    \displaystyle\underset{\theta}{\text{min}}:\hspace{0.4cm}L(\theta)=\cfrac{1}{N}\overset{N}{\underset{j=1}{\sum}}\log{\cfrac{\sum_{i=1}^c\exp{(f_{M_i}(\theta,x_j))}}{\exp{(f_{M_{y_j}}(\theta,x_j))}}}
\end{equation}

\subsection{The Multi-class Separate-training pMoE}
\textbf{Number of experts:} For the multi-class separate-training pMoE, we take $k=c$, i.e. number of experts is equal to the number of classes.\\\\
\textbf{Training algorithm:}\\
\textbf{Input} : Training data $\{(x_i,y_i)\}_{i=1}^N$, learning rates $\eta_r$ and $\eta$, number of iterations $T_r$ and $T$, batch- \\
\indent\hspace{1.2cm}sizes $B_r$ and $B$\\
\textbf{Step-1}: Initialize $w_s^{(0)}, w_{r,s}^{(0)}, a_{r,s}, \forall s\in\{1,2\}, r\in[m/k]$ according to (\ref{eqn:router_ini}) and (\ref{eqn:expert_ini})\\
\textbf{Step-2:} (Pair-wise router training) We train the router, i.e. the gating-kernels $w_1, w_2, ..., w_c$ using pair-wise training describe below:
\begin{enumerate}
    \item At first, we separate the training set of $N_r$ samples into $c$ disjoint subsets $\{N_{r,1}, N_{r,2}, ..., N_{r,c}\}$ according to the class-labels.
    \item Now, we prepare $c/2$ pairs of training sets $\{(N_{r,1}, N_{r,2}), (N_{r,3}, N_{r,4}), ..., (N_{r,c-1}, N_{r,c})\}$ (here WLOG we assume that $c$ is even).
    \item Under each pair $(N_{r,i}, N_{r,i+1})$, we re-define the label as $y=+1$ and $y=-1$ for the class $i$ and $i+1$ respectively and train the gating-kernels $w_i$ and $w_{i+1}$ by minimizing (\ref{router_erm}) for $T_r$ iterations
    \item After the end of pair-wise training for all the pairs $\{(N_{r,1}, N_{r,2}), (N_{r,3}, N_{r,4}), ..., (N_{r,c-1}, N_{r,c})\}$, we receive $w_1^{(T_r)}, w_2^{(T_r)}, ..., w_c^{(T_r)}$ as the learned gating-kernels.
\end{enumerate}
\textbf{Step-3:}(Expert training)\\
Using the learned gating-kernels $w_1^{(T_r)}, w_2^{(T_r)}, ..., w_c^{(T_r)}$ in Step-2 and using the same procedure as in Step-3 of Algorithm \ref{alg:1} we train the experts.\\

\textbf{The multi-class counterpart of the Lemma \ref{router_lemma}:}\\
Now, using the same proof techniques as for Lemma \ref{router_lemma} (i.e. following same procedures as in section \ref{router_lemma_proof} and \ref{aux_router_lemma_proof}) we can show that, we need $N_r=\Omega(\cfrac{c^2n^2}{(1-\delta_d)^2})$ training samples to ensure,
\begin{align*}
    \text{arg}_{j\in[n]}(x^{(j)}=o_i)\in J_i(w_i^{(T_r)},x)\hspace{1cm}\forall (x,y=i)\sim\mathcal{D}_c\hspace{0.1cm}\text{and } \forall i\in[c]
\end{align*}

\textbf{The multi-class counterpart of the Theorem
\ref{Thm_mcnn_bin}:}\\
We redefine the \textbf{value-function} for each class $i\in[c]$ as,
\begin{gather}\label{multi_class_value_function}
    v_{i,a}^{(t)}(\theta^{(t)},x,y=a):=
    \begin{cases}
    \cfrac{\underset{j\ne a}{\sum}e^{f_{M_j}(\theta^{(t)},x)}}{\overset{c}{\underset{j=1}{\sum}}e^{f_{M_j}(\theta^{(t)},x)}}; \hspace{0.2cm}\text{if, } i=a\\\\
    -\cfrac{e^{f_{M_i}(\theta^{(t)},x)}}{\overset{c}{\underset{j=1}{\sum}}e^{f_{M_j}(\theta^{(t)},x)}}; \hspace{0.2cm} \text{otherwise}
    \end{cases}
\end{gather}
Now using similar techniques as in the proof of Theorem \ref{Thm_mcnn_bin} (i.e. following same procedure as in the proof of Theorem \ref{a_theorem_1} and section \ref{aux_sep_pmoe_thm_proof}) we can show that for every $\epsilon>0$, we need number of hidden nodes $m\ge M_S=\Omega\left(l^{10}p^{12}\delta^6c^{11}\big/\epsilon^{16}\right)$, batch-size $B=\Omega\left(l^4p^6\delta^3c^6\big/\epsilon^{8}\right)$ for $T=O\left(l^4p^6\delta^3c^6\big/\epsilon^{8}\right)$ iterations (i.e. $N_S=\Omega(l^8 p^{12}\delta^6c^{12}/\epsilon^{16})$) to ensure,
\begin{align*}
    \underset{(x,y)\sim\mathcal{D}_c}{\mathbb{P}}\left[\forall j\in[c],j\neq y, f_{M_y}(\theta^{(T)},x)>f_{M_j}(\theta^{(T)},x)\right]\ge1-\epsilon
\end{align*}

\subsection{The Multi-class Joint-training pMoE}
\textbf{Training algorithm:} Same as the Algorithm \ref{alg:2} except that for multi-class case the loss function is softmax instead of logistic loss.\\
\textbf{The multi-class counterpart of the Theorem
\ref{Thm_mcnn_j}:}\\
Using the value-function define in (\ref{multi_class_value_function}) and as long as the Assumption \ref{router_asmptn} satisfied for all the classes $i\in[c]$, following the similar techniques as in the proof of Theorem \ref{Thm_mcnn_j} (i.e. following same procedure as in the proof of Theorem \ref{a_theorem_2} and section \ref{aux_joint_pmoe_thm_proof}), we can show that for every $\epsilon>0$, we need number of hidden nodes $m\ge M_J=\Omega\left(k^3n^2l^{6}p^{12}\delta^6c^8\big/\epsilon^{16}\right)$, batch-size $B=\Omega\left(k^2l^4p^6\delta^3c^4\big/\epsilon^{8}\right)$ for $T=O\left(k^2l^2p^6\delta^3c^4\big/\epsilon^{8}\right)$ iterations (i.e. $N_J=\Omega(k^4l^6p^{12}\delta^6c^8/\epsilon^{16})$) to ensure,
\begin{align*}
    \underset{(x,y)\sim\mathcal{D}_c}{\mathbb{P}}\left[\forall j\in[c],j\neq y, f_{M_y}(\theta^{(T)},x)>f_{M_j}(\theta^{(T)},x)\right]\ge1-\epsilon
\end{align*}

\section{Details of the Results in Table \ref{table1}}\label{table1_details}

\textbf{Complexity in forward pass.} The computational complexity of a non-overlapping convolution operation by a filter of dimension $d$ on an input sample of $n$ patches (of same dimension as the filter) is $O(nd)$ \citep{vaswani2017attention}. Therefore, the complexity of forward pass of a batch of size $B$ through a convolution layer of $m$ neurons is $O(Bmnd)$. Similarly, the forward pass complexity of a the batch through the experts (of same total number of neurons as in the convolution layer) of a pMoE layer is $O(Bmld)$. The operations in a pMoE router includes convolution (with complexity $O(nd)$), softmax operation (with complexity $O(1)$) and TOP-$l$ operation (with complexity $O(nl)$ when $l\ll n$). Therefore, the overall forward pass complexity of a pMoE router with $k$ expert is $O(Bknd)$.

\textbf{Complexity in backward pass.} The gradient of neurons in convolution layer for an input sample is given in (\ref{grad_cnn}), which implies that the complexity of the gradient calculation is $O(nd)$ (addition of $n$ vectors of dimension $d$) and hence the backward pass complexity of CNN is $O(Bmnd)$. Similarly, the backward pass complexity of pMoE experts is $O(Bmld)$. Now the gradient of gating kernels in pMoE router is given in (\ref{grad_router_joint}), which implies that the complexity of the gradient calculation is $O(l^2d)$ (addition of $l^2$ vectors of dimension $d$) and hence the backward pass complexity of pMoE router is $O(Bkl^2d)$.
\begin{align}\label{grad_router_joint}
    \frac{\partial \mathcal{L}(\theta^{(t)},x,y)}{\partial w_{s}^{(t)}}=-yv^{(t)}(\theta^{(t)},x,y)\left(\sum_{r\in[m]}a_{r,s}\left(\cfrac{1}{l}\sum_{j\in J_s(x)}\textbf{ReLU}(\langle w_{r,s}^{(t)},x^{(j)}\rangle)G_{j,s}(\sum_{i\in J_s(x)/j}(x^{(j)}-x^{(i)})G_{i,s})\right)\right)
\end{align}
\textbf{Complexity to achieve $\epsilon$ generalization error.} From Theorem \ref{Thm_mcnn_j}, to achieve $\epsilon$ generalization error we need $O(k^2l^2/\epsilon^8)$ iterations of training in pMoE, which implies that the computational complexity to achieve $\epsilon$ error in pMoE is $O(Bmk^2l^3d/\epsilon^8)$. Similarly, using the results from Theorem \ref{Thm_single_cnn}, the corresponding complexity in CNN is $O(Bmn^5d/\epsilon^8)$.

\end{document}